\newcommand{\RGNMR}{\texttt{RGNMR}\xspace}
\newcommand{\RMC}{\texttt{RMC}\xspace}
\newcommand{\AOP}{\texttt{AOP}\xspace}
\newcommand{\RPCA}{\texttt{RPCA-GD}\xspace}
\newcommand{\HOAT}{\texttt{HOAT}\xspace}
\newtheorem{lemma}{Lemma}[section]
\newtheorem{theorem}{Theorem}[section]
\newtheorem{remark}{Remark}[section]
\newtheorem{assumption}{Assumption}
\title{RGNMR: A Gauss-Newton method for robust matrix completion with theoretical guarantees}
\author{
  Eilon Vaknin Laufer \\
  Weizmann Institute of Science\\
  \texttt{eilon.vaknin@weizmann.ac.il} \\
  \And Boaz Nadler \\
  Weizmann Institute of Science\\
  \texttt{boaz.nadler@weizmann.ac.il} \\
}
\begin{document}
\suppressfloats

\maketitle

\setcounter{totalnumber}{1}
\begin{abstract}
Recovering a low rank matrix from a subset of its entries, 
some of which may be corrupted, is known as the robust matrix completion (RMC) problem.
Existing RMC methods have  several limitations: they require a relatively large number of observed entries; 
they may fail under overparametrization, when their assumed rank is higher than the correct one;
and many of them fail to recover even mildly ill-conditioned matrices.
In this paper we propose a novel RMC method, denoted \RGNMR, which overcomes these limitations. 
\RGNMR is a simple factorization-based iterative algorithm, which combines a Gauss–Newton linearization with removal of entries suspected to be outliers.
On the theoretical front, we prove that under suitable assumptions, 
\RGNMR is guaranteed exact recovery of the underlying low rank matrix.
Our theoretical results improve upon the best currently known for factorization-based methods. 
On the empirical front, 
we show via several simulations
the advantages of \RGNMR over existing RMC methods, and in particular its ability to handle a small number of observed entries, overparameterization of the rank and ill-conditioned matrices.
In addition, we propose a novel scheme for  estimating the number of corrupted entries.
This scheme may be used by other RMC methods that require as input the number of corrupted entries.
\end{abstract}

\section{Introduction}
\begin{figure}[t]
    \begin{center}
{\includegraphics[width=0.47\linewidth
    ]{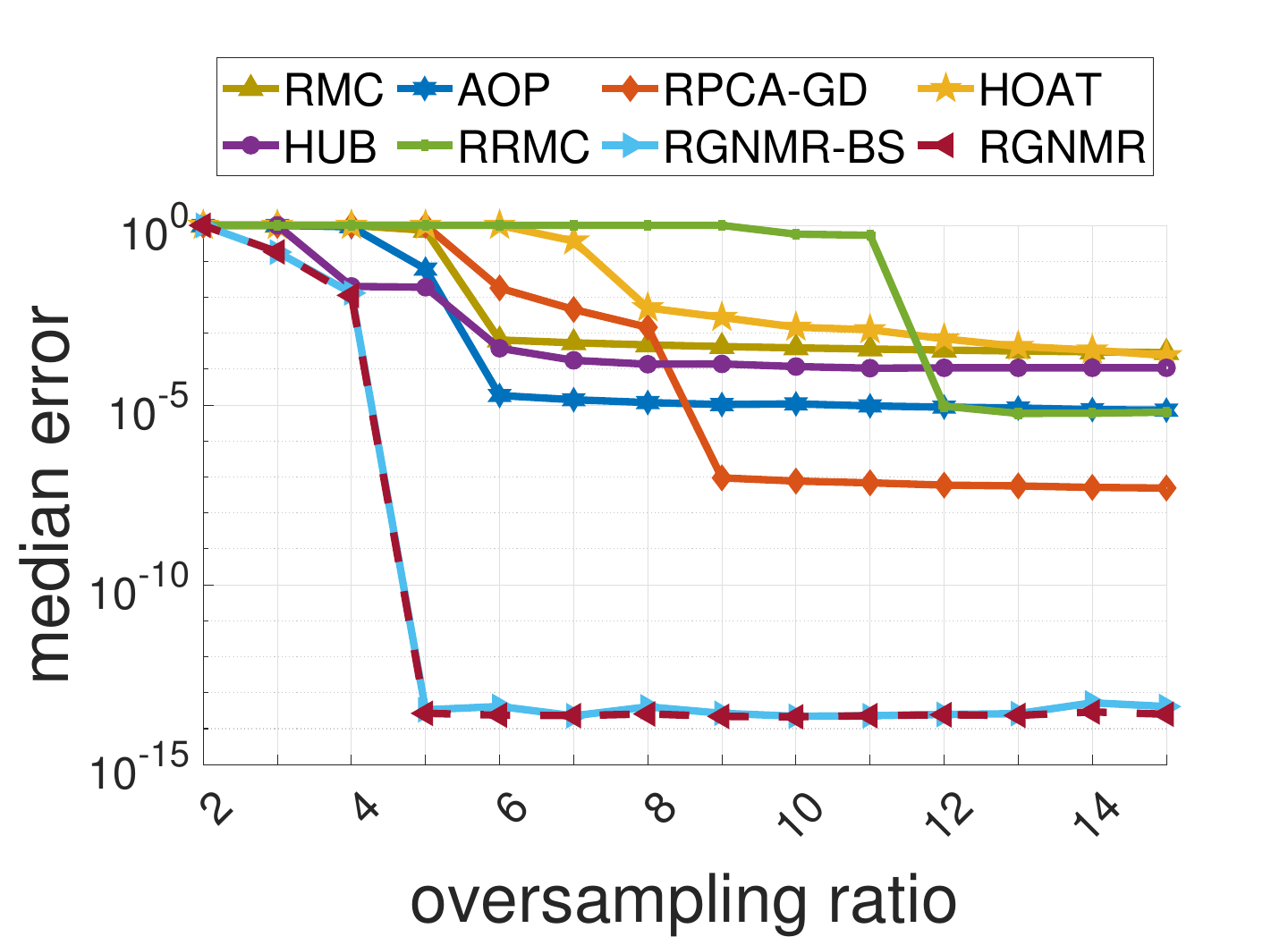}}
{\includegraphics[width=0.47\linewidth
    ]{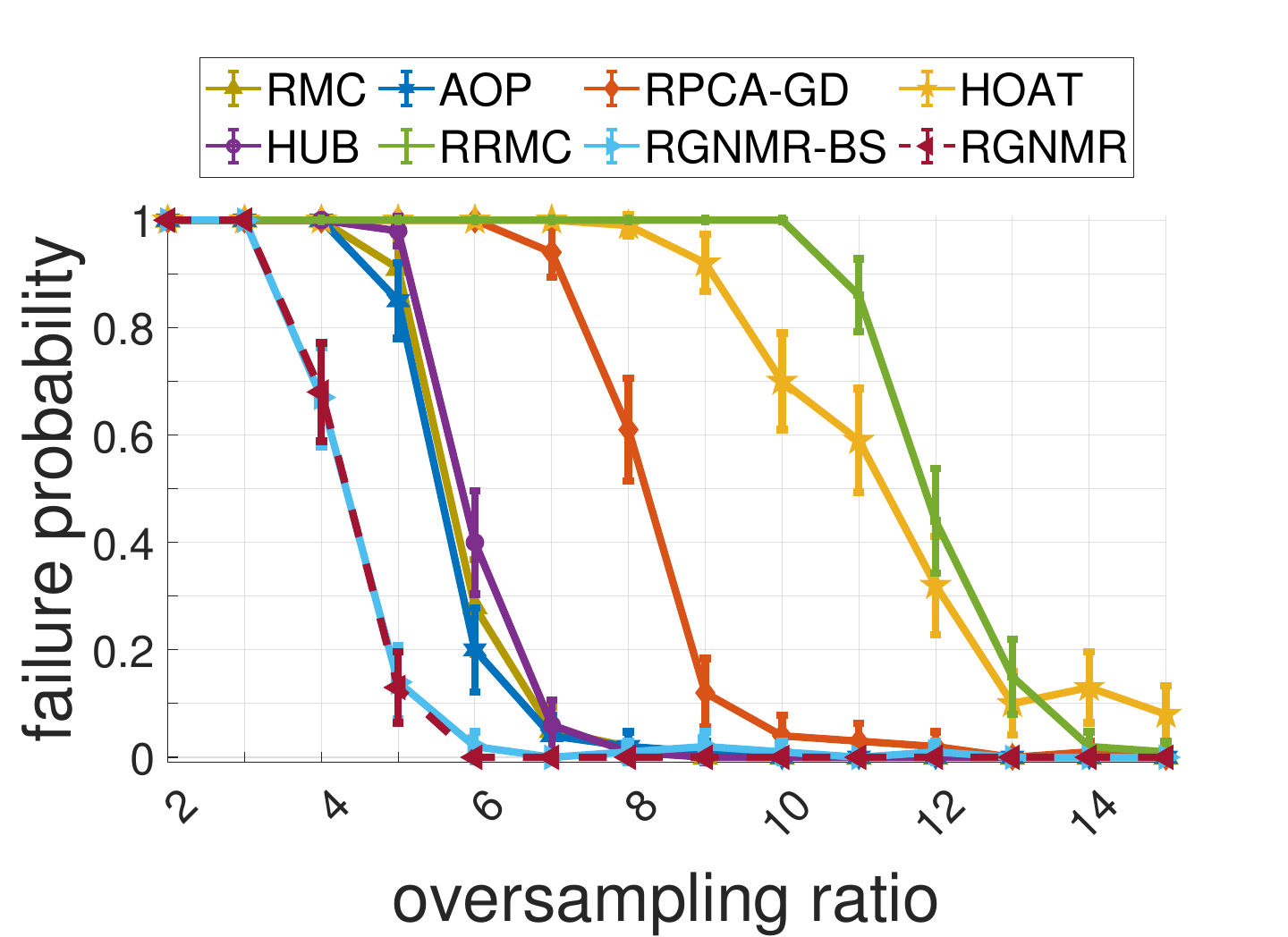}}
    \caption{
    Performance of RMC methods as a function of the number of observed entries measured by the oversampling ratio $\frac{|\Omega|}{r\cdot (n_1+n_2 - r)}$ where $r\cdot (n_1+n_2 - r)$ is the number of degrees of freedom of a rank $r$ matrix. 
    (left) Median 
    $\texttt{rel-RMSE}$; 
    (right) Failure probability ($\pm 1.96$ SE). 
    The underlying matrix $L^*$ has a  condition number $\kappa=2$. The corruption fraction is $\alpha=5\%$.
    }\label{fig: oversampling_factor_experiment}
    \end{center}
\end{figure}
\begin{figure}[t]
\begin{center}
    {\includegraphics[width=0.47\linewidth]{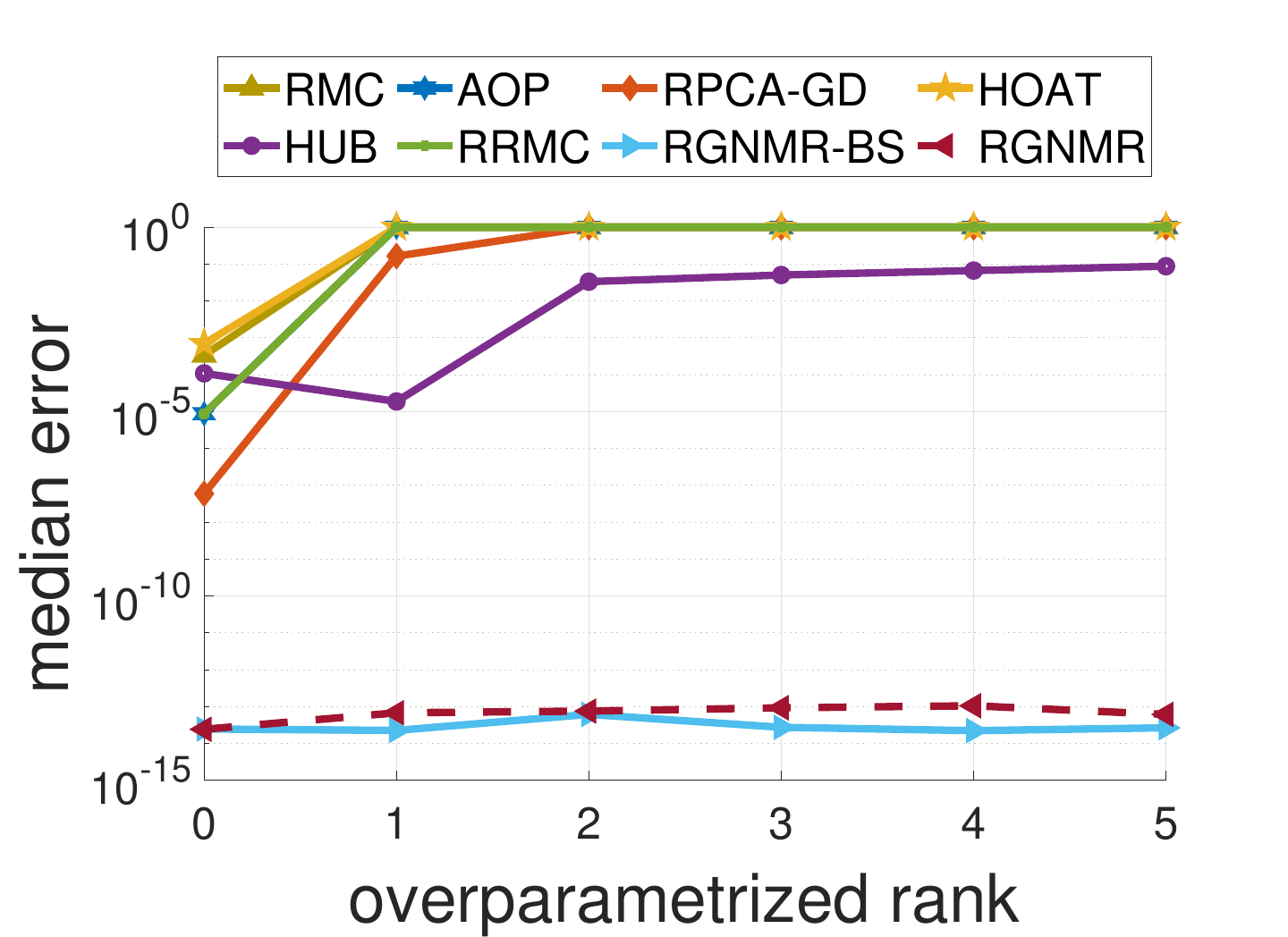}}
    {\includegraphics[width=0.47\linewidth]{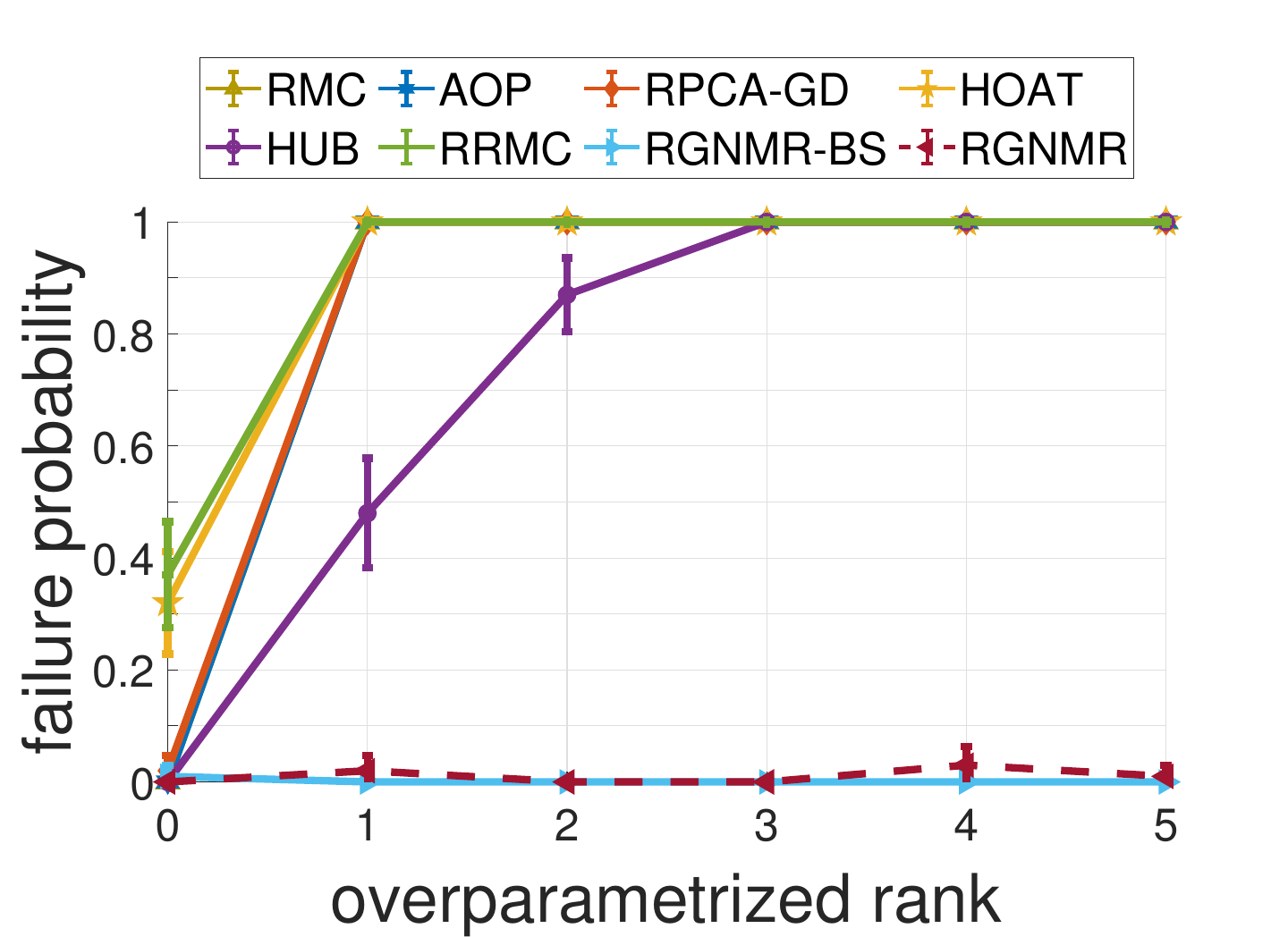}}
    \caption{Performance of RMC methods under overparameterization with input rank of $5+i$ for $i\in [0, 5]$. 
    (left) Median
    $\texttt{rel-RMSE}$;
    (right) Failure probability ($\pm 1.96$ SE).  
    The matrix $L^*$ has a condition number $\kappa=2$ and the oversampling ratio is $\frac{|\Omega|}{r\cdot(n_1+n_2-r)} = 12$.The corruption fraction is $\alpha=5\%$}\label{fig: overparameterization_experiment}
    
\end{center}
\end{figure}
\begin{figure}[t]
\begin{center}
    {\includegraphics[width=0.47\linewidth]{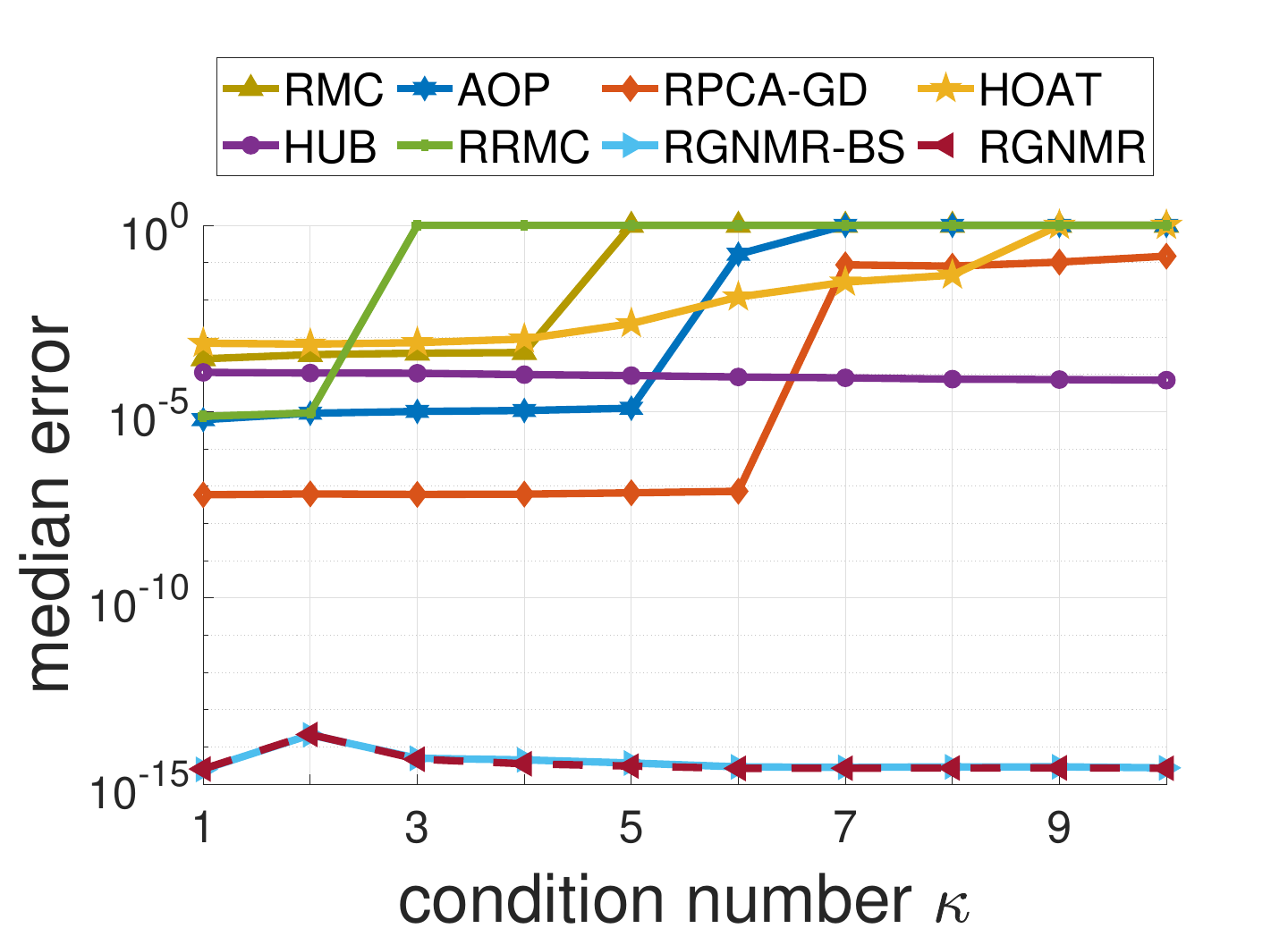}}
    {\includegraphics[width=0.47\linewidth]{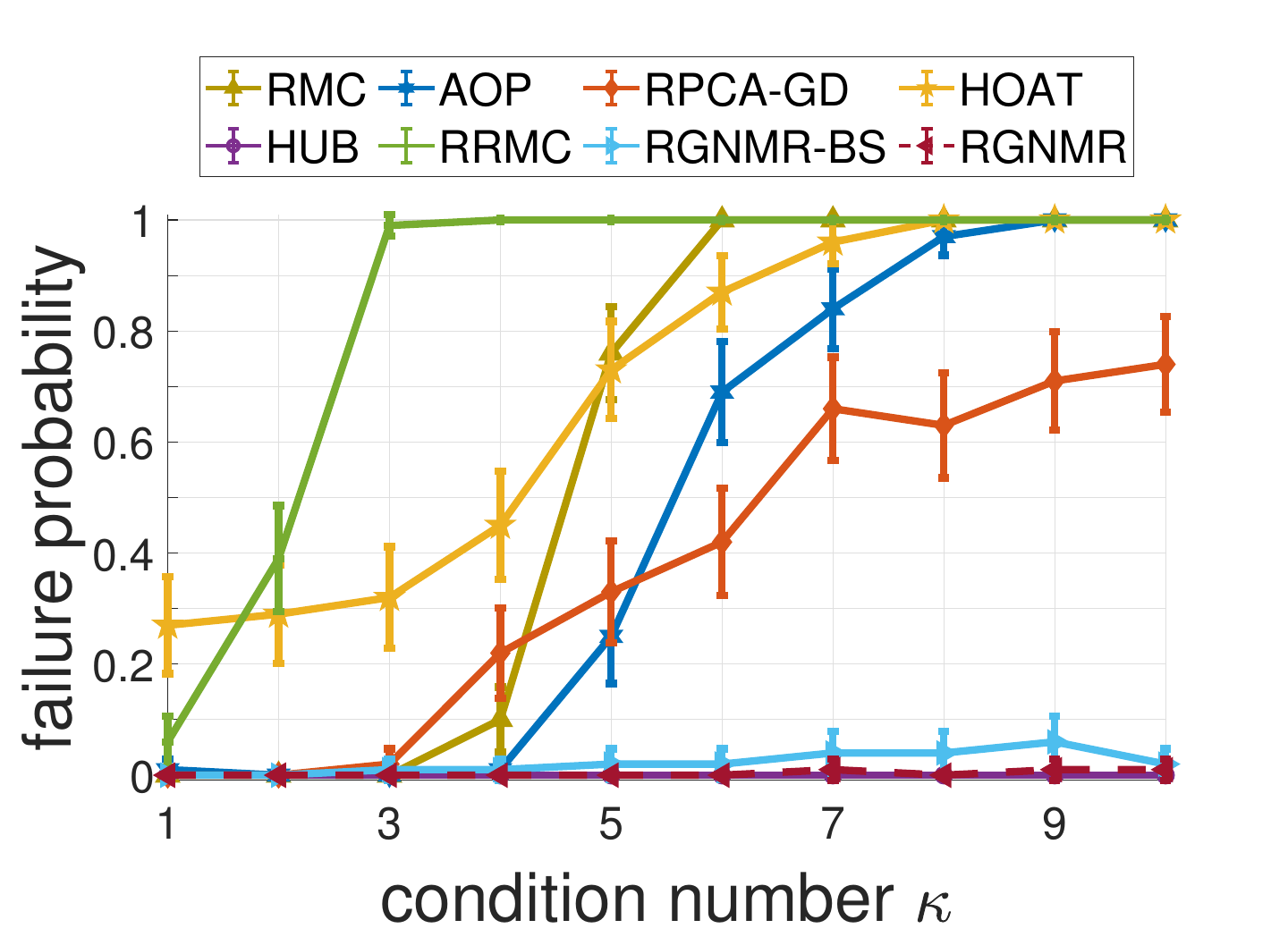}}
    \caption{
    Performance of RMC methods as a function of the condition number.
    (left) Median 
    $\texttt{rel-RMSE}$;
    (right) Failure probability  ($\pm 1.96$ SE). 
    The oversampling ratio is $\frac{|\Omega|}{r\cdot(n_1+n_2-r)} = 12$. The corruption fraction is $\alpha=5\%$.
    }\label{fig: condition_number_experiment}
\end{center}
\end{figure}
Low-rank matrices play a fundamental role in multiple scientific disciplines.
As reviewed in 
\citet{davenport2016overview}, in various applications, 
there is a need to recover a low rank matrix 
from only a subset of its entries.  
Examples include recommendation systems \citep{bennett2007netflix}, various problems in computer vision \citep{kennedy2016online, Tulyakov_2016_CVPR, miao2021color}, in sensor networks \citep{10361831} and in single‐cell data analysis \citep{lejun2025seqbmc}.
A key challenge in these and other applications is that some of the observed entries may be arbitrarily corrupted outliers. 

In this work, we consider the robust matrix completion (RMC) problem, of recovering
a low rank matrix from a subset of its entries, out of which
a few are outliers. 
Formally, let $X\in \mathbb{R}^{n_1\times n_2}$ be a matrix
with a decomposition 
$X= L^*+S^*$, where 
$L^*$ is a rank $r$ matrix and $S^*$ is a sparse corruption matrix
whose few non-zero entries are arbitrary.  
The RMC problem is to recover $L^*$ from the subset $\{X_{i,j}\,|\, (i,j)\in\Omega  \}$,
where $\Omega \subset [n_1]\times [n_2]$
denotes the subset of observed entries in $X$. 

\paragraph{Related work.} 
Over the past decade, the RMC problem was studied from both mathematical, computational and statistical perspectives.  
On the computational front, several RMC algorithms have been proposed.
In general, to solve the  RMC problem the following optimization issues need to be addressed: (i) either promote a low rank solution or strictly enforce it; and (ii) suppress the potentially detrimental effect of the outliers, whose locations are a-priori unknown. 

Regarding the first issue, one approach is to incorporate into 
the optimization objective a low-rank promoting penalty such as the nuclear norm \citep{candes2011robust, nie2012robust, klopp2017robust, wong2017matrix, chen2021bridging}. 
Under suitable assumptions on the matrices $L^*$, $S^*$ and on the set $\Omega$, these methods enjoy strong theoretical guarantees. 
However, these schemes are in general computationally slow, and do not scale well to large matrices.
A different approach is to strictly enforce a rank $r$ solution. 
This can be done by optimization over the manifold of rank $r$ matrices \citep{yan2013exact, cambier2016robust}, or 
by projecting matrices onto it.
Several works factorize the target matrix as $L=UV^\top$, and optimize over the two factor matrices $U\in \mathbb{R}^{n_1\times r}$ and $V\in\mathbb{R}^{n_2\times r}$ \citep{yi2016fast, lin2017robust,  zeng2017outlier, huang2021robust, wang2022robust}.
The resulting problem involves only $(n_1 + n_2)r$ variables instead of the $n_1\times n_2$ entries in $L$. 
Factorization based methods are thus in general more scalable and able to handle larger matrices. 

Regarding the second issue, some works attenuate the effect of the outliers either by estimating the locations of the corrupted entries 
and removing them, or by estimating their values $S_{ij}$
for $(i,j)\in\Omega$ 
\citep{yan2013exact, cherapanamjeri2017nearly, huang2021robust, chen2021bridging,wang2024leave}.
Other works mitigate the influence of outliers by using a robust norm on the difference $X-L$ in the objective function  \citep{nie2012robust, cambier2016robust,lin2017robust, zhao2016efficient, zeng2017outlier, wang2022robust}.

\paragraph{Limitations of existing RMC methods.}

Despite extensive research on the RMC problem, current algorithms have various limitations: 
(i) Several RMC methods fail to recover the low rank matrix unless the
number of observed entries is quite large; 
(ii) Some methods require as input the (often unknown) rank $r$ of the target matrix and fail when overparameterized even by just a single additional dimension (an input rank of $r+1$);
(iii) Various methods fail to recover a matrix with a moderate condition number, as low as 5. 

We illustrate these issues in Figures  \ref{fig: oversampling_factor_experiment}, \ref{fig: overparameterization_experiment} and \ref{fig: condition_number_experiment}.
These figures show recovery results of various RMC algorithms for a matrix $L^*$ of size $3200\times 400$, rank $r=5$,  
at a corruption rate of $\alpha = 5\%$ (see Section \ref{sec: numerical} for further details).
As in \citet{zeng2017outlier, tong2021accelerating} and \citet{huang2021robust}, the quality of a recovered matrix $\hat{L}$ is measured by the relative error $\texttt{rel-RMSE} = \tfrac{\|\hat{L} - L^*\|_F}{\|L^*\|_F}$, and is considered a failure if $\texttt{rel-RMSE} > 10^{-3}$.
For each simulation setting we ran 100 independent realizations.
As shown in these figures, RMC methods such as  \AOP \citep{yan2013exact}, \RMC \citep{cambier2016robust}, \RPCA \citep{yi2016fast}, \texttt{RRMC} \citep{cherapanamjeri2017nearly},  \texttt{HUB} \citep{ruppel2020globally} and  \HOAT \citep{wang2022robust}
require a large number of observed entries and  fail when overparameterized. 
In addition, all of them (except \texttt{HUB}) fail at condition numbers greater than  5.

On the theoretical front, some RMC methods have no theoretical recovery guarantees. The currently available
guarantees for other methods are somewhat limited. 
As detailed in Table \ref{table: recovery_guarantees}, 
the guarantees for some methods only hold for fully observed matrices, or their allowed corruption level strongly depends on the matrix rank and condition number. 

These shortcomings raise the following challenges: (i) {\em Develop a computationally efficient RMC method 
able to handle ill-conditioned matrices, overparameterization, and a small number of observed entries; 
(ii) derive for this method strong recovery guarantees that improve upon those of existing methods.}
\paragraph{Our contributions.}

In this work we make a step towards resolving this challenge.
We propose a novel RMC method, denoted \RGNMR, which overcomes the above limitations. 
Our proposed method is a simple iterative algorithm based on Gauss–Newton linearization and removal of entries suspected to be outliers. 
As seen in Figures \ref{fig: oversampling_factor_experiment}, \ref{fig: overparameterization_experiment} and \ref{fig: condition_number_experiment}, empirically \RGNMR can successfully recover the underlying matrices with significantly fewer observed entries than existing methods.
Furthermore, the performance of \RGNMR  is not affected by an overparameterized rank or by the condition number.
In Appendix \ref{sec: additional_results} we show that \RGNMR continues to outperform other methods under 
broader conditions, including a combination of outliers and additive noise, non-uniform sampling, higher rank matrices, a range of fractions of outliers, etc. 
Furthermore, we show that
\RGNMR performs well on a real data set
from computer vision, involving background extraction. 
In addition to its excellent empirical performance, in Section \ref{sec: theory} we derive theoretical recovery guarantees for \RGNMR. 
Specifically, we present two theorems.
Theorem \ref{theorem: contraction} states that given a suitable initialization,
\RGNMR recovers the target matrix at a linear rate.
This theorem holds under the weakest known assumptions for factorization based RMC methods.
Theorem \ref{theorem: initialization} establishes that the initialization scheme we propose yields a sufficiently accurate initial estimate for Theorem \ref{theorem: contraction} to hold. 
\section{The RGNMR algorithm}\label{sec: RGNMR}
\begin{algorithm}[t]
\caption{\texttt{RGNMR}}\label{alg: RGNMR}
    \textbf{Input:}
    \begin{itemize}
        \item $\{X_{i, j} \mid (i, j) \in \Omega\}$ - observed entries
        \item $r$ - rank of $L^*$       
        \item $k$ - upper bound on the number of corrupted entries
        \item $\begin{pmatrix} U_{0}\\ V_{0}\end{pmatrix} \in \mathbb{R}^{(n_1 +n_2)\times r}$ - initialization
        \item $\Lambda_0$ - initial estimate of the set of 
       corrupted entries
        \item $T$ - maximal number of iterations
    \end{itemize}
    \textbf{Output:} $\hat{L}$ of rank $r$
    \begin{algorithmic}[H]
        \For {$t = 0\ldots T-1$}
            \State  
            \begin{equation*}
                \begin{pmatrix} U_{t+1}\\ V_{t+1}\end{pmatrix} = \arg\min_{U, V}\|U_tV^{\top} + UV_t^{\top} - U_tV_t^{\top}- X\|_{F(\Omega\setminus\Lambda_t)}^2  
            \end{equation*}
            \begin{equation*}
                \Lambda_{t+1} = \arg\min_{\substack{\Lambda \subset \Omega, |\Lambda| = k}} \|U_tV_{t+1}^\top+ U_{t+1}V_{t}^\top-U_tV_t^\top - X\|_{F(\Omega\setminus \Lambda)}^2
            \end{equation*}
        \EndFor
        \State \Return $P_r(U_{T-1}V_{T}^\top+ U_{T}V_{T-1}^\top-U_{T-1}V_{T-1}^\top)$
    \end{algorithmic}
\end{algorithm}

To describe
\RGNMR, we first introduce some notation. 
For a subset  $\Omega \subseteq [n_1]\times [n_2]$, we denote by
$\mathcal P_\Omega$ 
the following projection  operator
\begin{align*}
    \mathcal P_\Omega(A)_{i,j} = \begin{cases}
        A_{i,j} & (i,j)\in \Omega,\\
        0  & (i,j)\not \in \Omega.\\
    \end{cases}
\end{align*}
For a matrix $A$ and a set $\Omega$, we 
denote 
$   \|A\|^2_{F(\Omega)} = \|\mathcal{P}_{\Omega}(A)\|^2_{F} =
\sum_{(i,j)\in\Omega} A_{i,j}^2$. 
Finally, we denote the
set of corrupted entries by $ \Lambda_*= 
\{(i,j) \in \Omega \,|\, S^*_{i,j} \not= 0\}$ and   
the number of 
corrupted entries by $k^*  = |\Lambda_*|$. Both $\Lambda_*$ and $k^*$  are unknown.  

The optimization variables of \RGNMR are two factor matrices $U\in \mathbb{R}^{n_1\times r}, V\in \mathbb{R}^{n_2\times r}$ and a subset $\Lambda\subset \Omega$  which estimates the locations of the corrupted entries.
\RGNMR receives as input the following quantities; 
the subset of observed entries $\{X_{i,j}\mid (i,j)\in\Omega\}$; the rank of the target matrix $r$;
an upper bound $k$ on the number of corrupted entries; initial guess $(U_0, V_0)$ for the  factor matrices and  $\Lambda_0 \subset \Omega$ for the set of corrupted entries.

At each iteration \RGNMR performs the following two stpdf: 
(i) given the current set of suspected outlier entries $\Lambda$,  
update $U,V$ using the remaining entries 
$\mathcal{P}_{\Omega\setminus\Lambda}(X)$; 
(ii) given the updated matrix $L$, recompute the new set of suspected outliers $\Lambda$, by the $k$ entries with largest magnitude in $\mathcal{P}_{\Omega}(X - L)$. 

Let us provide some motivation for the above two stpdf.
If the estimated set of non corrupted entries, $\Omega \setminus \Lambda$, is large enough and contains 
{\em only} non-corrupted entries, namely  $\Lambda_*\subset \Lambda$, then the matrix $L^*$ could be recovered exactly from  $\mathcal{P}_{\Omega\setminus\Lambda}(X) = \mathcal{P}_{\Omega\setminus\Lambda}(L^*)$ by solving a vanilla matrix completion problem, with no outliers or noise.
Regarding the second step, consider an ideal case where after step (i) of some iteration \RGNMR obtained $L=L^*$.
Then at all the non-corrupted entries $(i,j)\in \Omega\setminus\Lambda_*$, the residual $X_{i,j}-L_{i,j}=0$ and $L^*$ is a fixed point of the algorithm. 
Therefore, given $L$ it is reasonable to estimate the corrupted entries by the entries with largest magnitude in  $\mathcal{P}_{\Omega}(X - L)$.    
Iterating this process we hope to identify a sufficiently large subset of $\Omega \setminus\Lambda_*$ and consequently recover $L^*$. 

We now describe the two stpdf of \RGNMR. 
The first step builds upon the matrix completion approach of \citet{zilber2022gnmr}.
Formally, given the estimate $\Lambda_t$ of $\Lambda_*$ and the current factorization $(U_t, V_t)$, the updated matrices $\left(U_{t+1} V_{t+1}\right)$ are the solution of the following least squares problem, 
\begin{equation}\label{eq: objective_function}
                \begin{pmatrix} U_{t+1}, V_{t+1}\end{pmatrix} = \arg\min_{U, V}\mathcal{L}^t_{{\Omega\setminus\Lambda_t}}(U, V) =  \arg\min_{U, V}\| U_tV^{\top} + UV_t^{\top} - U_tV_t^{\top} - X\|_{F(\Omega\setminus\Lambda_t)}.
            \end{equation}  
This problem is rank deficient with an infinite number of solutions. 
As in  \citet{zilber2022gnmr}, we choose the new estimate to be the solution with minimal norm $\|U_{t+1}\|_F^2 + 
\|V_{t+1}\|_F^2$. 

For the second step, we construct a new estimate of $\Lambda_*$ by the $k$ largest residual entries in \eqref{eq: objective_function}, 
\begin{align}\label{eq: outliers_estimation}
     \Lambda_{t+1} = \arg\min_{\substack{\Lambda \subset \Omega ,|\Lambda| = k}} \| U_tV_{t+1}^\top+ U_{t+1}V_{t}^\top-U_tV_t^\top - X\|_{F(\Omega\setminus \Lambda)}^2.
\end{align}
\RGNMR iterates these two stpdf until convergence or until a maximal number of iterations $T$ is reached.
Algorithm \ref{alg: RGNMR}
presents an outline of \RGNMR. 

\begin{remark}
In general, removing some of the observed entries, as in
\eqref{eq: outliers_estimation}, may be detrimental. 
For example, if after removal of entries suspected as outliers, only less than $r$ entries remain at a specific row or column, then the resulting matrix completion problem is ill-posed and exact recovery of $L^*$ is not possible. As illustrated empirically in \citet{zilber2022gnmr}, \texttt{GNMR} is able to complete low rank matrices  even if the number of observed entries is near the information limit. 
Therefore, as long as the number of remaining observed entries in each row and column is above $r$, our method is often still able to recover the target matrix.
\end{remark}

\begin{remark}\label{remark: gnmr}
    As shown in \citet{zilber2022gnmr}, 
    in the absence of outliers, \texttt{GNMR} which iteratively solves \eqref{eq: objective_function}, can successfully recover ill-conditioned low rank matrices, from relatively few observed entries.
    However, as illustrated in Figure \ref{fig: RGNMR VS GNMR} in the appendix, \texttt{GNMR} is not robust and fails completely in the presence of even a small fraction of corrupted entries.
    Since in the first few iterations
    the set $\Omega\setminus\Lambda_t$ typically includes outliers, employing \texttt{GNMR} would not result in a significantly better update of $(U_t, V_t)$ than that of a single optimization step.
     Therefore, in our scheme we do not run \texttt{GNMR} till convergence.
    Instead we re-estimate $\Lambda$ after each update of $(U_t, V_t)$.
    This approach avoids redundant iterations of \texttt{GNMR} and yields considerable savings in terms of runtime.
\end{remark}

    

\subsection{Estimating the number of corrupted entries}\label{subsec: Estimating the number of corrupted entries}
One of the input parameters to \RGNMR is $k$. 
At each iteration \RGNMR removes the $k$ entries with largest residual error, and performs low-rank matrix completion on the remaining entries. 
For \RGNMR to succeed, it is crucial for $k$ to be a tight upper bound of the true number of corrupted entries $k^*$. 
Indeed, if $k\gg k^*$ then too many entries are removed and on the remaining entries the corresponding matrix completion problem is ill posed.
As the true number of corrupted entries $k^*$ is unknown, it is important to develop a method to tightly upper bound it. 
Here we propose a scheme to do so. 

Our approach is motivated by the empirical observation that the estimates $\Lambda_t$ of \RGNMR behave differently if $k\leq k^*$ or $k>k^*$.
If $k\leq k^*$  then the estimates $\Lambda_t$ of the set of corrupted entries converge after several iterations. 
However, if $k > k^*$, the sets $\Lambda_t$ do not converge.
This is illustrated in Figure \ref{fig: wrong_estimation_of_outliers_Lambda_convergence} in the appendix, and we provide some intuition for this behavior below.
Building upon this observation, we propose a binary search algorithm to estimate $k^*$, assuming it belongs to the range $[k_{\min}, k_{\max}]$. In practice, we may take $k_{\min}=0$ and $k_{\max}=|\Omega|/2$ (namely at most 50\% corrupted entries).  
At each step we run \RGNMR with $k=\lfloor (k_{\min}+k_{\max})/2 \rfloor$. If the sets $\Lambda_t$ converged, then we update
$k_{\min} = k$. Otherwise, we set
$k_{\max}=k$.
After $\log_2(k_{\max} - k_{\min})$ stpdf we obtain a tight upper bound on $k^*$.

To provide intuition to why \RGNMR  behaves differently with $k\leq k^*$ or $k > k^*$, consider the following scenario.
Assume that $k\leq k^*$ and that at some iteration, the set $\Lambda_t$ of $k$ entries with largest current residuals, satisfies that $\Lambda_t  \subseteq\Lambda_*$.
The next estimate $L_{t+1}$ of $L^*$ minimizes $\|\mathcal{P}_{\Omega\setminus\Lambda_t}(L - X)\|_F$, ignoring the entries in $\Lambda_t$, which by assumption are all outliers.
Hence, $L_{t+1}$ tries to fit a low rank matrix to the remaining entries, and thus we expect the entries in $\Lambda_t$ to remain the $k$ largest residual entries.
Therefore, $\Lambda_t  = \Lambda_{t+1}$ and as the same argument holds for all subsequent iterations the sets $\Lambda_t$ converge.
In contrast,  if $k > k^*$ then at any iteration $t$, the set $\Lambda_t$ contains at least $k - k^*$ non-corrupted entries.
Assume that at some iteration $t$, the method successfully detected all outliers, i.e. $\Lambda_* \subset \Lambda_t$. Then, if $k$ is not too large, the next estimate
should be very close to the true low rank matrix $L_{t+1} \approx L^*$. As the computations are done in finite precision, the residuals at both $\Omega\setminus \Lambda_t $ as well as at $\Lambda_t\setminus \Lambda_*$ are not precisely zero but rather, due to rounding errors, appear as very small random values. 
Therefore, even though $\Lambda_*\subset \Lambda_{t+1}$
its remaining $k-k^*$ entries
are not the same as those of 
$\Lambda_t\setminus \Lambda_*$, but rather change at each iteration $t$. Hence, 
the sets $\Lambda_t$ do not converge. 

To demonstrate that our binary search scheme successfully bounds the number of corrupted entries we compare two variants of \RGNMR.
The first, denoted simply \RGNMR, is given the true number of corrupted entries $k^*$ as input, $k=k^*$.
The second variant, denoted \texttt{RGNMR-BS}, first upper bounds $k^*$ using our binary search scheme and then uses this upper bound $\hat{k}$ as input, $k=\hat{k}$.
As illustrated in Figures  \ref{fig: oversampling_factor_experiment}, \ref{fig: overparameterization_experiment} and \ref{fig: condition_number_experiment}, \texttt{RGNMR-BS} performance is similar to that of \RGNMR, which implies that our method obtains a sufficiently tight 
upper bound on $k^*$.
\begin{remark}
    Several RMC methods aim to remove the corrupted entries and require an estimate of $k^*$ \citep{yan2013exact, yi2016fast}. 
    Since the intuition provided above holds also for these methods, a similar scheme to estimate $k^*$ can be incorporated there as well. We note that in the simulations we conducted these methods were provided with the exact number of outliers. 
    Still, in various settings, these methods failed to recover the underlying matrix, whereas \RGNMR succeeded.
    This highlights that our novel scheme for estimating the number of outliers is not the sole advantage of our proposed method.
\end{remark}

\section{Recovery guarantees}\label{sec: theory}

\begin{algorithm}[t]
\caption{\texttt{RGNMR} - modified}\label{alg: RGNMR-S}
    \textbf{Input:} All the inputs of \RGNMR and the following additional parameters:    
    \begin{itemize}
        \item $\mu$ - incoherence parameter of $L^*$ 
        \item $\alpha$ - a bound on the fraction of outliers in each row/column
        \item $\gamma$ - over-removal factor
        \item $\delta$ - neighborhood parameter
    \end{itemize}
    \textbf{Output:} $L^{*}$ of rank $r$
    \begin{algorithmic}[H]
        \For {$t = 0\ldots T-1$}
        \State \begin{equation*} \Lambda_t = \textnormal{support}\left(\mathcal{T}_{\gamma \alpha}\left(U_{t}V_{t}^{\top} - X, \Omega\right)\right)
        \end{equation*} 
            \begin{equation*}
                \begin{pmatrix} U_{t+1}\\ V_{t+1}\end{pmatrix} = \arg\min_{U, V}\left\{\|U_tV^{\top} + UV_t^{\top} - U_tV_t^{\top} - X\|_{F(\Omega \setminus \Lambda_t)}^2 \middle| (U, V) \in \mathcal{B}_{\mu} \cap\mathcal{C}\big(U_t, V_t, \tfrac{\delta}{4^{t+1}}\big)\right\}    
            \end{equation*}
        \EndFor
        \State \Return $L_T= U_TV_T^\top$
    \end{algorithmic}
\end{algorithm}
\vspace{1em}
\begin{table}[t]
\centering
\caption{Recovery guarantees requirements of various RMC methods, up to multiplicative constant factors. The weakest conditions for each category of methods are in bold.}
\vspace{1em}
\setlength{\tabcolsep}{1pt}
\renewcommand{\arraystretch}{1.5}
\begin{tabular}{@{} c c c c @{}}
\toprule
\begin{tabular}{c}Method\\Category \end{tabular} & Method & \begin{tabular}{c}Sample Complexity \\ ($p n_2 \geq$) \end{tabular} & \begin{tabular}{c}Corruption Rate \\ ($\alpha \leq$) \end{tabular} \\
\midrule
\multirow{5}{*}{\shortstack{Factorization\\Based}} 
  & \cite{zheng2016convergence} & \boldmath{$\max\{{\mu r\log n_1}, \mu^2 r^2\kappa^2 \}$} & $\alpha=0,$ No Corruption\\
  & \cite{tong2021accelerating} & $p=1,$ Fully Observed & $\frac{1}{ r^{\frac{3}{2}}\mu\kappa}$ \\
  & \cite{cai2024deeply} & $p=1,$ Fully Observed & $\frac{1}{r^{\frac{3}{2}}\mu \kappa}$ \\
  & \cite{yi2016fast} & $\mu^2r^2\kappa^4\log n_1$ & $\frac{1}{ r\mu\kappa^2}$ \\
  &Our Theorem \ref{theorem: contraction} & \boldmath{$\max\{\mu r \log n_1, \mu^2 r^2 \kappa^2\}$} & \boldmath{$\frac{1}{r \mu \kappa }$} \\
\midrule
\multirow{2}{*}{\shortstack{Full\\Matrix}} 
  & \cite{cherapanamjeri2017nearly} & \boldmath{$\mu^2 r^2\log^2(\mu r \sigma_1^*)\log^2n_1$} & \boldmath{$\frac{1}{r\mu }$} \\
  & \cite{wang2024leave} & $\mu^3 r^3\kappa^4\log n_1$ & $\frac{1}{r^2\mu^2 \kappa^2}$ \\
\bottomrule
\end{tabular}
\vspace{1em}
\label{table: recovery_guarantees}
\end{table}
In this section we present  recovery guarantees for \RGNMR. 
Recall that the goal is to recover a rank $r$ matrix $L^*\in\mathbb{R}^{n_1\times n_2}$ from a subset  $\Omega  $ of the entries of $X= L^*+S^*$.
We start with some notations and the assumptions made in our theoretical analysis.
\paragraph{Notations.} \label{sec: notations}
The $i$-th largest singular value of the matrix $L^*\in\mathbb{R}^{n_1\times n_2}$ is denoted by $\sigma_i^*$, and 
its condition number is 
denoted by $\kappa = \sigma^*_1/\sigma^*_r$. 
We denote the operator norm of  a matrix (a.k.a.~spectral norm) by $\|A\|_{op}$, its Frobenius norm by $\|A\|_F$, its $i$-th row by $A_{(i, \cdot)}$, its $j$ column by $A_{(\cdot, j)}$ its largest row norm by $\|A\|_{2,\infty} \equiv \max_i \|A_{(i, \cdot)}\|$ and its zero norm by $\|A\|_0 = |\{(i,j)\in [n_1]\times[n_2] \mid A_{i,j} \not= 0\}|$.
We define the matrix $|A|$ by $|A|_{(i,j)} = |A_{(i,j)}|$. 
We denote by $P_r(A)$ the best rank $r$ approximation of $A$ in Frobenius norm, namely the projection to the subspace spanned by the $r$ singular vectors with largest singular values.
We denote  $\Omega_{(\cdot, j)} = \{i \mid (i,j)\in \Omega\}$, $\Omega_{(i, \cdot)} = \{j \mid (i,j)\in \Omega\}$ and $c_j = |\Omega_{(\cdot, j)}|$, $r_i = |\Omega_{(i, \cdot)}|$. 
The Procrustes distance between $Z_1, Z_2 \in \mathbb R^{n\times r}$ is
\begin{align}\label{def: Procrustes_distance}
d_P(Z_1, Z_2) &= \min \left\{ \|Z_1 - Z_2 P\|_F \,\mid|\, P \in \mathbb R^{r\times r} \text{ is orthogonal} \right\}.
\end{align}
For a vector $v$ we denote by $v^{(k)}$ its $k$-th largest entry in absolute value.

\paragraph{Assumptions.}
For the RMC problem to be well-posed, we make three standard assumptions.
%

\begin{assumption}\label{assumption_1}The underlying matrix $L^*$ is incoherent with incoherence parameter $\mu$. 
Formally, if $U\Sigma V^\top$ is the SVD of $L^*$ then
\begin{align}\label{def: incohernce}
\|U\|_{2,\infty} \leq \sqrt{\mu r/n_1}, \quad
\|V\|_{2,\infty} \leq \sqrt{\mu r/n_2}.
\end{align}
\end{assumption}
For future use, we denote by \(\mathcal{M}(n_1, n_2, r, \mu, \kappa)\) the set of \(n_1 \times n_2\) matrices of rank $r$, incoherence parameter $\mu$ and condition number $\kappa$.
\begin{assumption}\label{assumption_2} [Bernoulli Model] Each entry in $X$ is independently observed with probability $p$.
Hence, the number of observed entries is not fixed, but rather  $|\Omega| \sim Bin(n_1 n_2,p)$.
\end{assumption}
\begin{assumption}\label{assumption_3}In each row and column, the fraction
of observed entries which are corrupted is bounded. 
Formally, for a known $0< \alpha <1 $, we assume that $S^*\in \mathcal{S}_{\alpha}^{\Omega}$,
where 
\begin{align}\label{def: S_alpha}
\mathcal{S}_{\alpha}^{\Omega} =& \left\{S\in \mathbb{R}^{n_1\times n_2} \middle| \forall i: \|\mathcal{P}_{\Omega}(S)_{(i, \cdot)}\|_0 \leq \alpha r_i \land \forall j: \|\mathcal{P}_{\Omega}(S)_{(\cdot, j)}\|_0 \leq  \alpha c_j
\right\}.
\end{align}
\end{assumption}
Assumption \ref{assumption_1} was introduced by \cite{candes2012exact}. 
Assumptions \ref{assumption_2} and \ref{assumption_3} or variants thereof are common in theoretical analyses of matrix completion
\citep[see for example, ][]{candes2011robust, yi2016fast, cherapanamjeri2017nearly, wang2024leave}.

Similar to other works on robust matrix completion, we derive  guarantees for a slightly modified variant of our proposed method. Specifically, we make the following two modifications. 

First, the set of corrupted entries $\Lambda_*$ is estimated differently.
In the original method $\Lambda_*$ is estimated as the set of $k$ largest residual entries.
However, motivated by Assumption \ref{assumption_3} that the fraction of corrupted entries in each row and column is bounded, in the modified algorithm we remove only a bounded number of entries from each row and column.
To this end we introduce the following 
thresholding operator $\mathcal T_\theta(A,\Omega)$. 
Given an input matrix $A$, a subset $\Omega$, and a parameter $\theta\in (0,1)$, the operator $\mathcal T_\theta(A,\Omega)$ kepdf only those entries $(i,j)\in \Omega$ which belong to the largest $\theta$-fraction of entries in  {\em both} the respective row $i$ and column $j$, and zeros out all the remaining entries. 
{In case of entries having identical magnitude, ties are broken arbitrarily.}
Formally, 
    \begin{align}\label{def: T_alpha}
        \mathcal{T}_{\theta}(A, \Omega) = 
        \begin{cases}
        \mathcal{P}_{\Omega}(A)_{i,j} & 
        |A_{i,j}| > \left[|\mathcal{P}_{\Omega}(A)|_{(i, \cdot)}\right]^{(\lceil \theta 
        r_i\rceil)}\land |A_{i,j}| > \left[|\mathcal{P}_{\Omega}(A)|_{(\cdot , j)}\right]^{(\lceil\theta c_j\rceil)}\\
        0,  &\text{otherwise}.
    \end{cases}
    \end{align}
At each iteration $t$, given the current estimate $L_t= U_t V_t^\top$, 
the estimated set of corrupted entries is 
\begin{align} \Lambda_t = \textnormal{support}\left(\mathcal{T}_{\theta}\left(L_{t} - X, \Omega\right)\right).
\end{align}
As in the original method, entries are still removed based on the magnitude of their residual but in this way the number of removed entries from each row and column is bounded.
For $\RGNMR$ to be able to remove all outliers, a necessary condition is that $\theta \geq \alpha$. In what follows, 
we choose $\theta = \gamma \alpha$
where $\gamma \geq 1 $
is the over-removal factor. 

The second modification to \RGNMR is to constrain the update  \((U_{t+1}, V_{t+1})\) to be in the vicinity of the current pair of factor matrices $(U_t,V_t)$ and to 
have bounded row norms.
Formally, for a parameter $\delta_t>0$
we define the $\delta_t$ neighborhood of the current estimate $(U_t, V_t)$ as
\begin{align}\label{def: C_neighborhood}
&\mathcal{C}{(U_t, V_t, \delta_t)} =\left\{\begin{pmatrix} \tilde{U} \\ \tilde{V} \end{pmatrix} \in \mathbb R^{(n_1+n_2)\times r} \,\middle|\,  \|\tilde{U} - U_t\|^2_F + \|\tilde{V} - V_t\|^2_F \leq {\delta_t}
\right\},
\end{align}
and denote the subset of factor matrices with bounded row norms by
\begin{align}\label{eq:B_mu_def}
\mathcal B_\mu &= \left\{\begin{pmatrix} U \\ V \end{pmatrix} \in \mathbb R^{(n_1+n_2)\times r} \,\middle|\, \|U\|_{2,\infty} \leq \sqrt\frac{3\mu r \sigma_1^*}{n_1}, \quad \|V\|_{2,\infty} \leq \sqrt\frac{3\mu r \sigma_1^*}{n_2} \right\}, 
\end{align}
where the constant $3$ is arbitrary. Instead of
Eq. \eqref{eq: objective_function},
for a suitably chosen $\delta > 0$, 
the modified \RGNMR updates 
the factor matrices as follows, 
\begin{align}\label{def: optimization_step}
        &(U_{t+1}, V_{t+1}) = \arg \min\left\{  \mathcal{L}^t_{\Omega\setminus\Lambda_t}(U, V) \middle| (U,V) \in \mathcal{B}_{\mu} \cap \mathcal{C}\left(U_t, V_t, \frac{\delta}{4^{t+1}}\right)\right\}.
\end{align}
Similar constraints were employed by \cite{zilber2022gnmr} and \cite{keshavan2010matrix}, in deriving theoretical guarantees for their (non-robust) matrix completion algorithms.
As these constraints are quadratic, the above problem may be written as a convex optimization problem with quadratic regularization terms. 
Hence, it can be solved computationally efficiently.
The modified \RGNMR is described in Algorithm \ref{alg: RGNMR-S}.

\paragraph{Main theorems.} 
To state our recovery guarantees we introduce the following definitions. 
For a rank $r$ matrix $L^*$ with smallest singular
value $\sigma^*_r$, we define the following sets, as in \citep{zilber2022gnmr}. First, we denote all factorizations of rank-$r$ matrices with a
bounded error from $L^*$ by
\begin{align}\label{eq:B_e_def}
\mathcal B_\text{err}(\epsilon) &= \left\{\begin{pmatrix}U, V \end{pmatrix} \in \mathbb R^{(n_1+n_2)\times r} \,\middle|\, \|UV^\top - L^*\|_F \leq \epsilon \sigma_r^* \right\}.
\end{align}
In particular, we denote by $\mathcal B^* = \mathcal B_\textnormal{err}(0)$ the set of all exact factorizations of $L^*$,
\begin{align}\label{eq:B*_def}
\mathcal B^* = \left\{ \begin{pmatrix} U, V \end{pmatrix} \in \mathbb R^{(n_1+n_2)\times r} \,\middle|\, UV^\top = L^* \right\} .
\end{align}
We say that  $U,V$ are \textit{balanced} if $U^\top U = V^\top V$, and measure the imbalance by $\|U^\top U - V^\top V\|_F$.
We denote all the pairs of factor matrices which are approximately balanced by
\begin{align}
\mathcal B_\textnormal{bln}(\delta) &= \left\{\begin{pmatrix} U, V \end{pmatrix} \in \mathbb R^{(n_1+n_2)\times r} \,\middle|\, \|U^\top U - V^\top V\|_F \leq \delta \sigma_r^* \right\} .
\end{align}

Our first theorem states that starting from a sufficiently accurate balanced initialization with bounded
row norms, the  modified \texttt{RGNMR} of Algorithm \ref{alg: RGNMR-S} recovers $L^*$ with high probability and with a linear convergence rate.

\begin{theorem}\label{theorem: contraction}Let $X= L^*+S^*$, where $L^*\in \mathcal M(n_1, n_2, r, \mu, \kappa)$, and  without loss of generality 
$n_1 \geq n_2$. Let the set of observed entries $\Omega$ follow
Assumption \ref{assumption_2},
and the corruption matrix $S^*$ satisfy Assumption \ref{assumption_3}
for some known $\alpha\in(0,1)$. 
For large enough absolute constants $C, c_l, c_e, c_{\alpha}, c_\gamma$ the following holds:
If the fraction of corrupted entries is small enough, \(\alpha < \frac{1}{c_{\alpha} r \mu \kappa}\) and the probability to observe an entry is high enough, $p\geq\frac{ C \mu r}{n_2} \max \{\log n_1, \mu r \kappa^2\}$, then w.p. at least $1-\frac{6}{n_1}$, Algorithm \ref{alg: RGNMR-S} with parameter  \(\frac{25 \sigma_r^*}{c_e^2 \kappa}\leq \delta \leq \frac{\sigma_r^*}{c_e \kappa}\), over removal factor $c_\gamma \leq \gamma \leq \sqrt{c_\alpha}$ and an initialization $(U_0, V_0
)\in \mathcal B_\textnormal{err}(\frac{1}{c_e\sqrt{\kappa}}) \cap \mathcal B_\textnormal{bln}(\frac{1}{2c_l}) \cap \mathcal{B}_{\mu}$ converges linearly to $L^*$. 
That is, after $t$ iterations the estimate $L_t = U_tV_t^\top$ satisfies
\begin{equation}\label{eq: convergnce}
    \|L_t - L^*\|_F \leq \frac{1}{2^t}\frac{\sigma_r^*}{c_e \sqrt{\kappa}}.
\end{equation}
\end{theorem}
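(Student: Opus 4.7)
The plan is to prove Theorem \ref{theorem: contraction} by induction on the iteration index $t$, with inductive hypothesis that $(U_t, V_t) \in \mathcal{B}_\textnormal{err}(\epsilon_t) \cap \mathcal{B}_\textnormal{bln}(\delta_t') \cap \mathcal{B}_\mu$ for $\epsilon_t = 2^{-t}/(c_e\sqrt{\kappa})$ and a suitable approximate-balancedness parameter $\delta_t'$. The base case $t=0$ is precisely the initialization assumption. For the inductive step, the two substeps of Algorithm \ref{alg: RGNMR-S} — the thresholding producing $\Lambda_t$ and the constrained Gauss--Newton update producing $(U_{t+1}, V_{t+1})$ — are analyzed in order, and the conclusion is that $\|L_{t+1} - L^*\|_F \leq \tfrac{1}{2}\|L_t - L^*\|_F$ while the $\mathcal{B}_\mu$ and approximate-balancedness conditions are preserved.

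\textbf{Thresholding analysis.} Write $L_t - X = (L_t - L^*) - S^*$. On clean entries $(i,j)\in \Omega\setminus\Lambda_*$ the residual equals $(L_t-L^*)_{i,j}$, which I would bound pointwise using the incoherence of $L_t$ and $L^*$ (both factorizations lie in $\mathcal{B}_\mu$), yielding roughly $\|L_t-L^*\|_\infty \lesssim \mu r \sigma_1^* \epsilon_t /\sqrt{n_1 n_2}$. On outlier entries the residual is $-S^*_{i,j}$ up to the same small additive perturbation. Since Assumption \ref{assumption_3} caps outliers per row/column at an $\alpha$ fraction and $\gamma \geq c_\gamma > 1$, keeping the top $\gamma\alpha$ fraction in both row \emph{and} column in \eqref{def: T_alpha} ensures that every outlier whose magnitude exceeds the incoherence-based threshold enters $\Lambda_t$. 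The upshot is that $\mathcal{P}_{\Omega\setminus\Lambda_t}(S^*)$ consists only of ``small'' residual outliers, controlled in Frobenius-type norms by $\|L_t-L^*\|$ quantities, while the row/column caps $|\Lambda_t \cap \{i\}\times[n_2]| \leq \gamma \alpha r_i$ and likewise for columns guarantee that $\Omega\setminus\Lambda_t$ remains dense enough for low-rank completion. Chernoff concentration on row/column degrees under Assumption \ref{assumption_2}, combined with a union bound, handles the sampling randomness.

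\textbf{Gauss--Newton contraction.} Following the GNMR template of \cite{zilber2022gnmr}, I would express the optimality condition for the constrained minimizer in \eqref{def: optimization_step} as a linear system in the increments $(U_{t+1}-U_t, V_{t+1}-V_t)$ whose right-hand side is $\mathcal{P}_{\Omega\setminus\Lambda_t}(U_tV_t^\top - X)$. Writing $L_{t+1}-L^* = (U_{t+1}-U_t)(V_{t+1}-V_t)^\top + (\text{linearization error from } S^* \text{ on } \Omega\setminus\Lambda_t)$, the first term is quadratic in $\epsilon_t$, and the second is bounded by the output of the thresholding analysis. Under the sample-size assumption $p \gtrsim \mu r \max\{\log n_1, \mu r \kappa^2\}/n_2$, standard restricted-isometry-style arguments (matrix Bernstein together with a covering net on $\mathcal B_\mu \cap \mathcal{C}(U_t,V_t,\delta/4^{t+1})$) yield that the Gauss--Newton operator is well-conditioned on $\Omega\setminus\Lambda_t$, producing an overall bound of the form $\|L_{t+1}-L^*\|_F \leq c_1 \epsilon_t^2 \sigma_r^* + c_2 r\mu\kappa \alpha\, \epsilon_t \sigma_r^*$. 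For $\alpha < 1/(c_\alpha r\mu\kappa)$ with $c_\alpha$ large enough and $\epsilon_t$ small enough (as guaranteed inductively), this collapses to the desired factor of $1/2$. The trust-region constraint $\mathcal{C}(U_t, V_t, \delta/4^{t+1})$ keeps the new iterate near $(U_t, V_t)$, thereby preserving both $\mathcal B_\mu$ feasibility (via a perturbation-of-row-norms argument) and approximate balancedness, closing the induction.

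\textbf{Main obstacle.} The delicate part is the coupling between the two substeps: the thresholding analysis bounds the linearization-error perturbation of the Gauss--Newton step only in terms of the same $\|L_t-L^*\|_F$ that the update is attempting to contract. The threshold parameter $\gamma$ and the incoherence-based $L_\infty$ bounds must therefore be tuned jointly so that the resulting bound is a genuine strict contraction rather than a self-consistent identity. The scaling $\alpha \lesssim 1/(r\mu\kappa)$ and the lower bound $\gamma \geq c_\gamma$ arise precisely from this balancing. Maintaining $\mathcal{B}_\mu$, approximate balancedness, and the error decay $\epsilon_t$ simultaneously across all iterations is a circular induction that I expect will require the trust region in \eqref{def: optimization_step} to be actively exploited, in the same spirit as in \cite{zilber2022gnmr} and \cite{keshavan2010matrix}, rather than serving as a merely analytic device.
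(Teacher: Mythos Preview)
The inductive skeleton matches the paper, but your thresholding analysis has a genuine gap. The claimed entrywise bound $\|L_t-L^*\|_\infty \lesssim \mu r \sigma_1^* \epsilon_t /\sqrt{n_1 n_2}$ does \emph{not} follow from $(U_t,V_t),(U^*,V^*)\in\mathcal{B}_\mu$: incoherence of the two matrices separately gives only $\|L_t-L^*\|_\infty \lesssim \mu r \sigma_1^*/\sqrt{n_1 n_2}$ with no $\epsilon_t$ factor. Writing $L_t-L^*=(U_t-U^*)V_t^\top+U^*(V_t-V^*)^\top$ would require control on $\|U_t-U^*\|_{2,\infty}$ to recover the $\epsilon_t$, but the induction only supplies $\|U_t-U^*\|_F\lesssim\epsilon_t\sqrt{\sigma_r^*}$ (Lemma~\ref{lem: delta_bounds}); row-wise control is a leave-one-out-type statement not available here. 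Without the $\epsilon_t$ factor your bound on the surviving outliers does not shrink with $t$, and the contraction collapses.

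The paper avoids entrywise bounds entirely and works in Frobenius norm on sparse index sets. The two ingredients replacing your $L_\infty$ step are: (i) for any $\Lambda$ with at most a $\theta$-fraction per row and column, $\|L_t-L^*\|_{F(\Lambda)}^2 \lesssim \theta\,\mu r\sigma_1^*\big(\|U_t-U^*\|_F^2+\|V_t-V^*\|_F^2\big)$ (Lemma~\ref{lem: sparse_error_bound}), which uses only Frobenius control on $U_t-U^*$ together with row incoherence; and (ii) a quantile argument (Lemma~\ref{lem: corrupted_observed_error_bound}) showing that any undetected outlier $(i,j)\in(\Omega\setminus\Lambda_t)\cap\Lambda_*$ failed the top-$\gamma\alpha$ test in some row or column, so $|(L_t-X)_{i,j}|$ is dominated by the $(\gamma-1)\alpha$-quantile of the clean residuals there, yielding $\|L_t-X\|_{F((\Omega\setminus\Lambda_t)\cap\Lambda_*)}\leq\sqrt{2/(\gamma-1)}\,\|L_t-L^*\|_{F(\Omega)}$. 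Together these give $\|S^*\|_{F((\Omega\setminus\Lambda_t)\cap\Lambda_*)}\lesssim\big(\sqrt{\alpha\mu r\kappa}+1/\sqrt{\gamma-1}\big)\sqrt{p}\,\|L_t-L^*\|_F$ (Lemma~\ref{lem: magnitude_of_corrupted_entries}), which does scale with $\epsilon_t$ and is exactly where the conditions $\alpha\lesssim 1/(\mu r\kappa)$ and $\gamma\geq c_\gamma$ come from.

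Two smaller points on the Gauss--Newton side. Your decomposition omits the second quadratic term $(U_t-U^*)(V_t-V^*)^\top$, which enters when you compare the objective at the minimizer to its value at the feasible point $(U^*,V^*)$; the paper's Lemma~\ref{lem: observed_error_bound} carries both $\Delta U_t^*\Delta V_t^{*\top}$ and $\Delta U_{t+1}\Delta V_{t+1}^\top$. And to pass from $\|L_{t+1}-L^*\|_{F(\Omega\setminus\Lambda_t)}$ to $\|L_{t+1}-L^*\|_F$ via the RIP of Lemma~\ref{lem: RIP}, you need $(U_{t+1},V_{t+1})\in\mathcal{B}_\textnormal{err}\cap\mathcal{B}_\textnormal{bln}\cap\mathcal{B}_\mu$ \emph{before} establishing the contraction; the paper breaks this circularity with Lemma~\ref{lem: always_balanced}, which extracts a crude a~priori bound $(U_{t+1},V_{t+1})\in\mathcal{B}_\textnormal{err}(13/\sqrt{c_e})\cap\mathcal{B}_\textnormal{bln}(1/c_l)$ purely from the geometrically shrinking trust regions, independently of optimality. (Also, $\mathcal{B}_\mu$ membership is a hard constraint in \eqref{def: optimization_step}, so it requires no perturbation argument.)
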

Our second theorem states that under suitable conditions it is possible to construct an accurate initialization that satisfies the requirements of Theorem \ref{theorem: contraction}. 
To do so we employ the spectral based initialization scheme proposed by \cite{yi2016fast}, followed by a normalization procedure on the
rows of the factor matrices.
Similar normalization procedures were employed by \cite{zheng2016convergence} and \cite{zilber2022gnmr} for their analysis of matrix completion without outliers.
Our initialization scheme, outlined
in Algorithm \ref{alg: initialization}, is described in detail in Appendix \ref{sec: initialization}. 

\begin{theorem}\label{theorem: initialization} 
Let $X= L^* +S^*$, $L^* \in \mathcal M(n_1, n_2, r, \mu, \kappa)$ , $S^*\in \mathcal{S}_{\alpha}^{\Omega}$ and without loss of generality, $n_1 \geq n_2$. 
Then for any  $c_l, c_e$ for large enough constants $c_{\alpha}, C$ the following holds: If $\alpha \leq \frac{1}{c_\alpha \kappa^2 r^{\frac{3}{2}} \mu}$ and $p\geq C\frac{\mu r^2 \kappa^4 \log n_1 }{n_2}$ then w.p. at least $1-\frac{6}{n_1}$ Algorithm \ref{alg: initialization} outputs a pair $(U_0, V_0
)$ that satisfies
\begin{equation}
    (U_0, V_0
)\in \mathcal B_\textnormal{err}\left(\frac{1}{c_e \sqrt{\kappa}}\right) \cap \mathcal B_\textnormal{bln}\left(\frac{1}{2c_l }\right) \cap \mathcal{B}_{\mu}.
\end{equation}\end{theorem}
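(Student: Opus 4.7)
The proof follows the three-stage structure of Algorithm \ref{alg: initialization}: an outlier-removal step via the thresholding operator $\mathcal{T}_{\gamma\alpha}$, a spectral factorization of the rescaled matrix, and a row-normalization step that enforces membership in $\mathcal{B}_\mu$. My plan is to show that after each stage the relevant error bounds are preserved up to absolute constants, so the final output lies in the intersection $\mathcal{B}_\textnormal{err}(1/(c_e\sqrt{\kappa})) \cap \mathcal{B}_\textnormal{bln}(1/(2c_l)) \cap \mathcal{B}_\mu$.

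First, I would analyze the thresholding stage. By incoherence, every entry of $L^*$ satisfies $|L^*_{ij}| \leq \mu r \sigma^*_1/\sqrt{n_1 n_2}$. Under Assumption \ref{assumption_3} with over-removal factor $\gamma$, any surviving corrupted entry must be smaller (in magnitude) than the $\lceil \gamma \alpha r_i\rceil$-th order statistic of the row (and similarly for columns); since at most an $\alpha$ fraction of each row is corrupted, these order statistics are bounded by the largest entries of $L^*$ in that row, which are of order $\|L^*\|_\infty$. Thus the residual corruption matrix $\tilde{S}$ after thresholding satisfies both $\|\tilde{S}\|_\infty \lesssim \mu r \sigma_1^*/\sqrt{n_1 n_2}$ and a row/column sparsity bound of $(1+\gamma)\alpha$. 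Combined, this gives $\|\tilde{S}\|_{\text{op}} \lesssim (1+\gamma)\alpha \sqrt{n_1 n_2}\cdot \|L^*\|_\infty \lesssim (1+\gamma)\alpha \mu r \sigma_1^*$. Additionally, one must control the entries of $L^*$ that the thresholding inadvertently zeros out: these are bounded in the same way, contributing at most an additional $O(\gamma \alpha \mu r \sigma_1^*)$ to the operator norm of the perturbation relative to $\mathcal{P}_\Omega(L^*)$.

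Second, I would apply a matrix Bernstein inequality to the rescaled observation matrix and show that
\begin{equation*}
\|p^{-1}\mathcal{P}_\Omega(L^*) - L^*\|_{\text{op}} \lesssim \sigma_1^* \sqrt{\mu r \log n_1 / (pn_2)}
\end{equation*}
with probability at least $1-1/n_1^2$, using the standard incoherence-based bound. Together with the thresholding bound, this yields $\|p^{-1}\mathcal{T}_{\gamma\alpha}(\mathcal{P}_\Omega(X),\Omega)^c - L^*\|_{\text{op}} \lesssim \sigma_r^*/(c'\sqrt{\kappa}\sqrt{r})$ for any prescribed constant $c'$, provided $\alpha \leq 1/(c_\alpha \kappa^{3/2} r \mu)$ and $p n_2 \gtrsim \mu r \kappa^4 \log n_1$. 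Setting $U_0', V_0'$ to be the balanced SVD-based factors $\hat{U}\hat{\Sigma}^{1/2}, \hat{V}\hat{\Sigma}^{1/2}$ of the rank-$r$ projection, a Weyl plus Davis–Kahan argument (or the Wedin-type bound used in \cite{yi2016fast}) converts the operator-norm bound into a Procrustes-distance bound $d_P((U_0',V_0'),(U^*,V^*)) \lesssim \|p^{-1}Y - L^*\|_{\text{op}}/\sqrt{\sigma_r^*}$, which in turn gives $\|U_0'V_0'^\top - L^*\|_F \leq \sigma_r^*/(2c_e\sqrt{\kappa})$ as needed. At this stage the factorization is exactly balanced, so $(U_0', V_0') \in \mathcal{B}_\textnormal{bln}(0)$.

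Third, I would analyze the row-normalization step, which rescales any row of $U_0'$ or $V_0'$ whose norm exceeds the incoherence threshold $\sqrt{3\mu r \sigma_1^*/n_1}$ (respectively $\sqrt{3\mu r \sigma_1^*/n_2}$). Since the exact factors $U^*\Sigma^{*1/2}, V^*\Sigma^{*1/2}$ of $L^*$ already have row norms bounded by $\sqrt{\mu r \sigma_1^*/n_1}$ and $\sqrt{\mu r \sigma_1^*/n_2}$, and the Procrustes distance between our factors and these is small, the amount of truncation per row is controlled by the row-wise error $\|U_0' - U^*\Sigma^{*1/2}\|_{2,\infty}$, which can be bounded via a leave-one-out or $\ell_{2,\infty}$ perturbation argument (as in \cite{zheng2016convergence}). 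This ensures that row normalization (i) produces $(U_0,V_0)\in\mathcal{B}_\mu$, (ii) increases the Frobenius error by at most a constant factor, keeping us in $\mathcal{B}_\textnormal{err}(1/(c_e\sqrt{\kappa}))$, and (iii) introduces imbalance bounded by $\|U_0^\top U_0 - U_0'^\top U_0'\|_F + \|V_0^\top V_0 - V_0'^\top V_0'\|_F \leq \sigma_r^*/(2c_l)$, placing us in $\mathcal{B}_\textnormal{bln}(1/(2c_l))$.

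The main obstacle I anticipate is the third step: controlling the $\ell_{2,\infty}$ (rather than Frobenius) error of the spectral factors, since this requires either a leave-one-out device or a careful application of entry-wise eigenvector perturbation bounds, and it is here that the tighter requirements $\alpha \lesssim 1/(\kappa^2 r^{3/2}\mu)$ and $p n_2 \gtrsim \mu r^2 \kappa^4 \log n_1$ in Theorem \ref{theorem: initialization} (relative to those in Theorem \ref{theorem: contraction}) enter. The looser scaling in $r$ and $\kappa$ compared to Theorem \ref{theorem: contraction} reflects precisely the cost of obtaining row-wise (not just Frobenius) guarantees on the initial factors.
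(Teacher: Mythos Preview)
Your first two stages are broadly aligned with the paper's argument (the paper packages them into Lemma~\ref{lem: operator_distance_bound} from \cite{yi2016fast} for the operator-norm bound, then Lemma~\ref{lem: Procrustes distance_bound} to convert operator norm to Procrustes distance), but your third stage diverges substantially and your diagnosis of where the strict requirements on $\alpha$ and $p$ enter is incorrect.

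For the row-normalization step you propose controlling the amount of truncation via an $\ell_{2,\infty}$ perturbation / leave-one-out argument. The paper avoids this entirely through a much simpler contraction observation (Lemma~\ref{lem: Z_Z_0_Procrustes_distance_bound}, itself resting on Lemma~11 of \cite{zheng2016convergence}): since the true balanced factors $U^*,V^*$ already satisfy $\|U^*\|_{2,\infty}\leq \eta_1$ and $\|V^*\|_{2,\infty}\leq \eta_2$ (a consequence of incoherence plus $\|UV^\top-L^*\|_{op}\leq\sigma_1^*/4$), clipping each row of $Z$ at threshold $\eta$ can only \emph{decrease} its distance to $Z^*P$ for every orthogonal $P$, hence $d_P(Z_0,Z^*)\leq d_P(Z,Z^*)$. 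No row-wise error bound on the estimated factors is ever needed. The balance and Frobenius-error bounds for the clipped pair then follow from the generic quadratic inequality of Lemma~\ref{lem: update_balance_distance_bounds} applied with $Z'=Z_0$, $Z=Z^*$, not by separately tracking $\|U_0^\top U_0 - U_0'^\top U_0'\|_F$.

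Consequently, your attribution of the conditions $\alpha\lesssim 1/(\kappa^2 r^{3/2}\mu)$ and $pn_2\gtrsim \mu r^2\kappa^4\log n_1$ to the row-normalization stage is mistaken: in the paper these come entirely from stage two. The Procrustes bound picks up a factor $\sqrt{r/\sigma_r^*}$ in front of $\|UV^\top-L^*\|_{op}$, and the quantity $a$ in Lemma~\ref{lem: update_balance_distance_bounds} contributes a further $\sqrt{\sigma_1^*}$; forcing $a\leq \sigma_r^*/(c_e\sqrt{\kappa})$ then requires the operator-norm error to be $O(\sigma_r^*/(\kappa\sqrt{r}))$, which is what drives the $\kappa^2 r^{3/2}$ in the $\alpha$ condition and, via the choice $\epsilon\sim 1/\kappa^2$ in Lemma~\ref{lem: operator_distance_bound}, the $\kappa^4$ in the sample-complexity condition. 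Your proposed $\ell_{2,\infty}$ route could in principle be made to work, but it is strictly harder than necessary and does not explain the stated parameter dependencies.
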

\begin{remark}
    Theorem \ref{theorem: contraction} holds for \(\frac{25 \sigma_r^*}{c_e^2 \kappa}\leq \delta \leq \frac{\sigma_r^*}{c_e \kappa}\).  
    In principle, to provide a valid value of $\delta$ in this interval requires knowledge of $\sigma_1^*$ and $\sigma_r^*$. 
    We note that several other works also required knowledge of $\sigma_1^*$ or $\sigma_r^*$
    or both for their methods
     \citep[see][]{yi2016fast, sun2016guaranteed, cherapanamjeri2017nearly}. 
    In fact, for our approach it suffices to estimate
    these two singular values up to constant multiplicative factors. Moreover, within the proof of Theorem \ref{theorem: initialization}, 
    we show that under 
    its assumptions, it is indeed possible to estimate $\sigma_1^*$ and $\sigma_r^*$ up to constant factors. 
\end{remark}

\begin{remark}
    Theorems \ref{theorem: contraction} and \ref{theorem: initialization} require that the parameters $\mu$ and $\alpha$ are known.
    Similar assumptions were made in previous works \citep[e.g.][]{yi2016fast, cherapanamjeri2017nearly, sun2016guaranteed, zilber2022gnmr}.
\end{remark}

\paragraph{Comparison to other recovery guarantees.} We compare our theoretical results to those derived for both factorization based methods as well as methods that operate on the full matrix.

For the vanilla matrix completion problem, where the observed entries are not corrupted, the smallest sample complexity requirement for factorization based methods was derived by \cite{zheng2016convergence}. 
Remarkably, Theorem \ref{theorem: contraction} matches this result for \RGNMR even in the presence of corrupted entries. 
The number of samples required by Theorem \ref{theorem: contraction} is smaller than those required
for other RMC methods \citep{yi2016fast,wang2024leave} and if $\kappa < C \log(\mu r\sigma_1^*)\log n_1$, for a suitable constant $C$, it is also smaller than the requirements in \citet{cherapanamjeri2017nearly}. 

In terms of the fraction $\alpha$ of corrupted entries, the recovery guarantee with the highest $\alpha$ for factorization based methods was derived by \citet{tong2021accelerating} and by  \citet{cai2024deeply}, under the assumption that the matrix is fully observed.  
Theorem \ref{theorem: contraction} improves upon this result by a factor of $\mathcal{O}(\sqrt{r})$, even though the matrix is only partially observed. 
Hence our recovery guarantee holds under the weakest known conditions for factorization based methods. 
For general RMC methods the 
recovery guarantee with the highest $\alpha$ was derived by \cite{cherapanamjeri2017nearly}.
Theorem \ref{theorem: contraction} requires a smaller $\alpha$ than in \citet{cherapanamjeri2017nearly} by a factor of $\mathcal{O}(\kappa)$.
Table \ref{table: recovery_guarantees} summarizes the sample complexity and corruption rate requirements of different methods.

Next, for constructing a suitable  initialization the requirements on $\alpha$ and $p$ in Theorem \ref{theorem: initialization} are more stringent than those for the recovery guarantee of Theorem \ref{theorem: contraction}.
This separation is common in other works on matrix completion  \citep{zheng2016convergence, yi2016fast, zilber2022gnmr}.
In terms of sample complexity Theorem \ref{theorem: initialization} still improves upon the results in \citet{yi2016fast} and 
\citet{wang2024leave} and in some parameter regimes also upon the result in  \citet{cherapanamjeri2017nearly}.
In terms of the fraction of outliers the condition in  Theorem \ref{theorem: initialization} is more stringent than in  \citet{yi2016fast} and \citet{cherapanamjeri2017nearly}. 
%
%
Since empirically \RGNMR succeeds even with random initialization we believe the  condition imposed by Theorem \ref{theorem: contraction} can be relaxed.
This would allow for less stringent conditions in Theorem \ref{theorem: initialization}.
We leave this for future work.

\section{Numerical results}\label{sec: numerical}
This section details the simulations conducted to compare the various RMC methods.
The results are illustrated in Figures  \ref{fig: oversampling_factor_experiment}, \ref{fig: overparameterization_experiment}, \ref{fig: condition_number_experiment} and in Appendix \ref{sec: additional_results}. 
Each instance consisted of generating a random matrix $L^*\in \mathbb{R}^{n_1\times n_2}$ of a given rank $r$, condition number $\kappa$, subset of observed entries $\Omega\subseteq[n_1]\times[n_2]$ and a corruption matrix $S^*$. 
We generate $L^*$ following \citet{tong2021accelerating}.
Specifically, we construct $U\in\mathbb{R}^{n_1 \times r}, V\in\mathbb{R}^{n_2 \times r}$ with entries i.i.d. from the standard normal distribution and orthonormalize their columns. 
We then construct $\Sigma\in \mathbb{R}^{r\times r}$ as a  diagonal matrix with $r$ evenly spaced values between $1$ and $\frac{1}{\kappa}$ on the diagonal.
We set $L^* = U \Sigma V^\top$.
Next, to generate $\Omega$ we follow \citet{zilber2022gnmr}. Given an oversampling ratio $\rho$ we sample $\Omega$ of size $\lfloor\rho r(n_1+n_2 - r)\rfloor$ randomly without replacement and verify that it contains at least $r$ entries in each column and row. 
To construct the corruption matrix $S^*$ we first sample its support $\Lambda_*\subseteq\Omega$ as follows. 
Given a fraction $\alpha$ of corrupted entries we sample $\lfloor\alpha \cdot |\Omega|\rfloor$ entries out of $\Omega$ randomly without replacement. 
We then verify that the set $\Omega\setminus\Lambda_*$ contains at least $r$ entries in each column and row.
Similar to  \citet{cai2024deeply} and \citet{wang2024leave}, we sample the value of each corruption entry uniformly  between  $-\max_{i,j}|L^*_{i,j}|$ and $\max_{i,j}|L^*_{i,j}|$.
Each method is then provided with the matrix $\mathcal{P}_{\Omega}(L^*+S^*)$, the set of entries $\Omega$ and the matrix rank $r$.

We compare \RGNMR  to \AOP \citep{yan2013exact}, \RMC \citep{cambier2016robust}, \RPCA \citep{yi2016fast},  \texttt{RRMC} \citep{cherapanamjeri2017nearly}, \texttt{HUB} \citep{ruppel2020globally} and  \HOAT \citep{wang2022robust}.
In addition, as described in Section \ref{subsec: Estimating the number of corrupted entries},  we compare \RGNMR to \texttt{RGNMR-BS}.
Implementations of all methods except \texttt{RRMC} are publicly available. 
An implementation of \texttt{RRMC} was kindly provided to us by \cite{wang2024leave}. MATLAB and Python implementations of \RGNMR are available
at \href{https://github.com/eilon96/RGNMR}{\texttt{github.com/eilon96/RGNMR}}. All simulations were run on a 2.1GHz Intel Core i7 CPU with 32GB of memory. 

Each algorithm was executed 
with its default parameters, with the following exceptions:
(i) To improve \RPCA and \texttt{HUB} performance we increased their number of iterations by using stricter convergence threshold than their default;
(ii) In \RPCA we employed an over removal factor $\gamma$ of $4$ as it  significantly improved its performance in our simulations;
(iii) We tuned the parameter $\xi$ in \HOAT to $30$;
(iv) We provided \AOP the exact number of corrupted entries;
(v) For \texttt{RRMC} we use the same parameters used in \citet{wang2024leave}.

In Appendix \ref{sec: additional_results} we illustrate
the performance of
\RGNMR under a broad set of additional scenarios.
Specifically, Figure \ref{fig: fraction_of_outliers_experiment} shows that \RGNMR can handle a large fraction of corrupted entries.
Figures \ref{fig: non_uniform_sampling_experiment} and \ref{fig: diagonal sampling} show that \RGNMR outperforms other methods also under non-uniform sampling. 
Figures \ref{fig: additive_noise}, \ref{fig: condition number with additive noise} and \ref{fig: oversampling number with additive noise} illustrate its performance under a combination of  additive noise and outliers. Figure \ref{fig: phase_transition} demonstrates that \RGNMR can recover matrices with a high rank even if the number of observed entries is relatively small.  
Figure \ref{fig: Background extraction} illustrates that \RGNMR 
performs well on a real dataset involving background extraction in a video.
This is a standard benchmark for RMC methods \citep{yi2016fast, cherapanamjeri2017nearly, huang2021robust, cai2024deeply}.
Finally, Figure \ref{fig: runtime} shows the runtime of \RGNMR 
and illustrates its performance on large matrices. 


\section{Limitations and future research}
In this section we discuss the theoretical and empirical limitations of \RGNMR.
These suggest several promising directions for future research.

On the theoretical front, though our analysis improves upon existing results, it does not fully account for the remarkable empirical performance of \RGNMR. 
For example, the conditions in Theorem \ref{theorem: contraction} on both $\alpha$ and $p$ depend on the condition number $\kappa$ while empirically the performance of \RGNMR is not affected by $\kappa$.
In addition, our analysis did not consider the case of overparameterization.
These are open directions for future theoretical analysis of \RGNMR.

On the empirical front, although \RGNMR performs well in cases where most RMC methods fail, it is more computationally demanding than some of them.
This is because each optimization step of \RGNMR requires to solve a large system of linear equations \eqref{eq: objective_function}, whereas some other methods require only a gradient descent step. 
As we illustrate in Figure \ref{fig: runtime}, the runtime of \RGNMR grows quadratically with $n$.
Improving the runtime of \RGNMR is therefore an interesting direction for future research.

Another possible line of research is to employ \RGNMR to the problem of robust matrix recovery \citep{li2020nonconvex, ding2021rank, ma2023global}. 
As discussed  in Remark \ref{remark: gnmr}, \RGNMR is closely related to the \texttt{GNMR} method \citep{zilber2022gnmr}.
Since \texttt{GNMR} was proven to successfully handle the problem of vanilla matrix recovery, where no data is corrupted, we believe the same can be achieved for \RGNMR in the presence of corrupted data.


\paragraph{Acknowledgments}
B.N. is the incumbent of the William Petschek Professorial Chair of
Mathematics. 
The research of B.N. was supported in part by grant 2362/22 from the Israel Science Foundation.
We thank Tianming Wang and Ke Wei
for sharing the code of their work \citep{wang2024leave}. 
We also thank the anonymous reviewers for their valuable feedback that improved the quality of our manuscript.

\nocite{*}
\bibliographystyle{apalike}
\bibliography{ref}

\clearpage
\appendix
\begin{figure}[t]
    \centering
     \includegraphics[width=0.47\linewidth]{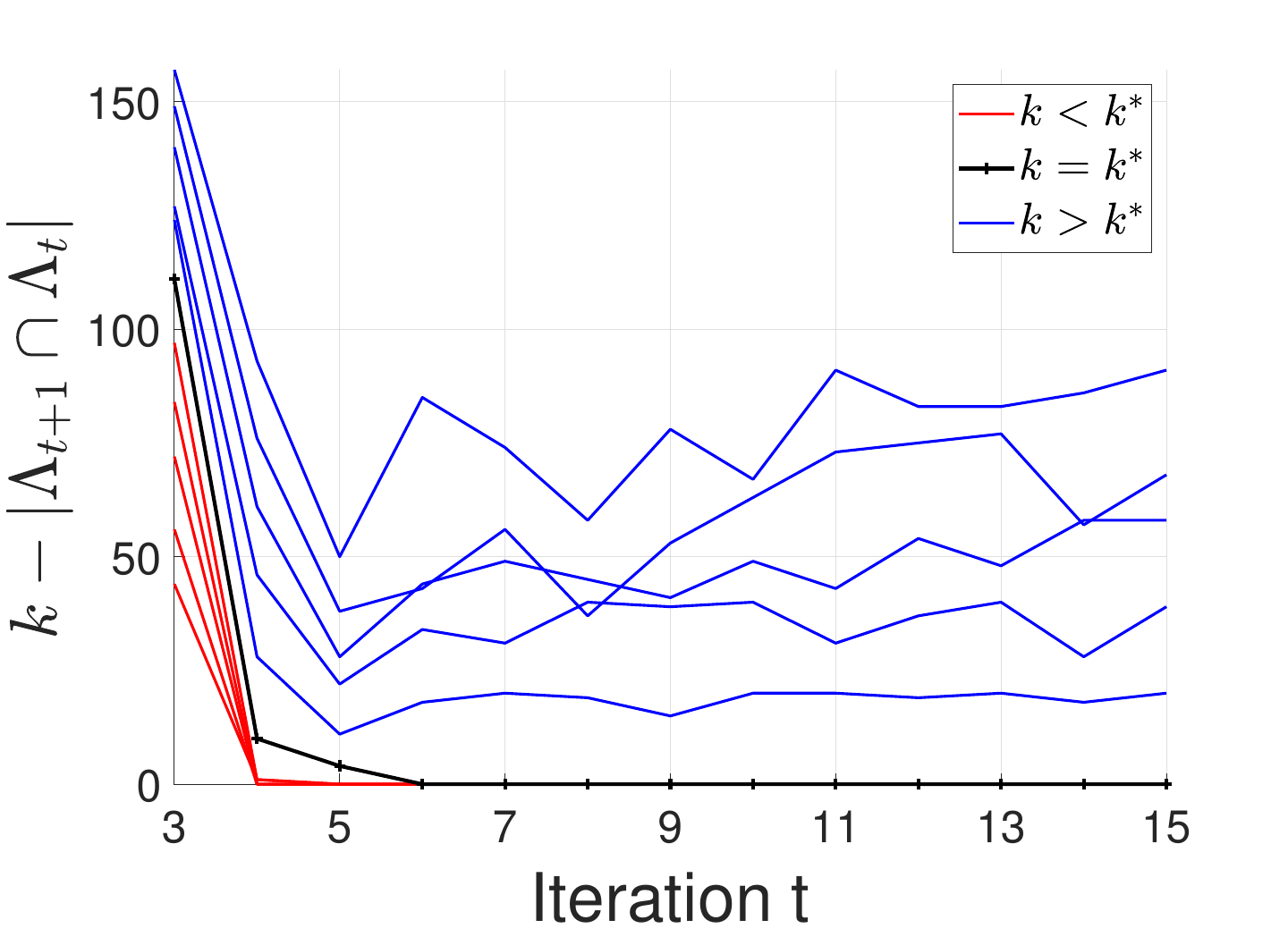}
    \caption{
    Empirical convergence or non-convergence of $\Lambda_t$ as a function of the iteration $t$ of \RGNMR for $k<k^*$ (red), 
    $k=k^*$ (black) or $k>k^*$ (blue). 
    The assumed number of corrupted entries was in  the range $k\in [0.95,1.05] \cdot  k^*$. 
    As shown, if $k > k^*$ $\Lambda_t$ does not converge.
    }\label{fig: wrong_estimation_of_outliers_Lambda_convergence}
\end{figure}

\section{Initialization procedure} \label{sec: initialization}
\begin{algorithm}[t]
\caption{Initialization procedure for Robust Matrix Completion}\label{alg: initialization}
    \textbf{Input:}
    \begin{itemize}
        \item$\{X_{i, j} \mid (i, j) \in \Omega\}$ - observed entries
        \item $\mu$ incoherence parameter of $L^*$
        \item $r$ - rank of $L^*$
        \item $p$ - probability of observing an entry 
        \item $\alpha$ - the maximal fraction of outliers in each row/column
    \end{itemize}
    \textbf{Output:} $(U_0, V_0)$ 
    \begin{algorithmic}
        \State $S_{init} = \mathcal{T}_{\alpha} {\left( \mathcal{P}_{\Omega}(X), \Omega\right)}$ // An estimate of $S^*$
    \State set $Z=\begin{pmatrix} U\\ V\end{pmatrix} = \text{b-SVD}_r{\big [} \frac{1}{p}\left(\mathcal{P}_{\Omega}(X) - S_{init}\right){\big ]}$
    \State set $\eta_1 = \sqrt{\frac{\mu r}{n_1}}\|Z\|_{op}$ and $\eta_2= \sqrt{\frac{\mu r}{n_2}}\|Z\|_{op}$
    \State set $
        Z_0 = \begin{pmatrix} R_{\eta_1}(U_0)\\ R_{\eta_2}(V_0)\end{pmatrix} 
    $
    \State \Return \((U_0, V_0)\)
    \end{algorithmic}
\end{algorithm}
Theorem \ref{theorem: contraction} shows that starting from a sufficiently accurate balanced initialization with bounded row norms \texttt{RGNMR} recovers $L^*$. 
Therefore, the goal of our initialization procedure is to construct such initialization. 
Our proposed scheme starts by constructing an initial estimator of $S^*$, which we denote by $S_{init}$.
To this end, the function $\mathcal{T}_{\alpha}$ (defined in \cref{def: T_alpha}) is applied to $\mathcal{P}_{\Omega}(X)$. Formally,
\begin{align}\label{def: S_init}
    S_{init} = \mathcal{T}_{\alpha}\left(\mathcal{P}_{\Omega}(X), \Omega\right).
\end{align}
If the initial estimate of $S^*$ is accurate enough, \(S_{init} \approx S^*\), then \(\frac{1}{p}(\mathcal{P}_{\Omega}(X) - S_{init}) \approx \mathbb{E}[\mathcal{P}_{\Omega}(L^*)]\), should be a sufficiently accurate estimation of $L^*$. 
Denote by $\tilde U \tilde \Sigma \tilde V^\top$ the SVD of $P_r\left(\frac{1}{p}(\mathcal{P}_{\Omega}(X) - S_{init})\right)$. 
To obtain a balanced factorization of $\frac{1}{p}(\mathcal{P}_{\Omega}(X) - S_{init})$ we use the following operator, 
\begin{align}\label{def: b-SVD}
    \textnormal{b-SVD}_r(\frac{1}{p}(\mathcal{P}_{\Omega}(X) - S_{init})) = \begin{pmatrix}
        \tilde U \tilde \Sigma^{\frac{1}{2}} \\
        \tilde V\tilde \Sigma^{\frac{1}{2}}
    \end{pmatrix} = \begin{pmatrix}
        U \\
        V
    \end{pmatrix}.
\end{align}
Note that the output of this operator is a perfectly balanced factorization which satisfies $U^\top U= V^\top V$.
To bound the row norms of the factor matrices we apply a clipping operator $R_{\eta}$ with a suitably chosen $\eta$. The operator is applied to a vector $\mathbf{x}$ and it is defined as
\begin{align}\label{def: clipping}R_{\eta} (\mathbf{x}) = 
        \begin{cases}
        \mathbf{x} & \|\mathbf{x}\| \leq \eta,\\
        \eta \frac{\mathbf{x}}{\|\mathbf{x}\|} & \|\mathbf{x}\| > \eta.
    \end{cases}
    \end{align}
For a matrix $A$, we define $R_{\eta}(A)$ as the matrix obtained by applying $R_{\eta}$ to each of its rows. For every row $i$, $R_{\eta}(\mathbf{A})_{(i, \cdot)} = R_{\eta}(\mathbf{A}_{(i, \cdot)})$.
Algorithm \ref{alg: initialization} outlines this initialization scheme. 
\section{Additional simulations}\label{sec: additional_results}
\begin{figure}[t]
    \begin{center}
    {\includegraphics[width=0.47\linewidth
    ]{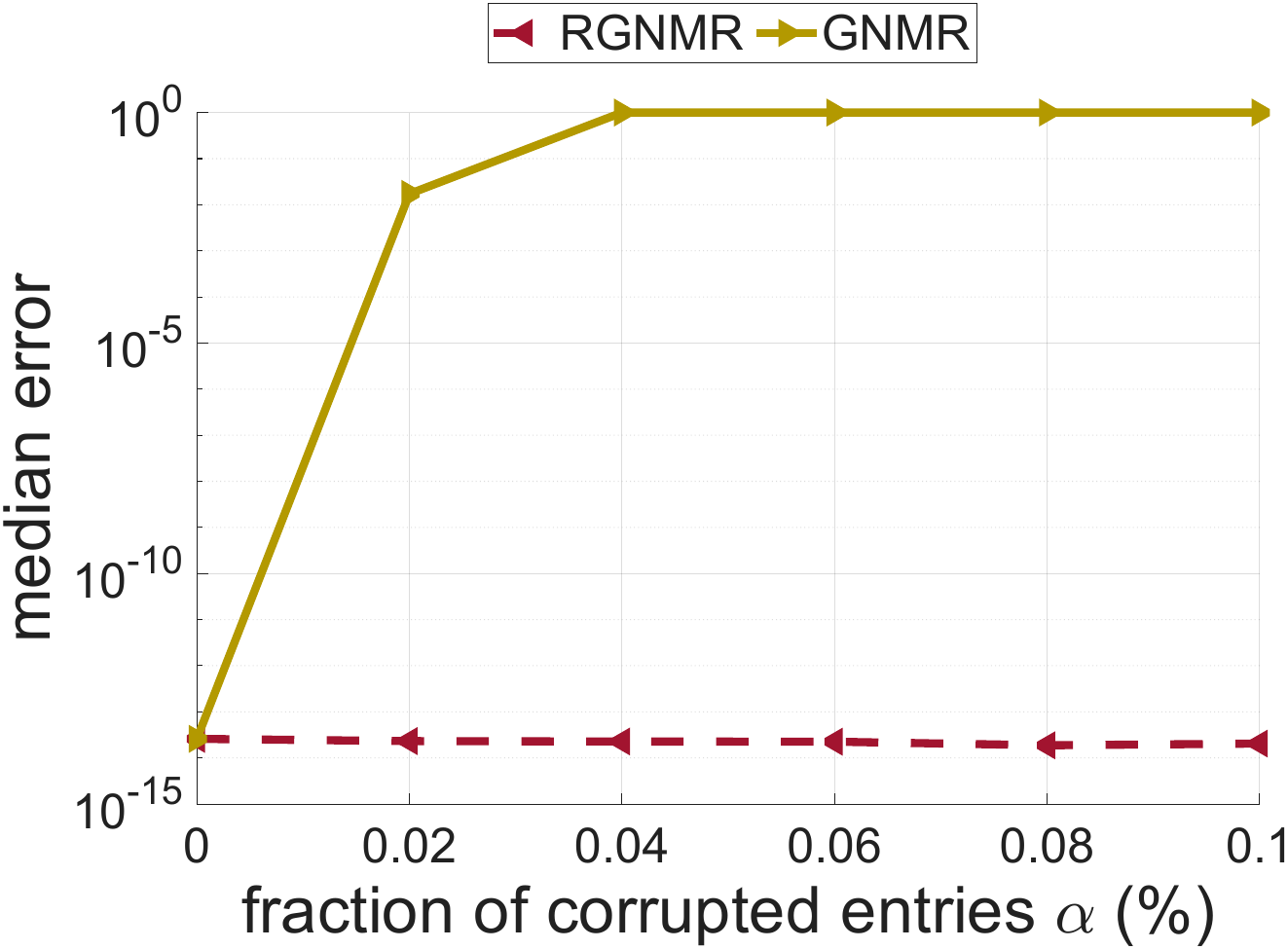}}
{\includegraphics[width=0.47\linewidth
    ]{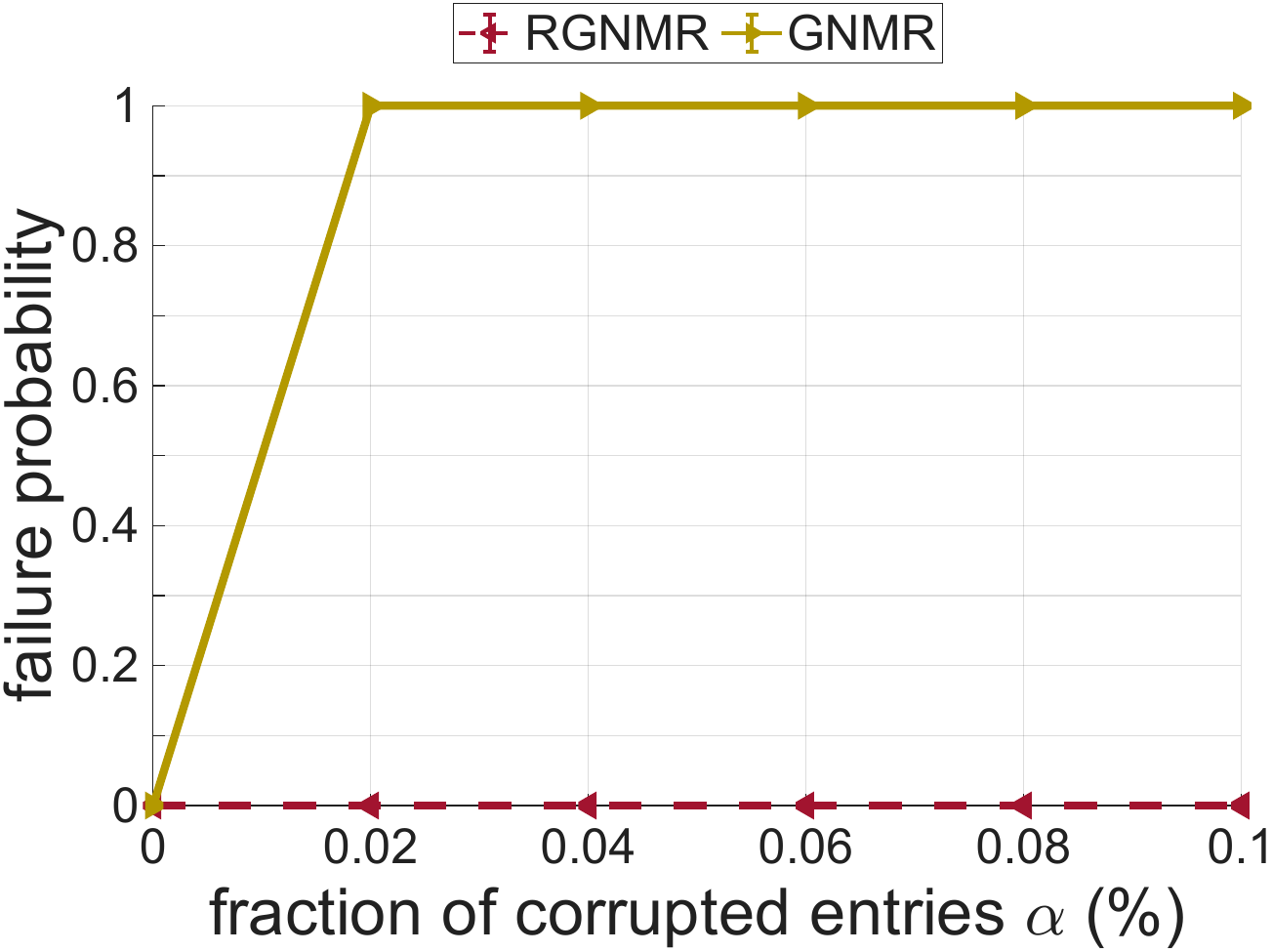}}
    \caption{\RGNMR performance compared to that of \texttt{GNMR} \citep{zilber2022gnmr} under a small fraction of corrupted entries.
    (left) Median 
    $\texttt{rel-RMSE}$ ; 
    (right) Failure probability defined as $\mathbb{P}(\texttt{rel-RMSE} > 10^{-3})$. 
    The underlying matrix $L^*$ is of size $500 \times 500$ has  rank $r=5$ and condition number $\kappa=2$. The oversampling ratio is $\frac{|\Omega|}{r(n_1+n_2 - r)}=12$ and the fraction of corrupted entries $\alpha$ varies.  
    Each point corresponds to 50 independent realizations.
    }\label{fig: RGNMR VS GNMR}
    \end{center}
    
\end{figure}

\begin{figure}[t]
    \begin{center}
    {\includegraphics[width=0.47\linewidth
    ]{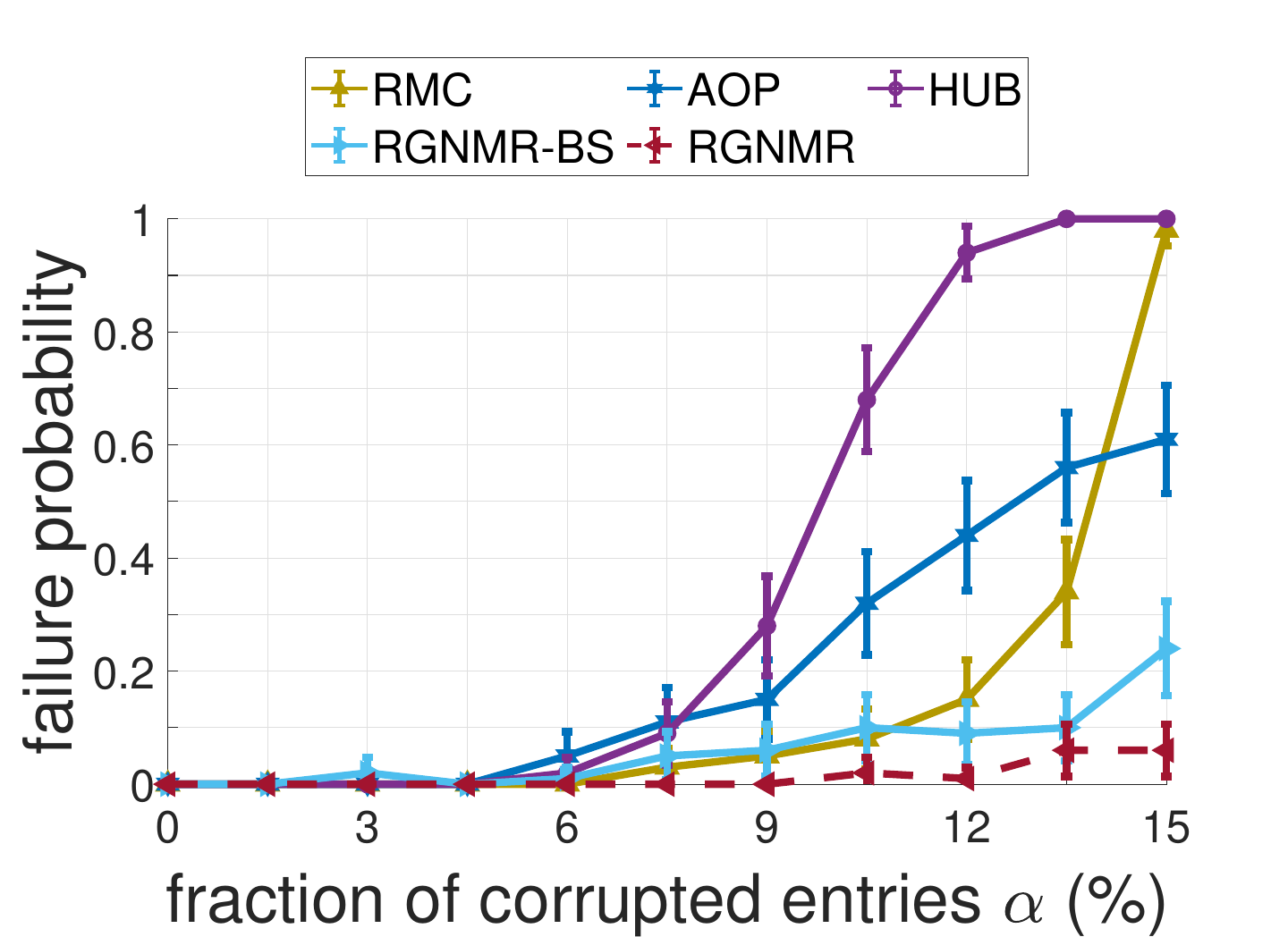}}
{\includegraphics[width=0.47\linewidth
    ]{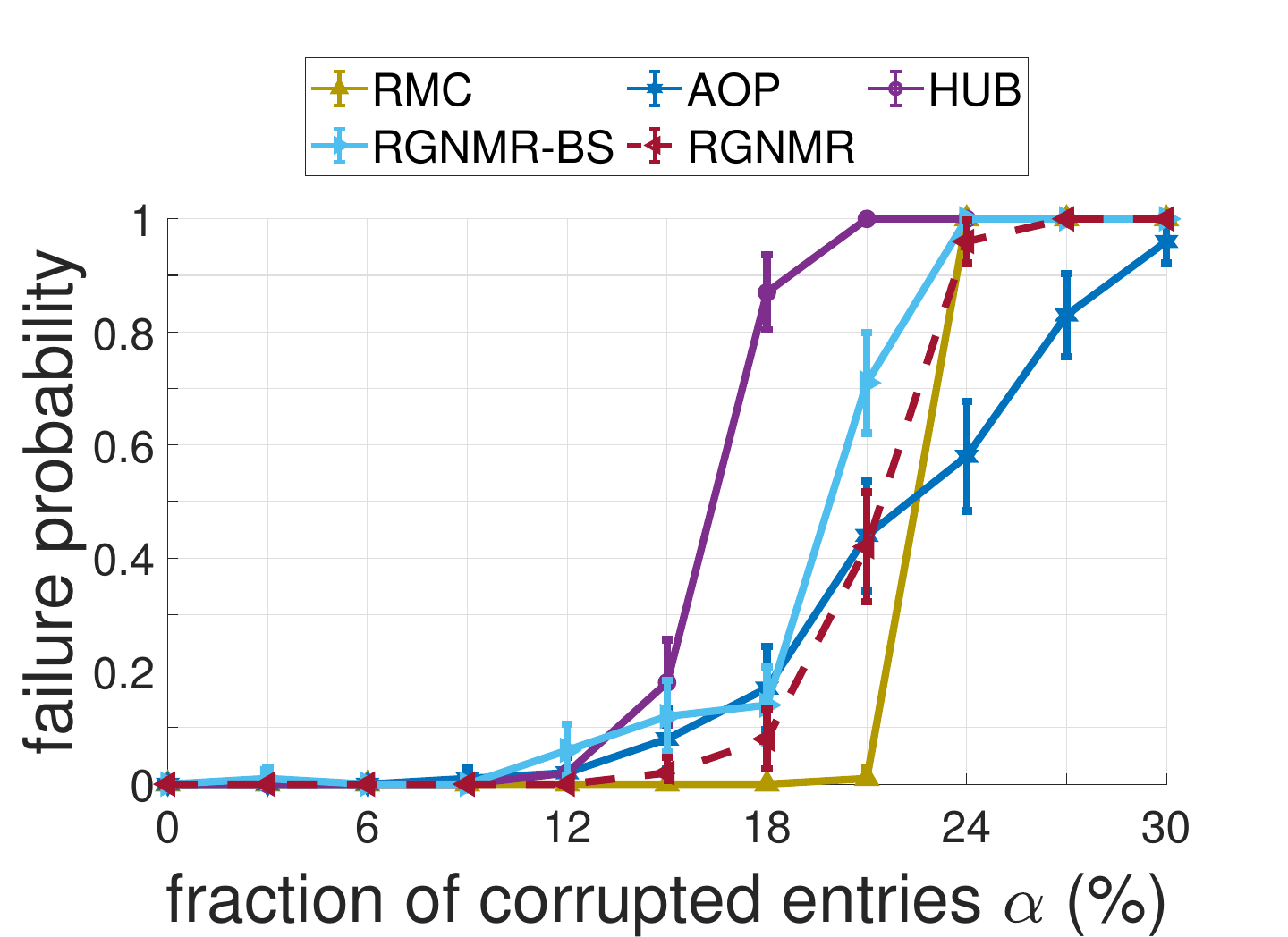}}

    \caption{
    Failure probability ($\pm 1.96$SE) of RMC methods as a function of the fraction of corrupted entries $\alpha$.
    (left) Oversampling ratio of $\frac{|\Omega|}{r(n_1+n_2 - r)}=8$; 
    (right)  Oversampling ratio of $\frac{|\Omega|}{r(n_1+n_2 - r)}=12$. 
    The underlying matrix $L^*$ if of size $3200 \times 400$ has a rank $r=5$ and a condition number $\kappa=2$. 
    Each point corresponds to 100 independent realizations.}\label{fig: fraction_of_outliers_experiment}
    \end{center}
\end{figure}
\begin{figure}[t]
    \begin{center}
{\includegraphics[width=0.47\linewidth
    ]{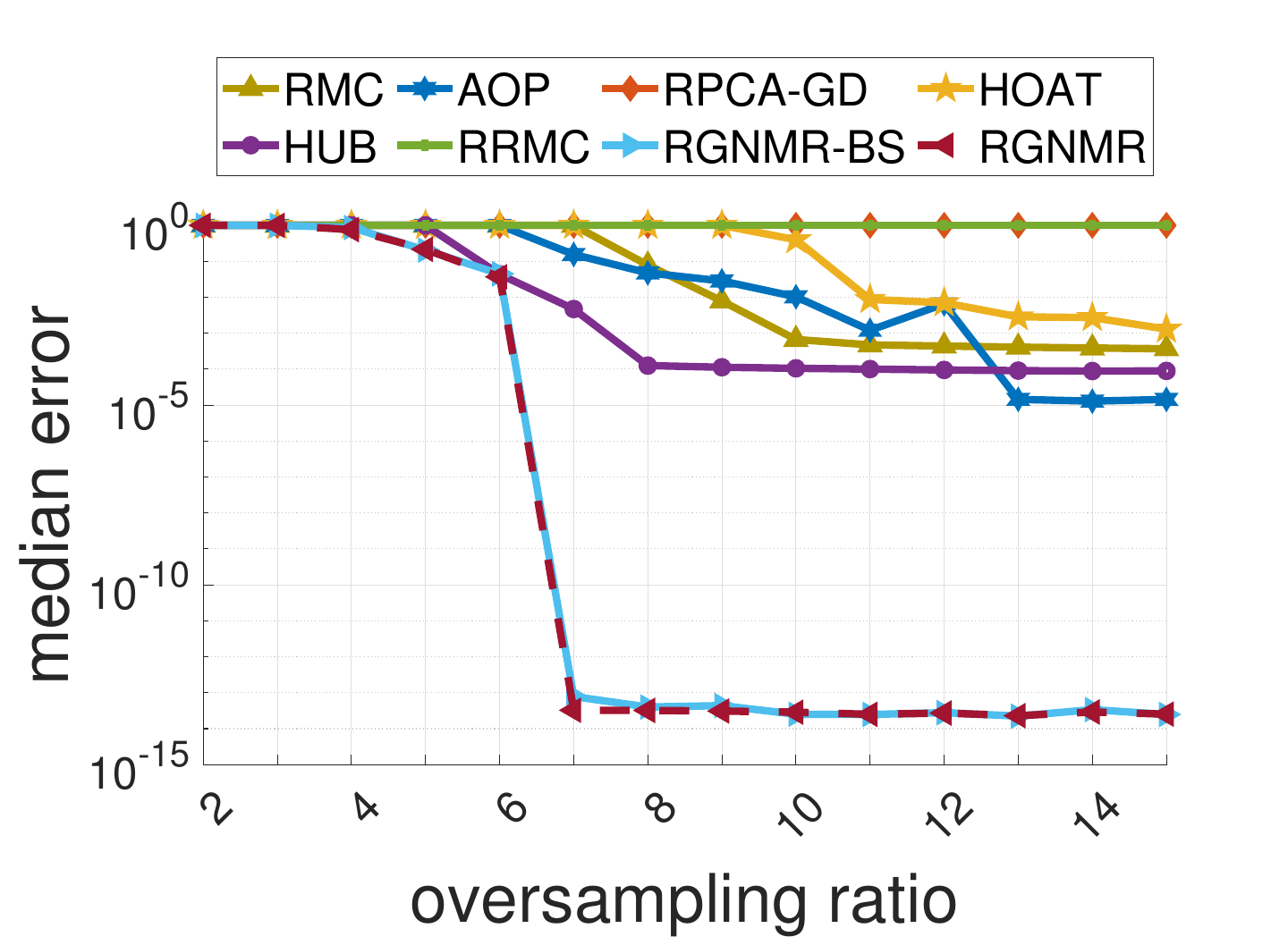}}
{\includegraphics[width=0.47\linewidth
    ]{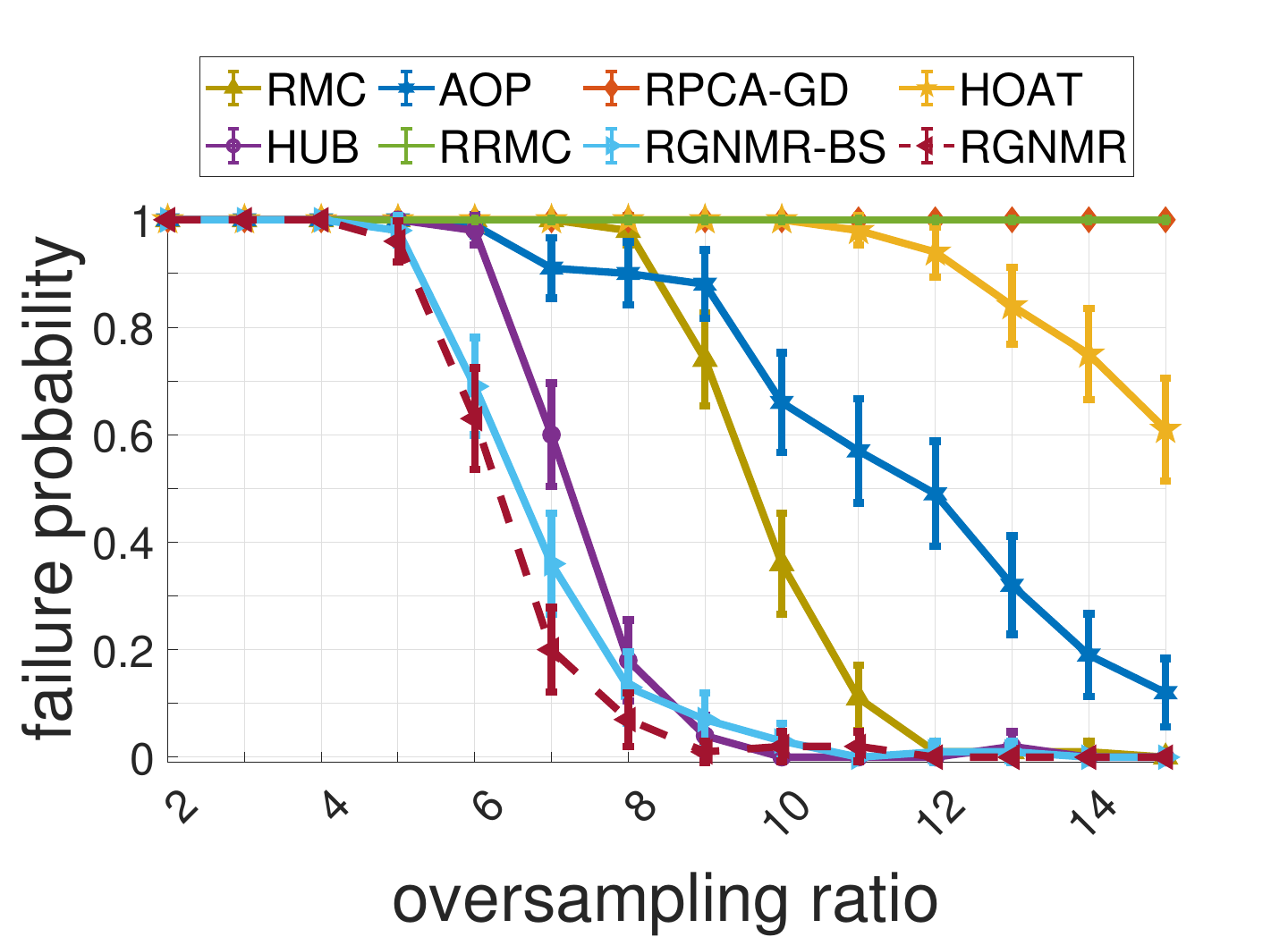}}
    \caption{
    Results of RMC methods as a function of the oversampling ratio under power law sampling.
    (left) Median 
    $\texttt{rel-RMSE}$ ; 
    (right) Failure probability defined as $\mathbb{P}(\texttt{rel-RMSE} > 10^{-3})$. 
    The underlying matrix $L^*$ is of size $1000 \times 1000$ has  rank $r=5$ and condition number $\kappa=2$. The fraction of corrupted entries is $\alpha = 5\%$. 
    Each point corresponds to 50 independent realizations.}\label{fig: non_uniform_sampling_experiment}
    \end{center}
\end{figure}
\begin{figure}[t]
     \begin{center}
{\includegraphics[width=0.47\linewidth
    ]{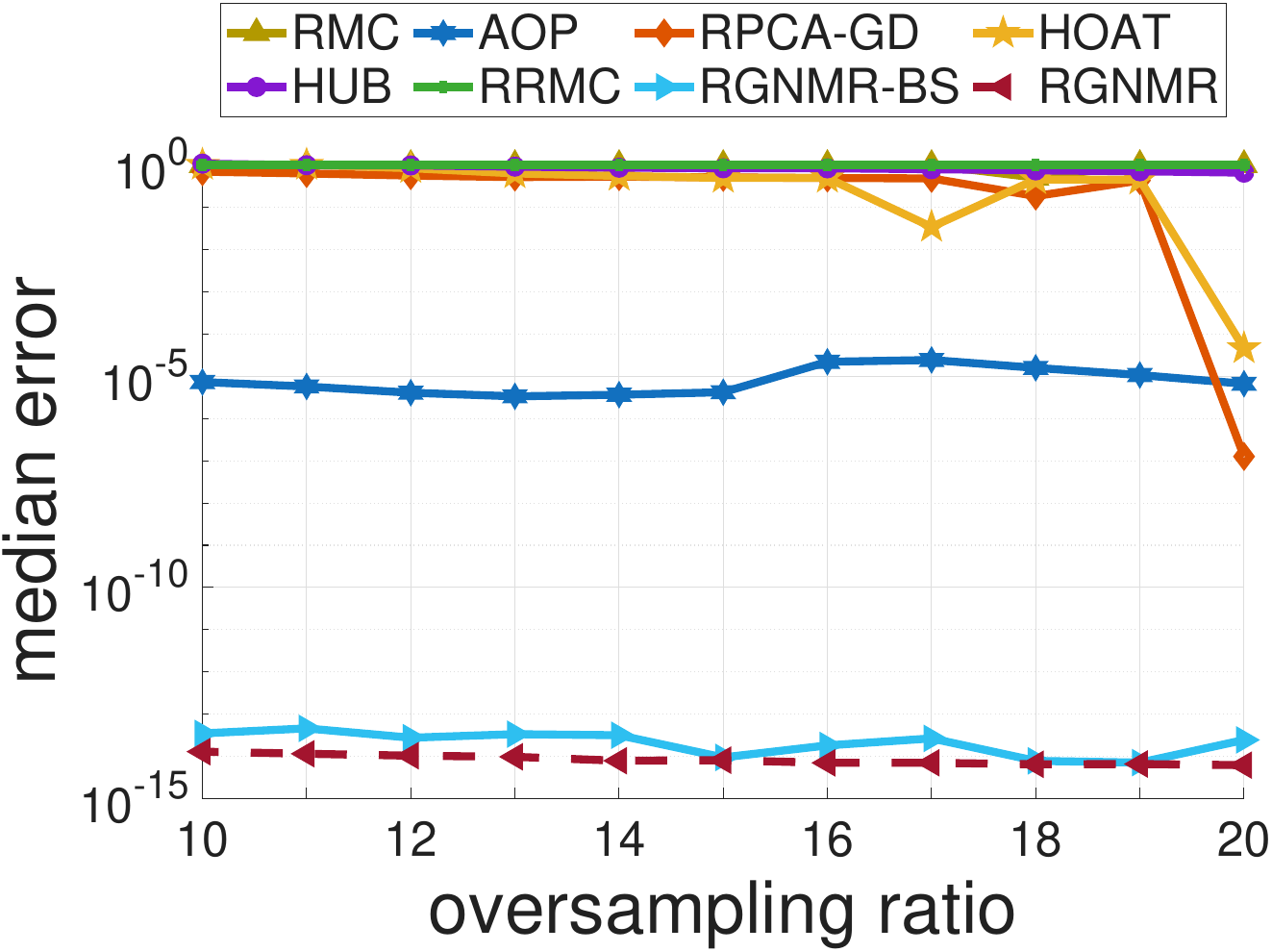}}
{\includegraphics[width=0.47\linewidth
    ]{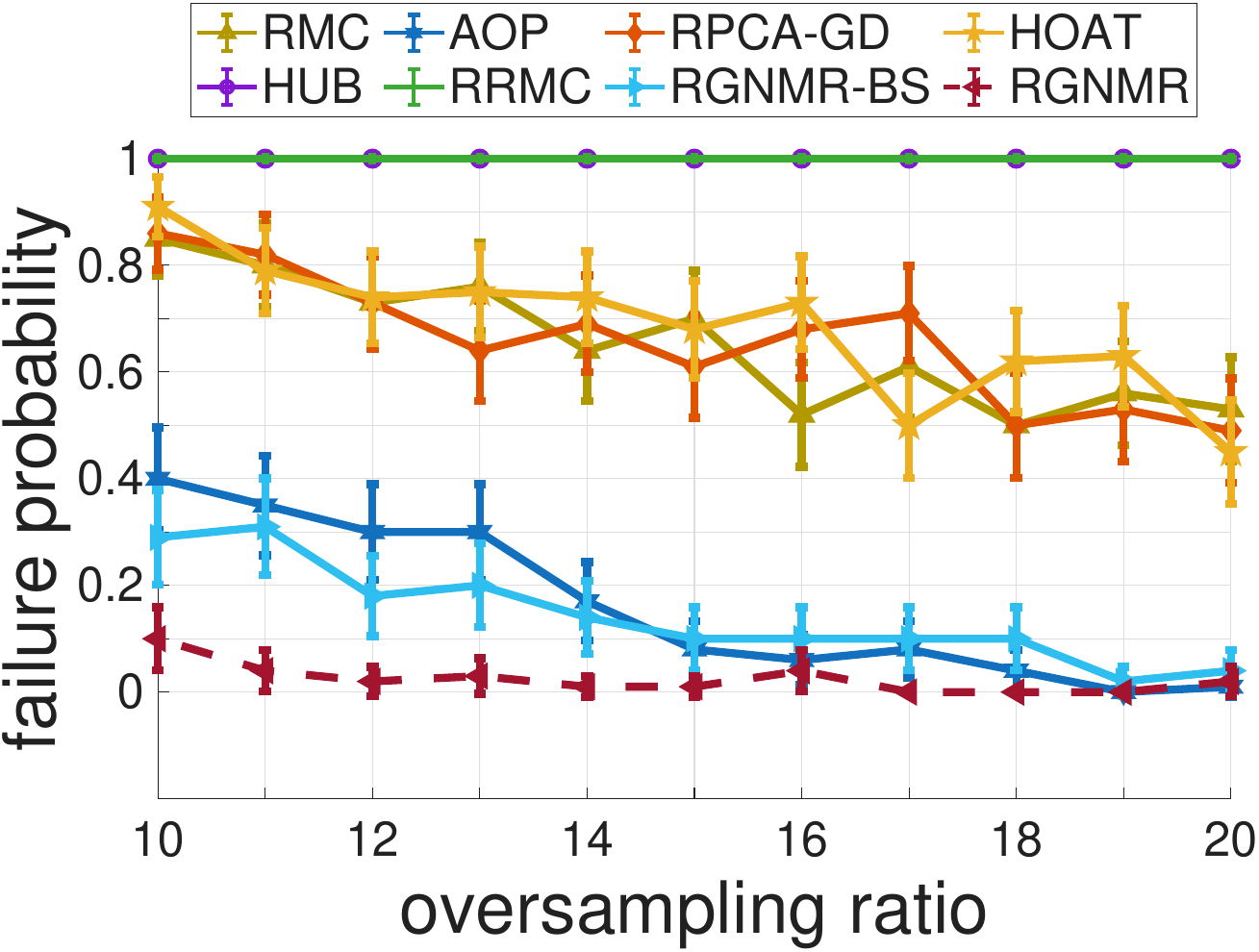}}
    \end{center}
    \caption{Results of RMC methods as a function of the oversampling ratio under diagonal-band pattern sampling.
    (left) Median 
    $\texttt{rel-RMSE}$ ; 
    (right) Failure probability defined as $\mathbb{P}(\texttt{rel-RMSE} > 10^{-3})$. 
    The underlying matrix $L^*$ is of size $500 \times 500$ has  rank $r=5$ and condition number $\kappa=2$. The fraction of corrupted entries is $\alpha = 5\%$. 
    Each point corresponds to 50 independent realizations.}
    \label{fig: diagonal sampling}
\end{figure}
\begin{figure}[t]
    \begin{center}
{\includegraphics[width=0.47\linewidth
    ]{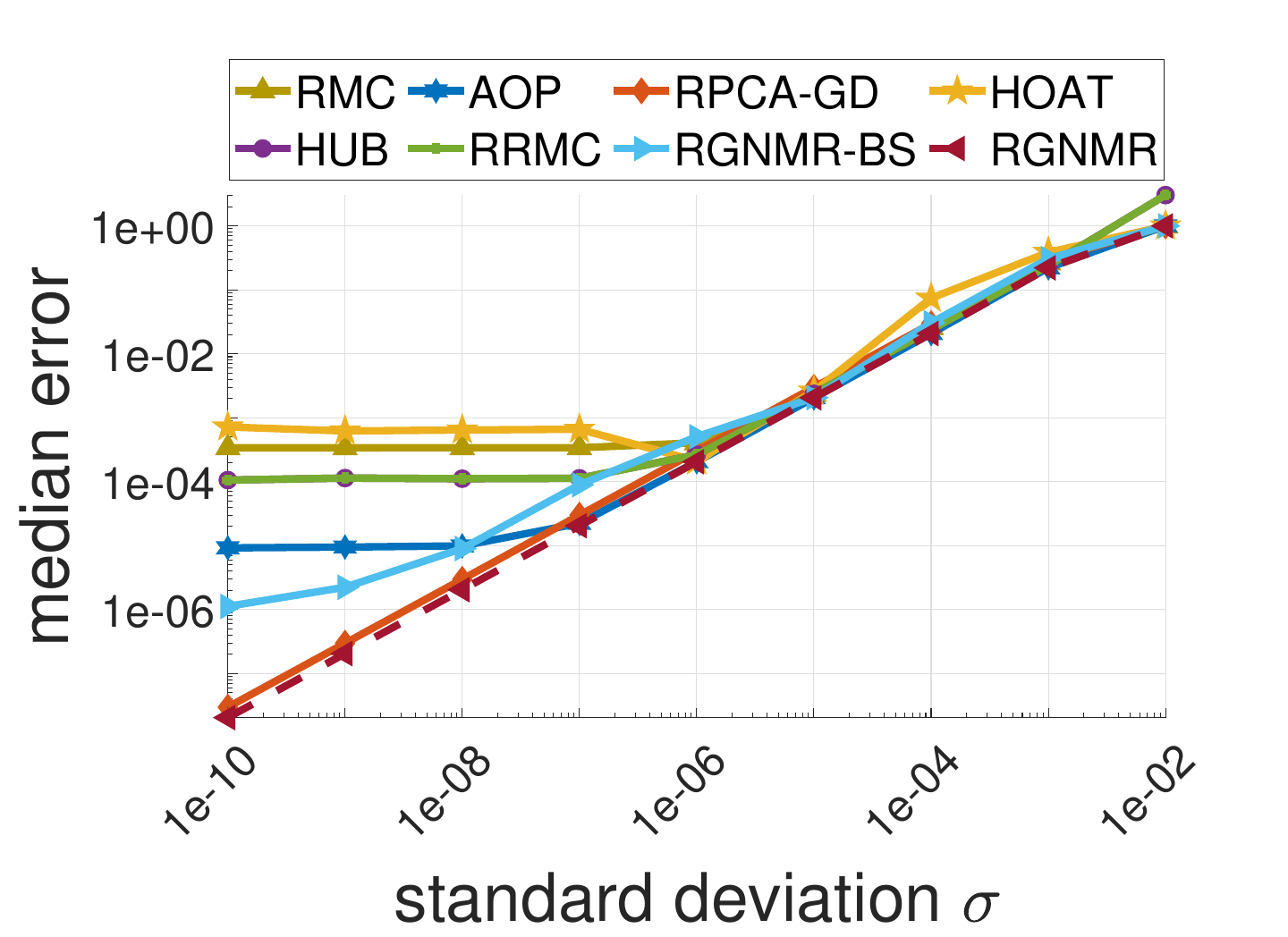}}
    \caption{
    Performance of RMC methods in the presence of additive noise.
    The y-axis is the median $\texttt{rel-RMSE}$. The x-axis is the noise standard
    deviation.
    The underlying matrix $L^*$ is of size $3200 \times 400$ has  rank $r=5$ and condition number $\kappa=2$. The fraction of corrupted entries is $\alpha = 5\%$. 
    Each point corresponds to 50 independent realizations.}\label{fig: additive_noise}
    \end{center}
\end{figure}
\begin{figure}[t]
     \begin{center}
{\includegraphics[width=0.47\linewidth
    ]{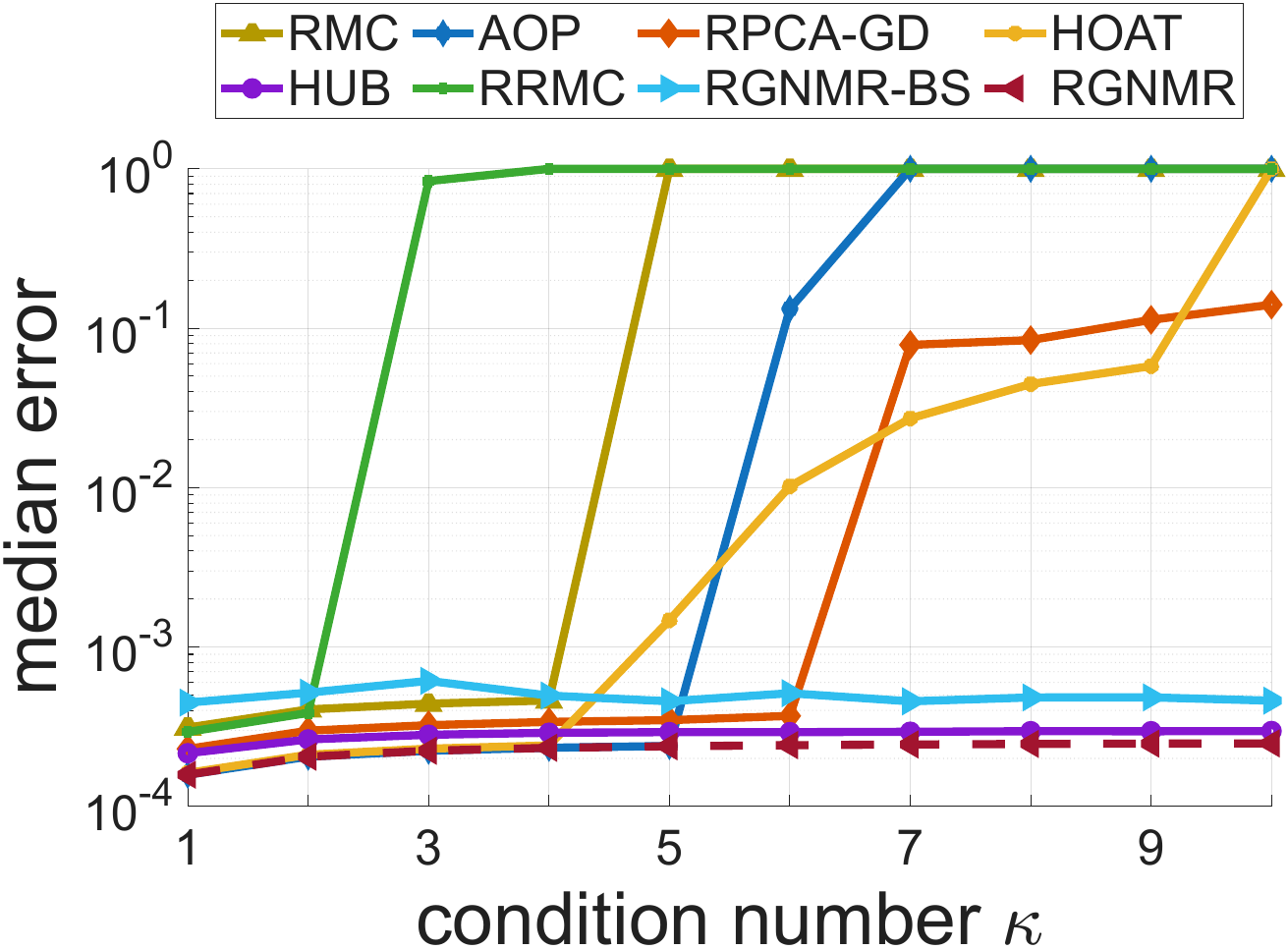}}
{\includegraphics[width=0.47\linewidth
    ]{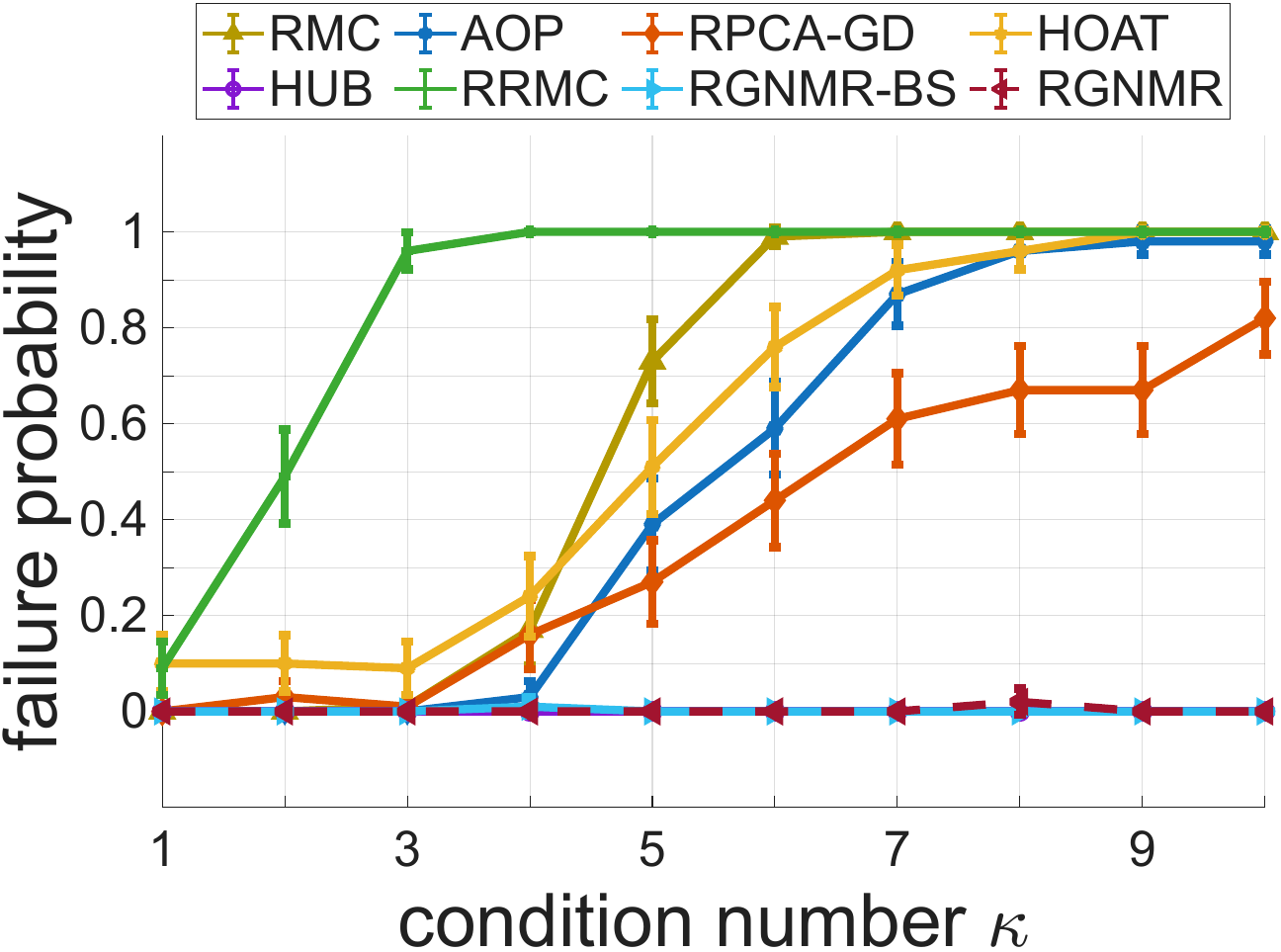}}
    \end{center}
    \caption{Performance of RMC methods as a function of the condition number under both outliers and additive noise.
    (left) Median 
    $\texttt{rel-RMSE}$;
    (right) Failure probability  ($\pm 1.96$ SE). 
    The underlying matrix $L^*$ is of size $3200 \times 400$ has rank $r=5$, the fraction of corrupted entries is $\alpha = 5\%$, the standard deviation of the white Gaussian noise is $\sigma=10^{-6}$ and the oversampling ratio is $\frac{|\Omega|}{r\cdot(n_1+n_2-r)} = 12$. Each point
    corresponds to 100 independent realizations.
    }
    \label{fig: condition number with additive noise}
\end{figure}
\begin{figure}[t]
     \begin{center}
{\includegraphics[width=0.47\linewidth
    ]{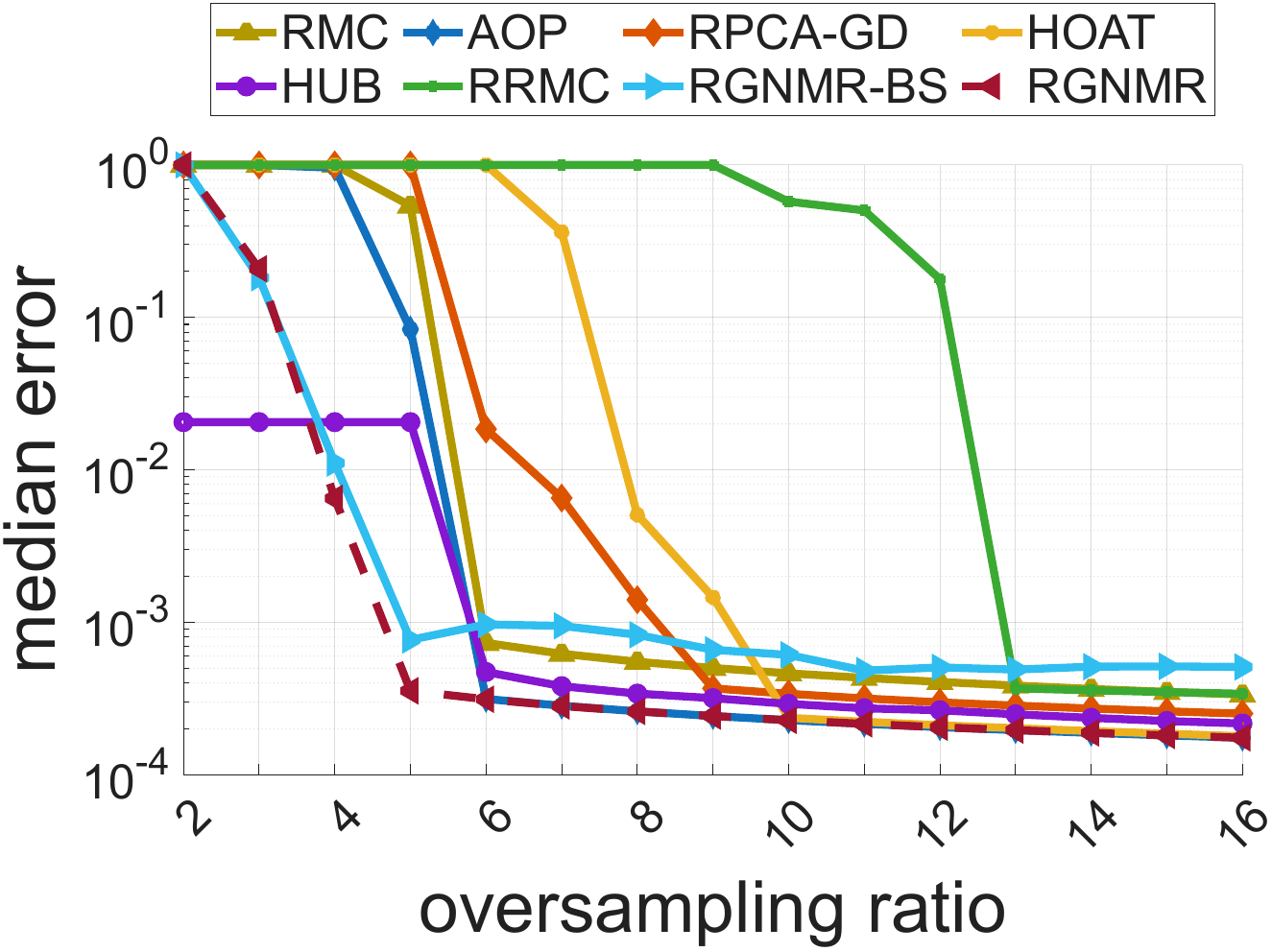}}
{\includegraphics[width=0.47\linewidth
    ]{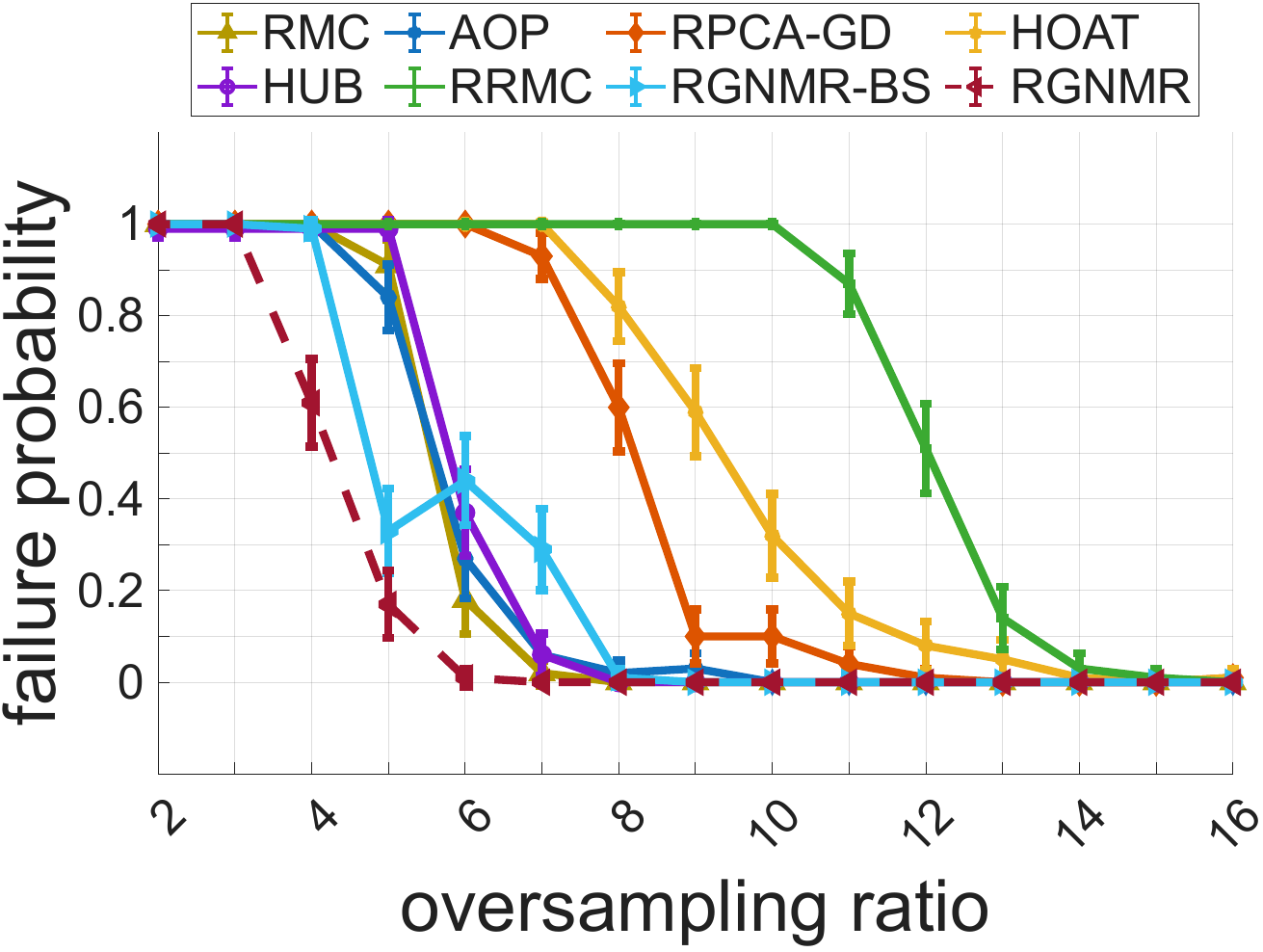}}
    \end{center}
    \caption{Performance of RMC methods as a function of the oversampling ratio under both outliers and additive noise.
    (left) Median 
    $\texttt{rel-RMSE}$;
    (right) Failure probability  ($\pm 1.96$ SE). 
    The underlying matrix $L^*$ is of size $3200 \times 400$ has rank $r=5$, the fraction of corrupted entries is $\alpha = 5\%$ the standard deviation of the white Gaussian noise is $\sigma=10^{-6}$ and the condition number is $\kappa=2$. Each point
    corresponds to 100 independent realizations.}
    \label{fig: oversampling number with additive noise}
\end{figure}
\begin{figure}
\centering
\begin{subfigure}[b]{0.47\linewidth}\centering\includegraphics[width=\linewidth]{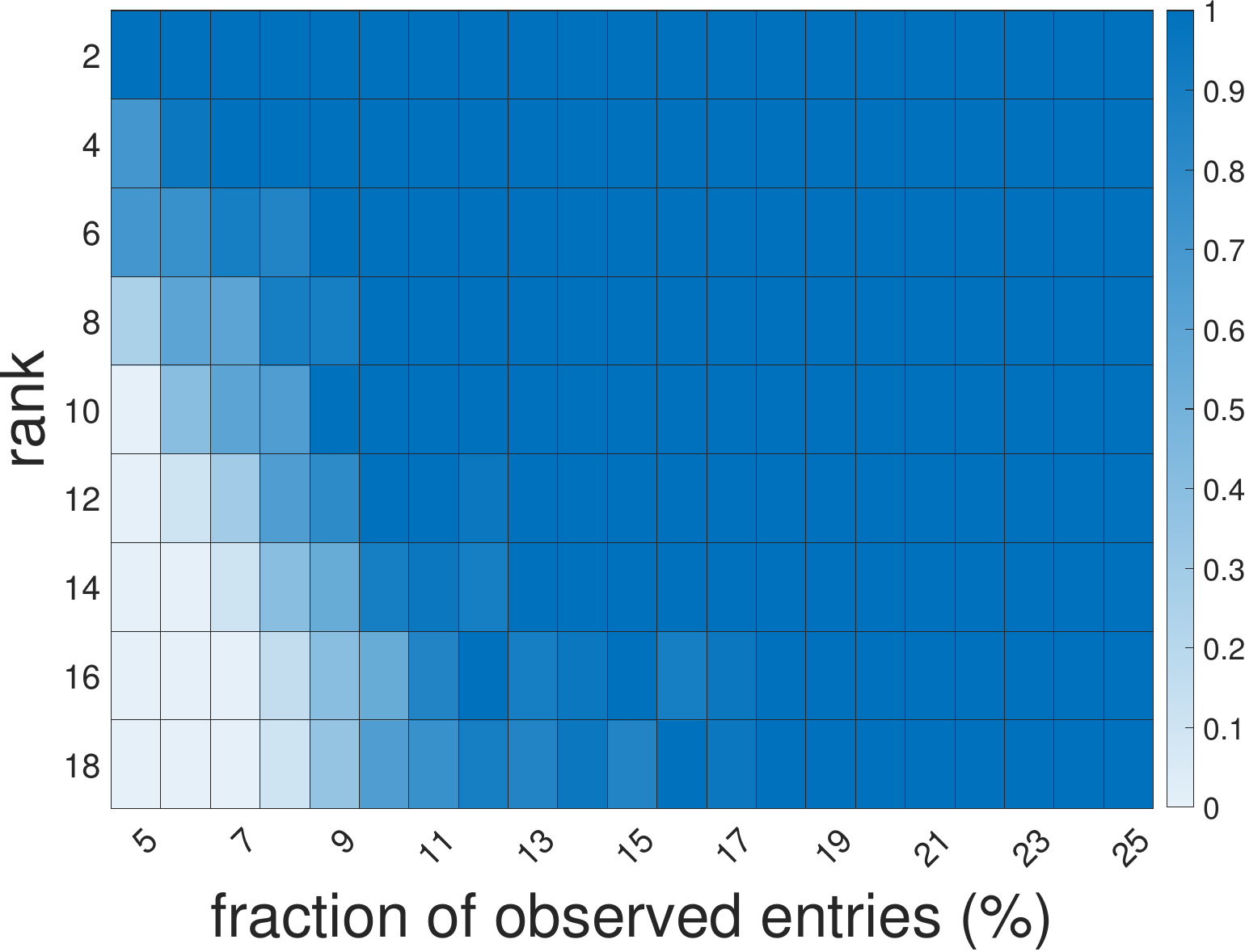}
\caption{\RGNMR}
\end{subfigure}
\begin{subfigure}[b]{0.47\linewidth}\centering\includegraphics[width=\linewidth]{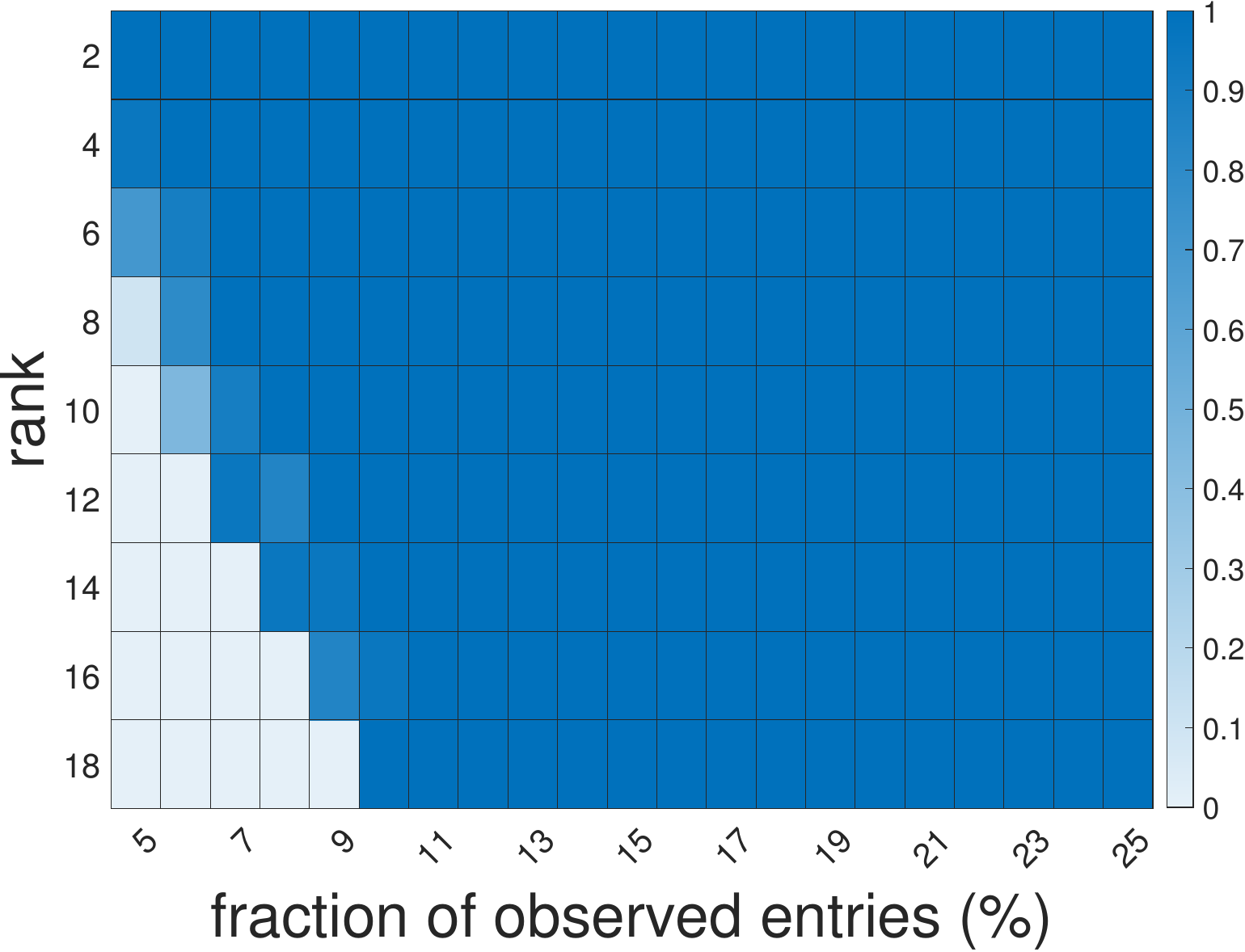}
\caption{\texttt{HUB}}
\end{subfigure}\\
\begin{subfigure}[b]{0.47\linewidth}\centering\includegraphics[width=\linewidth]{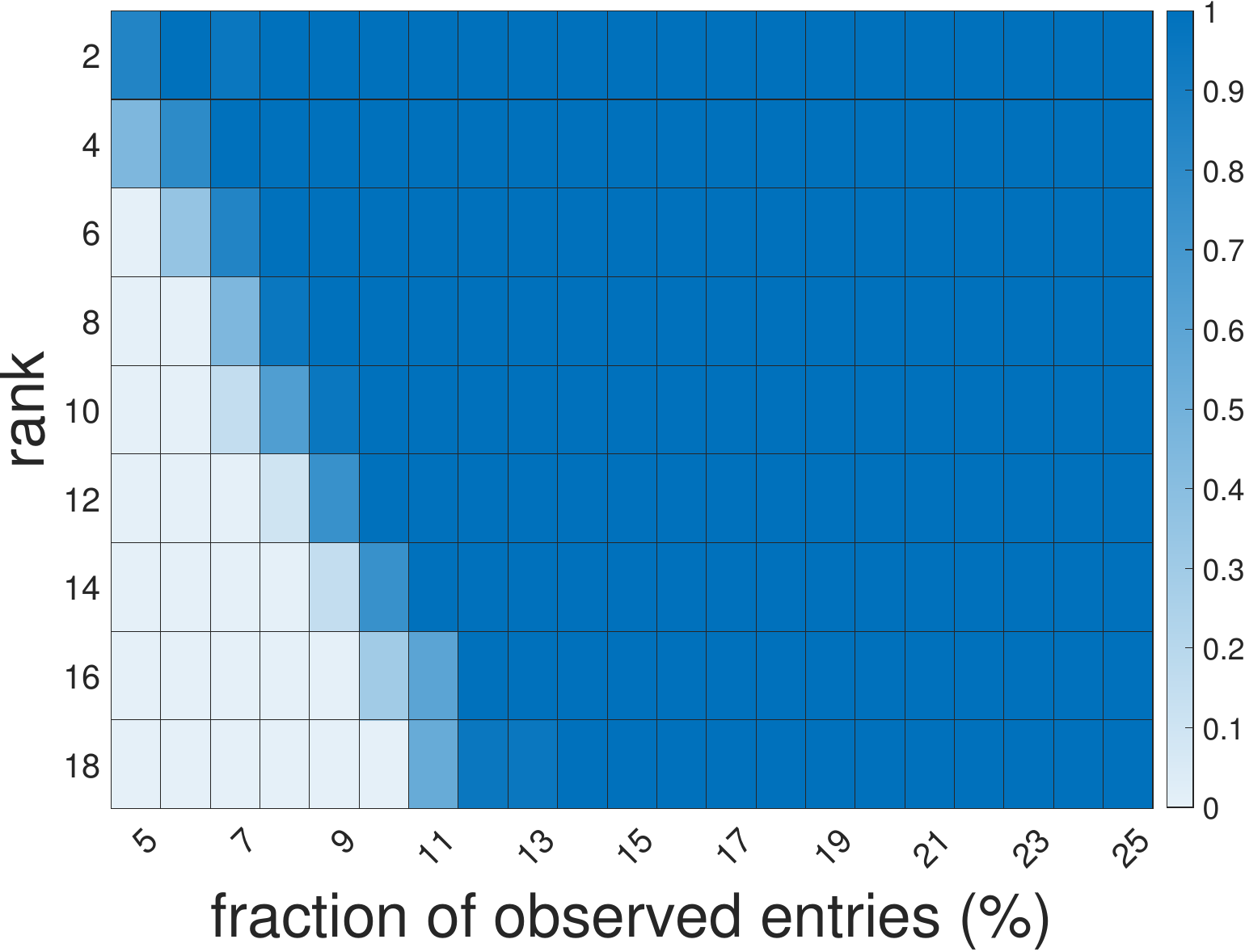}
\caption{\RMC}
\end{subfigure}
\begin{subfigure}[b]{0.47\linewidth}
\centering\includegraphics[width=\linewidth]{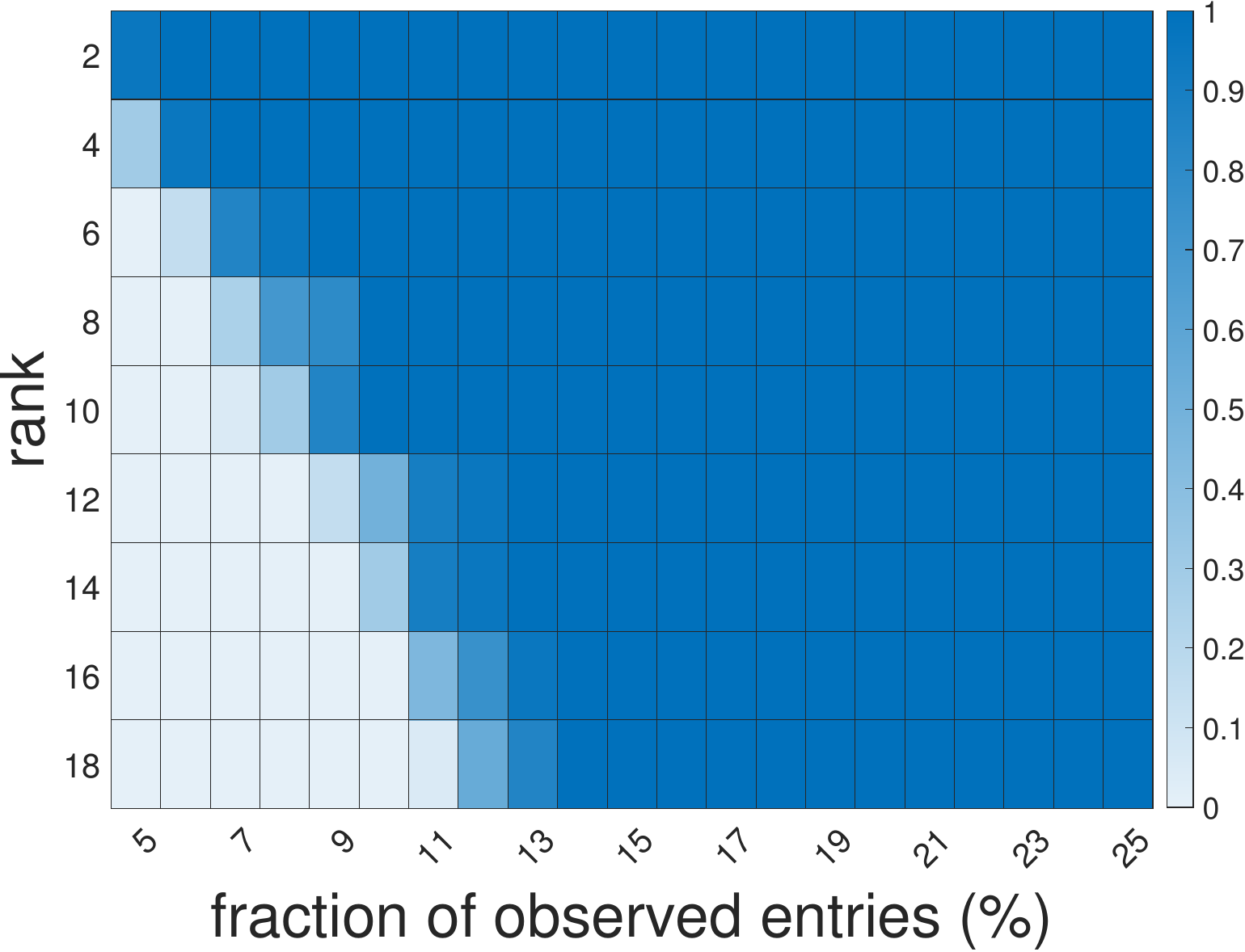}
\caption{\AOP}
\end{subfigure}\\
\begin{subfigure}[b]{0.47\linewidth}
\centering\includegraphics[width=\linewidth]{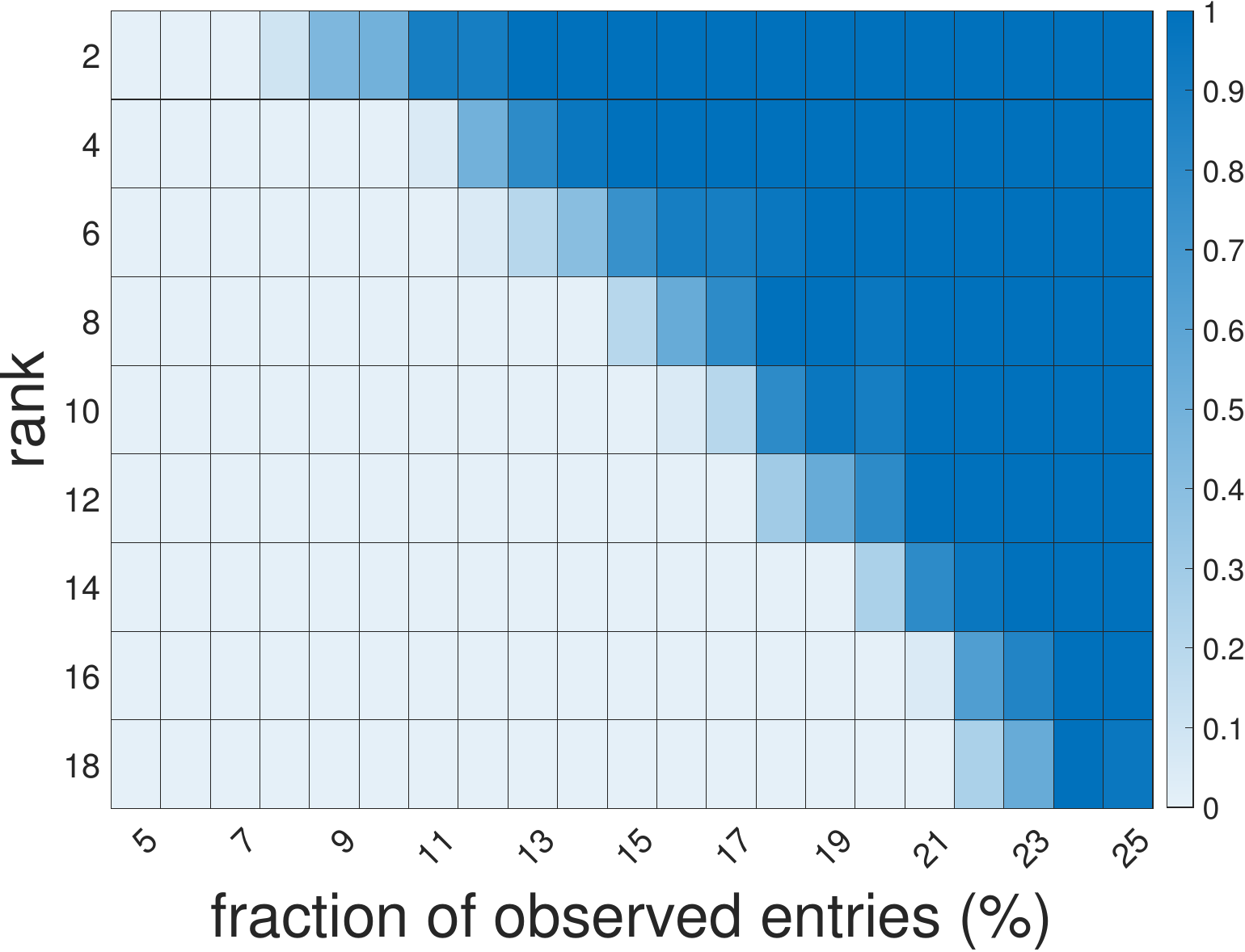}
\caption{\texttt{RRMC}}
\end{subfigure}
\begin{subfigure}[b]{0.47\linewidth}
\centering\includegraphics[width=\linewidth]{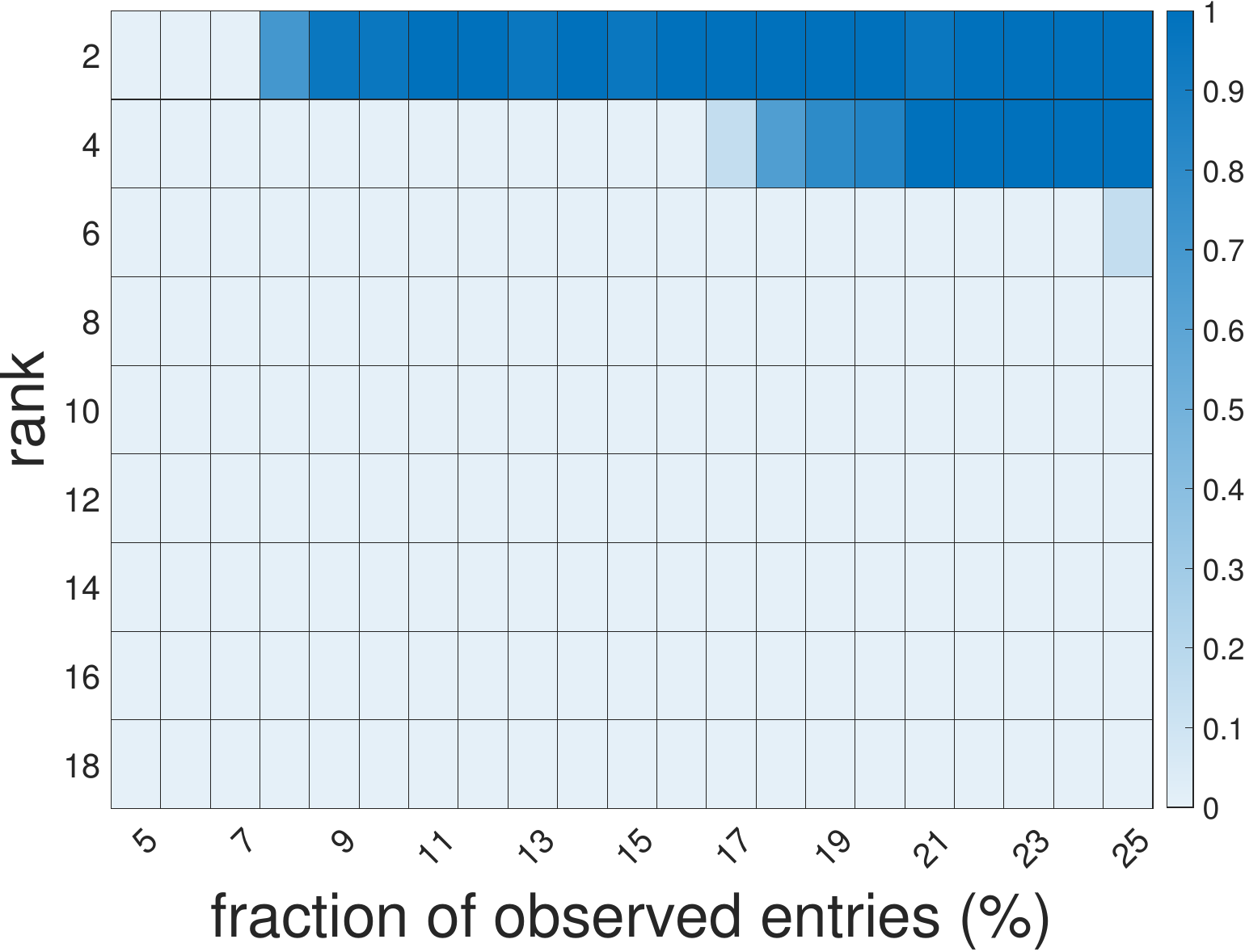}
\caption{\HOAT}
\end{subfigure}
    \caption{Performance of RMC methods as a function of the rank and the fraction of observed entries $\frac{|\Omega|}{n_1 n_2}$. 
    Deep blue corresponds to $0\%$ failure, white corresponds to $100\%$ failure. 
    The matrix is of size $1000\times 1000$, the fraction of corrupted entries is $\alpha=10\%$ and the condition number is $\kappa= 2$. 
    Each point corresponds to 20 independent realizations.}
    \label{fig: phase_transition}
\end{figure}
\begin{figure}
\centering
\begin{subfigure}[b]{0.2\linewidth}\centering
\includegraphics[width=\linewidth]{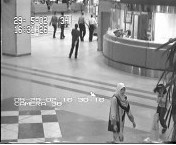}
\caption{Original Image}
\end{subfigure}
\begin{subfigure}[b]{0.2\linewidth}\centering\includegraphics[width=\linewidth]{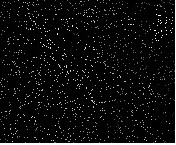}
\caption{Sampled Image}
\end{subfigure}
\begin{subfigure}[b]{0.2\linewidth}
\centering\includegraphics[width=\linewidth]{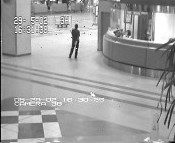}
\caption{\RMC}
\end{subfigure}
\begin{subfigure}[b]{0.2\linewidth}
\centering\includegraphics[width=\linewidth]{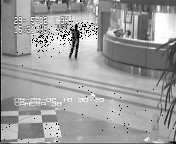}
\caption{\AOP}
\end{subfigure}
\begin{subfigure}[b]{0.2\linewidth}
\includegraphics[width=\linewidth]{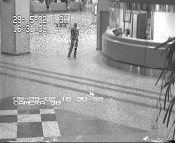}
\caption{\RGNMR}
\end{subfigure}
\begin{subfigure}[b]{0.2\linewidth}
\centering\includegraphics[width=\linewidth]{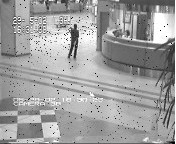} 
\caption{\RPCA}
\end{subfigure}
\begin{subfigure}[b]{0.2\linewidth}
\centering\includegraphics[width=\linewidth]{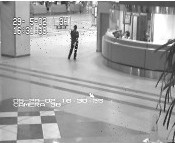}
\caption{\HOAT}
\end{subfigure}
    \caption{Background extraction for “Hall” video data. The frames are recovered from $5\%$ of the original entries with an input rank of $r = 1$.
    }
    \label{fig: Background extraction}
\end{figure}

\begin{figure}[t]
    \begin{center}
{\includegraphics[width=0.47\linewidth
    ]{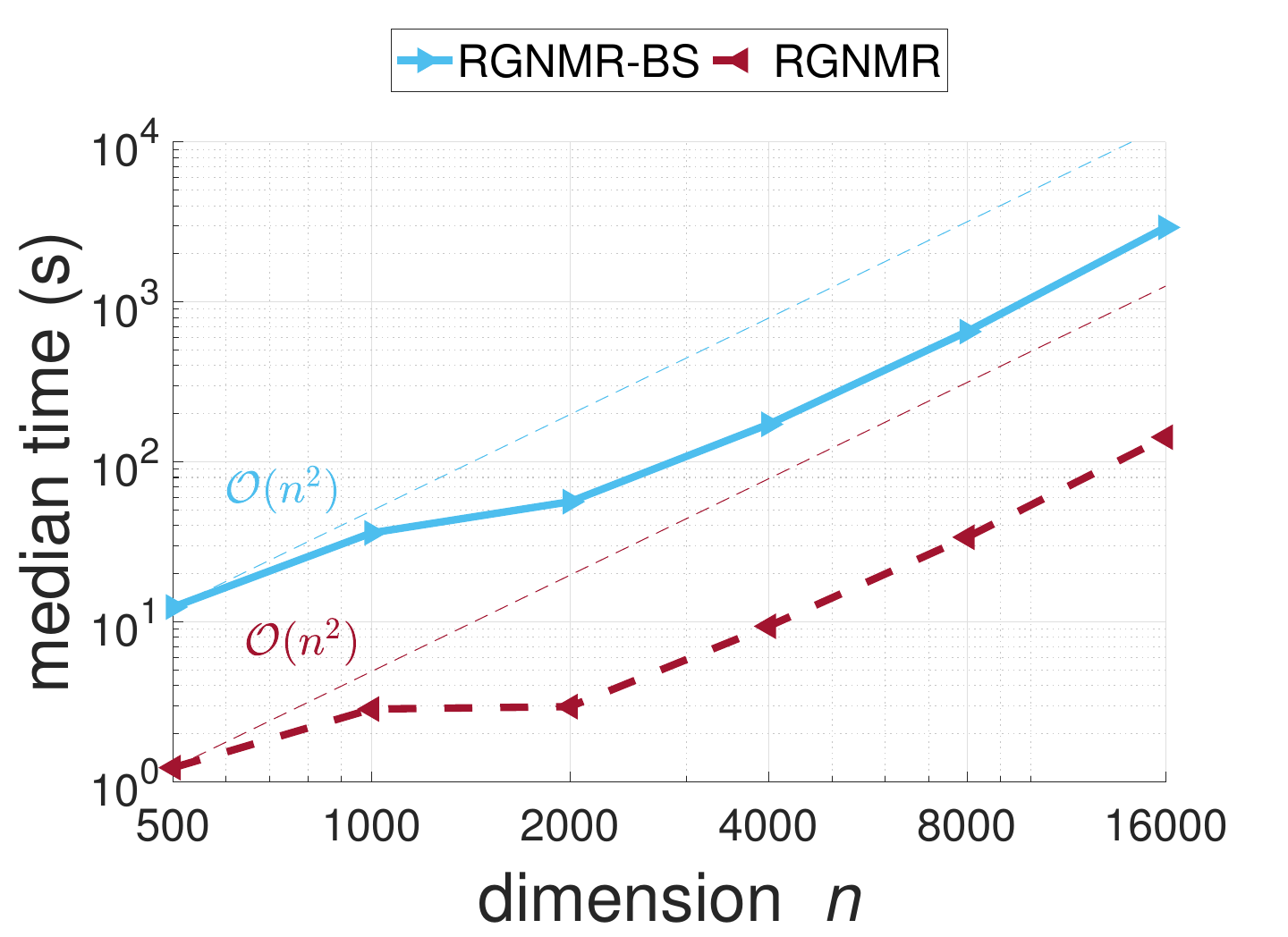}} {\includegraphics[width=0.47\linewidth
    ]{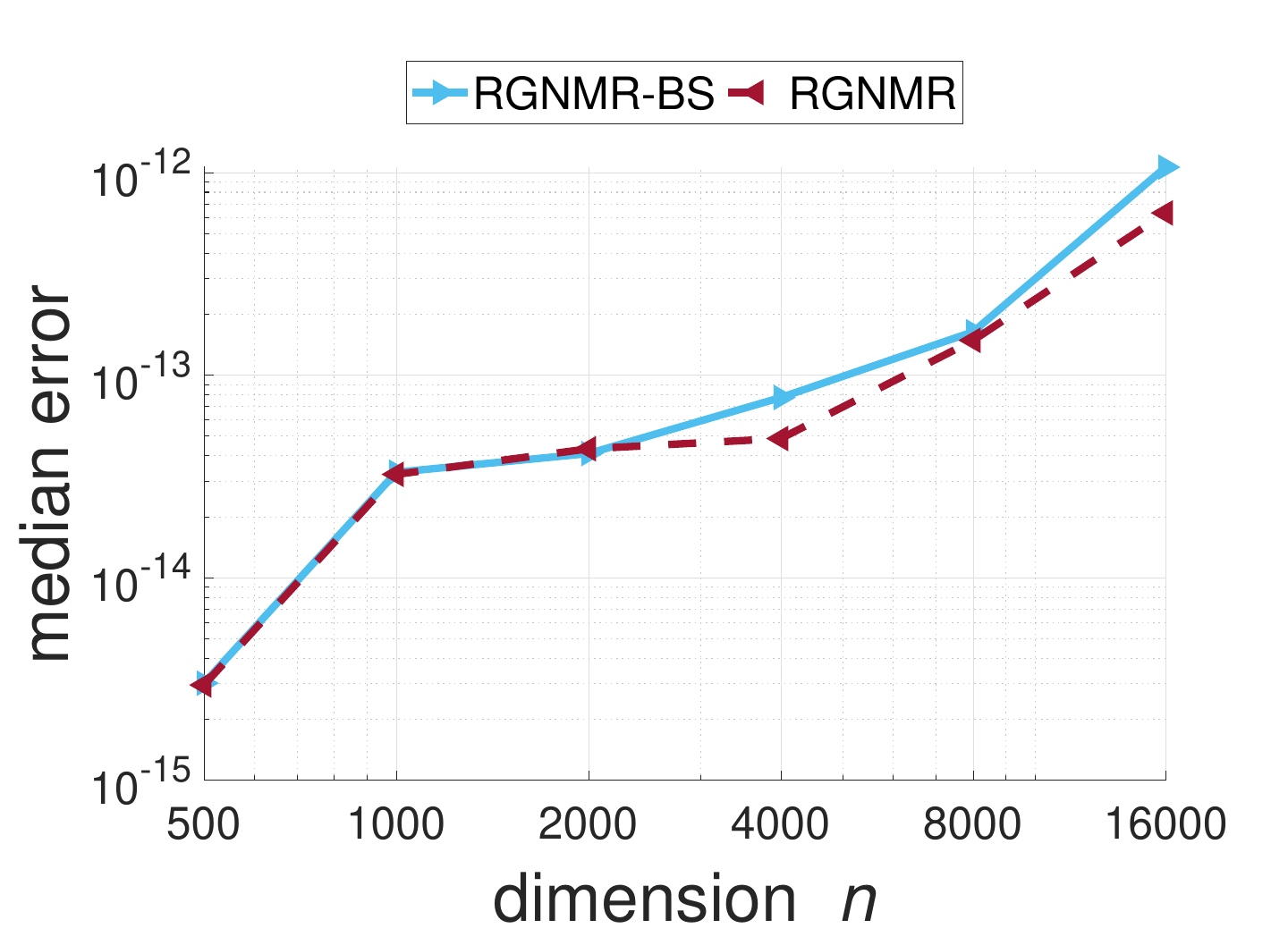}}
    \caption{Effect of the matrix size on \RGNMR performance. Both $x$ and $y$ axes are on a log scale.
    (left) Median run time 
    of \RGNMR and \RGNMR\texttt{-BS} for a matrix of size $n\times n$ as a function of $n$. The dashed lines have a slope of 2. 
    (right) Median Error as a function of the matrix size.
    The matrix has rank $r=5$ and a condition number $\kappa=2$. 
    The oversampling ratio is $\rho=\frac{|\Omega|}{r(2n-r)}=6$ and the fraction of corruption is $\alpha =5\%$.
    Each point corresponds to 20 independent realizations. 
    }\label{fig: runtime}
    \end{center}
\end{figure}
We present here results of additional simulations under various settings beyond those described in the main text.

\paragraph{Fraction of outliers.}
Figure \ref{fig: fraction_of_outliers_experiment} shows the performance of various RMC methods as a function of the fraction of corrupted entries $\alpha$.
We only compare methods that performed well at an oversampling ratio of $\frac{|\Omega|}{r(n_1+n_2 - r)}=8$ with $\alpha=5\%$, see Figure \ref{fig: oversampling_factor_experiment}, excluding \RPCA, \HOAT and $\texttt{RRMC}$.
As shown in the left figure if the number of observed entries is relatively small, at oversampling ratio of $8$, then \RGNMR can handle a larger fraction of corrupted entries than other methods.
In the right figure we show that if the number of observed entries is relatively large,  oversampling ratio of $12$, then methods such as \RMC and \AOP can handle a larger number of corrupted entries than \RGNMR. 
We note that if the condition number is high then \RGNMR still outperforms these methods. 

\paragraph{Non-uniform sampling.}
In many applications, the entries of the matrix are not sampled uniformly \citep{meka2009matrix, okatani2011efficient}.
Hence we made simulations where the observed entries followed a 
power law sampling scheme similar to \citet{meka2009matrix} and a diagonal-band pattern as in \citet{okatani2011efficient}.

For the power law scheme, given an oversampling ratio $\rho$ we define $w = \rho \cdot r \cdot (n_1+n_2 - r)$.
We construct two sequences $(\tilde{p}_1 \dots \tilde{p}_{n_1})$ and $(\tilde{q}_1 \dots \tilde{q}_{n_2})$ such that $\tilde{p}_i = i^{-\frac{2}{3}}$, $\tilde{q}_i =i^{-\frac{2}{3}}.$
We then normalize them to construct two new sequences
\begin{equation*}
    p_i = w\cdot \frac{\tilde{p}_i}{\sum_j \tilde{p}_j},  \quad q_i =w\cdot \frac{\tilde{q}_i}{\sum_j \tilde{q}_j}.
\end{equation*}
Note that $\sum_i p_i = \sum_j q_j = w$. 
We sample each entry $(i,j)$ with probability $\frac{p_i q_j}{w}$.
The expected number of observed entries is then $\mathbb{E}[\Omega] =  w$.
In Figure \ref{fig: non_uniform_sampling_experiment} we illustrate that \RGNMR performs better than most methods when the entries are sampled under this scheme. 

For the diagonal-band pattern, we generated $n\times n $ matrices of rank $r$ with a diagonal bandwidth of length $pr$ across different values of $p$. 
This results in approximately $(n+n)\cdot pr$ observed entries and therefore $p$ is approximately the oversampling ratio.
As illustrated in Figure \ref{fig: diagonal sampling}, though \RGNMR requires a larger oversampling ratio than in the uniform pattern to succeed in this task, it stills outperforms other RMC methods.

\paragraph{Outliers and additive noise.}
In Figures \ref{fig: additive_noise}, \ref{fig: condition number with additive noise} and \ref{fig: oversampling number with additive noise}  we illustrate that \RGNMR still outperform most RMC methods even under additive noise. 
In these simulation, all observed entries are corrupted by additive white Gaussian noise with known standard deviation $\sigma$, in addition to the few outliers entries.
In this scenario inliers entries are corrupted with random noise.
Note that our method for upper bounding $k^*$ is based on the assumption that the error in those entries is the result of rounding errors, see \ref{subsec: Estimating the number of corrupted entries}. 
To overcome this problem, when searching for an upper bound on  $k^*$ we terminated \RGNMR once $\frac{\|L_{t} -X\|_{F(\Omega\setminus\Lambda_t)}}{\|X\|_{F(\Omega\setminus\Lambda_t)}} \leq \sqrt{\sigma}$.

\paragraph{High rank matrices.}
 In Figure \ref{fig: phase_transition} we compare the performance of various RMC method as a function of the rank of the target matrix $L^*$. 
Following \citep{huang2021robust, wang2024leave} we fix the corruption rate $\alpha = 0.1$ and vary the rank and the fraction of observed entries $\frac{|\Omega|}{n_1 n_2}$.
As shown, \RGNMR successfully recovers $L^*$ from a small fraction of the observed entries even if the rank of $L^*$ is relatively high.  

\paragraph{Real data: Background extraction from video.}
In Figure \ref{fig: Background extraction} we illustrate \RGNMR performance for video background extraction. 
We use the data from \citep{li2004statistical}, kindly provided to us by the authors of \citep{huang2021robust}.
The data contains a sequence of grayscale frames. 
By stacking the columns of each frame of the video into a long vector,
we obtain a matrix whose columns correspond to the frames.
This matrix can be decomposed to a low rank matrix corresponding to the static background plus a sparse matrix corresponding to the moving foreground.
Following \citep{yi2016fast, cherapanamjeri2017nearly, huang2021robust, cai2024deeply} we sample uniformly at random $5\%$ of the matrix entries. 
All method are then given the sampled entries and an input rank of $r=1$.
As shown \RGNMR performs well on real data and successfully extracts the background. 

\paragraph{Runtime.} Finally, In Figure \ref{fig: runtime} we illustrate that \RGNMR scales well with the  matrix size. 
We show that for a matrix of size $n\times n$ the runtime of \RGNMR and \texttt{RGNMR-BS} increases quadratically  with $n$. 
In addition, we illustrate that even on large matrices \RGNMR still requires a relatively small number of observed entries to succeed. 
\section{Proofs of Theorems \ref{theorem: contraction} and \ref{theorem: initialization}}\label{sec: Proofs}

To prove Theorems \ref{theorem: contraction} and \ref{theorem: initialization}, we make use of several auxiliary lemmas.
These are outlined in subsections \ref{subsec: auxiliary_contraction} and \ref{subsec: auxiliary_initialization}. The proofs of the two theorems appear in subsections \ref{proof: contraction} and \ref{proof: initilaization}.

\subsection{Auxiliary Lemmas For Theorem \ref{theorem: contraction}}\label{subsec: auxiliary_contraction}
The first three lemmas are results from  \citep{zilber2022gnmr}. The first one is a combination of Lemmas SM5.3 and SM5.5 from 
\citep{zilber2022gnmr}. 

\begin{lemma}\label{lem: delta_bounds}

Let $L^* \in \mathcal M(n_1, n_2, r, \mu, \kappa)$ and let $\Omega$ follow Assumption \ref{assumption_2} with probability $p$. Let $\epsilon\in (0, 1)$.
There exist constants $C, c_l, c_e$ such that the following holds: 
If $p \geq C \max\{\frac{\log n_1}{n_2}, \frac{\mu^2 r^2 \kappa^2}{n_2\epsilon^4}\}$.
Then w.p.~at least $1 - \frac{2}{n_1^5}$, for any $(U, V) \in \mathcal B_\textnormal{err}({\epsilon}/{c_e}) \cap \mathcal B_\textnormal{bln}({1}/{c_l}) \cap \mathcal B_\mu$ with $L = UV^\top$ there exists $( U^* ,V^* ) \in \mathcal B^* \cap \mathcal B_\mu$ such that
\begin{subequations}\label{eq: delta_bounds}\begin{align}\label{eq: deterministic_nearby_delta_bound}
\|U - U^*\|_F^2 + \|V - V^*\|_F^2 &\leq \frac{25}{4 \sigma_r^*} \|UV^\top - L^*\|_F^2,
\\ \label{eq: deltaProduct_FOmega_bound}
\frac{1}{\sqrt p}\|(U - U^*)(V - V^*)^\top\|_{F(\Omega)} &\leq \frac{\epsilon}{6} \|L-L^*\|_F . 
\end{align}\end{subequations}
\end{lemma}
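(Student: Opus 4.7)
The plan is to prove the two inequalities separately, then combine. Both are in fact restatements of known results from \citep{zilber2022gnmr} (Lemmas SM5.3 and SM5.5 there), but I outline the underlying argument.

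For the first inequality \eqref{eq: deterministic_nearby_delta_bound}, I would construct $(U^*, V^*) \in \mathcal B^* \cap \mathcal B_\mu$ as the closest exact balanced factorization to $(U, V)$, e.g.\ by taking a balanced SVD factorization of $L^*$ and then aligning it to $(U, V)$ by an $r \times r$ orthogonal rotation minimizing the Procrustes distance $\|U - U^*\|_F^2 + \|V - V^*\|_F^2$. Setting $\Delta_U = U - U^*$, $\Delta_V = V - V^*$, one expands
\begin{equation*}
UV^\top - L^* = \Delta_U (V^*)^\top + U^* \Delta_V^\top + \Delta_U \Delta_V^\top .
\end{equation*}
Lower-bounding $\|\Delta_U (V^*)^\top + U^* \Delta_V^\top\|_F^2$ by something like $\tfrac{2}{\text{const}} \sigma_r^* (\|\Delta_U\|_F^2 + \|\Delta_V\|_F^2)$ is the classical local convergence estimate (Tu et al., Ma et al.): it uses (i) the first-order optimality of the Procrustes rotation, which kills the cross terms and implies $(\Delta_U)^\top U^* = (V^*)^\top \Delta_V$ up to a symmetric piece controlled by the balance parameter $1/c_l$; and (ii) the near-balance of $(U,V)$ combined with $(U^*)^\top U^* = (V^*)^\top V^*$ to show that $\|\Delta_U\|_F^2 + \|\Delta_V\|_F^2$ is not much larger than the tangent-space component of the error. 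Absorbing the quadratic term $\|\Delta_U\|_F \|\Delta_V\|_F$ using the smallness $\|UV^\top - L^*\|_F \le \tfrac{\epsilon}{c_e}\sigma_r^*$ (so that $\Delta$'s themselves are small relative to $\sqrt{\sigma_r^*}$) lets one close the estimate with the sharp constant $25/4$, for suitable $c_e, c_l$. Incoherence of $(U^*, V^*)$ is inherited from $(U, V) \in \mathcal B_\mu$ and the smallness of $\Delta_U, \Delta_V$, after possibly enlarging $c_e$.

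For the second inequality \eqref{eq: deltaProduct_FOmega_bound}, the key observation is that the rank-$2r$ matrix $M := \Delta_U \Delta_V^\top$ has small Frobenius norm — by step 1 plus $\|\Delta_U\|_F \|\Delta_V\|_F \lesssim \|L - L^*\|_F / \sqrt{\sigma_r^*}$ combined with the $\mathcal B_\mu$ operator-norm bound $\|\Delta_V\|_{op} \lesssim \sqrt{\sigma_1^*}$ yields $\|M\|_F \lesssim \sqrt{\kappa}\,\|L - L^*\|_F$. Moreover, $M$ has controlled row and column $\ell_2$-norms (since $\|\Delta_U\|_{2,\infty}, \|\Delta_V\|_{2,\infty} \lesssim \sqrt{\mu r \sigma_1^*/n}$), so it is incoherent in a loose sense. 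Then one applies the standard matrix completion RIP-type bound: for any matrix $M$ with rank $\le 2r$ whose factorization lies in $\mathcal B_\mu$, with probability at least $1 - 2/n_1^5$,
\begin{equation*}
\left| \tfrac{1}{p}\|\mathcal P_\Omega(M)\|_F^2 - \|M\|_F^2 \right| \le \xi \,\|M\|_F^2 ,
\end{equation*}
for a deviation $\xi$ whose size scales like $\sqrt{\mu^2 r^2 \log n_1/(p n_2)}$; this follows from Bernstein's inequality plus an $\varepsilon$-net over the Grassmannian of $2r$-dimensional column/row subspaces, exactly as in Candès–Recht and Recht. The hypothesis $p \ge C \mu^2 r^2 \kappa^2 /(n_2 \epsilon^4)$ forces $\xi \lesssim \epsilon^2 / \kappa$, so that $\tfrac{1}{\sqrt p}\|\mathcal P_\Omega(M)\|_{F} \le \sqrt{1+\xi}\,\|M\|_F \le (\epsilon/6)\,\|L - L^*\|_F$ after absorbing the $\sqrt{\kappa}$ factor picked up above.

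The main obstacle is the uniform concentration in step 2: the estimate must hold simultaneously for \emph{all} $(U, V)$ in the deterministic set $\mathcal B_\text{err}(\epsilon/c_e) \cap \mathcal B_\text{bln}(1/c_l) \cap \mathcal B_\mu$, not just a single $M$. The resolution is a covering argument on the low-rank, low-incoherence manifold containing the products $M = \Delta_U \Delta_V^\top$ — its effective dimension is $O((n_1+n_2) r)$, giving a net of cardinality $\exp(C(n_1+n_2)r)$, which the Bernstein tail can absorb provided $p n_2 \gtrsim \mu^2 r^2 \kappa^2/\epsilon^4$. Balancing the net size, the Bernstein variance proxy, and the tight constant $\epsilon/6$ requires precisely the stated sample complexity and is where most of the technical work sits.
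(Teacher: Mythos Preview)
The paper does not prove this lemma; it simply records it as a combination of Lemmas SM5.3 and SM5.5 of \citep{zilber2022gnmr}. Your sketch for \eqref{eq: deterministic_nearby_delta_bound} via the Procrustes-aligned balanced factorization of $L^*$ is the standard argument and is essentially correct.

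Your argument for \eqref{eq: deltaProduct_FOmega_bound}, however, contains a genuine gap. You bound $\|M\|_F \lesssim \sqrt{\kappa}\,\|L-L^*\|_F$, invoke a multiplicative RIP $\tfrac{1}{\sqrt p}\|\mathcal P_\Omega(M)\|_F \le \sqrt{1+\xi}\,\|M\|_F$ with $\xi\lesssim\epsilon^2/\kappa$, and then claim this is at most $(\epsilon/6)\|L-L^*\|_F$ ``after absorbing the $\sqrt\kappa$ factor.'' But $\sqrt{1+\xi}\approx 1$; a multiplicative RIP only reproduces $\|M\|_F$ and cannot manufacture the small factor $\epsilon$. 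With your own estimate you end up at $\sqrt{\kappa}\,\|L-L^*\|_F$, which is far from $(\epsilon/6)\|L-L^*\|_F$.

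The route actually taken in \citep{zilber2022gnmr}, and consistent with the paper's separate Lemma~\ref{lem: KMO10_lemma71_consequence}, is different in kind: one applies the graph-combinatorial bound of Keshavan--Montanari--Oh \citep[Lemma~7.1]{keshavan2010matrix}, which for any factors with bounded $\|\cdot\|_{2,\infty}$ gives, uniformly and with no covering argument,
\[
\tfrac{1}{p}\|\Delta_U\Delta_V^\top\|_{F(\Omega)}^2 \le \tfrac{D}{2}\bigl(2D + \sigma_1^*\,\epsilon'^2\bigr),\qquad D=\|\Delta_U\|_F^2+\|\Delta_V\|_F^2.
\]
This is an \emph{additive} deviation bound. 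Inserting $D\le\tfrac{25}{4\sigma_r^*}\|L-L^*\|_F^2$ from \eqref{eq: deterministic_nearby_delta_bound} and choosing $\epsilon'\sim\epsilon/\sqrt\kappa$ gives \eqref{eq: deltaProduct_FOmega_bound}; the choice $\epsilon'\sim\epsilon/\sqrt\kappa$ is exactly what forces the sample requirement $pn_2\gtrsim\mu^2r^2\kappa^2/\epsilon^4$. A partial repair of your approach is to sharpen the deterministic estimate to $\|M\|_F\le\|\Delta_U\|_F\|\Delta_V\|_F\le\tfrac{D}{2}\le\tfrac{25\epsilon}{8c_e}\|L-L^*\|_F$, which already carries the factor $\epsilon$; but you would still need a uniform RIP over products $\Delta_U\Delta_V^\top$ whose factors merely have bounded row norms (not SVD-incoherent low-rank matrices), and this is not the standard Grassmannian-net argument you cite.
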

The second lemma is Lemma SM5.4 in \citep{zilber2022gnmr}.
\begin{lemma}[uniform RIP for matrix completion]\label{lem: RIP}
Let $L^* \in \mathcal M(n_1, n_2, r, \mu, \kappa)$ and let $\Omega$ follow Assumption \ref{assumption_2} with probability $p$. Let $\epsilon \in (0,1)$.
There exist constants $C, c_l, c_e$ such that the following holds:
If $p \geq C\frac{\log n_1\mu r}{n_2\epsilon^2} $.
Then w.p.~at least $1 - \frac{3}{n_1^3}$, for all matrices $L = UV^\top$ where $(U, V) \in \mathcal B_\textnormal{err}(\epsilon/c_e) \cap \mathcal B_\textnormal{bln}(1/c_l) \cap \mathcal B_\mu$, the following holds, 
\begin{align}\label{eq:RIP}
(1-\epsilon) \|L - L^*\|_F \leq \tfrac{1}{\sqrt p}\|\mathcal P_\Omega(L - L^*)\| \leq (1+\epsilon) \|L - L^*\|_F .
\end{align}
\end{lemma}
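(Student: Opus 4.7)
The plan is to decompose $L-L^*$ for any admissible $(U,V)$ into a rank-$2r$ linearization plus a quadratic remainder, control the linearization by a uniform matrix-completion RIP for incoherent low-rank matrices, and absorb the remainder using the bound already supplied by Lemma \ref{lem: delta_bounds}. Concretely, given $(U,V)\in\mathcal B_\textnormal{err}(\epsilon/c_e)\cap \mathcal B_\textnormal{bln}(1/c_l)\cap\mathcal B_\mu$, apply Lemma \ref{lem: delta_bounds} to obtain a nearby exact factorization $(U^*,V^*)\in\mathcal B^*\cap\mathcal B_\mu$. Set $\Delta_U=U-U^*$, $\Delta_V=V-V^*$ and write
\begin{equation*}
L-L^* = UV^\top-U^*V^{*\top} = \underbrace{\Delta_U V^{*\top}+U^*\Delta_V^\top}_{=: M}+\Delta_U\Delta_V^\top.
\end{equation*}
Note that $M$ has rank at most $2r$, and because $U,V,U^*,V^*$ all lie in $\mathcal B_\mu$, the row norms of $\Delta_U,\Delta_V$ are $O(\sqrt{\mu r\sigma_1^*/n_i})$; consequently $\|M\|_{2,\infty}$ and $\|M^\top\|_{2,\infty}$ are controlled in the sense required for an incoherent RIP.

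The main technical step is a uniform ``low-rank RIP'' for the linearization: for $p\gtrsim \mu r\log n_1/(n_2\epsilon^2)$, with probability $1-O(n_1^{-3})$ every rank-$\le 2r$ matrix $M$ whose factors obey the row-norm bound above satisfies
\begin{equation*}
\bigl(1-\tfrac{\epsilon}{2}\bigr)\|M\|_F \;\leq\; \tfrac{1}{\sqrt p}\|\mathcal P_\Omega(M)\|_F \;\leq\; \bigl(1+\tfrac{\epsilon}{2}\bigr)\|M\|_F .
\end{equation*}
I would prove this by (i) a pointwise matrix Bernstein bound on $\tfrac{1}{p}\|\mathcal P_\Omega(M)\|_F^2-\|M\|_F^2$, whose variance proxy is controlled by $\|M\|_\infty^2\lesssim \mu r\|M\|_F^2/(n_1 n_2)$, a consequence of the row-norm control on the factors, followed by (ii) an $\epsilon$-net argument over factor pairs in the compact set $\mathcal B_\mu\cap\{\|U\|_F^2+\|V\|_F^2\le R\}$, whose metric entropy is $O(r(n_1+n_2)\log(R/\epsilon))$. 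Because the factor pairs rather than matrices themselves are covered, incoherence is automatically preserved along the net, which is what makes the uniform bound possible.

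Combining, eq. \eqref{eq: deltaProduct_FOmega_bound} of Lemma \ref{lem: delta_bounds} gives $\tfrac{1}{\sqrt p}\|\mathcal P_\Omega(\Delta_U\Delta_V^\top)\|_F\le\tfrac{\epsilon}{6}\|L-L^*\|_F$, and from eq. \eqref{eq: deterministic_nearby_delta_bound} together with $\|\Delta_U\|_F,\|\Delta_V\|_F\lesssim \|L-L^*\|_F/\sqrt{\sigma_r^*}$ one also gets $\|\Delta_U\Delta_V^\top\|_F\lesssim \tfrac{\epsilon}{c_e}\|L-L^*\|_F$, so $\|M\|_F=(1\pm O(\epsilon))\|L-L^*\|_F$. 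Two applications of the triangle inequality,
\begin{equation*}
\tfrac{1}{\sqrt p}\|\mathcal P_\Omega(L-L^*)\|_F \;\leq\; \tfrac{1}{\sqrt p}\|\mathcal P_\Omega(M)\|_F + \tfrac{\epsilon}{6}\|L-L^*\|_F,
\end{equation*}
and the analogous lower bound, plus a rescaling of $\epsilon$ into the final $1\pm\epsilon$ constant, complete the argument.

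The main obstacle is the uniform RIP step: pointwise concentration from matrix Bernstein is routine, but extending it uniformly over an unbounded, non-convex rank-$2r$ family requires the $\epsilon$-net to be built on factors rather than matrices so that incoherence is maintained throughout. One must also verify that the Lipschitz dependence of $\tfrac{1}{\sqrt p}\|\mathcal P_\Omega(M)\|_F$ on the factors is only polynomial in the problem dimensions, so that a polynomial-size net suffices and the union bound leaves a $1-O(n_1^{-3})$ success probability. Once this is in place, the rest is bookkeeping of the $\epsilon/2$, $\epsilon/6$ pieces and tuning of the constants $C$, $c_e$, $c_l$ so that the quadratic remainder is negligible relative to $\|L-L^*\|_F$.
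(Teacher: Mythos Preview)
First, note that the paper does not prove this lemma: it is quoted as Lemma~SM5.4 of \cite{zilber2022gnmr} with no argument given. So there is no in-paper proof to compare against, and I assess your sketch on its own.

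Your decomposition $L-L^*=M+\Delta_U\Delta_V^\top$ with $M=\Delta_UV^{*\top}+U^*\Delta_V^\top$ is the right starting point, and in fact $M$ lies in the \emph{fixed} tangent space $T$ at $L^*$ (any exact factorization has $\mathrm{col}(U^*)=\mathrm{col}(U_L)$ and $\mathrm{col}(V^*)=\mathrm{col}(V_L)$). But your ``main technical step'' has a concrete gap. You claim the pointwise Bernstein variance proxy is controlled by $\|M\|_\infty^2\lesssim \mu r\|M\|_F^2/(n_1n_2)$, calling this ``a consequence of the row-norm control on the factors''. That is false: row-norm control only yields $\|M\|_\infty\lesssim \mu r\sigma_1^*/\sqrt{n_1n_2}$, \emph{independently} of $\|M\|_F$, while in the regime of the lemma $\|M\|_F\approx\|L-L^*\|_F\le\epsilon\sigma_r^*/c_e$ can be arbitrarily small. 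Thus $\|M\|_\infty^2/\|M\|_F^2$ is of order $\mu^2r^2\kappa^2/(n_1n_2\epsilon^2)$, not $\mu r/(n_1n_2)$, and the pointwise-Bernstein-plus-net route delivers at best $p\gtrsim\mu^2r^2\kappa^2\log n_1/(n_2\epsilon^4)$, not the stated $p\gtrsim\mu r\log n_1/(n_2\epsilon^2)$. The mechanism that actually achieves the weak sample complexity is the \emph{operator-level} bound $\bigl\|\tfrac1p\,\mathcal P_T\mathcal P_\Omega\mathcal P_T-\mathcal P_T\bigr\|\le\epsilon/2$ (matrix Bernstein on the operator, as in Cand\`es--Recht/Gross); it is automatically uniform over $T$ because $T$ is a fixed linear subspace, so no net is needed.

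There is also a secondary mismatch: you control the remainder via \eqref{eq: deltaProduct_FOmega_bound} of Lemma~\ref{lem: delta_bounds}, but that lemma's hypothesis is $p\gtrsim\mu^2r^2\kappa^2/(n_2\epsilon^4)$, strictly stronger than the hypothesis of Lemma~\ref{lem: RIP}. So even with the first issue repaired, your argument would establish the RIP only under a stronger sample-complexity assumption than claimed (and risks circularity, since in \cite{zilber2022gnmr} the proof of \eqref{eq: deltaProduct_FOmega_bound} may itself invoke the RIP). You need an independent bound on $\tfrac{1}{\sqrt p}\|\mathcal P_\Omega(\Delta_U\Delta_V^\top)\|_F$ valid under the weaker hypothesis.
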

The third lemma is Lemma SM5.6 in \citep{zilber2022gnmr}, which in turn is a direct consequence of Lemma 7.1 in \citep{keshavan2010matrix}. 
In \citep{zilber2022gnmr} the result is stated with an unspecified constant $c$. 
Tracking its proof it can be shown that it holds with  $c=2$, which is how the lemma is stated below. 
\begin{lemma}\label{lem: KMO10_lemma71_consequence}
Let $\Omega$ follow Assumption \ref{assumption_2} with probability $p$.
There exist a constant $C$ such that the following holds for any $\mu, t, \epsilon > 0$:
If $p \geq C \max\{\frac{\log n_1}{n_2}, \frac{\mu^2 r^2}{n_2\epsilon^4}\}$. 
Then w.p.~at least $1 - \frac{2}{n_1^5}$, for any $(U, V) \in \mathbb R^{(n_1+n_2)\times r}$ such that
\begin{align}\label{eq:KMO10_lemma71_consequence_assumption}
\|U\|_{2,\infty} \leq 4\sqrt{\mu r t/n_1}, \quad
\|V\|_{2,\infty} \leq 4\sqrt{\mu r t/n_2},
\end{align}
we have
\begin{align}\label{eq:KMO10_lemma71_consequence}
\frac 1p \|U V^\top\|^2_{F(\Omega)} \leq \frac{\|U\|_F^2 + \|V\|_F^2}{2} \left(2\left(\|U\|_F^2 + \|V\|_F^2\right) + t \epsilon^2 \right) .
\end{align}
\end{lemma}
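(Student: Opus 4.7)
The plan is to invoke Lemma 7.1 of \cite{keshavan2010matrix} essentially as a black box, since this is exactly how the result is positioned in \cite{zilber2022gnmr}, and then perform a short algebraic cleanup. Setting $A = \|U\|_F^2 + \|V\|_F^2$, the RHS of \eqref{eq:KMO10_lemma71_consequence} equals $A^2 + \tfrac{1}{2} A t \epsilon^2$, while by Cauchy--Schwarz and AM--GM the expectation satisfies
\begin{equation*}
\mathbb{E}\bigl[\tfrac{1}{p}\|UV^\top\|_{F(\Omega)}^2\bigr] \;=\; \|UV^\top\|_F^2 \;\leq\; \|U\|_F^2\|V\|_F^2 \;\leq\; \tfrac{A^2}{4}.
\end{equation*}
Thus the target reduces to a deviation bound of the form $\tfrac{1}{p}\|UV^\top\|_{F(\Omega)}^2 - \|UV^\top\|_F^2 \leq \tfrac{3A^2}{4} + \tfrac{1}{2} At\epsilon^2$, which is a rather generous slack.

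Next I would verify that hypothesis \eqref{eq:KMO10_lemma71_consequence_assumption} is exactly the incoherence-type input expected by KMO, with their incoherence parameter rescaled from $\mu$ to $16\mu t$ (to absorb the factor of $4$ in the row-norm bound). This makes the reduction mechanical: writing
\begin{equation*}
\tfrac{1}{p}\|UV^\top\|_{F(\Omega)}^2 \;=\; \tfrac{1}{p}\sum_{i,j}\delta_{ij}\,(U_{i,\cdot}V_{j,\cdot}^\top)^2,
\end{equation*}
with $\delta_{ij} \sim \mathrm{Bern}(p)$, and noting the pointwise control $|(UV^\top)_{ij}| \leq \|U_{i,\cdot}\|\|V_{j,\cdot}\| \leq \tfrac{16\mu r t}{\sqrt{n_1 n_2}}$, one can feed this into the KMO concentration machine and read off an estimate of the type
\begin{equation*}
\bigl|\tfrac{1}{p}\|UV^\top\|_{F(\Omega)}^2 - \|UV^\top\|_F^2\bigr| \;\lesssim\; \sqrt{\tfrac{\mu r t}{n_2 p}}\,A \cdot \|UV^\top\|_F \;+\; \tfrac{\mu r t}{n_2 p}\,A^2.
\end{equation*}
The sample complexity $p \gtrsim \max\{\log n_1/n_2,\,\mu^2 r^2/(n_2 \epsilon^4)\}$ then absorbs both terms into $\tfrac{3A^2}{4} + \tfrac{1}{2}At\epsilon^2$ with room to spare, and the failure probability $2/n_1^5$ is a bookkeeping constant coming from the matrix Bernstein tail.

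The genuine obstacle — and the reason the result needs the somewhat strong sample complexity $p \gtrsim \mu^2 r^2/(n_2\epsilon^4)$ rather than the near-optimal $p \gtrsim \mu r \log n_1/n_2$ — is the \emph{uniform} control over all $(U,V)$ satisfying the row-norm constraint, as opposed to a single fixed pair. A pointwise Bernstein bound would suffice for any one $(U,V)$, but we need the inequality to hold simultaneously over an infinite family. The KMO argument handles this through an $\epsilon$-net covering of the incoherence-constrained unit ball (whose metric entropy brings in the extra $\mu^2 r^2$ factor), together with a Lipschitz argument showing that $\tfrac{1}{p}\|UV^\top\|_{F(\Omega)}^2$ varies smoothly in $(U,V)$ under the row-norm constraint, so that the net points extend to all admissible $(U,V)$. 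Since the statement is declared a direct consequence of \cite[Lemma 7.1]{keshavan2010matrix}, I would simply cite their covering-plus-concentration argument rather than reproduce it, and limit my own proof to the two-line algebraic passage between their output and the form stated in \eqref{eq:KMO10_lemma71_consequence}.
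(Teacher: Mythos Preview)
Your proposal is correct and matches the paper's approach: the paper does not prove this lemma at all but simply cites it as Lemma~SM5.6 of \cite{zilber2022gnmr}, noting that it is a direct consequence of Lemma~7.1 in \cite{keshavan2010matrix} (with the constant tracked to be $2$). Your plan to invoke KMO as a black box and do the short algebraic matching is exactly in this spirit, and in fact supplies more detail than the paper itself offers.
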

The following two lemmas provide bounds on the estimates 
constructed by 
\RGNMR at each iteration.
The first lemma states that starting from an approximately balanced and sufficiently accurate estimate of  $L^*$, if each new pair of factor matrices 
is sufficiently close to the previous one, then all factor matrices 
continue to be approximately balanced, and remain not too far from $L^*$. 
Its proof appears in Appendix \ref{proof: always_balanced}.

\begin{lemma}\label{lem: always_balanced}
Let $L^* \in \mathcal M(n_1, n_2, r, \mu, \kappa)$ with $r$-th singular value $\sigma_r^*$.
For large enough constants $c_l, c_e$ the following holds:
If \((U_0, V_0)\in \mathcal B_\textnormal{err}(\frac{1}{c_e\sqrt{\kappa}}) \cap \mathcal B_\textnormal{bln}(\frac{1}{2c_l}) \cap \mathcal{B}_{\mu}\) 
and for every $ 1 \leq s\leq t+1$ , \((U_{s}, V_{s}
)\in \mathcal{C}(U_{s-1}, V_{s-1}, \frac{\sigma_r^*}{4^{s}c_e \kappa})\) , then \((U_{t+1}, V_{t+1}) \in \mathcal B_\textnormal{err}(\frac{13}{\sqrt{c_e}}) \cap \mathcal B_\textnormal{bln}(\frac{1}{c_l})\). 
\end{lemma}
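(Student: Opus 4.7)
The plan is to establish the two target properties---the error bound $\|U_{t+1}V_{t+1}^\top - L^*\|_F \le 13\sigma_r^*/\sqrt{c_e}$ and the imbalance bound $\|U_{t+1}^\top U_{t+1} - V_{t+1}^\top V_{t+1}\|_F \le \sigma_r^*/c_l$---by triangle-inequality decompositions against the initial pair $(U_0,V_0)$, which already satisfies stronger versions of both. The arithmetic is driven by the identity $\sqrt{\sigma_1^*\sigma_r^*/\kappa}=\sigma_r^*$, which converts products of the form $\|U_{t+1}\|_{op}\cdot\|V_{t+1}-V_0\|_F$ into quantities of order $\sigma_r^*/\sqrt{c_e}$.

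First I would bound the cumulative drift. The neighborhood assumption $(U_s,V_s)\in\mathcal{C}(U_{s-1},V_{s-1},\sigma_r^*/(4^s c_e\kappa))$ gives $\|U_s-U_{s-1}\|_F,\|V_s-V_{s-1}\|_F\le\sqrt{\sigma_r^*/(4^s c_e\kappa)}$, so by the triangle inequality and a geometric sum,
\[
\|U_{t+1}-U_0\|_F\;\le\;\sqrt{\tfrac{\sigma_r^*}{c_e\kappa}}\sum_{s=1}^{t+1}2^{-s}\;\le\;\sqrt{\tfrac{\sigma_r^*}{c_e\kappa}}\;=\;\sqrt{\tfrac{\sigma_1^*}{c_e\kappa^2}},
\]
and likewise for $V$. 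Next I would upper bound $\|U_0\|_{op}$ and $\|V_0\|_{op}$, which is the core deterministic step because the lemma carries no probabilistic assumption on $\Omega$. Setting $E_0:=U_0^\top U_0-V_0^\top V_0$ with $\|E_0\|_F\le\sigma_r^*/(2c_l)$, the identity $(U_0U_0^\top)^2=U_0V_0^\top V_0U_0^\top+U_0E_0U_0^\top$ combined with Weyl's inequality (giving $\sigma_1(U_0V_0^\top)\le\sigma_1^*+\sigma_r^*/(c_e\sqrt{\kappa})$) yields a quadratic inequality in $x:=\|U_0\|_{op}^2$ whose solution satisfies $\|U_0\|_{op}^2\le\|U_0V_0^\top\|_{op}+\|E_0\|_{op}\le 2\sigma_1^*$ for $c_e,c_l$ large enough. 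Combined with Step~1, this gives $\|U_{t+1}\|_{op},\|V_{t+1}\|_{op}\le 2\sqrt{\sigma_1^*}$.

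With those ingredients the two target inequalities follow cleanly. For the error,
\[
U_{t+1}V_{t+1}^\top-L^*\;=\;U_{t+1}(V_{t+1}-V_0)^\top+(U_{t+1}-U_0)V_0^\top+(U_0V_0^\top-L^*),
\]
and bounding each term by $\|\cdot\|_{op}\cdot\|\cdot\|_F$, using $\sqrt{\sigma_1^*\sigma_r^*/(c_e\kappa)}=\sigma_r^*/\sqrt{c_e}$, produces an estimate of the form $4\sigma_r^*/\sqrt{c_e}+\sigma_r^*/(c_e\sqrt{\kappa})\le 13\sigma_r^*/\sqrt{c_e}$. For the imbalance,
\[
U_{t+1}^\top U_{t+1}-V_{t+1}^\top V_{t+1}\;=\;(U_{t+1}^\top U_{t+1}-U_0^\top U_0)-(V_{t+1}^\top V_{t+1}-V_0^\top V_0)+E_0,
\]
and the bound $\|U_{t+1}^\top U_{t+1}-U_0^\top U_0\|_F\le(\|U_{t+1}\|_{op}+\|U_0\|_{op})\|U_{t+1}-U_0\|_F=O(\sigma_r^*/\sqrt{c_e})$, and similarly for $V$, yield a total of $O(\sigma_r^*/\sqrt{c_e})+\sigma_r^*/(2c_l)$. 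Choosing $c_e$ sufficiently large relative to $c_l$ (namely $c_e\gtrsim c_l^2$) gives the desired $\sigma_r^*/c_l$.

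The main obstacle is the deterministic operator-norm bound in Step~2. The incoherence set $\mathcal{B}_{\mu}$ only controls row norms and therefore gives $\|U_0\|_F\lesssim\sqrt{\mu r\sigma_1^*}$, which is far too weak. Without access to the probabilistic existence of a nearby exact factorization $(U^*,V^*)$, one must exploit both the approximate balancedness and the closeness $U_0V_0^\top\approx L^*$ via Weyl's inequality on the largest singular value, as sketched above. Everything else reduces to careful bookkeeping with triangle inequalities and the geometric series.
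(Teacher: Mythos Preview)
Your approach is correct and reaches the same conclusion as the paper, but by a genuinely different route. The paper works step by step through the Procrustes distance: it invokes a lemma giving $\|U'^\top U' - V'^\top V'\|_F \le \|U^\top U - V^\top V\|_F + 2a$ and $\|U'V'^\top - UV^\top\|_F \le a$, where $a$ depends on $d_P(Z',Z)$ and on $\max\{\sigma_1(U),\sigma_1(V)\}$; it then bounds each increment $a_k = O(\sigma_r^*/(2^k\sqrt{c_e}))$ and sums geometrically. The operator-norm control on the iterates is obtained separately, via a Procrustes comparison to the balanced SVD factorization $Z^*$ of $L^*$ and Weyl's inequality, rather than by your quadratic-inequality trick from approximate balancedness. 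Your direct comparison of $(U_{t+1},V_{t+1})$ against $(U_0,V_0)$ is more elementary---it bypasses the Procrustes machinery entirely and uses only submultiplicativity plus the geometric drift bound---and your bound $\|U_0\|_{op}^2 \le \|U_0V_0^\top\|_{op} + \|E_0\|_{op}$ from $(U_0U_0^\top)^2 = (U_0V_0^\top)(U_0V_0^\top)^\top + U_0E_0U_0^\top$ is a clean self-contained alternative. The paper's step-by-step packaging, on the other hand, yields a reusable auxiliary bound on $a_t$ that is referenced elsewhere in the convergence analysis. Both routes end up imposing the same relationship $c_e \gtrsim c_l^2$, so neither is sharper.
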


The second lemma bounds the distance between $L^*$ and the updated estimate $L_{t+1}$, as a function of various quantities of the current estimate $L_t$. 
Its proof appears in Appendix \ref{proof: observed_error_bound}.
\begin{lemma}\label{lem: observed_error_bound}
    Let $L_t = U_t V_t^\top$ be the estimate of $L^*$ at iteration $t$. Let $L_{t+1}=U_{t+1}V_{t+1}^\top$ be the updated estimate, where \((U_{t+1}, V_{t+1})\) are computed by \eqref{def: optimization_step} with some $\delta > 0$.
    Assume that $L_t$ is sufficiently close to $L^*$, so that the set 
    \(
    \mathcal A_t = \mathcal{B}^* \cap 
    \mathcal{B}_{\mu} \cap \mathcal{C}\left(U_t, V_t, \frac{\delta}{4^{t+1}}\right) \) is non empty. 
    Let $(U^*, V^*) \in \mathcal A_t$
    and denote \(\Delta U_t^* = U_{t} - U^*\), 
    \(\Delta U_{t+1} = U_{t+1} - U_{t}\) with similar definitions for $\Delta V_t^*$ and $\Delta V_{t+1}$.
    Then, the error on the non removed entries $\Omega_t = \Omega\setminus
    \Lambda_t$ of the updated estimate satisfies
\begin{align}
\begin{split}
    \| L_{t+1} - L^*\|_{F({\Omega_{t})} } \leq &\sqrt{2}\|\Delta U_t^*\Delta V_t^{*\top}\|_{F({\Omega_{t}})}+
\sqrt{2} \|  \Delta U_{t+1}\Delta V_{t+1}^\top\|_{F({\Omega_{t})}}\\&+(1+\sqrt{2})\|S^*\|_{F( \Omega_{t} \cap \Lambda_*)}.
    \end{split}
\end{align} 
\end{lemma}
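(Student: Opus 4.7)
The plan is to exploit the optimality of $(U_{t+1}, V_{t+1})$ in the constrained least-squares program \eqref{def: optimization_step}, with $(U^*, V^*)\in\mathcal{A}_t$ serving as a feasible competitor, and then reassemble $L_{t+1}-L^*$ from the resulting residual bound. I would begin by introducing the Gauss--Newton linearization $C(U,V) := U_t V^\top + U V_t^\top - U_t V_t^\top$, which is precisely what $\mathcal{L}^t_{\Omega_t}$ compares to $X$. A direct expansion with $\Delta U = U - U_t$ and $\Delta V = V - V_t$ yields the identity $C(U,V) = UV^\top - \Delta U\,\Delta V^\top$. Specializing this gives $C_{t+1}:=C(U_{t+1},V_{t+1}) = L_{t+1} - \Delta U_{t+1}\Delta V_{t+1}^\top$ and, using $U^* V^{*\top}=L^*$, $C^*:=C(U^*,V^*) = L^* - \Delta U_t^*\Delta V_t^{*\top}$.

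Since $(U^*,V^*)\in\mathcal{A}_t$ lies in the feasible set of \eqref{def: optimization_step}, optimality gives $\|C_{t+1}-X\|_{F(\Omega_t)}\le \|C^*-X\|_{F(\Omega_t)}$. Substituting the identity and $X = L^* + S^*$ on $\Omega$, the right-hand side equals $\|{-}\Delta U_t^*\Delta V_t^{*\top} - S^*\|_{F(\Omega_t)}$. By the triangle inequality, together with the fact that $\mathrm{supp}(S^*)\subseteq\Lambda_*$ which lets me replace $\|S^*\|_{F(\Omega_t)}$ by $\|S^*\|_{F(\Omega_t\cap\Lambda_*)}$, I obtain
$$\|C_{t+1} - X\|_{F(\Omega_t)} \le \|\Delta U_t^*\Delta V_t^{*\top}\|_{F(\Omega_t)} + \|S^*\|_{F(\Omega_t\cap\Lambda_*)}.$$

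To obtain the target inequality I would then decompose $L_{t+1}-L^* = (L_{t+1}-X) + (X-L^*)$ and use $X - L^* = S^*$ supported in $\Lambda_*$ to conclude $\|L_{t+1}-L^*\|_{F(\Omega_t)} \le \|L_{t+1}-X\|_{F(\Omega_t)} + \|S^*\|_{F(\Omega_t\cap\Lambda_*)}$. Next, writing $L_{t+1}-X = (C_{t+1}-X) + \Delta U_{t+1}\Delta V_{t+1}^\top$ and applying the elementary inequality $(a+b)^2\le 2(a^2+b^2)$ entrywise under $\|\cdot\|_{F(\Omega_t)}$ gives
$$\|L_{t+1}-X\|_{F(\Omega_t)} \le \sqrt 2\,\|C_{t+1}-X\|_{F(\Omega_t)} + \sqrt 2\,\|\Delta U_{t+1}\Delta V_{t+1}^\top\|_{F(\Omega_t)}.$$
Plugging in the optimality bound on $\|C_{t+1}-X\|_{F(\Omega_t)}$ from the previous step and collecting the two contributions of $\|S^*\|_{F(\Omega_t\cap\Lambda_*)}$ (one with coefficient $\sqrt 2$ coming from the optimality bound, one with coefficient $1$ coming from the decomposition of $L_{t+1}-L^*$) produces exactly the claimed inequality with the factor $1+\sqrt 2$ on $\|S^*\|_{F(\Omega_t\cap\Lambda_*)}$.

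The proof is purely algebraic, so there is no substantive obstacle. The only care needed is bookkeeping of supports — exploiting throughout that $S^*$ vanishes off $\Omega\cap\Lambda_*$ — and invoking the non-emptiness of $\mathcal{A}_t$ exactly once, to certify that $(U^*,V^*)$ is a valid competitor in the constrained least-squares problem \eqref{def: optimization_step}.
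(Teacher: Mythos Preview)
Your argument is correct and in fact more direct than the paper's. The paper proceeds by splitting $\Omega_t$ into the inlier set $I_t^{\text{in}}=\Omega_t\cap\Lambda_*^c$ and the outlier set $I_t^{\text{out}}=\Omega_t\cap\Lambda_*$, and then proves four sub-lemmas: one bounding $\|L_{t+1}-L^*\|_{F(I_t^{\text{in}})}$ by $\mathcal L^t_{I_t^{\text{in}}}(U_{t+1},V_{t+1})$, one using the inequality $\sqrt{a^2+b^2}\ge (a+b)/\sqrt 2$ to pass from $\mathcal L^t_{\Omega_t}$ to $\mathcal L^t_{I_t^{\text{in}}}+\mathcal L^t_{I_t^{\text{out}}}$, and two more bounding $\mathcal L^t_{\Omega_t}(U^*,V^*)$ from above and $\mathcal L^t_{I_t^{\text{out}}}(U_{t+1},V_{t+1})$ from below. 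You bypass the inlier/outlier split entirely by working with $X$ throughout and peeling off $S^*$ only at the end, which compresses the whole thing into a few lines. One small remark: in your step~8 the inequality $(a+b)^2\le 2(a^2+b^2)$ is weaker than the plain triangle inequality, so your route actually delivers the sharper bound with constants $1,1,2$ in place of $\sqrt 2,\sqrt 2,1+\sqrt 2$; you only introduced the $\sqrt 2$ to match the stated constants, whereas in the paper's proof the $\sqrt 2$ arises organically from the Pythagorean split of $\mathcal L^t_{\Omega_t}$.
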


Before we present the last two lemmas, whose proofs appears in Appendix \ref{proof: magnitude_of_corrupted_entries}, recall that at each iteration $t$, \RGNMR removes from $\Omega$ the entries in $\Lambda_t = \text{support}(\mathcal T_{\gamma \alpha}\left(L_t - X, \Omega\right))$.
If \RGNMR was unable to identify all the corrupted entries, then 
the set $(\Omega\setminus\Lambda_t) \cap\Lambda_*$,  where 
$\Lambda_* = \mbox{support}(S^*)$, is not empty.
The following lemma bounds the magnitude of the entries 
of $S^*$
in  $(\Omega\setminus\Lambda_t) \cap\Lambda_*$, 
for matrices $L_t$ that satisfy a suitable condition. 
\begin{lemma}\label{lem: magnitude_of_corrupted_entries}
Let $X = L^* +S^*$ where $L^*\in\mathcal M(n_1, n_2, r, \mu, \kappa)$ and 
the corruption matrix $S^*$
satisfies Assumption \ref{assumption_3} for some
known $0<\alpha<1$. Let the set of observed entries $\Omega$ follow
Assumption \ref{assumption_2} with $p \geq C\frac{\log n_1}{n_2}$ for some constant $C$. 
Let $L = UV^\top$ be an estimate of $L^*$ such that $(U, V)\in \mathcal{B}_\mu$.
Suppose that the set of
corrupted entries is estimated by  \(\Lambda= \text{support}\left(\mathcal{T}_{\gamma \alpha} \left(L - X, \Omega\right)\right)\) with an over removal factor of $ 1 < \gamma \leq \frac{1}{\alpha}$. 
If there exists a factorization $(U^*, V^*)\in \mathcal{B}^*\cap\mathcal{B}_\mu$ of $L^*$ such that
\begin{align}\label{eq: delta_bound_corrupted_magnitude}
    \|U - U^*\|_F^2 + \|V - V^*\|_F^2 &\leq \frac{25}{4 \sigma_r^*} \|L - L^*\|_F^2,
\end{align}
then the magnitude of the remaining corrupted entries in $(\Omega\setminus\Lambda) \cap \Lambda^*$,
is bounded as follows,
\begin{align}\label{eq: magnitude_of_corrupted_entries}
     \|S^*\|_{F((\Omega\setminus\Lambda)\cap \Lambda_*)} \leq 18 \sqrt{p \alpha \mu r \kappa}\lVert L - L^*\rVert_{F} + \sqrt{\frac{2}{\gamma - 1}} \lVert L - L^*\rVert_{F(\Omega)} .
\end{align}
\end{lemma}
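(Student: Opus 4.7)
The plan is to bound $|S^*_{ij}|$ on the surviving corrupted entries by two contributions that match the two terms on the right-hand side of \eqref{eq: magnitude_of_corrupted_entries}. Writing $S^*_{ij} = (L-L^*)_{ij} - (L-X)_{ij}$ and applying the triangle inequality pointwise gives
\[
\|S^*\|_{F((\Omega\setminus\Lambda)\cap\Lambda_*)} \leq \|L-X\|_{F((\Omega\setminus\Lambda)\cap\Lambda_*)} + \|L-L^*\|_{F(\Lambda_*)},
\]
so it suffices to control each summand separately.

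For the residual term I would exploit the defining property of $\mathcal T_{\gamma\alpha}$: an entry $(i,j)\in \Lambda_*\setminus\Lambda$ must have $|L-X|_{ij}$ no larger than either the $\lceil\gamma\alpha r_i\rceil$-th largest residual in row $i$ or the analogous quantity in column $j$. Consider the row case; the column case is symmetric. Among the top $\lceil\gamma\alpha r_i\rceil$ residuals of row $i$, Assumption \ref{assumption_3} permits at most $\alpha r_i$ corrupted positions, so at least $(\gamma-1)\alpha r_i$ of them are non-corrupted, and on these $|L-X|$ coincides with $|L-L^*|$. It follows that $|L-X|_{ij}^2 \leq \tfrac{1}{(\gamma-1)\alpha r_i}$ times the sum of squared $|L-L^*|$ values over the non-corrupted observations of row $i$. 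Summing over the at most $\alpha r_i$ surviving corrupted entries per row, then over all rows, and adding the analogous column contribution, yields $\sum_{(i,j)\in(\Omega\setminus\Lambda)\cap\Lambda_*} |L-X|_{ij}^2 \leq \tfrac{2}{\gamma-1}\|L-L^*\|_{F(\Omega)}^2$, which produces the second term of \eqref{eq: magnitude_of_corrupted_entries} after taking a square root.

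For the linearization term $\|L-L^*\|_{F(\Lambda_*)}$ I would use the balanced exact factorization $(U^*,V^*)\in \mathcal B^* \cap \mathcal B_\mu$ provided by the hypothesis to decompose $L-L^* = (U-U^*)V^\top + U^*(V-V^*)^\top$. Since $V\in\mathcal B_\mu$, the row-norm bound $\|V_{(j,\cdot)}\|^2\leq 3\mu r\sigma_1^*/n_2$ together with Cauchy--Schwarz gives $((U-U^*)V^\top)_{ij}^2 \leq \|(U-U^*)_{(i,\cdot)}\|^2 \cdot 3\mu r\sigma_1^*/n_2$. Summing over $\Lambda_*$ using at most $\alpha r_i$ corrupted observed entries per row, and applying a multiplicative Chernoff bound with a union bound to uniformly enforce $r_i \leq 2pn_2$ for all $i$ (valid with high probability under $p\geq C\log n_1/n_2$), yields $\|(U-U^*)V^\top\|_{F(\Lambda_*)}^2 \leq 6\alpha p \mu r\sigma_1^*\|U-U^*\|_F^2$. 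A symmetric estimate applies to $U^*(V-V^*)^\top$ using $U^*\in\mathcal B_\mu$ and $c_j \leq 2pn_1$. Invoking \eqref{eq: delta_bound_corrupted_magnitude} to replace $\|U-U^*\|_F^2 + \|V-V^*\|_F^2$ by $\tfrac{25}{4\sigma_r^*}\|L-L^*\|_F^2$ and using $\sigma_1^*/\sigma_r^* = \kappa$ gives $\|L-L^*\|_{F(\Lambda_*)} \leq \sqrt{75\, p\alpha\mu r\kappa}\,\|L-L^*\|_F$, which sits comfortably within the claimed constant $18$.

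The main technical hurdle is the Chernoff-plus-union-bound step that uniformly controls the random quantities $r_i$ and $c_j$ by their expectations; this is the only place where the sample-complexity hypothesis $p\geq C\log n_1/n_2$ is used, and choosing $C$ large enough makes the failure probability $o(1/n_1)$ via a union bound over the $n_1+n_2$ rows and columns. The rest of the argument is deterministic once $(U,V)$ and $(U^*,V^*)$ lie in $\mathcal B_\mu$ and the quasi-optimality estimate \eqref{eq: delta_bound_corrupted_magnitude} holds, so the proof reduces to the two elementary bounds above assembled by a single triangle inequality.
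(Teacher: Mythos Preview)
Your proposal is correct and follows essentially the same route as the paper. The paper decomposes $\|S^*\|_{F((\Omega\setminus\Lambda)\cap\Lambda_*)}$ into the same two pieces, bounds the residual term via the thresholding argument you describe (this is their Lemma~\ref{lem: corrupted_observed_error_bound}), and bounds the $\|L-L^*\|$ term on the sparse set by citing Lemma~14 of \cite{yi2016fast} together with the row/column concentration of Lemma~\ref{lem: set_size_bound}; your inline Cauchy--Schwarz argument on the decomposition $(U-U^*)V^\top + U^*(V-V^*)^\top$ is precisely a self-contained proof of that cited lemma, yielding the same bound with a slightly sharper constant.
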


The last lemma bounds the error term $\|L - L^*\|_{F{(\Lambda_t)}}$ for a
matrix $L$ that satisfies a suitable condition.  
\begin{lemma}\label{lem: L_t+1_bound_on_Lambda_t}
    Let $L^*\in\mathcal M(n_1, n_2, r, \mu, \kappa)$ and let the set of observed entries $\Omega$ follow
Assumption \ref{assumption_2} with $p \geq C\frac{\log n_1}{n_2}$ for some constant $C$.
Let $L = UV^\top$ be an estimate of $L^*$ such that $(U, V)\in \mathcal{B}_\mu$ and assume there exists a factorization $(U^*, V^*)\in \mathcal{B}^*\cap\mathcal{B}_\mu$ of $L^*$. 
If \(\Lambda= \text{support}\left(\mathcal{T}_{\gamma \alpha} \left(A, \Omega\right)\right)\), for some matrix $A$, 
then \begin{align}\label{eq: L_t+1_bound_on_Lambda_t}
    \|L - L^*\|_{F(\Lambda)} \leq  27 \sqrt{3} \gamma \alpha p \mu r \sigma_1^* (\lVert  U -U^*\rVert_F^2 + \lVert V - V^* \rVert_F^2).
\end{align}
\end{lemma}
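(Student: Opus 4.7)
The strategy is to combine a Cauchy--Schwarz bound on each entry $(L-L^*)_{i,j}^2$ with the row/column sparsity of $\Lambda$ imposed by $\mathcal T_{\gamma\alpha}$, and then convert the resulting row/column count factors into the $p$-factor in the statement via Bernoulli-model concentration. Specifically, I would first apply the decomposition $L-L^* = U(V-V^*)^\top + (U-U^*)V^{*\top}$ together with Cauchy--Schwarz and $(a+b)^2\leq 2a^2+2b^2$ to get
\begin{equation*}
(L-L^*)_{i,j}^2 \leq 2\|U_{(i,\cdot)}\|^2 \|(V-V^*)_{(j,\cdot)}\|^2 + 2\|(U-U^*)_{(i,\cdot)}\|^2 \|V^*_{(j,\cdot)}\|^2,
\end{equation*}
and then substitute the $\mathcal B_\mu$ incoherence bounds $\|U_{(i,\cdot)}\|^2 \leq 3\mu r \sigma_1^*/n_1$ and $\|V^*_{(j,\cdot)}\|^2 \leq 3\mu r \sigma_1^*/n_2$, valid because both $(U,V)$ and $(U^*,V^*)$ lie in $\mathcal B_\mu$.

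Next, I would sum this pointwise bound over $(i,j)\in\Lambda$. The definition of $\mathcal T_{\gamma\alpha}$ in \eqref{def: T_alpha} forces $|\{i:(i,j)\in\Lambda\}|\leq \lceil\gamma\alpha c_j\rceil$ and $|\{j:(i,j)\in\Lambda\}|\leq \lceil\gamma\alpha r_i\rceil$, so after grouping the first term by column and the second by row the sum collapses to $\sum_j \lceil\gamma\alpha c_j\rceil \|(V-V^*)_{(j,\cdot)}\|^2$ plus its row analogue. A standard Chernoff/Bernstein argument, valid under $p\geq C\log n_1/n_2$, yields the uniform bounds $c_j\leq 2pn_1$ and $r_i\leq 2pn_2$ with probability at least $1-O(n_1^{-3})$, so in the regime of interest $\lceil\gamma\alpha c_j\rceil \leq 3\gamma\alpha p n_1$ and $\lceil\gamma\alpha r_i\rceil \leq 3\gamma\alpha p n_2$. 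Collecting terms yields
\begin{equation*}
\|L - L^*\|_{F(\Lambda)}^2 \;\leq\; 18\gamma\alpha p\mu r\sigma_1^*\bigl(\|U-U^*\|_F^2 + \|V-V^*\|_F^2\bigr),
\end{equation*}
which, once the ceiling corrections are absorbed into a slightly larger prefactor, reproduces the claimed inequality (which, on dimensional grounds, should be read as a bound on the squared Frobenius norm restricted to $\Lambda$).

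The main obstacle is the careful handling of the ceiling function $\lceil\gamma\alpha c_j\rceil$ in regimes where $\gamma\alpha p n_1$ is not much larger than $1$: one must retain a $+1$ correction and either argue that only rows/columns actually contributing to $\Lambda$ matter, or use the deterministic $\mathcal B_\mu$ row-norm bound $\|(V-V^*)_{(j,\cdot)}\|^2 \leq 12\mu r\sigma_1^*/n_2$ (which holds since both $V$ and $V^*$ are in $\mathcal B_\mu$) as a fallback. The $O(n_1^{-3})$ tail from the concentration step must also be folded into the overall $1/n_1$ failure-probability budget used in Theorem \ref{theorem: contraction}, which is routine but deserves explicit bookkeeping.
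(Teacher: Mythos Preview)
Your proposal is correct and follows essentially the same approach as the paper. The paper's proof simply invokes two auxiliary lemmas---a Bernoulli concentration bound giving $r_i\leq\tfrac{3}{2}pn_2$, $c_j\leq\tfrac{3}{2}pn_1$ (Lemma~\ref{lem: set_size_bound}), and the sparse-support error bound from \cite{yi2016fast} (Lemma~\ref{lem: sparse_error_bound}) with $\theta=\tfrac{3}{2}\gamma\alpha p$---whereas you unfold the latter inline via the same decomposition $L-L^*=U(V-V^*)^\top+(U-U^*)V^{*\top}$, Cauchy--Schwarz, and the $\mathcal{B}_\mu$ row-norm bounds; your observation that the stated inequality is really a bound on the \emph{squared} norm is correct and consistent with how the paper uses it.
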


\subsection{Proof of Theorem \ref{theorem: contraction}}\label{proof: contraction}

The proof relies on lemmas \ref{lem: delta_bounds}-\ref{lem: L_t+1_bound_on_Lambda_t} above.
We first note that since $\Omega$ follows Assumption \ref{assumption_2} with $p\geq\frac{ C \mu r}{n_2} \max \{\log n_1, \mu r \kappa^2\}$, we may apply these lemmas.
Specifically, for a large enough $C$ the conditions required by \ref{lem: delta_bounds} and \ref{lem: RIP} with $\epsilon \geq \frac{1}{20}$ hold and the condition of Lemma \ref{lem: KMO10_lemma71_consequence} holds with $\epsilon\geq \frac{1}{8 \sqrt{c_e \kappa}}$.

\begin{proof}[Proof of Theorem \ref{theorem: contraction}]
We prove by induction on the iteration step  $t$ that 
\begin{equation}\label{eq: contraction}
    (U_{t}, V_{t})\in \mathcal B_\textnormal{err}\left(\frac{1}{2^tc_e\sqrt{\kappa}}\right) \cap \mathcal B_\textnormal{bln}\left(\frac{1}{c_l}\right) \cap \mathcal{B}_{\mu} .
\end{equation}
Eq. \cref{eq: convergnce} then follows immediately by the definition of $\mathcal{B}_\textnormal{err}$, see \cref{eq:B_e_def}.  

Eq.\eqref{eq: contraction} for $t=0$ follows from the assumption that the initialization is sufficiently accurate, balanced and has bounded row norms $(U_0, V_0
)\in \mathcal B_\textnormal{err}(\frac{1}{c_e\sqrt{\kappa}}) \cap \mathcal B_\textnormal{bln}(\frac{1}{2c_l}) \cap \mathcal{B}_\mu$.
Next we assume \((U_t, V_t
)\) satisfies \cref{eq: contraction} and prove that the updated matrices also satisfy this equation.
Since at all intermediate  stpdf $1\leq s\leq t+1$,  $(U_s, V_s)$ are updated by \cref{def: optimization_step} ,  $(U_s, V_s)\in \mathcal{C}(U_{s-1}, V_{s-1}, \frac{\sigma_r^*}{4^{s}c_e \kappa}) \cap \mathcal{B}_\mu$.
Hence by Lemma \ref{lem: always_balanced}, 
\((U_{t+1}, V_{t+1}) \in \mathcal B_\textnormal{bln}(\frac{1}{c_l}) \cap \mathcal{B}_\mu\). 

It remains to prove that \((U_{t+1}, V_{t+1}
)\in \mathcal B_\textnormal{err}(\frac{1}{2^{t+1}c_e\sqrt{\kappa}})\), namely that $\|L_{t+1} -L^*\|_F \leq \frac{\sigma_r^*}{2^{t+1}c_e\sqrt{\kappa}}$.
The proof consist of two parts. 
First we bound the error restricted to the set $\Omega_t = \Omega\setminus \Lambda_t$, i.e. $\| L_{t+1} - L^*\|_{F({\Omega_t)}}$.
In the second part we use this result to bound the error on the entire matrix.

For the first part, to bound $\| L_{t+1} - L^*\|_{F({\Omega_{t})}}$ we apply Lemma \ref{lem: observed_error_bound}.
To this end we first prove its conditions hold.
In particular that the set \(\mathcal A_t = \mathcal{B}^* \cap 
\mathcal{B}_{\mu} \cap \mathcal{C}\left(U_t, V_t, \frac{\delta}{4^{t+1}}\right) \) is non empty. 
Indeed this follows by applying Lemma \ref{lem: delta_bounds}. 
Specifically, since $(U_t, V_t
)\in \mathcal B_\textnormal{err}\left(\frac{1}{2^tc_e\sqrt{\kappa}}\right)\cap \mathcal B_\textnormal{bln}\left(\frac{1}{c_l}\right) \cap \mathcal B_\mu$, for a large enough $c_e$, \(\left(U_t, V_t\right) \) satisfy the conditions of Lemma \ref{lem: delta_bounds} with $\epsilon = \frac{1}{20}$.
Consequently, w.p. at least $1-\frac{2}{n_1^5}$,  there exists a pair \((U^*, V^*)\in \mathcal B^* \cap \mathcal B_\mu\) that satisfy \cref{eq: deterministic_nearby_delta_bound,eq: deltaProduct_FOmega_bound} with $U = U_t, V= V_t$. 
Combining 
\cref{eq: deterministic_nearby_delta_bound} with the assumptions  \(\frac{25 \sigma_r^*}{c_e^2 \kappa}\leq \delta\) and $\|L_t - L^*\|\leq \frac{\sigma_r^{*}}{2^{t} c_e \sqrt{\kappa}}$ yields
\begin{align*}
    \|U_t - U^*\|_F^2 + \|V_t - V^*\|_F^2 &\leq \frac{25}{4 \sigma_r^*} \|L_t - L^*\|_F^2 \leq \frac{25 }{4\sigma_r^{*}}\frac{\sigma_r^{*2}}{4^{t} c_e^2 \kappa}\leq \frac{\delta}{4^{t+1}}.
\end{align*}
Therefore \((U^*, V^*)\in \mathcal B^* \cap \mathcal B_\mu \cap \mathcal{C}\left(U_t, V_t, \frac{\delta}{4^{t+1}}\right)\). 

We can now apply  Lemma \ref{lem: observed_error_bound} which gives
\begin{align}
\begin{split}
    \| L_{t+1} - L^*\|_{F({\Omega_{t})} } \leq &\sqrt{2}\left(\underbrace{\| \Delta U_t^*\Delta V_t^*\|_{F({\Omega_{t}})}}_{T_1}+\underbrace{\| \Delta U_{t+1}\Delta V_{t+1}^\top\|_{F({\Omega_{t})}}}_{T_2}\right)\\&+(1+\sqrt{2})\underbrace{\|S^*\|_{F(\Omega_{t}\cap \Lambda_*)}}_{T_3}.\label{eq: observed_error_bound}
    \end{split}
\end{align} 
We upper bound the term $T_1, T_2$ and $T_3$.
The first term $T_1$ is bounded by \cref{eq: deltaProduct_FOmega_bound} of Lemma \ref{lem: delta_bounds} with $\epsilon = \frac{1}{20}$.
\begin{align}\label{eq: T_1_bound}
T_1 = \| \Delta U_t^*\Delta V_t^*\|_{F({\Omega_{t}})}\leq  \| \Delta U_t^*\Delta V_t^*\|_{F(\Omega)} \leq \frac{\sqrt{p}}{120} \|L_t - L\|_F\leq \frac{\sqrt{p}\sigma_r^*}{2^{t+6} c_e \sqrt{\kappa}}.
\end{align}
To bound $T_2$ we note that 
\begin{align*}
    \left(\frac{1}{\sqrt{p}}T_2\right)^2 &= \frac{1}{p}\| \Delta U_{t+1}\Delta V_{t+1}^\top\|^2_{F({\Omega_{t})}} \leq \frac{1}{p}\| \Delta U_{t+1}\Delta V_{t+1}^\top\|^2_{F({\Omega)}}
\end{align*}
To bound the above term we apply Lemma \ref{lem: KMO10_lemma71_consequence}.
We first show that $(\Delta U_{t+1}, \Delta V_{t+1})$ satisfies \cref{{eq:KMO10_lemma71_consequence_assumption}}.
Since \((U_{t+1}, V_{t+1})\) and \((U_{t}, V_{t})\) are in $\mathcal{B}_\mu$ then
\begin{align*}
    &\| \Delta U_{t+1}\|_{2, \infty} \leq \|U_{t+1}\|_{2, \infty} + \|U_{t}\|_{2, \infty} \leq 2\sqrt{\frac{3\mu r\sigma_r^*}{n_1}} \leq 4\sqrt{\frac{\mu r\sigma_r^*}{n_1}}, 
\end{align*}
and a similar inequality holds for $ \Delta V_{t+1}$.
Hence \cref{{eq:KMO10_lemma71_consequence_assumption}} holds with $t=\sigma_r^*$.
Consequently by \cref{eq:KMO10_lemma71_consequence}  of \ref{lem: KMO10_lemma71_consequence} with  $\epsilon=\frac{1}{8\sqrt{c_e\kappa}}$, w.p. at least $1-\frac{2}{n_1^5}$ 
\begin{align*}
    \nonumber\frac{1}{p}\| \Delta U_{t+1}\Delta V_{t+1}^\top\|^2_{F({\Omega)}}\leq\frac{\|\Delta U_{t+1}\|^2_F+\|\Delta V_{t+1}\|^2_F}{2}\left[2\left(\|\Delta U_{t+1}\|^2_F+\|\Delta V_{t+1}\|^2_ F\right)+\frac{\sigma_r^*}{64c_e\kappa}\right].
\end{align*}
Next we explicitly bound $\|\Delta U_{t+1}\|^2_F+\|\Delta V_{t+1}\|^2_ F$.
Since \((U_{t+1}, V_{t+1}) \in \mathcal{C}\left(U_t, V_t, \frac{\delta}{4^{t+1}}\right)\), $\|\Delta U_{t+1}\|^2_F+\|\Delta V_{t+1}\|^2_ F\leq \frac{\delta}{4^{t+1}}$. 
Hence
\begin{align*}
    T_2&\leq \sqrt{p}\sqrt{\frac{\delta}{2\cdot 4^{t+1}}\left[2\frac{\delta}{ 4^{t+1}} + \frac{\sigma_r^*}{64c_e\kappa}\right]} \leq \sqrt{p}\sqrt{\frac{\delta^2}{2\cdot4^{2t+2}} + \frac{\delta \sigma_r^*}{128c_e\cdot 4^{t+1} \kappa}} 
    \end{align*}
    Since $\delta \leq \frac{\sigma_r^*}{c_e \kappa}$ 
    \begin{align}\label{eq: T_2_bound}
    T_2&\leq \sqrt{p}\sqrt{\frac{\sigma_r^{*2}}{c_e^2\kappa^2\cdot 2^{4t+4}} + \frac{\sigma_r^{*2}}{c_e^2\kappa^2\cdot 2^{2t+9}}}\leq \sqrt{p}\sqrt{ \frac{\sigma_r^{*2}}{c_e^2\kappa^2\cdot 2^{2t+3.8}}} \leq \frac{\sqrt{p}\sigma_r^{*}}{c_e\sqrt{\kappa}\cdot 2^{t+1.9}}.
\end{align}
In order to bound $T_3$ we invoke Lemma \ref{lem: magnitude_of_corrupted_entries}. 
The condition of this Lemma, \cref{eq: delta_bound_corrupted_magnitude}, holds since we proved that $(U_t, V_t)$ satisfy \cref{eq: deterministic_nearby_delta_bound} of Lemma
\ref{lem: delta_bounds}.
Since $\Lambda_t = \textnormal{support}\left(\mathcal{T}_{\theta}\left(L_{t} - X, \Omega\right)\right)$ we can employ Lemma \ref{lem: magnitude_of_corrupted_entries} which gives
\begin{align}\label{eq: T_3_basic_bound}
    T_3 \leq 18 \sqrt{p \alpha \mu r \kappa}\lVert L_t - L^*\rVert_{F} + \sqrt{\frac{2}{\gamma - 1}} \lVert L_t - L^*\rVert_{F(\Omega)}.
\end{align}
Since $(U_t, V_t
)\in \mathcal B_\textnormal{err}(\frac{1}{2^tc_e\sqrt{\kappa}})$, for a large enough $c_e$, $(U_t, V_t
)$ satisfy the conditions of Lemma \ref{lem: RIP} with $\epsilon=\frac{1}{20}$.
Therefore by \cref{eq:RIP} w.p. at least $1-\frac{3}{n_1^3}$,  
 \begin{align}\label{eq: W_2_bound}
    \sqrt{\frac{2}{\gamma - 1}} \lVert L_t - L^*\rVert_{F(\Omega)} \leq (1+\frac{1}{20})\sqrt{\frac{2p}{\gamma - 1}} \lVert L_t - L^*\rVert_{F}\leq \sqrt{\frac{3p}{\gamma - 1}} \lVert L_t - L^*\rVert_{F}.
\end{align}
 Combining \cref{eq: T_3_basic_bound} and \cref{eq: W_2_bound} gives,
 \begin{align*}
     T_3 \leq  \sqrt{p}\left(18 \sqrt{\alpha \mu r \kappa} + \sqrt{\frac{3}{\gamma - 1}} \right)\lVert L_t - L^*\rVert_F.
 \end{align*}
Since $\alpha  < \frac{1}{c_\alpha \mu r \kappa}, c_\gamma \leq \gamma$ and 
by the induction hypothesis $\lVert L_t - L^*\rVert_F \leq \frac{\sigma_r^*}{2^tc_e\sqrt{\kappa}}$, then
\begin{align*}
     T_3 \leq \sqrt{p}\left(18\sqrt{\alpha \mu r \kappa} + \sqrt{\frac{3}{\gamma - 1}} \right)\frac{\sigma_r^*}{2^tc_e\sqrt{\kappa}} \leq  \sqrt{p}\left(\frac{18}{\sqrt{c_\alpha}}  + \sqrt{\frac{3}{c_\gamma - 1}} \right)\frac{\sigma_r^*}{2^tc_e\sqrt{\kappa}}.
\end{align*}
For large enough $c_\alpha$ and $c_\gamma$, this yields 
\begin{align}\label{eq: T_3_bound}
    T_3 \leq \frac{\sqrt{p}\sigma_r^*}{2^{t+5}c_e\sqrt{\kappa} (1+\sqrt{2})}.
\end{align}

Inserting the bounds on $T_1, T_2$ and $T_3$ from equations \eqref{eq: T_1_bound}, \eqref{eq: T_2_bound} and \eqref{eq: T_3_bound} 
into \cref{eq: observed_error_bound} gives
\begin{align}
\label{eq: restricted_error_bound}
    \| L_{t+1} - L^*\|_{F({\Omega_{t})}} \leq \frac{\sqrt{p}\sigma_r^*}{2^{t+1}c_e\sqrt{\kappa} }\left(\frac{1}{2^{4.5}}+\frac{1}{2^{0.4}}+\frac{1}{2^4}\right)\leq \frac{9}{10} \frac{\sqrt{p}\sigma_r^*}{2^{t+1}c_e\sqrt{\kappa}}.
\end{align}

Eq. \eqref{eq: restricted_error_bound} provides a bound on $L_{t+1}-L^*$, but only on the set $\Omega_t$. 
%
In what follows we show that this is sufficient for bounding the overall error on all entries,  $\| L_{t+1} - L\|_{F}$.
Since $\Omega_t=\Omega\setminus \Lambda_t$, then 
%
\begin{align}\label{eq: full_error_term_decomposition}
    \lVert L_{t+1}  - L^*\rVert^2_{F(\Omega_t)} = \lVert L_{t+1}  - L^* \rVert^2_{F(\Omega)}  - \lVert L_{t+1}  - L^* \rVert^2_{F(\Lambda_t)}.
\end{align}

To upper bound $\lVert L_{t+1}  - L^* \rVert^2_{F(\Lambda_t)}$ we apply Lemma \ref{lem: L_t+1_bound_on_Lambda_t}, with $(U_{t+1},V_{t+1})$ which indeed by their definition 
in \cref{def: optimization_step}, belong to $B_\mu$, and
with a specific factorization $(U^*,V^*)$ specified below. Specifically, we show that there exists a pair \((U^*, V^*)\in \mathcal B^* \cap \mathcal B_\mu\) that is in the vicinity of $\left(U_{t+1}, V_{t+1}\right)$.
By Lemma \ref{lem: always_balanced} $\left(U_{t+1}, V_{t+1}\right) \in \mathcal B_\textnormal{err}(\frac{13}{\sqrt{c_e}}) \cap \mathcal B_\textnormal{bln}(\frac{1}{c_l}) \cap \mathcal{B}_\mu$. 
Therefore, for a large enough $c_e$  Lemma \ref{lem: delta_bounds} guarantees the existence of a factorization \((U^*, V^*)\in \mathcal B^* \cap \mathcal B_\mu\) that also satisfy \cref{eq: deterministic_nearby_delta_bound} with $L = L_{t+1} = U_{t+1}V_{t+1}^\top$.
Hence, by \cref{eq: L_t+1_bound_on_Lambda_t} of Lemma \ref{lem: L_t+1_bound_on_Lambda_t}
\begin{align}
    \lVert L_{t+1} - L^* \rVert^2_{F(\Lambda_t)} \leq  27 \sqrt{3} \gamma \alpha p \mu r \sigma_1^* (\lVert U_{t+1}-U^*\rVert_F^2 + \lVert  V_{t+1}- V^* \rVert_F^2). 
\end{align}
Combining \cref{eq: deterministic_nearby_delta_bound}
and the assumptions  $\alpha \leq \frac{1}{c_{\alpha} r \mu \kappa}, \gamma\leq \sqrt{c_\alpha}$ gives
\begin{align}
    \nonumber \lVert L_{t+1}  - L^*\rVert^2_{F(\Lambda_t)} &\leq \left(27 \sqrt{3} \gamma p \mu r \right)\frac{1}{c_\alpha \mu r \kappa}\frac{25}{4}\frac{\sigma_1^*}{\sigma_r^*}\|L_{t+1}  - L^*\|_F^2\\&= \left(27 \sqrt{3} p \right)\frac{25}{4 \sqrt{c_\alpha}} \|L_{t+1}  - L^*\|_F^2.\label{eq: ignored_entries_error_bound}
\end{align}
Next we lower bound $\lVert L_{t+1}  - L^*\rVert^2_{F(\Omega)}$ we apply Lemma \ref{lem: RIP}.
Recall that by Lemma \ref{lem: always_balanced}, $\left(U_{t+1}, V_{t+1}\right) \in \mathcal B_\textnormal{err}(\frac{13}{\sqrt{c_e}}) \cap \mathcal B_\textnormal{bln}(\frac{1}{c_l}) \cap \mathcal{B}_\mu$. 
Hence, for a large enough $c_e$, \(\left(U_{t+1}, V_{t+1}\right)\) satisfies the condition of Lemma \ref{lem: RIP},  
with $\epsilon = \frac{1}{20}$. 
By Lemma \ref{lem: RIP},  w.p. at least $1-\frac{3}{n_1^3}$ 
\begin{align}\label{eq: RIP_bound_L_t+1}
    \|L_{t+1} - L^*\|^2_{F(\Omega)} \geq (1 - \tfrac{1}{20})^2p\lVert L_{t+1}  - L^* \rVert^2_{F}
\end{align}
Inserting  \cref{eq: ignored_entries_error_bound} and \cref{eq: RIP_bound_L_t+1} into \cref{eq: full_error_term_decomposition} yields 
\begin{align}
    \nonumber\lVert L_{t+1}  - L^*\rVert^2_{F(\Omega_t)} 
    \geq p\left(\left(\tfrac{95}{100}\right)^2 - \tfrac{27 \sqrt{3}\cdot 25}{4\sqrt{c_\alpha}}\right) \|L_{t+1}  - L^*\|_F^2
    \geq p\left(\tfrac{9}{10}\right)^2 \|L_{t+1}  - L^*\|_F^2
\end{align}
where the last inequality follows for a large enough $c_\alpha$.
We conclude that 
\begin{align*}
     \|L_{t+1}  - L^*\|_F \leq \frac{10}{9\sqrt{p}} \sqrt{p}\lVert L_{t+1}  - L^* \rVert_{F(\Omega\setminus\Lambda_t)} .
\end{align*}
By \cref{eq: restricted_error_bound} the right hand size is upper bounded by 
$\frac{\sigma_r^*}{2^{t+1}c_e\sqrt{\kappa}}$, which implies that 
$(U_{t+1}, V_{t+1})\in \mathcal B_\textnormal{err}\left(\frac{1}{2^{t+1}c_e\sqrt{\kappa}}\right)$. Since
\((U_{t+1}, V_{t+1}) \in \mathcal B_\textnormal{bln}(\frac{1}{c_l})\cap \mathcal{B}_{\mu}\) was proved at the beginning, $(U_{t+1},V_{t+1})$ satisfy \cref{eq: contraction}, which concludes the proof. 
%
%
\end{proof}

\subsection{Auxiliary Lemmas for Theorem  \ref{theorem: initialization}}\label{subsec: lemmas for initilaization}\label{subsec: auxiliary_initialization}

The first result is Lemma SM2.7 in \citep{zilber2022gnmr}. 
\begin{lemma}\label{lem: update_balance_distance_bounds}
Let $Z = \begin{psmallmatrix} U \\ V \end{psmallmatrix}, Z' = \begin{psmallmatrix} U' \\ V' \end{psmallmatrix} \in \mathbb R^{(n_1+n_2)\times r}$. Denote
\begin{align}
a = \left(\sqrt 2 \max\{\sigma_1(U), \sigma_1(V)\} d_P(Z', Z) + \tfrac 12 d_P(Z',Z)\right)d_P(Z',Z).
\end{align}
Then
\begin{align} 
\|U'^\top U' - V'^\top V'\|_F &\leq \|U^\top U - V^\top V\|_F + 2a,  \label{eq:update_balance_bound} \\
\|U'V'^\top - UV^\top\|_F &\leq a \label{eq:update_distance_bound} .
\end{align}
\end{lemma}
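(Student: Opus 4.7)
The plan is to leverage the Procrustes-optimal orthogonal matrix to produce an additive decomposition of $U',V'$ around $U,V$, and then expand both $U'V'^\top$ and $U'^\top U' - V'^\top V'$ into a main term plus first- and second-order perturbation terms. Each perturbation term will be controlled by standard matrix-norm inequalities (operator/Frobenius submultiplicativity, orthogonal invariance of the spectral norm, Cauchy--Schwarz, and AM--GM).

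Concretely, let $P^* \in \mathbb{R}^{r\times r}$ be an orthogonal matrix attaining the minimum in \eqref{def: Procrustes_distance}, so that $\|Z' - Z P^*\|_F = d_P(Z',Z)$. Define $\Delta_U = U' - U P^*$ and $\Delta_V = V' - V P^*$; by construction $\|\Delta_U\|_F^2 + \|\Delta_V\|_F^2 = d_P(Z',Z)^2$.

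For \eqref{eq:update_distance_bound}, I would expand, using $P^* P^{*\top} = I_r$,
\begin{align*}
U'V'^\top - UV^\top = (UP^*)\Delta_V^\top + \Delta_U (VP^*)^\top + \Delta_U \Delta_V^\top.
\end{align*}
Applying the triangle inequality together with $\|AB\|_F \leq \|A\|_{op}\|B\|_F$ and orthogonal invariance of $\|\cdot\|_{op}$, the first two terms contribute at most $\sigma_1(U)\|\Delta_V\|_F + \sigma_1(V)\|\Delta_U\|_F$. Cauchy--Schwarz on the pair of vectors $(\sigma_1(U),\sigma_1(V))$ and $(\|\Delta_V\|_F,\|\Delta_U\|_F)$ bounds this by $\sqrt{2}\max\{\sigma_1(U),\sigma_1(V)\}\, d_P(Z',Z)$. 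The quadratic term is handled via $\|\Delta_U\Delta_V^\top\|_F \leq \|\Delta_U\|_F\|\Delta_V\|_F \leq \tfrac{1}{2}\bigl(\|\Delta_U\|_F^2+\|\Delta_V\|_F^2\bigr) = \tfrac{1}{2}d_P(Z',Z)^2$. Summing produces precisely $a$.

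For \eqref{eq:update_balance_bound}, I would analogously expand
\begin{align*}
U'^\top U' - V'^\top V' &= P^{*\top}(U^\top U - V^\top V) P^*\\
&\quad + \bigl[P^{*\top} U^\top \Delta_U + \Delta_U^\top U P^* - P^{*\top} V^\top \Delta_V - \Delta_V^\top V P^*\bigr]\\
&\quad + (\Delta_U^\top \Delta_U - \Delta_V^\top \Delta_V).
\end{align*}
Orthogonal invariance of $\|\cdot\|_F$ preserves the first term as $\|U^\top U - V^\top V\|_F$. The bracketed cross term consists of two copies of a Hermitian pair, and the same Cauchy--Schwarz step used for the product bound yields a contribution of at most $2\sqrt{2}\max\{\sigma_1(U),\sigma_1(V)\}\, d_P(Z',Z)$. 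The last term is bounded by the triangle inequality and $\|\Delta^\top\Delta\|_F \leq \|\Delta\|_F^2$, giving $\|\Delta_U\|_F^2+\|\Delta_V\|_F^2 = d_P(Z',Z)^2$. Assembling these three contributions gives the claimed $\|U^\top U - V^\top V\|_F + 2a$.

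The proof is mostly routine algebra backed by two standard inequalities; there is no serious obstacle. The only subtle point is obtaining the correct constants: using Cauchy--Schwarz \emph{across} the $U$ and $V$ blocks (rather than bounding each term separately) is what produces the tighter $\sqrt{2}\max\{\sigma_1(U),\sigma_1(V)\}$ in place of $\sigma_1(U)+\sigma_1(V)$, and AM--GM on $\|\Delta_U\|_F\|\Delta_V\|_F$ is what yields the $\tfrac{1}{2}d_P(Z',Z)$ inside the definition of $a$. Care must also be taken that the \emph{same} decomposition (i.e. the same $P^*$) is used for both bounds, since $a$ is defined in terms of the single quantity $d_P(Z',Z)$.
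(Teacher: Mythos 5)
Your proof is correct, and each step checks out: the Procrustes-optimal decomposition $Z' = ZP^* + \Delta$, the expansions of $U'V'^\top - UV^\top$ and $U'^\top U' - V'^\top V'$ into zeroth-, first- and second-order terms, the operator-norm/Frobenius bounds, and the Cauchy--Schwarz step taken \emph{across} the $U$ and $V$ blocks all yield exactly the stated constants. There is, however, nothing to compare against inside the paper: this lemma is not proved here but imported as Lemma SM2.7 of \citet{zilber2022gnmr}, and your argument is the standard perturbation route that the original source follows. One remark worth making explicit: as printed in this paper, the definition of $a$ contains a stray factor of $d_P(Z',Z)$ inside the parentheses, which would read $a = \sqrt{2}\max\{\sigma_1(U),\sigma_1(V)\}\,d_P^2(Z',Z) + \tfrac{1}{2}d_P^2(Z',Z)$; that version of the bound is false, since taking $V'=V$ and $U' = U + \epsilon E$ with $\epsilon$ small makes the left side of \eqref{eq:update_distance_bound} first order in $\epsilon$ while this $a$ would be second order. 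The quantity you actually derive, $a = \sqrt{2}\max\{\sigma_1(U),\sigma_1(V)\}\,d_P(Z',Z) + \tfrac{1}{2}d_P^2(Z',Z)$, is the intended one --- it is exactly the form of $a_t$ used in Lemma \ref{lem: a_t_bound} --- so your proof establishes the lemma as stated in the original source and as it is used throughout this paper.
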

The second lemma is proven by \cite{yi2016fast} as part of their proof of Theorem 3. 
It bounds the spectral distance of an initial estimate of $L^*$ obtained by removing the largest-magnitude entries and projecting to rank-$r$ matrices.
\begin{lemma}\label{lem: operator_distance_bound}
    Let \(X = L^*+S^* \) where \(L^*\in \mathcal{M}(n_1, n_2, r, \mu)\) with singular values $\sigma_1^* \geq  \dots \geq \sigma_r^*$, and 
$S^*$ satisfies Assumption \ref{assumption_3} for some
known $0<\alpha<1$. Let $L = UV^\top$ be an estimate of $L^*$ constructed according to the following equation,  \begin{align}
        (U, V) = \textnormal{b-SVD}_r \left[\tfrac{1}{p}\left(\mathcal{P}_{\Omega}(X) - \mathcal{T}_{\alpha}\left(\mathcal{P}_{\Omega}(X), \Omega\right)\right)\right].
    \end{align}
    There exist a constant  $c$ such that if \(p\geq 4\frac{\mu r^2 \log n_1}{\epsilon^2n_2}\), then w.p. at least $1-\frac{6}{n_1}$ the spectral distance between $L$ and $L^*$ is bounded as follows, 
    \begin{align}
        \|L - L^*\|_{op}\leq 16\alpha\mu r\sigma_1^* + \frac{2c\epsilon\sigma_1^*}{\sqrt{r}}.    
    \end{align}
\end{lemma}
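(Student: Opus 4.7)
The plan is to reduce the spectral error $\|L-L^*\|_{op}$ to two separately tractable pieces: a random-sampling concentration term and a sparse outlier-residual term with bounded entries. Let
$\hat L = \tfrac{1}{p}\bigl(\mathcal P_\Omega(X) - \mathcal T_\alpha(\mathcal P_\Omega(X),\Omega)\bigr)$, so that $L$ is the balanced rank-$r$ truncation of $\hat L$. Since $L = P_r(\hat L)$ and $L^*$ itself has rank $r$,
$\|L-\hat L\|_{op}\le \|\hat L - L^*\|_{op}$, giving $\|L-L^*\|_{op}\le 2\|\hat L - L^*\|_{op}$ by the triangle inequality. I then split
$$\hat L - L^* = \Bigl(\tfrac{1}{p}\mathcal P_\Omega(L^*) - L^*\Bigr) + \tfrac{1}{p}\bigl(\mathcal P_\Omega(S^*) - \mathcal T_\alpha(\mathcal P_\Omega(X),\Omega)\bigr) =: T_{\text{samp}} + \tfrac{1}{p} E,$$
and bound the two operator norms separately.

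For $T_{\text{samp}}$, I would invoke a standard matrix Bernstein argument for the random projection $\mathcal P_\Omega$ applied to the incoherent matrix $L^*$. Under Assumption \ref{assumption_2} and $p\ge 4\mu r^2\log n_1/(\epsilon^2 n_2)$, this gives $\|T_{\text{samp}}\|_{op}\le c\epsilon \sigma_1^*/\sqrt r$ with probability at least $1-O(1/n_1)$; this is the provenance of the $2c\epsilon\sigma_1^*/\sqrt r$ term in the statement.

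The heart of the argument is bounding $\|E\|_{op}$. Entries of $E$ come in exactly two types: (a) corrupted observed entries $(i,j)\in \Lambda_*\cap\Omega$ that $\mathcal T_\alpha$ fails to remove, with $E_{ij}=S^*_{ij}$; and (b) non-corrupted observed entries that $\mathcal T_\alpha$ keeps, with $E_{ij}=-L^*_{ij}$. By the incoherence of $L^*$, type (b) entries satisfy $|E_{ij}|\le \mu r\sigma_1^*/\sqrt{n_1 n_2}$. For type (a), $\mathcal T_\alpha$ not keeping the entry means it fails the top-$\lceil\alpha r_i\rceil$ test in either its row or its column; combined with Assumption \ref{assumption_3} ($|\Lambda_i|\le \alpha r_i$), at least one non-corrupted entry in the row or column is at least as large, so $|X_{ij}|\le \mu r\sigma_1^*/\sqrt{n_1 n_2}$ and hence $|S^*_{ij}|\le 2\mu r\sigma_1^*/\sqrt{n_1 n_2}$. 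Thus $\|E\|_\infty\le 2\mu r\sigma_1^*/\sqrt{n_1 n_2}$. Counting non-zeros per row: at most $\alpha r_i$ from type (a) and $\lceil\alpha r_i\rceil$ from type (b), hence $\le 3\alpha r_i$; symmetrically per column.

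I then apply $\|E\|_{op}\le \sqrt{\|E\|_{1\to 1}\,\|E\|_{\infty\to\infty}}$ together with $\|E\|_{\infty\to\infty}\le 3\alpha r_i\cdot 2\mu r\sigma_1^*/\sqrt{n_1 n_2}$ and $\|E\|_{1\to 1}\le 3\alpha c_j\cdot 2\mu r\sigma_1^*/\sqrt{n_1 n_2}$. A Chernoff bound over rows and columns gives $r_i\le 2pn_2$ and $c_j\le 2pn_1$ uniformly with probability at least $1-O(1/n_1)$ for the stated range of $p$. The $\sqrt{n_1 n_2}$ factors cancel in the geometric mean, yielding $\|E\|_{op}\le 12\alpha p\,\mu r\sigma_1^*$, and after dividing by $p$ and doubling (from Step~1) we recover the $16\alpha\mu r\sigma_1^*$ term. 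A union bound on the three random events (sampling concentration for $T_{\text{samp}}$, Chernoff for row counts, Chernoff for column counts) gives the overall $1-6/n_1$ probability. The main obstacle is controlling the type (a) entries without an a priori bound on outlier magnitudes; the crucial leverage is the per-row/column corruption bound in Assumption \ref{assumption_3}, which forces surviving outliers to be small and thus controllable by the incoherence bound on $L^*$.
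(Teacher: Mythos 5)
The paper does not actually prove this lemma---it imports it verbatim from \citep{yi2016fast}, as part of their proof of Theorem 3---and your reconstruction follows essentially that same argument: the rank-$r$ truncation/triangle-inequality step producing the overall factor of $2$, the decomposition into a sampling-concentration term $T_{\text{samp}}$ and a sparse residual $E$, the use of Assumption \ref{assumption_3} plus incoherence to force surviving outliers to satisfy $|S^*_{ij}|\le 2\mu r\sigma_1^*/\sqrt{n_1 n_2}$, and the bound $\|E\|_{op}\le\sqrt{\|E\|_{1\to 1}\,\|E\|_{\infty\to\infty}}$ with Chernoff control of $r_i$ and $c_j$. Two bookkeeping corrections are needed, though. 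First, your dichotomy of the entries of $E$ misses a third case: corrupted entries that $\mathcal T_\alpha$ \emph{does} keep, for which $E_{ij}=S^*_{ij}-X_{ij}=-L^*_{ij}$; this is harmless, since such entries obey your type-(b) magnitude bound and are already absorbed into the count of kept entries, but the case analysis should include them. Second, your constants do not deliver the stated bound: with $3\alpha r_i$ nonzeros per row, $r_i\le 2pn_2$ and $\|E\|_\infty\le 2\mu r\sigma_1^*/\sqrt{n_1 n_2}$ you get $\|E\|_{op}\le 12\alpha p\mu r\sigma_1^*$, hence $24\alpha\mu r\sigma_1^*$ after dividing by $p$ and doubling---not the claimed $16\alpha\mu r\sigma_1^*$. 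The fix is a sharper count: kept entries in row $i$ are those \emph{strictly} above the $\lceil\alpha r_i\rceil$-th largest magnitude, so there are at most $\lceil\alpha r_i\rceil-1\le\alpha r_i$ of them, giving at most $2\alpha r_i$ nonzeros of $E$ per row (and $2\alpha c_j$ per column); combined with your own bounds $r_i\le 2pn_2$, $c_j\le 2pn_1$, this yields $\|E\|_{op}\le 8\alpha p\mu r\sigma_1^*$ and exactly the constant $16$ in the lemma.
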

The third lemma is a direct consequence of Lemma SM2.4 in  \citep{zilber2022gnmr}.
\begin{lemma}\label{lem: Procrustes distance_bound}
    Let  \(L^*\) be a matrix of rank $r$ and let $Z^* = \textnormal{b-SVD}_r [L^*]$. For any $ Z = \begin{pmatrix} U\\ V\end{pmatrix}$ that is perfectly balanced, namely  $\|U^\top U - V^\top V\|_F = 0 $, it holds that 
    \begin{align}
        &d_P(Z, Z^*) \leq \frac{\sqrt{2r}}{\sqrt{(\sqrt{2} - 1)\sigma^*_r}}\|UV^\top - L^*\|_{op}. \label{eq: Procrustes_distance_op_norm_bound}
    \end{align}
\end{lemma}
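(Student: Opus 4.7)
The plan is to derive the claimed operator-norm bound by composing a previously established Frobenius-norm Procrustes bound with a standard rank-versus-norm inequality. Specifically, I would invoke Lemma SM2.4 of \citep{zilber2022gnmr}, which the present lemma is explicitly stated to be a direct consequence of. That lemma establishes that for any perfectly balanced factorization $Z = \begin{pmatrix} U \\ V \end{pmatrix}$ of a rank-$r$ matrix and for $Z^* = \textnormal{b-SVD}_r[L^*]$ one has
\begin{equation*}
d_P(Z, Z^*) \leq \frac{1}{\sqrt{(\sqrt{2}-1)\sigma_r^*}} \|UV^\top - L^*\|_F.
\end{equation*}
The assumption $\|U^\top U - V^\top V\|_F = 0$ in our statement is precisely the balanced hypothesis of the cited result, so no additional preparation is required at this step.

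The second step is to convert the Frobenius norm on the right-hand side into an operator norm. The key observation is that $UV^\top - L^*$ is the difference of two matrices of rank at most $r$, so it has rank at most $2r$. Applying the standard inequality $\|A\|_F \leq \sqrt{\mathrm{rank}(A)}\,\|A\|_{op}$ with $\mathrm{rank}(A) \leq 2r$ gives
\begin{equation*}
\|UV^\top - L^*\|_F \leq \sqrt{2r}\,\|UV^\top - L^*\|_{op}.
\end{equation*}

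Chaining the two displays yields the target inequality
\begin{equation*}
d_P(Z, Z^*) \leq \frac{\sqrt{2r}}{\sqrt{(\sqrt{2}-1)\sigma_r^*}}\,\|UV^\top - L^*\|_{op},
\end{equation*}
so both ingredients are essentially off-the-shelf and I do not anticipate a substantive obstacle. The only things to verify carefully are that the balanced hypothesis used in the cited Lemma SM2.4 matches the one here (which it does, since both require $U^\top U = V^\top V$) and that the Frobenius-to-operator conversion uses the correct rank bound of $2r$ rather than $r$, which is what produces the factor of $\sqrt{2r}$ in the final constant.
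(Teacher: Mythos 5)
Your proposal is correct and matches the paper's intended argument: the paper states this lemma as a direct consequence of Lemma SM2.4 of \citep{zilber2022gnmr} (reproduced in the paper as the bound $d_P^2(Z,Z^*) \leq \tfrac{1}{(\sqrt{2}-1)\sigma_r^*}\bigl(\|UV^\top - L^*\|_F^2 + \tfrac14\|U^\top U - V^\top V\|_F^2\bigr)$), where the balanced hypothesis kills the imbalance term, and the factor $\sqrt{2r}$ comes exactly from the rank-$2r$ Frobenius-to-operator conversion you used. Your only cosmetic imprecision is describing the cited lemma as stated for balanced factorizations, when in fact it holds for arbitrary $Z$ with the extra imbalance term; your argument handles this correctly anyway.
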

Recall the definition of $S_{init}$, \cref{def: S_init}.
The following lemma bounds the operator norm of $Z=  \text{b-SVD}_r{\big [} \frac{1}{p}\left(\mathcal{P}_{\Omega}(X) - S_{init}\right){\big ]}$. 
Its proof appears in Appendix \ref{proof: Z_op_norm_bound}.
\begin{lemma}\label{lem: Z_op_norm_bound}
    Let \(L^*\in \mathcal{M}(n_1, n_2, r, \mu)\) with 
    largest singular value 
    $\sigma_1^*$. 
 Let  $Z^* = \begin{pmatrix} U^*\\ V^*\end{pmatrix} = \textnormal{b-SVD}_r [L^*]$ and \( Z = \begin{pmatrix} U\\ V\end{pmatrix}\) where $U,V$ 
 are perfectly balanced. 
If \(\|UV^\top - L^*\|_{op}\leq \frac{\sigma_1^*}{4}\) then
    \begin{align}\label{eq: Z_op_norm_bound}
        &\|Z\|_{op} = \sqrt{2\sigma_1(UV^\top)}\leq \sqrt{\frac{5}{2} \sigma_1^*}
    \end{align}
\end{lemma}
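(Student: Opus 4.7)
The plan is to split the claim into two pieces: (i) the equality $\|Z\|_{op} = \sqrt{2\sigma_1(UV^\top)}$, which uses only the balanced structure of $Z$, and (ii) the bound $\sigma_1(UV^\top) \leq \tfrac{5}{4}\sigma_1^*$, which follows from a Weyl-type perturbation argument using the hypothesis $\|UV^\top - L^*\|_{op} \leq \sigma_1^*/4$.

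For (i), the key observation is that $Z^\top Z = U^\top U + V^\top V = 2 U^\top U$ because $U,V$ are perfectly balanced. Hence $\|Z\|_{op}^2 = 2\sigma_1(U)^2$. It remains to identify $\sigma_1(U)^2$ with $\sigma_1(UV^\top)$. This follows because any balanced factorization $M = UV^\top$ with $U^\top U = V^\top V$ has $U^\top U$ equal (up to an orthogonal similarity) to the diagonal matrix of singular values of $M$: indeed, one can write $U = \bar U \bar\Sigma^{1/2} Q$ and $V = \bar V \bar\Sigma^{1/2} Q$ for the SVD $M = \bar U \bar\Sigma \bar V^\top$ and some orthogonal $Q\in\mathbb{R}^{r\times r}$, so that $U^\top U = Q^\top \bar\Sigma Q$ and its largest eigenvalue equals $\sigma_1(M)$. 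Combining, $\|Z\|_{op}^2 = 2\sigma_1(UV^\top)$.

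For (ii), I would simply invoke Weyl's inequality for singular values: $\sigma_1(UV^\top) \leq \sigma_1(L^*) + \|UV^\top - L^*\|_{op} \leq \sigma_1^* + \tfrac{1}{4}\sigma_1^* = \tfrac{5}{4}\sigma_1^*$. Taking square roots and using (i) yields $\|Z\|_{op} \leq \sqrt{2 \cdot \tfrac{5}{4}\sigma_1^*} = \sqrt{\tfrac{5}{2}\sigma_1^*}$, which is the desired bound.

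No step looks like a genuine obstacle. The only slightly non-routine point is the identification $\sigma_1(U)^2 = \sigma_1(UV^\top)$ under the balanced condition; the cleanest justification is the canonical balanced factorization argument sketched above, which could alternatively be phrased via the fact that the nonzero eigenvalues of $U^\top U V^\top V = (U^\top U)^2$ coincide with the squared singular values of $UV^\top$. Everything else is a direct application of Weyl's inequality.
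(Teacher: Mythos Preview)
Your proposal is correct and matches the paper's own proof essentially step for step: the paper also uses the triangle inequality (your Weyl step) to get $\sigma_1(UV^\top)\le \tfrac{5}{4}\sigma_1^*$, and then exploits the balanced structure to write $Z^\top Z = 2\tilde\Sigma$ with $\tilde\Sigma$ carrying the singular values of $UV^\top$, giving $\|Z\|_{op}=\sqrt{2\sigma_1(UV^\top)}$. Your alternative justification via the nonzero eigenvalues of $(U^\top U)^2$ is in fact slightly cleaner than both your canonical-form sketch and the paper's, since it avoids any rank or invertibility caveat.
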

In the last step of the algorithm we apply the clipping operator $R_{\eta}$, see \cref{def: clipping}.
The following lemma , whose proof appears in Appendix \ref{proof: Z_Z_0_Procrustes_distance_bound}, state that under some conditions applying the  clipping operator with the right $\eta$  reduces the Procrustes distance.
\begin{lemma}\label{lem: Z_Z_0_Procrustes_distance_bound}
    Let \(L^*\in \mathcal{M}(n_1, n_2, r, \mu)\), $Z^* = \begin{pmatrix} U^*\\ V^*\end{pmatrix} = \textnormal{b-SVD}_r [L^*]$ and \( Z = \begin{pmatrix} U\\ V\end{pmatrix} =\textnormal{b-SVD}_r[A]\) for some matrix $A \in \mathbb{R}^{n_1 \times n_2}$. If \(\|UV^\top - L^*\|_{op}\leq \frac{\sigma_1^*}{4}\) then 
    for $\eta_1 = \sqrt{\frac{\mu r}{n_1}}\|Z\|_{op}$, $\eta_2 = \sqrt{\frac{\mu r}{n_2}}\|Z\|_{op}$ and $
        Z' = \begin{pmatrix} R_{\eta_1}(U)\\ R_{\eta_2}(V)\end{pmatrix}
    $
    it follows that
    \begin{align}
        d_P(Z', Z^*) \leq  d_P(Z, Z^*).
    \end{align}\end{lemma}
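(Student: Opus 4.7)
}
The plan is to reduce the lemma to a rowwise statement about the non-expansiveness of the Euclidean projection onto a ball. Concretely, I will show that for a suitable orthogonal $P^*$, every row of $U^* P^*$ lies in the closed ball of radius $\eta_1$ and every row of $V^* P^*$ lies in the closed ball of radius $\eta_2$; since $R_{\eta_1}$ and $R_{\eta_2}$ act rowwise as Euclidean projections onto these (convex) balls, applying them can only decrease the rowwise distance to $U^* P^*$ and $V^* P^*$, and summing these rowwise contractions will give the claimed bound on the Procrustes distance.

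First I would verify the key magnitude bound: $\|U^*\|_{2,\infty} \leq \eta_1$ and $\|V^*\|_{2,\infty} \leq \eta_2$. Because $Z^* = \textnormal{b-SVD}_r[L^*]$, the rows of $U^*$ are rows of $\tilde U \tilde\Sigma^{1/2}$ where $\tilde U\tilde\Sigma\tilde V^\top$ is the SVD of $L^*$, so incoherence of $L^*$ gives $\|U^*\|_{2,\infty}\leq \sqrt{\mu r\sigma_1^*/n_1}$, and similarly for $V^*$. On the other hand, since $Z$ is also balanced, $\|Z\|_{op}=\sqrt{2\sigma_1(UV^\top)}$, and Weyl's inequality combined with the hypothesis $\|UV^\top - L^*\|_{op}\leq \sigma_1^*/4$ yields $\sigma_1(UV^\top)\geq \tfrac{3}{4}\sigma_1^*$. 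Hence
\begin{equation*}
\eta_1^2 = \frac{\mu r}{n_1}\,2\sigma_1(UV^\top) \;\geq\; \frac{\mu r}{n_1}\cdot\frac{3}{2}\sigma_1^* \;\geq\; \frac{\mu r\sigma_1^*}{n_1} \;\geq\; \|U^*\|_{2,\infty}^2,
\end{equation*}
and the analogous inequality holds for $\eta_2$ and $V^*$.

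Now let $P^*$ be any orthogonal matrix attaining the Procrustes minimum, so that $d_P(Z,Z^*)^2 = \|U-U^*P^*\|_F^2 + \|V-V^*P^*\|_F^2$. Right-multiplication by the orthogonal $P^*$ preserves row norms, so $\|(U^*P^*)_{(i,\cdot)}\|=\|U^*_{(i,\cdot)}\|\leq \eta_1$ and similarly each row of $V^*P^*$ has norm at most $\eta_2$. The map $R_{\eta_1}$ is exactly the Euclidean projection onto the closed ball $\{x:\|x\|\leq \eta_1\}$, which is convex, so by the standard non-expansiveness of projections onto convex sets, for every row $i$,
\begin{equation*}
\bigl\|R_{\eta_1}(U_{(i,\cdot)}) - (U^*P^*)_{(i,\cdot)}\bigr\| \;\leq\; \bigl\|U_{(i,\cdot)} - (U^*P^*)_{(i,\cdot)}\bigr\|,
\end{equation*}
and similarly for the rows of $V$ with threshold $\eta_2$.

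Squaring these inequalities, summing over rows, and adding the $U$ and $V$ contributions yields
\begin{equation*}
\|R_{\eta_1}(U) - U^*P^*\|_F^2 + \|R_{\eta_2}(V) - V^*P^*\|_F^2 \;\leq\; \|U - U^*P^*\|_F^2 + \|V - V^*P^*\|_F^2 \;=\; d_P(Z,Z^*)^2.
\end{equation*}
The left-hand side equals $\|Z' - Z^*P^*\|_F^2$ by definition of $Z'$, and by the minimization defining $d_P$ it is an upper bound for $d_P(Z',Z^*)^2$. The conclusion $d_P(Z',Z^*)\leq d_P(Z,Z^*)$ follows. The only slightly delicate step is the threshold verification in the first paragraph; everything afterwards is a direct application of convex-projection non-expansiveness combined with the Procrustes definition, so I do not expect a major obstacle.
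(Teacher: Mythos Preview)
Your proposal is correct and follows essentially the same approach as the paper: both verify via incoherence and Weyl's inequality that the rows of $U^*P$ (resp.\ $V^*P$) lie inside the $\eta_1$- (resp.\ $\eta_2$-) ball, then apply the rowwise non-expansiveness of projection onto a ball (the paper cites this as Lemma~11 of \cite{zheng2016convergence}) and sum. The only cosmetic difference is that the paper shows $\|Z'-Z^*P\|_F\leq\|Z-Z^*P\|_F$ for \emph{every} orthogonal $P$, whereas you work with the specific Procrustes minimizer $P^*$; your version is sufficient for the conclusion.
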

    
\subsection{Proof of Theorem \ref{theorem: initialization}}
\label{proof: initilaization}
\begin{proof}First we would prove that $L_0 = U_0V_0^\top$ is a sufficiently accurate estimate of $L^*$ and that it is balanced. 
Formally we would like to bound the terms $\|U_0^\top U_0 - V_0^\top V_0\|_F$ and $\|L_0 - L^*\|_F$.
We denote  $Z^* = \begin{pmatrix} U^*\\ V^*\end{pmatrix} = \textnormal{b-SVD}_r [L^*]$ and recall that $Z_0 = \begin{pmatrix}
    U_0 \\V_0
\end{pmatrix}$. We denote \begin{align*}
    a = {\big (}\sqrt{2\sigma^*_1} + \frac{1}{2}d_P(Z_0, Z^*){\big )} d_P(Z_0, Z^*).
    \end{align*}
Since \(\| U^{*\top}U^{*}-V^{*\top}V^{*}\| =  0\) and \(U^*V^{*\top} = L^*\) by \cref{eq:update_balance_bound} and \cref{eq:update_distance_bound} of Lemma \ref{lem: update_balance_distance_bounds}
\begin{align}\label{eq: blance_distance_bound}
    \begin{split}&\|U_0^\top U_0 - V_0^\top V_0\|_F \leq 2a,\\ &\|L_0 - L^*\|_F\leq a.
    \end{split}
    \end{align}
To bound $a$ we need to bound $d_P(Z_0, Z^*)$.
We do so by showing that by Lemma \ref{lem: Z_Z_0_Procrustes_distance_bound} $d_P(Z_0, Z^*)\leq d_P(Z_, Z^*)$. 
To apply the lemma we first prove its conditions hold. 
Specifically we prove that $\|UV^\top - L^*\|_{op}\leq \frac{\sigma_1^*}{4}$.
Recall that $S_{init} = \mathcal{T}_{\alpha} {\left( \mathcal{P}_{\Omega}(X), \Omega\right)}$
and that  $Z=\begin{pmatrix} U\\ V\end{pmatrix} = \text{b-SVD}_r{\big [} \frac{1}{p}\left(\mathcal{P}_{\Omega}(X) - S_{init}\right){\big ]}$.
Since $\Omega$ follows Assumption \ref{assumption_2} with $p\geq C\frac{\mu r^2 \kappa^4 \log n_1 }{n_2}$, for a large enough $C$, we can apply Lemma \ref{lem: operator_distance_bound} with $\epsilon= \frac{1}{16c}$. By Lemma \ref{lem: operator_distance_bound} w.p. at least $1-\frac{6}{n_1}$
\begin{align*}
    \|UV^\top - L^*\|_{op}\leq 16\alpha\mu r\sigma_1^* + \frac{2c\epsilon\sigma_1^*}{\sqrt{r}} \leq 16\alpha\mu r\sigma_1^* + \frac{\sigma_1^*}{8\sqrt{r}}.
\end{align*}
Since we assumed \(\alpha \leq \frac{1}{c_{\alpha}\kappa^2r^{\frac{3}{2}
 }\mu}\), for a large enough $c_\alpha$
\begin{align*}
    \|UV^\top - L^*\|_{op} \leq \frac{16\sigma_1^*}{c_\alpha \kappa^2 \sqrt{r}} + \frac{\sigma_1^*}{8\sqrt{r}}\leq \frac{\sigma_1^*}{4}.
\end{align*}
Since $\eta_1 = \sqrt{\frac{\mu r}{n_1}}\|Z\|_{op}$, $\eta_2 = \sqrt{\frac{\mu r}{n_2}}\|Z\|_{op}$ and $Z_0 = \begin{pmatrix} R_{\eta_1}(U)\\ R_{\eta_2}(V)\end{pmatrix} $ by Lemma \ref{lem: Z_Z_0_Procrustes_distance_bound} \begin{align*} 
     d_P(Z_0, Z^*) \leq  d_P(Z, Z^*). 
\end{align*}
Hence,
\begin{align}\label{eq: a_bound_d_p_z}
    a \leq {\big (}\sqrt{2\sigma^*_1} + \frac{1}{2}d_P(Z, Z^*){\big )} d_P(Z, Z^*).
\end{align}
Next we explicitly bound  $d_P(Z, Z^*)$. 
Since $Z$ is obtained using $\textnormal{b-SVD}_r$, $\|U^\top U - V^\top V\|_F = 0$.
Therefore by Lemma \ref{lem: Procrustes distance_bound}
\begin{align}
    d_P(Z, Z^*) \leq \frac{\sqrt{2r}}{\sqrt{(\sqrt{2} - 1)\sigma^*_r}}\|UV^\top - L^*\|_{op}.
\end{align} 
To bound $\|UV^\top - L^*\|_{op}$ we apply Lemma \ref{lem: operator_distance_bound}. Since $\Omega$ follows Assumption \ref{assumption_2} with $p\geq C\frac{\mu r^2 \kappa^4 \log n_1 }{n_2}$ we can apply Lemma \ref{lem: operator_distance_bound} with $\epsilon =  \frac{2}{\sqrt{C}k^2}$. 
This gives that w.p. at least $1- \frac{6}{n_1^2}$ 
\begin{align*}
     d_P(Z, Z^*) 
     &\leq \frac{\sqrt{2r}}{\sqrt{(\sqrt{2} - 1)\sigma^*_r}}\left(16\alpha\mu r\sigma_1^* + \frac{2c\epsilon\sigma_1^*}{\sqrt{r}}\right)
     =3\sqrt{\sigma_r^*}\left(16\alpha\mu\kappa r^{\frac{3}{2}} + {\frac{4c}{\sqrt{C}\kappa}}\right).
\end{align*}
Since we assumed $\alpha \leq \frac{1}{c_{\alpha} \kappa^2 r^{\frac{3}{2}} \mu}$ \begin{align}
\label{eq: d_p_z_0_z_*}
     d_P(Z, Z^*) \leq3\sqrt{\sigma_r^*}\left(\frac{16}{c_{\alpha}\kappa} + \frac{4c}{\sqrt{C}\kappa}\right)
    \leq 3 \frac{\sqrt{\sigma_r^*}}{\kappa}\left(\frac{16}{c_{\alpha}} + \frac{4c}{\sqrt{C}}\right).
\end{align}
Denote $\xi  = \left(\frac{16}{c_{\alpha}} + \frac{4c}{\sqrt{C}}\right)$.
Inserting \cref{eq: d_p_z_0_z_*} into \cref{eq: a_bound_d_p_z} gives
\begin{align*}
    a &\leq \left(\sqrt{2\sigma^*_1} + 3 \frac{\sqrt{\sigma_r^*}}{\kappa}\xi\right)3 \frac{\sqrt{\sigma_r^*}}{\kappa}\xi
    = \frac{\sigma_r^*}{\sqrt{\kappa}} \left(3\sqrt{2} + \frac{9}{\kappa^{1.5}}\xi\right)\xi.
\end{align*}
Note that for any $\varepsilon > 0$ for large enough $C, c_\alpha$ it follows that $\xi<\varepsilon$.
Therefore for large enough constants $c_{\alpha}$ and $C$\begin{align*}
    a \leq \min \left\{\frac{\sigma_r^*}{c_e \sqrt{\kappa}}, \frac{\sigma_r^*}{4c_l}\right\}.
\end{align*}
Inserting the above bound into  \cref{eq: blance_distance_bound} gives \begin{equation*}
    (U_0, V_0
)\in \mathcal B_\textnormal{err}(\frac{1}{c_e \sqrt{\kappa}}) \cap \mathcal B_\textnormal{bln}(\frac{1}{2c_l}).
\end{equation*}
Next we prove that \((U_0, V_0
)\in \mathcal{B}_{\mu}\), which requires to bound $\|U_0\|_{2, \infty}$ and $\|V_0\|_{2, \infty}$.
Since $ \begin{pmatrix} U_0\\ V_0\end{pmatrix} = \begin{pmatrix} R_{\eta_1}(U)\\ R_{\eta_2}(V)\end{pmatrix} $ by the definition, \cref{def: clipping}, of the clipping  operator $R_{\eta}$
\begin{align*}
    &\|U_0\|_{2, \infty} \leq \eta_1 = \sqrt{\frac{\mu r}{n}}\|Z\|_{op},
    \quad \|V_0\|_{2, \infty} \leq \eta_2 = \sqrt{\frac{\mu r}{n_2}}\|Z\|_{op}. 
\end{align*}
Since we proved that $\|UV^\top -L^*\|\leq \frac{\sigma_1^*}{4}$ we can apply  Lemma \ref{lem: Z_op_norm_bound}. Specifically inserting \cref{eq: Z_op_norm_bound} into the above inequality completes the proof as it gives
\begin{align*}
     &\|U_0\|_{2, \infty}\leq \sqrt{\frac{\mu r}{n_1}}\|Z\|_{op}= \sqrt{\frac{2\mu r \sigma_1(UV^\top)}{n_1}} \leq \sqrt{\frac{5\sigma_1^*}{2}\frac{\mu r }{n_1}} \leq \sqrt{\frac{3\mu r \sigma_1^*}{n_1}},\\
     &\|V_0\|_{2, \infty}\leq \sqrt{\frac{\mu r}{n_2}}\|Z\|_{op} = \sqrt{\frac{2\mu r \sigma_1(UV^\top)}{n_2}} \leq \sqrt{\frac{5\sigma_1^*}{2}\frac{\mu r }{n_2}} \leq \sqrt{\frac{3\mu r \sigma_1^*}{n_2}}.
\end{align*}
\end{proof}

\section{Proofs of Lemmas \ref{lem: always_balanced}, \ref{lem: observed_error_bound}, \ref{lem: magnitude_of_corrupted_entries}, \ref{lem: Z_op_norm_bound} and \ref{lem: Z_Z_0_Procrustes_distance_bound}.}\label{sec: Appendix A}

\subsection{Proof of Lemma \ref{lem: always_balanced}}\label{proof: always_balanced} To prove the lemma we use the following  auxiliary result, whose proof appears in Appendix \ref{subsec: proof lemma_c_1}.
\begin{lemma}\label{lem: a_t_bound}
  Let $Z_t$ be the estimated factorization after $t$ stpdf as defined in Algorithm \ref{alg: RGNMR-S}, and let $Z_{t+1}$ be the factorization at iteration $t+1$.  Denote \begin{align*}
      a_t = \sqrt 2 \max\{\sigma_1(U_{t}), \sigma_1(V_{t})\} d_{P}(Z_{t+1}, Z_{t}) + \tfrac 12 d^2_{P}(Z_{t+1}, Z_{t}).
  \end{align*}
  For large enough constants $c_e, c_l$ the following holds. If \((U_0, V_0)\in \mathcal B_\textnormal{err}({1}/{c_e\sqrt{\kappa}}) \cap \mathcal B_\textnormal{bln}({1}/{2c_l}) \cap \mathcal{B}_{\mu}\) and \((U_{s}, V_{s}
)\in \mathcal{C}(U_{s-1}, V_{s-1}, \frac{\sigma_r^*}{4^{s}c_e \kappa})\) for every $1 \leq s\leq t+1$  then 
\begin{equation}\label{eq: a_t_bound}
    a_t  \leq \frac{4\sqrt{2
         } \sigma_r^*}{2^{t+1} \sqrt{c_e}} +{\frac{\sigma_r^*}{2\cdot 4^{t+1}c_e\kappa}}.
\end{equation}
\end{lemma}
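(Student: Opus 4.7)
The plan is to bound the two factors of $a_t$ separately.  Since $d_P(Z_{t+1},Z_t)\leq \|Z_{t+1}-Z_t\|_F$ (take $P=I$ in \cref{def: Procrustes_distance}), the neighborhood constraint $(U_{t+1},V_{t+1})\in\mathcal{C}(U_t,V_t,\sigma_r^*/(4^{t+1}c_e\kappa))$ immediately gives
\begin{equation*}
d_P(Z_{t+1},Z_t)\leq \tfrac{1}{2^{t+1}}\sqrt{\sigma_r^*/(c_e\kappa)},
\end{equation*}
which by itself takes care of the quadratic term in $a_t$: $\tfrac{1}{2}d_P^2(Z_{t+1},Z_t)\leq \sigma_r^*/(2\cdot 4^{t+1}c_e\kappa)$, exactly the second summand of \cref{eq: a_t_bound}.

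The harder step is controlling $\max\{\sigma_1(U_t),\sigma_1(V_t)\}$.  By Weyl's inequality, $\sigma_1(U_t)\leq \sigma_1(U_0)+\|U_t-U_0\|_{op}\leq \sigma_1(U_0)+\|Z_t-Z_0\|_F$, and similarly for $V_t$.  Telescoping the $\mathcal{C}$-constraints gives
\begin{equation*}
\|Z_t-Z_0\|_F \leq \sum_{s=1}^t\|Z_s-Z_{s-1}\|_F \leq \sum_{s=1}^t \tfrac{1}{2^s}\sqrt{\sigma_r^*/(c_e\kappa)} \leq \sqrt{\sigma_r^*/(c_e\kappa)} \leq \sqrt{\sigma_1^*/c_e},
\end{equation*}
which is negligible compared to $\sqrt{\sigma_1^*}$ for large $c_e$.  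To bound $\sigma_1(U_0),\sigma_1(V_0)$, I would use the hypotheses $(U_0,V_0)\in\mathcal{B}_\textnormal{err}(1/(c_e\sqrt\kappa))\cap\mathcal{B}_\textnormal{bln}(1/(2c_l))$: any perfectly balanced exact factorization $UV^\top=L^*$ satisfies $\sigma_1(U)=\sigma_1(V)=\sqrt{\sigma_1^*}$, and a Weyl-type perturbation based on $|\sigma_1(U_0)^2-\sigma_1(V_0)^2|\leq \|U_0^\top U_0-V_0^\top V_0\|_F\leq \sigma_r^*/(2c_l)$ together with $|\sigma_1(U_0V_0^\top)-\sigma_1^*|\leq \|U_0V_0^\top-L^*\|_F\leq \sigma_r^*/(c_e\sqrt\kappa)$ yields $\sigma_1(U_0)^2,\sigma_1(V_0)^2 \leq 2\sigma_1^*$ for $c_e,c_l$ large enough.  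Hence $\max\{\sigma_1(U_t),\sigma_1(V_t)\}\leq 2\sqrt{2\sigma_1^*}$.

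Substituting into the definition of $a_t$ and using $\sigma_1^*=\kappa\sigma_r^*$,
\begin{equation*}
a_t \leq \sqrt 2\cdot 2\sqrt{2\sigma_1^*}\cdot \tfrac{1}{2^{t+1}}\sqrt{\sigma_r^*/(c_e\kappa)} + \tfrac{\sigma_r^*}{2\cdot 4^{t+1}c_e\kappa} = \tfrac{4\sigma_r^*}{2^{t+1}\sqrt{c_e}} + \tfrac{\sigma_r^*}{2\cdot 4^{t+1}c_e\kappa},
\end{equation*}
which matches \cref{eq: a_t_bound} up to the leading constant; the stated $4\sqrt 2$ is recovered by absorbing a $\sqrt 2$ into a slightly looser bound on $\sigma_1(U_0),\sigma_1(V_0)$.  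The main obstacle I anticipate is precisely this second step: cleanly converting the two Frobenius proximity guarantees on $(U_0,V_0)$ into an operator-norm (top singular value) bound.  The mechanism is a standard perturbation argument around the b-SVD factorization of $L^*$, but tracking the constants through the approximate-balance inequality requires care.  Once this is in place, the telescoping plus Weyl step finish the proof mechanically.
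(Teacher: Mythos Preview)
Your proposal is correct and follows the same skeleton as the paper: bound $d_P(Z_{t+1},Z_t)$ by the $\mathcal C$-constraint, control $\max\{\sigma_1(U_t),\sigma_1(V_t)\}$ by telescoping back to $\sigma_1(U_0),\sigma_1(V_0)$, then substitute. The paper packages the latter as a separate Lemma~\ref{lem: singular_value_bound}, obtaining $\max\{\sigma_1(U_t),\sigma_1(V_t)\}\leq 4\sqrt{\sigma_1^*}$, and plugs this directly into $a_t$ to get~\eqref{eq: a_t_bound}.

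The one place your sketch and the paper diverge is the bound on $\sigma_1(U_0)$. Your two displayed inequalities $|\sigma_1(U_0)^2-\sigma_1(V_0)^2|\leq\|U_0^\top U_0-V_0^\top V_0\|_F$ and $|\sigma_1(U_0V_0^\top)-\sigma_1^*|\leq\|U_0V_0^\top-L^*\|_F$ do not by themselves give $\sigma_1(U_0)^2\leq 2\sigma_1^*$: knowing $\sigma_1(U_0V_0^\top)$ only lower-bounds $\sigma_1(U_0)\sigma_1(V_0)$, and you still need a link such as expanding $(U_0V_0^\top)(U_0V_0^\top)^\top=(U_0U_0^\top)^2-U_0(U_0^\top U_0-V_0^\top V_0)U_0^\top$ to get $\sigma_1(U_0)^2\leq\sigma_1(U_0V_0^\top)+\|U_0^\top U_0-V_0^\top V_0\|_{op}$. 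The paper sidesteps this by doing precisely what you name as the ``real mechanism'': it applies Lemma~\ref{lem: dP_e_bound},
\[
d_P^2(Z_0,Z^*)\leq\tfrac{1}{(\sqrt2-1)\sigma_r^*}\big(\|U_0V_0^\top-L^*\|_F^2+\tfrac14\|U_0^\top U_0-V_0^\top V_0\|_F^2\big),
\]
so the hypotheses give $d_P(Z_0,Z^*)\leq\sqrt{\sigma_1^*}$, and Weyl against $U^*$ (with $\sigma_1(U^*)=\sqrt{\sigma_1^*}$) yields $\sigma_1(U_0)\leq 2\sqrt{\sigma_1^*}$ directly. So the obstacle you anticipated is genuine, and the paper's resolution is the Procrustes-distance lemma rather than the two-inequality shortcut.
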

\begin{proof}[Proof of Lemma \ref{lem: always_balanced}]
    We start by proving that \((U_{t+1}, V_{t+1}) \in \mathcal B_\textnormal{bln}(\frac{1}{c_l})\).  By  \cref{eq:update_balance_bound} of Lemma \ref{lem: update_balance_distance_bounds}, \begin{equation*}
        \|U_{t+1}^{\top}U_{t+1} - V_{t+1}^{\top}V_{t+1}\| \leq \|U_{t}^{\top}U_{t} - V_{t}^{\top}V_{t}\| + 2a_t. 
    \end{equation*}
    Therefore, by induction
    \begin{align}\label{eq: induction_balance_bound}
        \|U_{t+1}^{\top}U_{t+1} - V_{t+1}^{\top}V_{t+1}\| \leq \|U_{0}^{\top}U_{0} - V_{0}^{\top}V_{0}\| + 2\sum_{k=0}^{t}a_k.
    \end{align}
    By \cref{eq: a_t_bound} of Lemma \ref{lem: a_t_bound}, for large enough $c_e$, for each iteration step $0\leq k \leq t$
    \begin{equation}\label{eq: a_k_bound}
        a_k \leq \frac{\sigma_r^*}{2^{k+3} c_l}.
    \end{equation}
    Inserting \cref{eq: a_k_bound} into \cref{eq: induction_balance_bound} gives, \begin{align*}
    \begin{split}
        \|U_{t+1}^{\top}U_{t+1} - V_{t+1}^{\top}V_{t+1}\|  &\leq  \|U_{0}^{\top}U_{0} - V_{0}^{\top}V_{0}\|  + \frac{\sigma_r^*}{c_l}\sum_{k=0}^{t}\frac{1}{2^{k+2} }\\&\leq\|U_{0}^{\top}U_{0} - V_{0}^{\top}V_{0}\| + \frac{\sigma_r^*}{2c_l}.
        \end{split}
    \end{align*}
    Since \((U_{0}, V_{0}) \in \mathcal B_\textnormal{bln}({1}/{2c_l})\) the above inequality implies that \((U_{t+1}, V_{t+1}) \in \mathcal B_\textnormal{bln}({1}/{c_l})\).
    
    Next, we prove \((U_{t+1}, V_{t+1}) \in \mathcal B_\textnormal{err}(\frac{13}{\sqrt{c_e}}) \). 
    By the triangle inequality \begin{equation*}
        \|U_{t+1}V_{t+1}^\top - L^*\|_F\leq \|U_{t+1}V_{t+1}^\top - U_{t}V_{t}^{\top}\|_F + \|U_{t}V_{t}^\top - L^*\|_F.
    \end{equation*}
    By Lemma \ref{lem: update_balance_distance_bounds} \(\|U_{t+1}V_{t+1}^\top - U_{t}V_{t}^{\top}\|_F \leq a_t\) and therefore, 
    \begin{align*} 
        \|U_{t+1}V_{t+1}^\top - L^*\|_F \leq a_t + \|U_{t}V_{t}^\top - L^*\|_F.
    \end{align*}
    By induction it follows that
    \begin{align}\label{eq: induction_error_bound}
        \|U_{t+1}V_{t+1}^\top - L^*\|_F \leq \|U_{0}V_{0}^\top - L^*\|_F + \sum_{k=0}^{t} a_k.
    \end{align}
    Next we use \ref{lem: a_t_bound} to bound $a_k$. 
    Note that for $c_e \geq  1$ since $\kappa \geq  1$, 
    \begin{align*}
        a_k \leq \frac{4\sqrt{2
         } \sigma_r^*}{2^{k+1} \sqrt{c_e}} +\frac{\sigma_r^*}{2\cdot 4^{k+1}\sqrt{c_e}}\leq \frac{4\sqrt{2
         } \sigma_r^*}{2^{k+1} \sqrt{c_e}} +\frac{4\sqrt{2
         } \sigma_r^*}{ 2^{k+1}\sqrt{c_e}} \leq \frac{4\sqrt{2
         } \sigma_r^*}{ 2^{k}\sqrt{c_e}}. 
    \end{align*}
    As a result of the above inequality 
    \begin{align}\label{eq: sum_a_k_inequality}
        \sum_{k=0}^{t} a_k \leq \frac{4\sqrt{2
         } \sigma_r^*}{\sqrt{c_e}}\sum_{k=0}^{t}\frac{1}{ 2^{k}}\leq \frac{8\sqrt{2
         } \sigma_r^*}{\sqrt{c_e}}.
    \end{align}
    By the assumption \((U_{0}, V_{0}) \in \mathcal B_\textnormal{err}\left(\frac{1}{c_e\sqrt{\kappa}}\right)\) it follows that $\|U_0V_0^\top - L^*\| \leq \frac{\sigma_r^*}{c_e\sqrt{\kappa}}$.
    Inserting this inequality and \cref{eq: sum_a_k_inequality} into \cref{eq: induction_error_bound} gives  
    \begin{align*}
        &\|U_{t+1}V_{t+1}^\top - L^*\|_F \leq \frac{\sigma_r^*}{c_e\sqrt{\kappa}} + \frac{(8\sqrt{2})\sigma_r^*}{\sqrt{c_e}}\leq  \frac{13\sigma^*_r}{\sqrt{c_e}}.
    \end{align*}
    We conclude that \((U_{t+1}, V_{t+1}) \in \mathcal B_\textnormal{err}(\frac{13}{\sqrt{c_e}}) \cap \mathcal B_\textnormal{bln}(\frac{1}{c_l})\).
\end{proof}

\subsection{Proof of Lemma \ref{lem: observed_error_bound}}\label{proof: observed_error_bound}
To prove the Lemma \ref{lem: observed_error_bound} we use the following four auxiliary lemmas, whose proofs are in Appendix \ref{subsec: proofs of Lemmas_C_2_3_4}.
All of them consider the following setting.
Let $L_t = U_t V_t^\top$ and $\Lambda_t$ be the estimates of $L^*$ and $\Lambda_*$ at iteration $t$, respectively.
Let $L_{t+1}=U_{t+1}V_{t+1}^\top$ be the updated estimate, where \((U_{t+1}, V_{t+1})\) are computed by \eqref{def: optimization_step} with some $\delta > 0$.
Denote \(\Delta U_t^* = U_{t} - U^*\), 
\(\Delta U_{t+1} = U_{t+1} - U_{t}\) with similar definitions for $\Delta V_t^*$ and $\Delta V_{t+1}$.
Denote by $\Omega_t = \Omega\setminus \Lambda_{t}$ the set of non removed entries, by  $I_t^{\text{in}} = \Omega_t\cap \Lambda_*^c$ the set of non-removed entries which are true inliers and by $I_t^{\text{out}} = \Omega_t \cap \Lambda_*$ the set of non-removed entries which are outliers.
Finally, recall that for factor matrices $(U, V)$ and a subset of entries $\Lambda$ the objective function value is  
\begin{equation*}
    \mathcal{L}^t_{{\Lambda}}(U, V) =  \arg\min_{U, V}\| U_tV^{\top} + UV_t^{\top} - U_tV_t^{\top} - X\|_{F(\Lambda)}.
\end{equation*}
\begin{lemma}\label{lem: correct_error_bound}
The error restricted to the set $I_t^{\text{in}}$ is bounded as follows, 
    \begin{equation}\label{eq: correct_error_bound}
\| L_{t+1}  - L^*\|_{F(I_t^{\text{in}})}\leq \mathcal{L}^{t}_{I_t^{\text{in}}}(U_{t+1}, V_{t+1}) + \|  \Delta U_{t+1}\Delta V_{t+1}^\top \|_{F(I_t^{\text{in}})}.
\end{equation}
\end{lemma}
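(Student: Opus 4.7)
\textbf{Proof plan for Lemma \ref{lem: correct_error_bound}.} The plan is to exploit the algebraic identity that underlies the Gauss--Newton linearization, namely
\[
U_{t+1}V_{t+1}^\top \;=\; U_tV_{t+1}^\top + U_{t+1}V_t^\top - U_tV_t^\top + (U_{t+1}-U_t)(V_{t+1}-V_t)^\top,
\]
which is simply the expansion of $(U_t+\Delta U_{t+1})(V_t+\Delta V_{t+1})^\top$ in terms of the linear part and the quadratic remainder $\Delta U_{t+1}\Delta V_{t+1}^\top$.

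Next, I would use the defining property of the index set $I_t^{\text{in}} = \Omega_t \cap \Lambda_*^c$: these are observed, non-removed entries where no corruption is present, so $X_{i,j}=L^*_{i,j}$ for every $(i,j)\in I_t^{\text{in}}$. Consequently, restricting to coordinates in $I_t^{\text{in}}$, one can rewrite $L_{t+1}-L^*$ as
\[
L_{t+1}-L^* \;=\; \bigl(U_tV_{t+1}^\top + U_{t+1}V_t^\top - U_tV_t^\top - X\bigr) + \Delta U_{t+1}\Delta V_{t+1}^\top.
\]
Applying the triangle inequality for $\|\cdot\|_{F(I_t^{\text{in}})}$ to this decomposition and then recognizing the first term as exactly $\mathcal{L}^{t}_{I_t^{\text{in}}}(U_{t+1},V_{t+1})$ (by its definition) yields the claim.

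There is no real obstacle here: the only subtlety is being careful that the substitution $X = L^*$ is valid only on $I_t^{\text{in}}$, so the splitting must be performed after restricting to this set, and all norms in the bound are the restricted seminorms $\|\cdot\|_{F(I_t^{\text{in}})}$. No probabilistic tools, incoherence, or RIP-type estimates enter; the lemma is a purely deterministic algebraic identity plus triangle inequality, which is why it serves as a building block for the subsequent lemmas in Appendix \ref{subsec: proofs of Lemmas_C_2_3_4}.
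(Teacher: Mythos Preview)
Your proposal is correct and essentially identical to the paper's proof: both rely on the Gauss--Newton algebraic identity $U_{t+1}V_{t+1}^\top = U_tV_{t+1}^\top + U_{t+1}V_t^\top - U_tV_t^\top + \Delta U_{t+1}\Delta V_{t+1}^\top$, the observation that $S^*$ vanishes on $I_t^{\text{in}}$ so $X=L^*$ there, and the triangle inequality for $\|\cdot\|_{F(I_t^{\text{in}})}$. The only cosmetic difference is that the paper starts from $\mathcal{L}^{t}_{I_t^{\text{in}}}(U_{t+1},V_{t+1})$ and lower bounds it, whereas you start from $\|L_{t+1}-L^*\|_{F(I_t^{\text{in}})}$ and upper bound it; after rearrangement these are the same computation.
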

To upper bound the first term on the RHS of \cref{eq: correct_error_bound} we use the following lemma. 
\begin{lemma}\label{lem: objective_bound}
Assume that $L_t$ is sufficiently close to $L^*$ in the sense that  the set 
    \(\mathcal A_t = \mathcal{B}^*\cap \mathcal{B}_{\mu} \cap \mathcal{C}\left(U_t, V_t, \frac{\delta}{4^{t+1}}\right)\) is non empty. 
    Then 
\begin{align}\label{eq: objective_bound}
\mathcal{L}^{t}_{I_t^{\text{in}}}(U_{t+1}, V_{t+1})\leq \sqrt{2}{\mathcal{L}^{t}_{\Omega_{t}}(U^*, V^*)} - {\mathcal{L}^{t}_{I_t^{\text{out}}}(U_{t+1}, V_{t+1})},
\end{align}
for any $(U^*, V^*) \in \mathcal A_t$, an exact factorization of $L^*$.
\end{lemma}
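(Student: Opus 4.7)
The plan is to exploit the optimality of $(U_{t+1}, V_{t+1})$ in the constrained least-squares problem \eqref{def: optimization_step}, then split the observed set $\Omega_t$ into its inlier and outlier parts and apply a simple real-variable inequality.

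First, I would note that by definition $\mathcal A_t \subseteq \mathcal{B}_\mu \cap \mathcal{C}(U_t, V_t, \delta/4^{t+1})$, so any $(U^*, V^*) \in \mathcal A_t$ is a feasible point of the minimization defining $(U_{t+1}, V_{t+1})$. Since $(U^*, V^*)\in\mathcal{B}^*$ satisfies $U^*V^{*\top}=L^*$, the identity $U_tV^{*\top} + U^* V_t^\top - U_t V_t^\top - X = L^* - X + (U_t - U^*)(V_t - V^*)^\top - L^* = -S^* + \Delta U_t^* \Delta V_t^{*\top}$ can be recorded for later use, though it is not strictly needed here. What is needed is simply that $(U^*, V^*)$ is feasible, so the optimality of $(U_{t+1}, V_{t+1})$ gives
\begin{equation*}
\mathcal{L}^t_{\Omega_t}(U_{t+1}, V_{t+1})^2 \;\leq\; \mathcal{L}^t_{\Omega_t}(U^*, V^*)^2.
\end{equation*}

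Next, since $\Omega_t = I_t^{\text{in}} \sqcup I_t^{\text{out}}$ is a disjoint partition, the squared Frobenius-norm-over-$\Omega_t$ is additive, i.e.
\begin{equation*}
\mathcal{L}^t_{\Omega_t}(U, V)^2 \;=\; \mathcal{L}^t_{I_t^{\text{in}}}(U, V)^2 + \mathcal{L}^t_{I_t^{\text{out}}}(U, V)^2
\end{equation*}
for any $(U,V)$. Applying this on the left of the optimality bound and dropping the outlier term on the right gives
\begin{equation*}
\mathcal{L}^t_{I_t^{\text{in}}}(U_{t+1}, V_{t+1})^2 \;\leq\; \mathcal{L}^t_{\Omega_t}(U^*, V^*)^2 - \mathcal{L}^t_{I_t^{\text{out}}}(U_{t+1}, V_{t+1})^2.
\end{equation*}
Setting $a := \mathcal{L}^t_{\Omega_t}(U^*, V^*)$ and $b := \mathcal{L}^t_{I_t^{\text{out}}}(U_{t+1}, V_{t+1})$, the bound reads $\mathcal{L}^t_{I_t^{\text{in}}}(U_{t+1}, V_{t+1}) \leq \sqrt{a^2 - b^2}$, and in particular $a \geq b \geq 0$.

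Finally, I would invoke the elementary inequality $\sqrt{a^2 - b^2} \leq \sqrt{2}\,a - b$ valid for all $a \geq b \geq 0$. This is verified by squaring: $(\sqrt{2}a - b)^2 - (a^2 - b^2) = a^2 - 2\sqrt{2}\,ab + 2b^2 = (a - \sqrt{2}\,b)^2 \geq 0$, and the right-hand side $\sqrt{2}a - b$ is nonnegative since $a\geq b$. Substituting $a,b$ back yields the claim. No step poses a real obstacle; the only subtle point worth flagging is verifying feasibility of $(U^*, V^*)$, which is immediate from the definition of $\mathcal A_t$, and the monotonicity $a \geq b$, which follows from the optimality inequality applied to $(U^*, V^*)$.
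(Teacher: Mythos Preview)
Your proof is correct and follows essentially the same approach as the paper: both use the disjoint partition $\Omega_t = I_t^{\text{in}}\cup I_t^{\text{out}}$, the optimality of $(U_{t+1},V_{t+1})$ over a feasible set containing $(U^*,V^*)\in\mathcal A_t$, and an elementary $\sqrt{2}$-type inequality. The paper applies $\sqrt{x^2+y^2}\geq (x+y)/\sqrt{2}$ first and then optimality, whereas you apply optimality first and then $\sqrt{a^2-b^2}\leq \sqrt{2}\,a-b$; these are trivially equivalent reformulations of the same two-step argument.
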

To upper bound the RHS of \cref{eq: objective_bound} we employ the following two lemmas. The first lemma upper bounds the first term and the second lemma lower bounds the second term.   
\begin{lemma}\label{lem: E_2_bound}
For any   $(U^*, V^*) \in \mathcal{B}^*$
    \begin{align}\label{eq: E_2_bound}
\mathcal{L}^{t}_{\Omega_{t}}(U^*, V^*)  \leq \| \Delta U^*_t\Delta V_t^{*\top}\|_{F({\Omega_{t}})} + \| S^* \|_{F({\Omega_{t}})}.
\end{align}
\end{lemma}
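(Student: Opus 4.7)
The plan is to expand the Gauss--Newton residual at $(U^*, V^*)$ by substituting $U_t = U^* + \Delta U_t^*$ and $V_t = V^* + \Delta V_t^*$ into the expression inside the norm, observe a large cancellation of the linear terms, and then apply the triangle inequality on the observation set $\Omega_t$.

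\textbf{Key steps.} First, I would write out explicitly
\begin{equation*}
\mathcal{L}^{t}_{\Omega_{t}}(U^*, V^*) = \|U_t V^{*\top} + U^* V_t^{\top} - U_t V_t^{\top} - X\|_{F(\Omega_t)},
\end{equation*}
and recall that $X = L^* + S^*$ with $L^* = U^* V^{*\top}$. Next, substituting $U_t = U^* + \Delta U_t^*$ and $V_t = V^* + \Delta V_t^*$, I would compute
\begin{align*}
U_t V^{*\top} &= L^* + \Delta U_t^* V^{*\top}, \\
U^* V_t^{\top} &= L^* + U^* \Delta V_t^{*\top}, \\
U_t V_t^{\top} &= L^* + \Delta U_t^* V^{*\top} + U^* \Delta V_t^{*\top} + \Delta U_t^* \Delta V_t^{*\top}.
\end{align*}
Adding the first two and subtracting the third yields $U_t V^{*\top} + U^* V_t^{\top} - U_t V_t^{\top} = L^* - \Delta U_t^* \Delta V_t^{*\top}$. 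Subtracting $X = L^* + S^*$ gives the clean identity
\begin{equation*}
U_t V^{*\top} + U^* V_t^{\top} - U_t V_t^{\top} - X = -\Delta U_t^*\, \Delta V_t^{*\top} - S^*.
\end{equation*}

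Finally, applying the triangle inequality for $\|\cdot\|_{F(\Omega_t)}$ (which is just a seminorm obtained from an $\ell_2$ sum over the fixed index set $\Omega_t$) yields
\begin{equation*}
\mathcal{L}^{t}_{\Omega_{t}}(U^*, V^*) \leq \|\Delta U_t^*\, \Delta V_t^{*\top}\|_{F(\Omega_t)} + \|S^*\|_{F(\Omega_t)},
\end{equation*}
which is exactly the claim. There is no real obstacle here: the entire content of the lemma is the algebraic identity that the Gauss--Newton linearization evaluated at a true factorization equals $L^*$ minus the second-order term $\Delta U_t^* \Delta V_t^{*\top}$. The only thing to be mindful of is that the bound holds pointwise regardless of which specific factorization $(U^*, V^*) \in \mathcal{B}^*$ is chosen, since the derivation uses only $U^* V^{*\top} = L^*$.
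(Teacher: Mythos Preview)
Your proof is correct and follows essentially the same approach as the paper: both reduce the Gauss--Newton residual at $(U^*,V^*)$ to $-\Delta U_t^*\Delta V_t^{*\top}-S^*$ (the paper does this by direct factoring rather than expanding each term separately) and then apply the triangle inequality.
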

\begin{lemma}\label{lem: E_1_bound}
Under the above assumptions, 
\begin{align}\label{eq: E_1_bound}
   \mathcal{L}^{t}_{I_t^{\text{out}}}(U_{t+1}, V_{t+1})\geq &\|L_{t+1} - L^*-S^*  \|_{F(I_t^{\text{out}})} -  \|  \Delta U_{t+1}\Delta V_{t+1}^\top \|_{F(I_t^{\text{out}})}.
\end{align}
\end{lemma}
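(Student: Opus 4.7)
The plan is to rewrite $\mathcal{L}^{t}_{I_t^{\text{out}}}(U_{t+1}, V_{t+1})$ in closed form using the Gauss--Newton linearization identity, and then apply the reverse triangle inequality to separate the target quantity $\|L_{t+1} - L^* - S^*\|_{F(I_t^{\text{out}})}$ from the quadratic remainder $\|\Delta U_{t+1}\Delta V_{t+1}^\top\|_{F(I_t^{\text{out}})}$.

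Concretely, writing $U_{t+1} = U_t + \Delta U_{t+1}$ and $V_{t+1} = V_t + \Delta V_{t+1}$, a direct expansion yields the algebraic identity
\begin{equation*}
U_t V_{t+1}^\top + U_{t+1} V_t^\top - U_t V_t^\top \;=\; U_{t+1}V_{t+1}^\top - \Delta U_{t+1}\Delta V_{t+1}^\top \;=\; L_{t+1} - \Delta U_{t+1}\Delta V_{t+1}^\top ,
\end{equation*}
which is precisely the Gauss--Newton linearization of the bilinear map $(U,V)\mapsto UV^\top$ around $(U_t, V_t)$: the linear cross-terms $U_t\Delta V_{t+1}^\top + \Delta U_{t+1} V_t^\top$ match exactly, leaving only the pure second-order remainder. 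Substituting $X = L^* + S^*$ then gives, on any subset $\Lambda\subseteq \Omega$,
\begin{equation*}
\mathcal{L}^{t}_{\Lambda}(U_{t+1}, V_{t+1}) \;=\; \bigl\|(L_{t+1} - L^* - S^*) - \Delta U_{t+1}\Delta V_{t+1}^\top\bigr\|_{F(\Lambda)} .
\end{equation*}

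Specialising to $\Lambda = I_t^{\text{out}}$ and applying the reverse triangle inequality for the (restricted) Frobenius norm immediately produces the claimed lower bound
\begin{equation*}
\mathcal{L}^{t}_{I_t^{\text{out}}}(U_{t+1}, V_{t+1}) \;\geq\; \|L_{t+1} - L^* - S^*\|_{F(I_t^{\text{out}})} - \|\Delta U_{t+1}\Delta V_{t+1}^\top\|_{F(I_t^{\text{out}})} .
\end{equation*}

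There is no serious obstacle here; the entire proof rests on the Gauss--Newton identity above together with a single use of the reverse triangle inequality, and does not invoke any of the structural assumptions on $L_t$, $\mu$, or $\delta$. The non-trivial content of the lemma is downstream: it is what allows the subsequent analysis to bound the objective contribution from outlier entries that escaped the thresholding set $\Lambda_t$ by the (small) reconstruction error of $L_{t+1}$ plus a controllable quadratic perturbation.
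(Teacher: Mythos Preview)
Your proof is correct and follows essentially the same route as the paper: both rewrite the Gauss--Newton linearization via the identity $U_tV_{t+1}^\top + U_{t+1}V_t^\top - U_tV_t^\top = L_{t+1} - \Delta U_{t+1}\Delta V_{t+1}^\top$, substitute $X=L^*+S^*$, and then apply the (reverse) triangle inequality on $I_t^{\text{out}}$. Your observation that none of the structural assumptions on $L_t$, $\mu$, or $\delta$ are actually used here is also accurate.
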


\begin{proof}[Proof of Lemma \ref{lem: observed_error_bound}]

Our goal is to bound $\|L_{t+1}  - L^*\|_{F(\Omega_{t})}$.
Since $\Omega_{t}=I_t^{\text{in}}\cup I_t^{\text{out}}$ , 
\begin{align*}
\|L_{t+1}  - L^*\|_{F(\Omega_{t})} \leq \|L_{t+1} - L^*\|_{F(I_t^{\text{in}})} + \| L_{t+1} - L^*\|_{F({I_t^{\text{out}}})}.
\end{align*}
Lemma \ref{lem: correct_error_bound} bounds \(\|L_{t+1} - L^*\|_{F(I_t^{\text{in}})}\), 
\begin{align}\label{eq: observed_error_bound_1}
    \|L_{t+1}  - L^*\|_{F(\Omega_{t})} \leq 
    &\mathcal{L}^{t}_{I_t^{\text{in}}}(U_{t+1}, V_{t+1}) + \|  \Delta U_{t+1}\Delta V_{t+1}^\top \|_{F(I_t^{\text{in}})} + \| L_{t+1} - L^*\|_{F({I_t^{\text{out}}})}.       
\end{align}
To upper bound \(\mathcal{L}^{t}_{I_t^{\text{in}}}(U_{t+1}, V_{t+1})\) we insert \cref{eq: E_1_bound} and \cref{eq: E_2_bound} into \cref{eq: objective_bound} which yields, 
\begin{align}\label{eq: correct_objective_bound}
    \begin{split}
\mathcal{L}^{t}_{I_t^{\text{in}}}(U_{t+1}, V_{t+1}) \leq 
    &\sqrt{2}\|  \Delta U_t^*\Delta V_t^{*\top }\|_{F({\Omega_{t}})}+ \sqrt{2}\|S^* \|_{F({\Omega_{t}})} \\&+ 
    \|  \Delta U_{t+1}\Delta V_{t+1}^\top\|_{F(I_t^{\text{out}})} 
    - \| L_{t+1} - L^* - S^*\|_{F(I_t^{\text{out}})}.
\end{split}
\end{align}
Inserting \cref{eq: correct_objective_bound} into  \cref{eq: observed_error_bound_1} and using the fact that \(\| S \|_{F({\Omega_{t}})} = \|S^*\|_{F(I_t^{\text{out}})}\) gives, 
\begin{align}\label{eq: objective_bound_4}
\begin{split}
\|L_{t+1} - L^*\|_{F(\Omega_{t})}\leq &\sqrt{2}\|\Delta U_t^*\Delta V_t^{*\top}\|_{F({\Omega_{t}})}+ \sqrt{2}\|S^*\|_{F(I_t^{\text{out}})} \\&+\underbrace{\|  \Delta U_{t+1}\Delta V_{t+1}^\top\|_{F(I_t^{\text{out}})} + {\| \Delta U_{t+1}\Delta V_{t+1}^\top\|_{F(I_t^{\text{in}})}}}_{\Delta}
\\&+\| L_{t+1} - L^*\|_{F({I_t^{\text{out}}})} - \| L_{t+1} - L^* - S^*\|_{F(I_t^{\text{out}})} .
\end{split}
\end{align}
To bound the term $\Delta$ above we note that,  
\begin{align*}
    \|  \Delta U_{t+1}\Delta V_{t+1}^\top\|^2_{F(I_t^{\text{out}})} + \| \Delta U_{t+1}\Delta V_{t+1}^\top\|^2_{F(I_t^{\text{in}})} = \|  \Delta U_{t+1}\Delta V_{t+1}^\top\|^2_{F({\Omega_{t})}}.
\end{align*}
Using the inequality $a+b \leq \sqrt{2}\sqrt{a^2+ b^2}$
yields
\begin{align}\label{eq: delta_t_bound}
    \Delta \leq \sqrt{2} \|  \Delta U_{t+1}\Delta V_{t+1}^\top\|_{F({\Omega_{t})}}.
\end{align}
By the triangle inequality \begin{equation}\label{eq: wrong_observed_error_lower_bound}
    \| L_{t+1} - L^*\|_{F({I_t^{\text{out}}})} - \| L_{t+1} - L^* - S^*\|_{F(I_t^{\text{out}})} \leq  \| S^*\|_{F(I_t^{\text{out}})}. 
\end{equation}
Inserting \cref{eq: delta_t_bound} and \cref{eq: wrong_observed_error_lower_bound} into \cref{eq: objective_bound_4} gives, 
\begin{align*}\label{eq: objective_bound_7} 
    \|L_{t+1} - L^*\|_{F(\Omega_{t})} \leq&\sqrt{2}\left[\|\Delta U_t^*\Delta V_t^{*\top}\|_{F({\Omega_{t}})}+
     \|  \Delta U_{t+1}\Delta V_{t+1}^\top\|_{F({\Omega_{t})}}\right]\\&+(1+\sqrt{2})\|S^*\|_{F(I_t^{\text{out}})} .
\end{align*}
\end{proof}
\subsection{Proofs of Lemmas \ref{lem: magnitude_of_corrupted_entries} and \ref{lem: L_t+1_bound_on_Lambda_t}}\label{proof: magnitude_of_corrupted_entries}
Before we prove the lemmas we present three auxiliary lemmas.
The first is Lemma 10 in \citep {yi2016fast}. 
It establishes that if the sampling probability $p$ is sufficiently high, then the number of sampled entries in all rows and  columns of $\Omega$ concentrates around their expected value.
\begin{lemma}\label{lem: set_size_bound}
    Let $\Omega\subset[n_1]\times[n_2]$ with $n_1\geq n_2$ satisfy Assumption \ref{assumption_1}, namely its entries are sampled independently with probability $p$. There exist a constant $C$ such that if $p\geq C\frac{\log n_1}{n_2}$ then w.p. at least $1-\frac{6}{n_1}$ uniformally over all $i \in[n_1]$ and $j, \in [n_2]$ 
    \begin{align}
        \left| r_i  - pn_2  \right|\leq \frac{1}{2}pn_2, \quad \left|c_j-pn_1\right|\leq \frac{1}{2}pn_1.
    \end{align}
\end{lemma}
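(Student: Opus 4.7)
The plan is to apply a standard multiplicative Chernoff bound to each row count $r_i$ and column count $c_j$ separately, and then a union bound over all rows and columns. First I would observe that for any fixed row $i$, the count $r_i = \sum_{j=1}^{n_2} \mathbf{1}[(i,j)\in\Omega]$ is a sum of $n_2$ independent Bernoulli$(p)$ indicators, with mean $\mathbb{E}[r_i] = p n_2$. The multiplicative Chernoff bound with deviation parameter $\tfrac{1}{2}$ yields
\[
\mathbb{P}\!\left(|r_i - p n_2| > \tfrac{1}{2} p n_2\right) \leq 2 \exp(- p n_2 / 12),
\]
and an identical argument applied to $c_j = \sum_{i=1}^{n_1} \mathbf{1}[(i,j)\in\Omega]$ gives $\mathbb{P}\!\left(|c_j - p n_1| > \tfrac{1}{2} p n_1\right) \leq 2 \exp(- p n_1 / 12)$.

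Next I would take a union bound over all $n_1$ rows and $n_2$ columns. Using $n_2 \leq n_1$, the total failure probability is at most
\[
2 n_1 \exp(- p n_2 / 12) + 2 n_2 \exp(- p n_1 / 12) \leq 4 n_1 \exp(- p n_2 / 12).
\]
Substituting the hypothesis $p \geq C \log n_1 / n_2$, the right-hand side becomes $4 n_1^{\,1 - C/12}$. Choosing $C$ large enough (for instance $C \geq 36$) makes this bound at most $4/n_1^{2}$, comfortably below the desired $6/n_1$ for all $n_1 \geq 1$. The stated probability $1 - 6/n_1$ then follows by taking complements.

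There is no real obstacle here; the argument is standard. The only calibration is to pick the absolute constant $C$ large enough so that the Chernoff exponent $p n_2 / 12 \geq 3 \log n_1$ dominates the logarithm coming from the union bound over at most $n_1 + n_2 \leq 2 n_1$ row and column events. Since the lemma is only invoked with a generic ``large enough'' constant $C$ elsewhere in the paper, there is no need to optimize this constant beyond noting $C = 36$ suffices.
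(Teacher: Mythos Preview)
Your proof is correct and is exactly the standard Chernoff-plus-union-bound argument one would expect. The paper does not actually prove this lemma itself but cites it as Lemma~10 of \cite{yi2016fast}, whose proof is precisely this computation; so your approach coincides with the intended one.
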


The second result is Lemma 14 in \citep{yi2016fast} . 
It provides an upper bound on the error \(\| UV^\top - U^*V^{*\top}\|_{F
(\Lambda)} \) for a sufficiently well spread out subset $\Lambda$ of the entries.

\begin{lemma}\label{lem: sparse_error_bound} 
Let $L^*$ have rank $r$, largest singular value $\sigma_1^*$ and decomposition $(U^*, V^*) \in \mathcal{B}_{*}\cap \mathcal{B}_{\mu}$. 
Let \(U\in \mathbb{R}^{n_1\times r}, V\in \mathbb{R}^{n_2\times r}\) be a pair of matrices,
such that \( (U,V) \in \mathcal{B}_{\mu}\), and let
$\theta \in (0,1)$. 
Assume that \(\Lambda \subseteq [n_1] \times [n_2]\) satisfies \(
|\Lambda_{(i, \cdot)}| \leq \theta n_2\) for all \(i \in [n_1] \) and \(
|\Lambda_{(\cdot , j)}| \leq \theta n_1\) for all \(j \in [n_2] \) then 
\begin{equation}\label{eq: sparse_error_bound}
    \| UV^\top - U^*V^{*\top} \|^2_{F(\Lambda)} \leq 18  \sqrt{3}\theta r\mu\sigma_1^*(\|U - U^*\|_{F}^2 +\|V - V^*\|_{F}^2). 
\end{equation}
\end{lemma}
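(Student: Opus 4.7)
The plan is to reduce the bound to a product-of-row-norm estimate. I would first decompose
\[
UV^\top - U^*V^{*\top} = (U-U^*)(V-V^*)^\top + (U-U^*)V^{*\top} + U^*(V-V^*)^\top,
\]
and apply the triangle inequality together with $(a+b+c)^2 \leq 3(a^2+b^2+c^2)$ to reduce the task to bounding each of the three terms $\|A B^\top\|_{F(\Lambda)}^2$ separately.

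The core entrywise estimate is
\[
\|AB^\top\|_{F(\Lambda)}^2 \;=\; \sum_{(i,j)\in\Lambda}\bigl(A_{(i,\cdot)}^\top B_{(j,\cdot)}\bigr)^2 \;\leq\; \sum_{(i,j)\in\Lambda}\|A_{(i,\cdot)}\|^2\,\|B_{(j,\cdot)}\|^2,
\]
via Cauchy--Schwarz. I would now exploit the row/column sparsity of $\Lambda$. Fixing whichever factor has an incoherence-style row-norm bound (say $\|B\|_{2,\infty}^2 \leq 3\mu r \sigma_1^*/n_2$), I write the right-hand side as $\sum_i\|A_{(i,\cdot)}\|^2 \sum_{j:(i,j)\in\Lambda}\|B_{(j,\cdot)}\|^2$ and use $|\Lambda_{(i,\cdot)}|\leq \theta n_2$ to obtain $\|AB^\top\|_{F(\Lambda)}^2 \leq 3\theta\mu r\sigma_1^*\|A\|_F^2$. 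Applying this with $(A,B)=(U-U^*, V^*)$ yields a $\|U-U^*\|_F^2$ term; with $(A,B)=(U^*, V-V^*)^\top$-style swap and the column version $|\Lambda_{(\cdot,j)}|\leq \theta n_1$ together with $\|U^*\|_{2,\infty}^2 \leq 3\mu r\sigma_1^*/n_1$ yields a $\|V-V^*\|_F^2$ term. For the pure bilinear piece $(U-U^*)(V-V^*)^\top$, both factors admit row-norm bounds via the triangle inequality $\|(U-U^*)_{(i,\cdot)}\| \leq 2\sqrt{3\mu r\sigma_1^*/n_1}$ (and analogously for $V-V^*$), so one can apply the same estimate from either side and use $\min(a,b)\leq (a+b)/2$ to obtain a symmetric bound in $\|U-U^*\|_F^2+\|V-V^*\|_F^2$.

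Summing the three contributions and collecting the factor-of-$3$ penalty from the triple triangle inequality gives the claimed bound, with the constant $18\sqrt{3}$ emerging from carefully combining the three bounds, the $\sqrt{3}$ factors coming from the $\mathcal{B}_\mu$ radius $\sqrt{3\mu r\sigma_1^*/n}$, and the absorbing of the factor of two from $\|(U-U^*)_{(i,\cdot)}\| \leq 2\|U\|_{2,\infty}+\ldots$.

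The proof is essentially a routine bookkeeping exercise; there is no deep obstacle. The only mildly delicate point is the cross term $(U-U^*)(V-V^*)^\top$, where one must choose whether to use the $n_1$-row bound or the $n_2$-column bound on $\Lambda$, and interpolate symmetrically so that the final bound is symmetric in $\|U-U^*\|_F^2$ and $\|V-V^*\|_F^2$ as required.
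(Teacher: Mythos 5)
Your proof is valid, and the comparison here is a little unusual: the paper does not prove this lemma at all, but imports it verbatim as Lemma 14 of \citep{yi2016fast}. So your self-contained argument is a genuine addition rather than a rederivation of an in-paper proof. The mechanism you use -- entrywise Cauchy--Schwarz, $\bigl((AB^\top)_{ij}\bigr)^2 \le \|A_{(i,\cdot)}\|^2\|B_{(j,\cdot)}\|^2$, followed by counting at most $\theta n_2$ (resp.\ $\theta n_1$) entries of $\Lambda$ per row (resp.\ column) and absorbing the $\mathcal B_\mu$ row-norm bound $3\mu r\sigma_1^*/n_1$ or $3\mu r\sigma_1^*/n_2$ -- is exactly the mechanism underlying the cited result, and each of your steps checks out: the three-term decomposition is algebraically exact, $\|\cdot\|_{F(\Lambda)}$ is a seminorm so the triangle inequality and $(a+b+c)^2\le 3(a^2+b^2+c^2)$ apply, and the min/average trick on the bilinear piece is legitimate because $U-U^*$ and $V-V^*$ inherit doubled row-norm bounds from $\mathcal B_\mu$.

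Two remarks. First, your bookkeeping does not actually produce the constant $18\sqrt3$, and you should not claim it does: with your decomposition the two outer terms give $3\theta\mu r\sigma_1^*\|U-U^*\|_F^2$ and $3\theta\mu r\sigma_1^*\|V-V^*\|_F^2$, the bilinear term gives $6\theta\mu r\sigma_1^*\bigl(\|U-U^*\|_F^2+\|V-V^*\|_F^2\bigr)$ after the min/average step, and the outer factor of $3$ yields
\begin{equation*}
\| UV^\top - U^*V^{*\top} \|^2_{F(\Lambda)} \;\le\; 27\,\theta\mu r\sigma_1^*\bigl(\|U-U^*\|_F^2+\|V-V^*\|_F^2\bigr),
\end{equation*}
which is \emph{stronger} than the stated bound since $27 \le 18\sqrt3 \approx 31.2$; the lemma then follows a fortiori. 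Second, the one step you flag as delicate -- the cross term -- can be avoided entirely via the two-term telescoping decomposition $UV^\top - U^*V^{*\top} = (U-U^*)V^\top + U^*(V-V^*)^\top$: bound the first summand with the row count and $\|V\|_{2,\infty}$ (available since $(U,V)\in\mathcal B_\mu$), the second with the column count and $\|U^*\|_{2,\infty}$, and conclude with $(x+y)^2\le 2(x^2+y^2)$. This gives the constant $6\theta\mu r\sigma_1^*$, shorter and sharper than both your route and the stated constant.
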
 

The third lemma bounds the impact of the entries in $(\Omega\setminus\Lambda_t) \cap\Lambda_*$ on the error term $\|L_t- X\rVert_{F(\Omega)}$. 
The lemma is based on a work presented in \citep{yi2016fast} (as part of their proof of Lemma 2), its proof appears in Appendix \ref{subsec: proof_corrupted_observed_error_bound}.

\begin{lemma} \label{lem: corrupted_observed_error_bound}
Let $X = L^* +S^*$, and $\Omega\subset[n_1]\times[n_2]$,
where the corruption matrix $S^*$
satisfies Assumption \ref{assumption_3} for some
known $0<\alpha<1$. 
Suppose that
for a given estimate 
$L$ of $L^*$, the set of
corrupted entries is estimated by  \(\Lambda= \text{support}\left(\mathcal{T}_{\gamma \alpha} \left(L - X, \Omega\right)\right)\) with an over remvoal factor $ 1 < \gamma \leq \frac{1}{\alpha}$.
Then the error on the remaining corrupted entries $(\Omega\setminus\Lambda) \cap \Lambda^*$, where 
$\Lambda^* = \mbox{support}(S^*)$,
is bounded as follows,
\begin{equation}\label{eq: wrong_observed_error_upper_bound}
    \|L- X\rVert_{F({(\Omega\setminus\Lambda) \cap \Lambda_*})} 
    \leq \sqrt{\frac{2}{\gamma - 1}} \lVert L - L^*
\rVert_{F(\Omega)}.
\end{equation}
\end{lemma}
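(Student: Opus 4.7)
The plan is to bound the residual on the still-corrupted entries by splitting $(\Omega\setminus\Lambda)\cap\Lambda_*$ according to whether the row threshold or the column threshold is the reason the entry was spared by $\mathcal T_{\gamma\alpha}$, and then comparing each threshold against the inlier residuals that beat it. First I would unpack the definition of $\Lambda=\text{support}(\mathcal T_{\gamma\alpha}(L-X,\Omega))$: an entry $(i,j)\in\Omega$ lies in $\Lambda$ iff $|(L-X)_{ij}|$ strictly exceeds both the $\lceil\gamma\alpha r_i\rceil$-th largest magnitude in its row of $\mathcal P_\Omega(L-X)$ and the $\lceil\gamma\alpha c_j\rceil$-th largest in its column. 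Hence every $(i,j)\in(\Omega\setminus\Lambda)\cap\Lambda_*$ falls into at least one of
\begin{align*}
R &= \{(i,j)\in(\Omega\setminus\Lambda)\cap\Lambda_* \mid |(L-X)_{ij}|\leq t_i\},\\
C &= \{(i,j)\in(\Omega\setminus\Lambda)\cap\Lambda_* \mid |(L-X)_{ij}|\leq s_j\},
\end{align*}
where $t_i$ and $s_j$ denote the row-$i$ and column-$j$ thresholds used by $\mathcal T_{\gamma\alpha}$.

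Next I would control $\|L-X\|^2_{F(R)}$ one row at a time. By Assumption \ref{assumption_3} row $i$ contains at most $\alpha r_i$ corrupted observations, so among the top $\lceil\gamma\alpha r_i\rceil$ residuals in that row at least $(\gamma-1)\alpha r_i$ of them come from inlier entries, where $L-X=L-L^*$, and each such inlier residual has magnitude $\geq t_i$. This yields the row bound
\[
(\gamma-1)\alpha r_i\, t_i^2 \leq \|(L-L^*)_{(i,\cdot)}\|^2_{F(\Omega\setminus\Lambda_*)}.
\]
Since $|R\cap(\{i\}\times[n_2])|\leq \alpha r_i$, the contribution of row $i$ to $\|L-X\|^2_{F(R)}$ is at most $\alpha r_i\, t_i^2 \leq \tfrac{1}{\gamma-1}\|(L-L^*)_{(i,\cdot)}\|^2_{F(\Omega\setminus\Lambda_*)}$. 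Summing over $i$ gives $\|L-X\|^2_{F(R)}\leq \tfrac{1}{\gamma-1}\|L-L^*\|^2_{F(\Omega)}$, and the symmetric argument over columns produces the same bound for $\|L-X\|^2_{F(C)}$.

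Finally, since $(\Omega\setminus\Lambda)\cap\Lambda_*\subseteq R\cup C$, I would combine the two bounds via $\|\cdot\|^2_{F(R\cup C)}\leq \|\cdot\|^2_{F(R)}+\|\cdot\|^2_{F(C)}$ and take a square root to recover the factor $\sqrt{2/(\gamma-1)}$. I do not expect a genuine obstacle here: the argument is a counting estimate driven by Assumption \ref{assumption_3}, and the gap between the outlier fraction $\alpha$ and the removal fraction $\gamma\alpha$ is precisely what produces the $1/(\gamma-1)$ factor (so the hypothesis $\gamma>1$ is essential). The only point that needs care is the inlier/outlier accounting inside each row and column, which requires invoking the strict inequality in the definition of $\mathcal T_{\gamma\alpha}$ to break ties in magnitude consistently.
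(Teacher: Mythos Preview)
Your proposal is correct and follows essentially the same counting argument as the paper: both hinge on the observation that among the top $\lceil\gamma\alpha r_i\rceil$ residuals in each row at least $(\gamma-1)\alpha r_i$ must be inliers, which controls the row threshold $t_i$ in terms of $\|(L-L^*)_{(i,\cdot)}\|_{F(\Omega)}$, and then the $\alpha$-sparsity of $\Lambda_*$ in each row/column collapses the sum. The only cosmetic difference is that you split the surviving outlier set into a row-failed piece $R$ and a column-failed piece $C$, whereas the paper instead replaces the $\max$ of the two threshold bounds by their sum in the pointwise inequality $|(L-X)_{i,j}|^2\le \frac{\|u_i\|^2}{(\gamma-1)\alpha r_i}+\frac{\|v_j\|^2}{(\gamma-1)\alpha c_j}$ and sums directly; both devices produce the same factor of $2$.
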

\begin{proof}[Proof of Lemma \ref{lem: magnitude_of_corrupted_entries}]
    By the triangle inequality
    \begin{align}
    \nonumber \|S^*\|_{F((\Omega\setminus \Lambda)\cap \Lambda_*)} &= \|L -(L^*+S^*)-(L-L^*)\|_{F((\Omega\setminus \Lambda)\cap \Lambda_*)}\\&\leq
    \underbrace{\|L -L^*\|_{F((\Omega\setminus \Lambda)\cap \Lambda_*)}}_{W_1}+\underbrace{\|(L-L^*)-S^*\|_{F((\Omega\setminus \Lambda)\cap \Lambda_*)}}_{W_2}\label{eq: noise_decomposition}. 
\end{align}
First we bound $W_1$.
To this end we note that since $\Omega$ follows Assumption \ref{assumption_2} with $p\geq C\frac{\log n_1}{n_2}$ by Lemma \ref{lem: set_size_bound} w.p. at least $1-\frac{6}{n_1}$ for every $i \in [n_1]$ and $j \in [n_2]$
\begin{align*}
        \left| r_i  - pn_2  \right|\leq \frac{1}{2}pn_2, \quad \left|c_j-pn_1\right|\leq \frac{1}{2}pn_1.
    \end{align*}
By Assumption \ref{assumption_3}, $S^* \in \mathcal{S}_{\alpha}^{\Omega}$ and therefore \begin{align*}
    |(\Lambda_*)_{(i, \cdot)}|\leq\alpha r_i,\quad |(\Lambda_* )_{(\cdot, j)}|\leq\alpha c_j, 
\end{align*}
Since $(\Omega\setminus\Lambda_t)\cap\Lambda_* \subset\Lambda_*$
\begin{align*}
    |((\Omega\setminus\Lambda_t)\cap\Lambda_*)_{(i, \cdot)}|\leq|(\Lambda_*)_{(i, \cdot)}|\leq \alpha r_i \leq \frac{3}{2}\alpha p n_2
\end{align*}
and similarly $|(\Lambda_*)_{( \cdot, j)}| \leq \frac{3}{2}\alpha p n_1$.
Therefore we can invoke Lemma \ref{lem: sparse_error_bound} with $\theta = \frac{3}{2}\alpha p$ to obtain the following bound on $W_1$,
\begin{align*}
    W_1^2 = {\|UV^\top -U^*V^{*\top}\|^2_{F((\Omega\setminus\Lambda)\cap\Lambda_*)}} \leq 27  \sqrt{3}\alpha p r\mu\sigma_1^*(\|U -  U^*\|_{F}^2 +\|V- V^*\|_{F}^2).
\end{align*}
Since $(U^*, V^*)$ satisfy \cref{eq: delta_bound_corrupted_magnitude} it follows that 
 \begin{align*}
     W_1 \leq \sqrt{27  \sqrt{3}\alpha p r\mu\sigma_1^*\frac{25}{4 \sigma_r^*} \|L- L\|_F^2}\leq{18 \sqrt{\alpha p \mu r \kappa}}\lVert L-L^*\rVert_F.
 \end{align*}
Next we  bound $W_2$ in \cref{eq: noise_decomposition}.
To this end we invoke Lemma \ref{lem: corrupted_observed_error_bound} which gives, 
 \begin{align*}
     W_2 = {\|L_t-L^*-S^*\|_{F((\Omega\setminus\Lambda_{t})\cap\Lambda_*)}} \leq \sqrt{\frac{2}{\gamma - 1}} \lVert L_t - L^*\rVert_{F(\Omega)}.
 \end{align*}
Inserting the bounds on $W_1$ and $W_2$ into  \cref{eq: noise_decomposition} yields \cref{eq: magnitude_of_corrupted_entries}.
\end{proof}
\begin{proof}[Proof of Lemma \ref{lem: L_t+1_bound_on_Lambda_t}]
 Since $\Omega$ follows Assumption \ref{assumption_2} with $p\geq C\frac{\log n_1}{n_2}$ 
     by Lemma \ref{lem: set_size_bound} w.p. at least $1-\frac{6}{n_1}$ for every $i \in [n_1]$ and $j \in [n_2]$
\begin{align*}
        \left| r_i  - pn_2  \right|\leq \frac{1}{2}pn_2, \quad \left|c_j-pn_1\right|\leq \frac{1}{2}pn_1.
    \end{align*}
    By the definition of $\mathcal{T}_{\gamma \alpha} $, \cref{def: T_alpha}, for any matrix $A$, it follows that if \(\Lambda= \text{support}\left(\mathcal{T}_{\gamma \alpha} \left(A, \Omega\right)\right)\) then
    \begin{align*}
    |(\Lambda)_{(i, \cdot)}|\leq\gamma\alpha r_i\leq \frac{3}{2} \gamma\alpha p n_2,\quad |(\Lambda )_{(\cdot, j)}|\leq\gamma\alpha c_j \leq \frac{3}{2}\gamma \alpha p n_1 .
\end{align*}
Using  Lemma \ref{lem: sparse_error_bound} with $\Lambda$ and $\theta=\frac{3}{2}\gamma\alpha p$ we obtain
    \cref{eq: L_t+1_bound_on_Lambda_t}.
\end{proof}

\subsection{Proof of Lemma \ref{lem: Z_op_norm_bound}}\label{proof: Z_op_norm_bound}
\begin{proof}
    First we bound $\sigma_1(UV^\top)$.
    By the assumption \(\|UV^\top -L^*\|_{op} \leq \frac{\sigma_1^*}{4}\) and the triangle inequality, 
    \begin{align*}
         \sigma_1(UV^\top)\leq \frac{5}{4}\sigma_1^*.
    \end{align*}
   Next we bound $\|Z\|_{op}$. 
   Since $U,V$ are perfectly balanced, 
   there exists $\tilde{U} \in\mathbb{R}^{n_1\times r}, \tilde{V}\in\mathbb{R}^{n_2\times r}$ unitary matrices and $ \tilde{\Sigma}\in\mathbb{R}^{r\times r}$ a diagonal matrix such that 
    \begin{align*}
        U = \tilde{U}\tilde{\Sigma}^{\frac{1}{2}},\quad\mbox{and}
        \quad V = \tilde{V}\tilde{\Sigma}^{\frac{1}{2}}.
    \end{align*}
    Note that $U V^\top = \tilde U\tilde\Sigma \tilde V$, and thus $\sigma_1(UV^\top) = \lambda_1(\tilde\Sigma)$. 
    In addition, $Z^\top Z= U^\top U + V^\top V = 2\tilde\Sigma$. 
    Hence, 
    \[
    \|Z\|_{op} =
    \sqrt{\lambda_1(Z^\top Z)} = 
    \sqrt{\lambda_1(2\tilde\Sigma)}
    = \sqrt{2\sigma_1(UV^\top)}\leq \sqrt{\frac{5\sigma_1^*}{2}}.
    \]
\end{proof}
\subsection{Proof of Lemma \ref{lem: Z_Z_0_Procrustes_distance_bound}}\label{proof: Z_Z_0_Procrustes_distance_bound}
To prove the lemma we use the following auxiliary lemma which is Lemma 11 in \citep{zheng2016convergence}.
\begin{lemma}
    \label{lem: row-wise clipping}
    Let $y \in \mathbb{R}^r$ be a vector such that $\|{y}\| \leq \eta$. 
    Then for any $x
    \in \mathbb{R}^r$
    \begin{equation}
        \|{R_\eta} (x) - y \|^2 \leq \|{x - y}\|^2.
    \end{equation} 
\end{lemma}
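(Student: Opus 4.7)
The plan is to prove the inequality by a simple case analysis on whether $\|x\| \leq \eta$ or $\|x\| > \eta$. The result is in fact the well-known non-expansiveness of the metric projection onto the closed convex ball $B_\eta = \{z \in \mathbb{R}^r : \|z\| \leq \eta\}$ (of which $R_\eta$ is exactly the Euclidean projection) when applied to a point $y$ that already lies in the set. I will give a direct elementary argument rather than invoke general projection theory.

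In the first case, when $\|x\| \leq \eta$, we have $R_\eta(x) = x$ by definition, so the inequality reduces to equality. All the work is in the second case. In the second case, when $\|x\| > \eta$, we have $R_\eta(x) = \eta x / \|x\|$ with $\|R_\eta(x)\| = \eta$. The natural step is to expand the difference of the two squared norms:
\begin{align*}
\|x - y\|^2 - \|R_\eta(x) - y\|^2 &= \|x\|^2 - \|R_\eta(x)\|^2 - 2\langle x - R_\eta(x), y\rangle \\
&= \|x\|^2 - \eta^2 - 2\bigl(1 - \tfrac{\eta}{\|x\|}\bigr)\langle x, y\rangle \\
&= \tfrac{\|x\| - \eta}{\|x\|}\Bigl[\|x\|(\|x\| + \eta) - 2\langle x, y\rangle\Bigr],
\end{align*}
using $\|x\|^2 - \eta^2 = (\|x\| - \eta)(\|x\| + \eta)$ to pull out the common factor.

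The first factor is strictly positive since $\|x\| > \eta > 0$. For the bracketed term I will apply Cauchy--Schwarz together with the hypothesis $\|y\| \leq \eta$:
\begin{equation*}
\|x\|(\|x\| + \eta) - 2\langle x, y\rangle \;\geq\; \|x\|(\|x\| + \eta) - 2\|x\|\|y\| \;\geq\; \|x\|(\|x\| + \eta - 2\eta) \;=\; \|x\|(\|x\| - \eta) \;\geq\; 0,
\end{equation*}
where the last inequality again uses $\|x\| > \eta$. Putting these two facts together shows the difference is nonnegative, yielding the desired inequality. There is no real obstacle here; the only thing to be careful about is keeping track of the sign of $\|x\| - \eta$ so that the factoring step does not accidentally flip the direction of the inequality.
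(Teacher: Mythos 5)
Your proof is correct: the case $\|x\|\leq\eta$ is trivial, and in the case $\|x\|>\eta$ your algebraic identity
\begin{equation*}
\|x-y\|^2-\|R_\eta(x)-y\|^2 \;=\; \tfrac{\|x\|-\eta}{\|x\|}\Bigl[\|x\|\bigl(\|x\|+\eta\bigr)-2\langle x,y\rangle\Bigr]
\end{equation*}
checks out, and the Cauchy--Schwarz bound $\langle x,y\rangle\leq\|x\|\,\|y\|\leq\|x\|\,\eta$ makes both factors nonnegative. Note, however, that the paper does not prove this lemma at all: it imports it verbatim as Lemma 11 of \cite{zheng2016convergence}, so there is no in-paper proof to compare your argument against. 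Your derivation is exactly the standard elementary proof that the Euclidean projection onto the ball $\{z:\|z\|\leq\eta\}$ (which is what $R_\eta$ is) does not increase the distance to any point already inside the ball, so it is a perfectly good self-contained substitute for the citation; the only cosmetic remark is that your justification ``$\|x\|>\eta>0$'' implicitly assumes $\eta>0$, whereas the factor $\tfrac{\|x\|-\eta}{\|x\|}$ is positive already from $\|x\|>\eta\geq 0$, which covers the degenerate case $\eta=0$ as well.
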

\begin{proof}[proof of Lemma \ref{lem: Z_Z_0_Procrustes_distance_bound}]
Recall that \[d_p(Z', Z^*) = \min_P\|Z' -Z^*P\|_F,\quad  d_p(Z, Z^*) = \min_P\|Z -Z^*P\|_F.\] where $P$ is orthogonal.
In what follows we prove that for any orthogonal matrix $\|Z' -Z^*P\|_F\leq\|Z -Z^*P\|_F$.
Let $P$ be an orthogonal matrix, 
\begin{align}\label{eq: d_p_as_sum}
   \|{Z' -  Z^*P}\|_F^2 = \sum_{i=1}^{n_1}\| R_{\eta_1}(U_{(i, \cdot)}) -  (U^*P)_{(i, \cdot)}\|^2 + \sum_{i=n_1+1}^{n_1+n_2}\| R_{\eta_2}(V_{(i, \cdot)}) -  (V^*P)_{(i, \cdot)}\|^2,
\end{align}
In order to bound $\| R_{\eta_1}(U_{(i, \cdot)}) -  (U^*P)_{(i, \cdot)}\|^2$ and $\| R_{\eta_2}(V_{(i, \cdot)}) -  (V^*P)_{(i, \cdot)}\|^2$ using Lemma \ref{lem: row-wise clipping} we first need to prove that $\|(U^*P)_{(i, \cdot)}\| \leq \eta_1, \|(V^*P)_{(i, \cdot)}\|\leq \eta_2$.
Since $L^*$ has an incoherence parameter of $\mu$ 
    \begin{align*} \label{eq: Z_*_row_norm_bound}
        \|U^*\|_{2, \infty} \leq \sqrt{\frac{r\mu\sigma_1^*}{n_1}}, \quad \|V^*\|_{2, \infty} \leq \sqrt{\frac{r\mu\sigma_1^*}{n_2}}.
    \end{align*}
    Since we assumed \(\|UV^\top -L^*\|_{op} \leq \frac{\sigma_1^*}{4}\), by the triangle inequality   $\sigma_1^* \leq 2\sigma_1(UV^\top)$.
    Therefore
    \begin{align*}
        &\|U^*\|_{2, \infty} \leq \sqrt{\frac{r\mu\sigma_1^*}{n_1}} \leq \sqrt{\frac{2r\mu\sigma_1(UV^\top)}{n_1}},\\
        &\|V^*\|_{2, \infty} \leq \sqrt{\frac{r\mu\sigma_1^*}{n_2}} \leq \sqrt{\frac{2r\mu\sigma_1(UV^\top)}{n_2}}.
    \end{align*}
    By Lemma \ref{lem: Z_op_norm_bound} $\sqrt{2\sigma_1(UV^\top)} = \|Z\|_{op}$, hence
    \begin{align*}
        \|U^*\|_{2, \infty}\leq \sqrt{\frac{r\mu}{n_1}}\|Z\|_{op} = \eta_1,\\
        \|V^*\|_{2, \infty} \leq \sqrt{\frac{r\mu}{n_2}}\|Z\|_{op} = \eta_2.
    \end{align*}
    Let $P$ be an orthogonal matrix then
    \begin{align*}
        \|U^*P\|_{2, \infty} \leq  \|U^*\|_{2, \infty}\|P^\top\|_{op}  = \|U^*\|_{2, \infty}, \\
        \|V^*P\|_{2, \infty} \leq  \|V^*\|_{2, \infty}\|P^\top\|_{op}  = \|V^*\|_{2, \infty}.
    \end{align*} 
    Therefore, $\|(U^*P)_{(i, \cdot)}\| \leq \eta_1, \|(V^*P)_{(i, \cdot)}\| \leq \eta_2$.
    By Lemma \ref{lem: row-wise clipping}
    \begin{align*}
         \| R_{\eta_1}(U_{(i, \cdot)}) -  (U^*P)_{(i, \cdot)}\|^2 \leq \|{U_{(i, \cdot)} - (U^*P)_{(i, \cdot)}}\|^2,
    \end{align*}
    and a similar result follows for $V^*$ with $\eta_2$. Inserting these bounds into \cref{eq: d_p_as_sum} gives 
    \begin{align*}
         \|Z' - Z^*P\|_F^2 \leq \sum_{i=1}^{n_1}\|U_{(i, \cdot)} -  (U^*P)_{(i, \cdot)}\|^2 + \sum_{i=n_1+1}^{n_1+n_2}\| V_{(i, \cdot)} -  (V^*P)_{(i, \cdot)}\|^2=\|Z - Z^*P\|_F^2 .
    \end{align*}
   Since this is true for any orthogonal matrix it follows that 
    \begin{align*}
         d_P(Z', Z^*)&\leq d_P(Z, Z^*).
    \end{align*}
\end{proof}
\section{Proofs of Lemmas \ref{lem: a_t_bound}, \ref{lem: correct_error_bound}, \ref{lem: objective_bound} \ref{lem: E_1_bound}, \ref{lem: E_2_bound} and \ref{lem: corrupted_observed_error_bound}.}\label{sec: Appendix D}
\subsection{Proof of Lemma \ref{lem: a_t_bound}.}\label{subsec: proof lemma_c_1}
As part of the proof we use the following lemma, whose proof appears in Appendix \ref{sec: Appendix C}.
\begin{lemma}\label{lem: singular_value_bound}
There exist constants $c_e, c_l$ such that  the following holds. If \((U_0, V_0
)\in \mathcal B_\textnormal{err}\left(\frac{1}{c_e\sqrt{\kappa}}
\right) \cap \mathcal B_\textnormal{bln}\left(\frac{1}{2c_l}\right) \cap \mathcal{B}_{\mu}\) then \begin{equation}
    \max\{\sigma_1(U_{0}), \sigma_1(V_{0})\} \leq 2\sqrt{\sigma_1^*}.
\end{equation}
Additionally if for every $1 \leq s\leq t+1$, \((U_{s}, V_{s}
)\in \mathcal{C}(U_{s-1}, V_{s-1}, \frac{\sigma_r^*}{4^{s}c_e \sqrt{\kappa}})\)  then \begin{equation} \max\{\sigma_1(U_{t+1}), \sigma_1(V_{t+1})\} \leq 4\sqrt{\sigma_1^*}.\end{equation}\end{lemma}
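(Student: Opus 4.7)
The plan is to prove the two claims separately, with the first being the main content and the second following by a triangle-inequality argument. For the first claim, I would bound the product $\sigma_1(U_0 V_0^\top)$ and then translate that bound into individual bounds on $\sigma_1(U_0)$ and $\sigma_1(V_0)$ via the approximate balance condition. Since $(U_0,V_0)\in \mathcal B_\textnormal{err}(1/(c_e\sqrt{\kappa}))$ and $\sigma_r^* = \sigma_1^*/\kappa$, the triangle inequality in operator norm yields
\[\sigma_1(U_0 V_0^\top) \le \sigma_1^* + \|U_0 V_0^\top - L^*\|_F \le \sigma_1^*\left(1 + \frac{1}{c_e\kappa^{3/2}}\right),\]
which is below $M:=(5/4)\sigma_1^*$ for any $c_e\ge 4$.

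For the translation step, I would set $E:=U_0^\top U_0 - V_0^\top V_0$, so that $\|E\|_{op}\le \|E\|_F \le \sigma_r^*/(2c_l)=:\eta$. Expanding $(U_0V_0^\top)^\top(U_0 V_0^\top) = (V_0V_0^\top)^2 + V_0 E V_0^\top$ and applying Weyl's inequality to this symmetric matrix, together with $\|V_0 E V_0^\top\|_{op}\le \sigma_1(V_0)^2 \eta$, yields $\sigma_1(U_0V_0^\top)^2 \ge \sigma_1(V_0)^4 - \sigma_1(V_0)^2\eta$. Combined with $\sigma_1(U_0V_0^\top)\le M$, the variable $x=\sigma_1(V_0)^2$ satisfies the scalar inequality $x^2-\eta x - M^2\le 0$, whose positive root is at most $M + \eta/2 + O(\eta^2/M)$. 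For $c_l$ large enough this gives $\sigma_1(V_0)^2 \le (3/2)\sigma_1^*$, and one more application of Weyl to $U_0^\top U_0 = V_0^\top V_0 + E$ yields $\sigma_1(U_0)^2 \le \sigma_1(V_0)^2 + \eta \le 2\sigma_1^*$; the same argument with $U_0$ and $V_0$ swapped covers the symmetric case. Hence $\max\{\sigma_1(U_0),\sigma_1(V_0)\}\le \sqrt{2\sigma_1^*}\le 2\sqrt{\sigma_1^*}$.

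For the second claim, the assumption $(U_s,V_s)\in\mathcal{C}(U_{s-1},V_{s-1},\sigma_r^*/(4^s c_e\sqrt{\kappa}))$ yields $\|U_s-U_{s-1}\|_F \le \sqrt{\sigma_r^*}/(2^s\sqrt{c_e}\kappa^{1/4})$. Iterating the triangle inequality and summing the geometric series over $s\ge 1$ gives
\[\sigma_1(U_{t+1}) \le \sigma_1(U_0) + \sum_{s=1}^{t+1}\|U_s-U_{s-1}\|_F \le \sigma_1(U_0) + \frac{\sqrt{\sigma_r^*}}{\sqrt{c_e}\kappa^{1/4}} = \sigma_1(U_0) + \frac{\sqrt{\sigma_1^*}}{\sqrt{c_e}\kappa^{3/4}}.\]
Using the first-claim bound $\sigma_1(U_0)\le 2\sqrt{\sigma_1^*}$ and $\kappa\ge 1$, the right-hand side is at most $3\sqrt{\sigma_1^*}\le 4\sqrt{\sigma_1^*}$ for any $c_e\ge 1$, and the same reasoning applies to $\sigma_1(V_{t+1})$.

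The main technical obstacle is the translation step in the first claim: relating the spectrum of the product $U_0 V_0^\top$ to the individual spectra of $U_0$ and $V_0$ when the factorization is only approximately balanced. The identity $(U_0V_0^\top)^\top(U_0V_0^\top)=(V_0V_0^\top)^2+V_0EV_0^\top$ reduces the problem to a scalar quadratic inequality in $\sigma_1(V_0)^2$, which avoids any Procrustes-type SVD-alignment argument; the rest of the proof is essentially bookkeeping with the small constants $1/c_e$ and $1/c_l$.
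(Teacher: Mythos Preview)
Your argument is correct, and for the second claim it is essentially the same telescoping triangle-inequality argument the paper uses. For the first claim, however, you take a genuinely different route. The paper invokes an external Procrustes-distance lemma (Lemma SM2.4 of Zilber--Nadler) to obtain $d_P(Z_0,Z^*)\le\sqrt{\sigma_1^*}$ and then applies Weyl's inequality against the balanced factor $U^*$ with $\sigma_1(U^*)=\sqrt{\sigma_1^*}$, yielding $\sigma_1(U_0)\le 2\sqrt{\sigma_1^*}$. Your approach instead works directly with the algebraic identity $(U_0V_0^\top)^\top(U_0V_0^\top)=(V_0V_0^\top)^2+V_0EV_0^\top$ and reduces the question to a scalar quadratic in $\sigma_1(V_0)^2$. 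This is more self-contained (no Procrustes alignment or auxiliary lemma needed) and in fact gives the slightly sharper constant $\sqrt{2}$ in place of $2$; the paper's route, on the other hand, is shorter once the Procrustes lemma is available and makes the dependence on the balanced SVD factorization of $L^*$ more explicit.
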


\begin{proof}[Proof of Lemma \ref{lem: a_t_bound}]
Recall that \begin{align*}
      a_t = \sqrt 2 \max\{\sigma_1(U_t), \sigma_1(V_t)\} d_{P}(Z_{t+1}, Z_{t}) + \tfrac 12 d^2_{P}(Z_{t+1}, Z_{t}).
  \end{align*}
Since for every $1 \leq s\leq t$, \((U_{s}, V_{s}
)\in \mathcal{C}(U_{s-1}, V_{s-1}, \frac{\sigma_r^*}{4^{s}c_e \sqrt{\kappa}})\) by Lemma \ref{lem: singular_value_bound}  it follows that $\max\{\sigma_1(U_{t}), \sigma_1(V_{t})\} \leq 4\sqrt{\sigma_1^*}$. Therefore 
\begin{align*}
    a_t \leq \left( \sqrt{32} \sqrt{\sigma_1^*} + \tfrac 12 d_P(Z_{t}, Z_{t+1})\right) d_P(Z_{t}, Z_{t+1}).
\end{align*}
Since $d_P(Z_{t}, Z_{t+1}) \leq \|Z_t - Z_{t+1}\|_F = \sqrt{\|U_{t+1} - U_t\|_F^2 + \|V_{t+1} - V_t\|_F^2}$ we can bound  $a_t$ as follows,  
\begin{equation*}
        a_t \leq \left( \sqrt{32 \left(\|U_{t+1} - U_t\|_F^2 + \|V_{t+1} - V_t\|_F^2\right)} \sqrt{\sigma_1^*} + \tfrac 12 \|U_{t+1} - U_t\|_F^2 + \|V_{t+1} - V_t\|_F^2\right) .
    \end{equation*}
Since  \(\left( U_{t+1}, V_{t+1}\right) \in \mathcal{C}\left(U_{t}, V_{t}, \frac{\sigma_r^*}{4^{t+1}c_e \kappa}\right)\), see \cref{def: C_neighborhood}, $\|U_{t+1} - U_t\|_F^2 + \|V_{t+1} - V_t\|_F^2\leq\frac{\sigma_r^*}{4^{t+1}c_e \kappa} $. 
Hence, 
    \begin{align*}
    \begin{split}
         a_t &\leq \frac{\sqrt{32\sigma_r^*
         \sigma_1^* }}{2^{t+1}\sqrt{\kappa c_e}} +{\frac{\sigma_r^*}{2\cdot 4^{t+1}c_e\kappa}} = \frac{4\sqrt{2
         } \sigma_r^*}{2^{t+1} \sqrt{c_e}} +{\frac{\sigma_r^*}{2\cdot 4^{t+1}c_e\kappa}}.
         \end{split}
    \end{align*}
\end{proof}
\subsection{Proofs of Lemmas \ref{lem: correct_error_bound}, \ref{lem: objective_bound} \ref{lem: E_1_bound} and  \ref{lem: E_2_bound}}\label{subsec: proofs of Lemmas_C_2_3_4}
\begin{proof}[Proof of Lemma \ref{lem: correct_error_bound}]
We first lower bound $\mathcal{L}^{t}_{I_t^{\text{in}}}(U_{t+1}, V_{t+1})$, the analogue of the objective function in \cref{eq: objective_function} on the set $I_t^{\text{in}}$.
By definition  \(S^*_{i,j} =0\) for every \((i,j)\in \Lambda_{*}^{c}  \). Since \(I_t^{\text{in}}\subseteq \Lambda_*^c\)
\begin{align}
    \nonumber \mathcal{L}^{t}_{I_t^{\text{in}}}(U_{t+1}, V_{t+1})&=\| U_{t}V_{t+1}^\top + U_{t+1}V_{t}^\top -U_{t}V_{t}^\top - L^* - S^*\|_{F(I_t^{\text{in}})}\\&= \|U_{t}V_{t+1}^\top + U_{t+1}V_{t}^\top -U_{t}V_{t}^\top - L^*\|_{F(I_t^{\text{in}})}.\label{eq: objective_I_in}
    \end{align}
Note that 
    \begin{align*}
    U_{t}V_{t+1}^\top + U_{t+1}V_{t}^\top -U_{t}V_{t}^\top - L^*  = &\left(U_{t}V_{t+1}^\top + U_{t+1}V_{t}^\top - U_{t}V_{t}^\top - U_{t+1}V_{t+1}^\top\right) \\&+\left(U_{t+1}V_{t+1}^\top -L^*\right)\\=&(L_{t+1}  - L^*)-(U_{t+1} - U_{t})(V_{t+1}-V_{t})^\top.
    \end{align*} 
Inserting this into \cref{eq: objective_I_in}, by the triangle inequality 
\begin{align*}
\begin{split}
\mathcal{L}^{t}_{I_t^{\text{in}}}(U_{t+1}, V_{t+1})\geq \| L_{t+1} - L^*\|_{F(I_t^{\text{in}})} - \|  \Delta U_{t+1}\Delta V_{t+1}^\top \|_{F(I_t^{\text{in}})}.
    \end{split}
    \end{align*}
Rearranging the terms yields,  \begin{align*}
\| L_{t+1}  - L^*\|_{F(\Omega_{t})} \leq \mathcal{L}^{t}_{I_t^{\text{in}}}(U_{t+1}, V_{t+1}) + \|  \Delta U_{t+1}\Delta V_{t+1}^\top \|_{F(I_t^{\text{in}})}.
\end{align*}
\end{proof}
\begin{proof}[Proof of Lemma \ref{lem: objective_bound}]
    Our goal is to upper bound $\mathcal{L}^{t}_{I_t^{\text{in}}}(U_{t+1}, V_{t+1})$. Since $\Omega\setminus\Lambda_t= I_t^{\text{in}} \cup I_t^{\text{out}}$
\begin{equation*}
    \left[\mathcal{L}^{t}_{\Omega_{t}}(U_{t+1}, V_{t+1})\right]^2 = \left[\mathcal{L}^{t}_{I_t^{\text{in}}}(U_{t+1}, V_{t+1})\right]^2+ \left[\mathcal{L}^{t}_{I_t^{\text{out}}}(U_{t+1}, V_{t+1})\right]^2.
\end{equation*}
Since $ \sqrt{a^2+b^2}\geq\frac{1}{\sqrt{2}}\left( a+b\right)$ 
\begin{align*}
\mathcal{L}^{t}_{\Omega_{t}}(U_{t+1}, V_{t+1})\geq\frac{1}{\sqrt{2}}[\mathcal{L}^{t}_{I_t^{\text{in}}}(U_{t+1}, V_{t+1})+ \mathcal{L}^{t}_{I_t^{\text{out}}}(U_{t+1}, V_{t+1})].
\end{align*}
Rearranging the terms yields,  
\begin{align} \label{eq: objective_bound_1}
    \mathcal{L}^{t}_{I_t^{\text{in}}}(U_{t+1}, V_{t+1})\leq \sqrt{2} \mathcal{L}^{t}_{\Omega_{t}}(U_{t+1}, V_{t+1}) - \mathcal{L}^{t}_{I_t^{\text{out}}}(U_{t+1}, V_{t+1}).
\end{align}
Note that $(U_{t+1}, V_{t+1})=\arg \min \{\mathcal{L}^t_{\Omega_{t}}(U, V)\mid{(U,V) \in \mathcal{B}_{\mu} \cap \mathcal{C}\left(U_t, V_t, \frac{\delta}{4^{t+1}}\right)}  \}$.
By definition $\mathcal{A}_t \subset\mathcal{B}_{\mu} \cap \mathcal{C}\left(U_t, V_t, \frac{\delta}{4^{t+1}}\right)$.
Therefore for any \((U^*, V^*) \in \mathcal{A}_t\), 
 \begin{equation}\label{eq: objective_bound_2}
     \mathcal{L}^{t}_{\Omega_{t}}(U_{t+1}, V_{t+1})\leq \mathcal{L}^{t}_{\Omega_{t}}(U^*, V^*).
 \end{equation}
 Inserting \cref{eq: objective_bound_2} into \cref{eq: objective_bound_1} yields \cref{eq: objective_bound} of the lemma. 
\end{proof}

\begin{proof}[Proof of Lemma \ref{lem: E_2_bound}]
    By the definition of \( \mathcal{L}^{t}_{\Omega_{t}}\), see \cref{eq: objective_function}, since $X=L^*+S^*$
\begin{align*}
    \mathcal{L}^{t}_{\Omega_{t}}(U^*, V^*) = \|U_tV^{*\top} + U^*V_t^\top -U_tV_t^\top -  {L^*}-S^*\|_{F(\Omega_t)}.
\end{align*}
Combining the triangle inequality with the fact that $L^* = U^*V^{*\top}$ gives
\begin{align*}
   \mathcal{L}^{t}_{\Omega_{t}}(U^*, V^*) &\leq \| U_tV^{*\top} + U^*V_t^\top -U_tV_t^\top -  U^*V^{*\top}\|_{F({\Omega_{t}})}+ \| S^*\|_{F({\Omega_{t}})} \\ & = \| U_t(V^{*\top} - V_t^\top) - U^*(V^{*\top} -V_t^\top)\|_{F({\Omega_{t}})}+ \| S^*\|_{F({\Omega_{t}})} \\ &=\| \Delta U^*_t\Delta V_t^{*\top}\|_{F({\Omega_{t}})} + \| S^*\|_{F({\Omega_{t}})}.
\end{align*}
\end{proof}

\begin{proof}[Proof of Lemma \ref{lem: E_1_bound}]
The analogue of the objective function in \cref{eq: objective_function} on the set
$I_t^{\text{out}}$ can be written as
\begin{align*}
     \mathcal{L}^{t}_{I_t^{\text{out}}}(U_{t+1}, V_{t+1}) 
        &=  \| U_{t}V_{t+1}^\top + U_{t+1}V_{t}^\top -U_{t}V_{t}^\top - L^*-S^*\|_{F(I_t^{\text{out}})} \\
    & = \| \left(U_{t}V_{t+1}^\top + U_{t+1}V_{t}^\top -U_{t}V_{t}^\top {- U_{t+1}V_{t+1}^\top} \right)+\left(U_{t+1}V_{t+1}^\top - L^*-S^*\right)\|_{F(I_t^{\text{out}})}\\
    &=\|\left(L_{t+1} - L^* -S^* \right) - (U_{t+1} - U_t)(V_{t+1}-V_t)^\top\|_{F(I_t^{\text{out}})}.
\end{align*}
Note that the second term is $\Delta U_{t}\Delta V_{t}^\top$. Hence \cref{eq: E_1_bound} follows by the triangle inequality.
\end{proof}

\subsection{Proof of Lemma \ref{lem: corrupted_observed_error_bound}}\label{subsec: proof_corrupted_observed_error_bound}
\begin{proof}
We denote by \(u_i\) and \(v_j\) the $i$-th row and the $j$-th column of \(\mathcal{P}_{\Omega}(L-L^*)\), respectively.
We denote by \(u_i^{(k)}\) the element of $u_i$ with the $k$-th largest magnitude, and a similar definition for \(v_j^{(k)}\). 
As  \(\Lambda= \text{support}\left(\mathcal{T}_{\gamma \alpha}\left(L - X, \Omega\right)\right)\), for any \((i,j)\in \Omega \setminus \Lambda\) \begin{equation}\label{eq: bound_by_threshold}
    |(L - X)_{i,j}| \leq \max \left\{\left|(L - X)_{(i,\cdot )}^{(\lceil\gamma\alpha r_i\rceil)}\right|,\left|(L - X)_{(\cdot, j)}^{(\lceil\gamma\alpha c_j)\rceil}\right|\right\}.
\end{equation}

By assumption \ref{assumption_3} $S^*\in \mathcal{S}_{\alpha}^{\Omega}$ there are at most $\alpha r_i$ corrupted entries in the $i$th row.
Since $\gamma > 1$, the set of  $\lceil\gamma \alpha r_i\rceil$ largest entries in the $i$th row of  \(\mathcal{P}_{\Omega}(L-X)\)  contains at least \(\lceil(\gamma-1) \alpha r_i\rceil\) non corrupted entries.
Therefore, the  $\lceil\gamma \alpha r_i\rceil$ largest entry in the $i$th row of \(\mathcal{P}_{\Omega}(L-X)\) is smaller than the \(\lceil(\gamma-1) \alpha r_i\rceil\) largest entry in the $i$th row of  \(\mathcal{P}_{\Omega}(L-L^*)\).
Formally 
\begin{equation}\label{eq: non_corrupted_bound}
    |(L - X)_{(i,\cdot )}^{(\lceil\gamma\alpha r_i\rceil)}| \leq \left|u_i^{\lceil(\gamma-1) \alpha |\Omega_{(i. \cdot)}|\rceil}\right|.
\end{equation}
The same argument can be applied to each column $j$.
Inserting \cref{eq: non_corrupted_bound} and the analogous inequality for the $j$th column into \cref{eq: bound_by_threshold}  yields, 
\begin{align}\label{eq: entrywise_bound}
    &|(L - X)_{i,j}| \leq \max \left\{\left|u_i^{\lceil(\gamma-1)\alpha\cdot r_i \rceil}\right|,\left|v_j^{\lceil(\gamma-1)\alpha\cdot c_j \rceil}\right|\right\}. \end{align}
Next we upper bound $\left|u_i^{\lceil(\gamma-1)\alpha\cdot r_i \rceil}\right|$. 
To this end note that, 
\begin{align*}
    \|u_i\|^2 = \sum_{k=1}^{r_i} (u_i^{(k)})^2
    \geq \sum_{k=1}^{\lceil(\gamma-1)\alpha\cdot r_i \rceil} (u_i^{(k)})^2 
    \geq \sum_{k=1}^{\lceil(\gamma-1)\alpha\cdot r_i \rceil} \left|u_i^{\lceil(\gamma-1)\alpha\cdot r_i \rceil}\right|^2
    \geq (\gamma-1)\alpha\cdot r_i\left|u_i^{\lceil(\gamma-1)\alpha\cdot r_i \rceil}\right|^2.
\end{align*}
Therefore, 
\begin{align}\label{eq: k_largest_bound}
    \left|u_i^{\lceil(\gamma-1)\alpha\cdot r_i \rceil}\right|^2 \leq  \frac{\| u_i \|^2}{(\gamma - 1)\alpha r_i}.
\end{align}
The same argument can be applied to $\left|v_j^{\lceil(\gamma-1)\alpha\cdot c_j\rceil}\right|$. Inserting \cref{eq: k_largest_bound} into \cref{eq: entrywise_bound} gives
\begin{align*}
|(L - X)_{i,j}|^2 \leq \frac{\| u_i \|^2}{(\gamma - 1)\alpha r_i} + \frac{\| v_j \|^2}{(\gamma - 1)\alpha c_j}.
\end{align*}
Summing over the entries in $(\Omega\setminus\Lambda)\cap \Lambda_{*}$ yields,   
\begin{align*}
\|L - X\rVert^2_{F({(\Omega\setminus\Lambda)\cap \Lambda_{*}})} 
 &\leq \frac{1}{(\gamma - 1
)\alpha}\sum_{(i,j) \in {(\Omega\setminus\Lambda)\cap \Lambda_{*}}} \frac{\| u_i \|^2}{r_i} + \frac{\| v_j \|^2}{c_j}.
\end{align*}
Since $(\Omega\setminus\Lambda)\cap \Lambda_{*} \subseteq \Lambda_*$,
\begin{align*}
\|L - X\rVert^2_{F({(\Omega\setminus\Lambda)\cap \Lambda_{*}})} &\leq \frac{1}{(\gamma - 1
)\alpha}\sum_{(i,j) \in {\Lambda_*}} \frac{\| u_i \|^2}{r_i} + \frac{\| v_j \|^2}{c_j}\\
&\leq\frac{1}{(\gamma - 1
)\alpha} \left[\sum_{i\in[n_1]}\sum_{j \in {\Lambda_{*(i, \cdot)}}} \frac{\| u_i \|^2}{r_i} + \sum_{j\in[n_2]}\sum_{i \in {\Lambda_{*(\cdot, j)}}}\frac{\| v_j \|^2}{c_j}\right]
.\end{align*}
By the assumption $S^*\in\mathcal{S}_{\alpha}^{\Omega}$, it follows that $|\Lambda_{*(i, \cdot)}| \leq \alpha r_i$ and $|\Lambda_{*(\cdot, j)}| \leq \alpha c_j$.
Therefore, 
\begin{align*}
\|L - X\rVert^2_{F({(\Omega\setminus\Lambda)\cap \Lambda_{*}})} 
&\leq \frac{1}{(\gamma - 1)\alpha}\left[\alpha \sum_{i\in[n_1]} {\| u_i \|^2}+ \alpha \sum_{j\in[n_2]}{\| v_j \|^2}\right]\\
&= \frac{2}{\gamma - 1} \lVert \mathcal{P}_{\Omega}(L-L^*)\rVert^2_F = \frac{2}{\gamma - 1} \lVert L-L^*\rVert^2_{F(\Omega)}. 
\end{align*}
Taking the square root on both sides completes the proof.
\end{proof}
\section{Proof of Lemma \ref{lem: singular_value_bound}}\label{sec: Appendix C}
\label{proof: singular_value_bound}
In our proof we would use the following lemma from \cite{zilber2022gnmr} [Lemma SM2.4]. 
\begin{lemma}\label{lem: dP_e_bound}
        Let $L^* \in \mathbb R^{n_1\times n_2}$ be a matrix of rank $r$, and denote $Z^* =  \textnormal{b-SVD}_r(L^*)$.
        Then for any $Z = \begin{psmallmatrix} U \\ V \end{psmallmatrix} \in \mathbb R^{(n_1+n_2)\times r}$,
        \begin{align}\label{eq:dP_e_bound}
        d_P^2(Z, Z^*) \leq \frac{1}{(\sqrt 2 - 1)\sigma_r^*}\left(\|UV^\top - L^*\|_F^2 + \frac 14 \|U^\top U - V^\top V\|_F^2\right) .\end{align}
    \end{lemma}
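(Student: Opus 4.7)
The claim is a global Procrustes-distance inequality bounding $d_P^2(Z, Z^*)$ by a scaled sum of the product error $\|UV^\top - L^*\|_F^2$ and the imbalance $\|U^\top U - V^\top V\|_F^2$. My plan is to reduce to an optimal alignment, expand both error terms algebraically, and then extract the constant $\sqrt{2}-1$ from a spectral calculation in the joint eigenbasis of $Z^*$.

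First, I would let $P^*$ be an optimal Procrustes rotation and replace $Z^*$ by $Z^*P^*$. This does not change $L^* = U^*V^{*\top}$, the b-SVD balance $U^{*\top}U^* = V^{*\top}V^* = \tilde\Sigma$, nor either of the two Frobenius norms on the right-hand side, and yields the first-order optimality condition that $Z^{*\top}Z$ is symmetric. Setting $\Delta U = U - U^*$, $\Delta V = V - V^*$, direct expansion together with the balance gives
\begin{align*}
UV^\top - L^* &= U^*\Delta V^\top + \Delta U\,V^{*\top} + \Delta U\,\Delta V^\top,\\
U^\top U - V^\top V &= \bigl(U^{*\top}\Delta U + \Delta U^\top U^*\bigr) - \bigl(V^{*\top}\Delta V + \Delta V^\top V^*\bigr) + \Delta U^\top\Delta U - \Delta V^\top\Delta V.
\end{align*}

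Next, using $U^* = \tilde U\tilde\Sigma^{1/2}$, $V^* = \tilde V\tilde\Sigma^{1/2}$, I would complete $\tilde U, \tilde V$ to orthonormal bases and decompose $\Delta U = \tilde U X + \tilde U_\perp Y$, $\Delta V = \tilde V W + \tilde V_\perp T$. By orthogonality, the perpendicular blocks $Y, T$ contribute at least $\sigma_r^*(\|Y\|_F^2 + \|T\|_F^2)$ to $\|UV^\top - L^*\|_F^2$, directly matching the required lower bound on those components. For the in-range blocks $X, W$, the optimality condition translates into the algebraic constraint that $\tilde\Sigma^{1/2}(X+W)$ is symmetric, and the linear parts of both error terms become quadratic forms in $(X, W)$. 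Splitting $X\pm W$ into symmetric and antisymmetric pieces in the joint eigenbasis of $\tilde\Sigma$, the product-error term controls one pair while the imbalance term (with the coefficient $\tfrac 14$) controls the complementary pair; optimizing the convex combination yields a lower bound of $(\sqrt{2}-1)\sigma_r^*(\|X\|_F^2 + \|W\|_F^2)$, which when combined with the perpendicular contribution gives the claim.

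The main obstacle is verifying the sharp constant $\sqrt{2}-1$ in the in-range spectral step — this requires careful bookkeeping of cross-terms under the symmetry constraint and identification of the extremal configuration (which explains why the coefficient $\tfrac 14$ appears attached to the imbalance term). A secondary technical point is that for large $\|\Delta\|$ the quadratic remainders $\Delta U\Delta V^\top$ and $\Delta U^\top\Delta U - \Delta V^\top\Delta V$ enter the right-hand side nonlinearly; an elementary argument shows that their joint effect cannot cause the right-hand side to collapse while $d_P^2(Z, Z^*)$ stays bounded, so the global inequality follows from the linearized analysis together with a continuity/compactness consideration in the regime where the remainders dominate.
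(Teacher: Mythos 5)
First, a point of reference: the paper itself does not prove this lemma; it imports it verbatim from \citet{zilber2022gnmr} (their Lemma SM2.4). The proof there is not a linearization at all but a two-step reduction: apply the Procrustes-distance lemma of Tu et al.\ (2016) to the lifted PSD matrices, $d_P^2(Z,Z^*) \leq \tfrac{1}{2(\sqrt2-1)\sigma_r^2(Z^*)}\|ZZ^\top - Z^*Z^{*\top}\|_F^2$ with $\sigma_r^2(Z^*)=2\sigma_r^*$ (since $Z^{*\top}Z^*=2\tilde\Sigma$), and then verify the purely algebraic bound
\begin{align*}
\|ZZ^\top - Z^*Z^{*\top}\|_F^2 &= \|UU^\top - U^*U^{*\top}\|_F^2 + \|VV^\top - V^*V^{*\top}\|_F^2 + 2\|UV^\top - L^*\|_F^2 \\
&\leq 4\|UV^\top - L^*\|_F^2 + \|U^\top U - V^\top V\|_F^2,
\end{align*}
which, after expanding all traces and using $U^{*\top}U^* = V^{*\top}V^*$, collapses to the Cauchy--Schwarz fact $2\langle U^{*\top}U,\, V^{*\top}V\rangle \leq \|U^{*\top}U\|_F^2 + \|V^{*\top}V\|_F^2$. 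Your route is genuinely different, and it has two gaps that I do not believe are repairable in the proposed form.

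First, the claim that the perpendicular blocks contribute at least $\sigma_r^*(\|Y\|_F^2 + \|T\|_F^2)$ to $\|UV^\top - L^*\|_F^2$ ``by orthogonality'' is false as stated: the $(\tilde U_\perp, \tilde V)$ block of $UV^\top - L^*$ is $\tilde U_\perp Y(\tilde\Sigma^{1/2} + W^\top)\tilde V^\top$, not $\tilde U_\perp Y\tilde\Sigma^{1/2}\tilde V^\top$, because the quadratic remainder $\Delta U\,\Delta V^\top$ also lands in that block; when $W \approx -\tilde\Sigma^{1/2}$ (i.e.\ $V$ nearly orthogonal to the row space of $L^*$) this block can be arbitrarily small no matter how large $Y$ is. Second, and more fundamentally, the constant $\sqrt2-1$ cannot be produced by the linearized in-range analysis: linearization around $Z^*$ yields a strictly better constant (in the rank-one scalar case the linearized right-hand side is $\tfrac{1}{(\sqrt2-1)\sigma}\cdot 2\sigma(x^2+w^2)$ against $d_P^2 \approx x^2+w^2$, a slack factor of about $4.8$), whereas the configurations that make the stated constant nearly tight are far from $Z^*$ --- already $Z=0$ gives $d_P^2 = 2\sigma_1^*$ versus a right-hand side of $(\sqrt2+1)\sigma_1^*$ for a rank-one $L^*$, a ratio of $2(\sqrt2-1)\approx 0.83$. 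That is exactly the regime you defer to ``continuity/compactness,'' so the deferral begs the question: the entire content of the lemma, including its constant, lives in the fully nonlinear regime. Moreover, a compactness argument can never supply an explicit constant, and there is no scaling symmetry to invoke (the left side is degree $2$ in $Z$, the right side degree $4$), so the global inequality cannot be reduced to a neighborhood of $Z^*$. The missing idea is precisely the global mechanism --- in the standard proof, Tu et al.'s lemma on the lifted matrices $ZZ^\top$ --- that handles all of $\mathbb{R}^{(n_1+n_2)\times r}$ at once.
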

\begin{proof}[Proof of Lemma \ref{lem: singular_value_bound}]
We prove for $U_t$ and $U_0$. A similar proof follows for $V_t$ and $V_0$.
By the triangle inequality \begin{equation*}
        \sigma_1(U_{t+1})\leq \sigma_1(U_t) + \|U_{t+1} - U_t\|_{op}.
    \end{equation*}
    Hence by induction
    \begin{equation}\label{eq: U_t_singular_value_bound}
        \sigma_1(U_{t+1})\leq \sigma_1(U_0) + \sum_{k=0}^{t}\|U_{k+1} - U_k\|_{op}\leq \sigma_1(U_0) + \sum_{k=0}^{t}\|U_{k+1} - U_k\|_F.
    \end{equation}
    Since $(U_{k+1},V_{k+1})\in \mathcal{C}(U_{k}, V_{k}, \frac{ \sigma_r^*}{4^{k+1}c_e \sqrt{\kappa}})$, see \cref{def: C_neighborhood},
    \begin{equation*}
        \|U_{k+1} - U_k\|_F \leq\sqrt{ \frac{\sigma_r^*}{4^{k+1}c_e\sqrt{\kappa}}} = 
        \sqrt{\frac{\sigma_r^*}{c_e\sqrt{\kappa}}}\frac{1}{2^{k+1}}.
    \end{equation*}
    Therefore for $c_e\geq 1$, 
    \begin{equation}\label{eq: sum_of_U_t+1_U_t_distance}
    \sum_{k=0}^{t}\|U_{k+1} - U_k\|_F \leq \sqrt{\frac{\sigma_r^*}{c_e\sqrt{\kappa}}}\sum_{k=0}^{t}\frac{1}{2^{k+1}} \leq \sqrt{\frac{\sigma_r^*}{c_e\sqrt{\kappa}}} \leq \frac{\sqrt{{\sigma_r^*}}}{\sqrt[4]{\kappa}} \leq \sqrt{\sigma_1^*}.
    \end{equation}
    Next we bound $\sigma_1(U_0)$.
    Denote $Z^* = \begin{psmallmatrix} U^* \\ V^* \end{psmallmatrix} =\text{b-SVD}_r(L^*)$. By our assumption \((U_0, V_0
)\in \mathcal B_\textnormal{err}\left(\frac{1}{c_e\sqrt{\kappa}}
\right) \cap \mathcal B_\textnormal{bln}\left(\frac{1}{2c_l}\right)\).
Therefore by Lemma \ref{lem: dP_e_bound} \begin{align}\label{eq: d_P_singular_value_bound}
        d_P^2(Z_0, Z^*) \leq \frac{1}{(\sqrt 2 - 1)} \left(\frac{1}{c_e^2 \kappa} + \frac{1}{16c_l^2} \right)\sigma_r^* \leq \sigma_1^* .\end{align}
    Let $P$ be the minimizer of the Procrustes distance between $U_0$ and $U^*$.
    Note that $|\sigma_1(U_0) - \sigma_1(U^*)| = |\sigma_1(U_0) - \sigma_1(U^*P)|$.
    By Weyl's inequality 
    \begin{equation*}
       |\sigma_1(U_0) - \sigma_1(U^*P)| \leq \|U_0 - U^*P\|_{op} \leq  \|U_0 - U^*P\|_{F} = d_P(U_0, U^*) \leq d_P(Z_0, Z^*).
    \end{equation*}
    Hence $ |\sigma_1(U_0) - \sigma_1(U^*)| \leq d_P(Z_0, Z^*)$.
    By the triangle inequality and \cref{eq: d_P_singular_value_bound} \begin{equation}\label{eq : U_0 singular_value_bound}
        \sigma_1(U_0) \leq  \sigma_1(U^*) +  d_P(Z_0, Z^*) \leq 2\sqrt{\sigma_1^*}.
    \end{equation}
     Inserting \cref{eq: sum_of_U_t+1_U_t_distance} and \cref{eq : U_0 singular_value_bound} into \cref{eq: U_t_singular_value_bound} completes the proof.
\end{proof}

\newpage
\section*{NeurIPS Paper Checklist}

\begin{enumerate}

\item {\bf Claims}
    \item[] Question: Do the main claims made in the abstract and introduction accurately reflect the paper's contributions and scope?
    \item[] Answer: \answerYes{} 
    \item[] Justification: The abstract and introduction summarize the theoretical and empirical contributions that are proven and illustrated in the paper.
    \item[] Guidelines:
    \begin{itemize}
        \item The answer NA means that the abstract and introduction do not include the claims made in the paper.
        \item The abstract and/or introduction should clearly state the claims made, including the contributions made in the paper and important assumptions and limitations. A No or NA answer to this question will not be perceived well by the reviewers. 
        \item The claims made should match theoretical and experimental results, and reflect how much the results can be expected to generalize to other settings. 
        \item It is fine to include aspirational goals as motivation as long as it is clear that these goals are not attained by the paper. 
    \end{itemize}

\item {\bf Limitations}
    \item[] Question: Does the paper discuss the limitations of the work performed by the authors?
    \item[] Answer: \answerYes{} 
    \item[] Justification: The paper clearly states the assumptions made for our theoretical results, and the parameter regimes at which our empirical results were generated. 
    In the end of Section \ref{sec: theory} we pointed out the limitations of our theoretical analysis.
    \item[] Guidelines:
    \begin{itemize}
        \item The answer NA means that the paper has no limitation while the answer No means that the paper has limitations, but those are not discussed in the paper. 
        \item The authors are encouraged to create a separate "Limitations" section in their paper.
        \item The paper should point out any strong assumptions and how robust the results are to violations of these assumptions (e.g., independence assumptions, noiseless settings, model well-specification, asymptotic approximations only holding locally). The authors should reflect on how these assumptions might be violated in practice and what the implications would be.
        \item The authors should reflect on the scope of the claims made, e.g., if the approach was only tested on a few datasets or with a few runs. In general, empirical results often depend on implicit assumptions, which should be articulated.
        \item The authors should reflect on the factors that influence the performance of the approach. For example, a facial recognition algorithm may perform poorly when image resolution is low or images are taken in low lighting. Or a speech-to-text system might not be used reliably to provide closed captions for online lectures because it fails to handle technical jargon.
        \item The authors should discuss the computational efficiency of the proposed algorithms and how they scale with dataset size.
        \item If applicable, the authors should discuss possible limitations of their approach to address problems of privacy and fairness.
        \item While the authors might fear that complete honesty about limitations might be used by reviewers as grounds for rejection, a worse outcome might be that reviewers discover limitations that aren't acknowledged in the paper. The authors should use their best judgment and recognize that individual actions in favor of transparency play an important role in developing norms that preserve the integrity of the community. Reviewers will be specifically instructed to not penalize honesty concerning limitations.
    \end{itemize}

\item {\bf Theory assumptions and proofs}
    \item[] Question: For each theoretical result, does the paper provide the full set of assumptions and a complete (and correct) proof?
    \item[] Answer: \answerYes{} 
    \item[] Justification: In section \ref{sec: theory} we provide the full set of assumptions.
    The proofs for all of our theoretical results appear in the appendix.
    \item[] Guidelines:
    \begin{itemize}
        \item The answer NA means that the paper does not include theoretical results. 
        \item All the theorems, formulas, and proofs in the paper should be numbered and cross-referenced.
        \item All assumptions should be clearly stated or referenced in the statement of any theorems.
        \item The proofs can either appear in the main paper or the supplemental material, but if they appear in the supplemental material, the authors are encouraged to provide a short proof sketch to provide intuition. 
        \item Inversely, any informal proof provided in the core of the paper should be complemented by formal proofs provided in appendix or supplemental material.
        \item Theorems and Lemmas that the proof relies upon should be properly referenced. 
    \end{itemize}

    \item {\bf Experimental result reproducibility}
    \item[] Question: Does the paper fully disclose all the information needed to reproduce the main experimental results of the paper to the extent that it affects the main claims and/or conclusions of the paper (regardless of whether the code and data are provided or not)?
    \item[] Answer: \answerYes{} 
    \item[] Justification: In section \ref{sec: numerical} we provide a detailed explanation of the experiments conducted in this paper. 
    \item[] Guidelines:
    \begin{itemize}
        \item The answer NA means that the paper does not include experiments.
        \item If the paper includes experiments, a No answer to this question will not be perceived well by the reviewers: Making the paper reproducible is important, regardless of whether the code and data are provided or not.
        \item If the contribution is a dataset and/or model, the authors should describe the stpdf taken to make their results reproducible or verifiable. 
        \item Depending on the contribution, reproducibility can be accomplished in various ways. For example, if the contribution is a novel architecture, describing the architecture fully might suffice, or if the contribution is a specific model and empirical evaluation, it may be necessary to either make it possible for others to replicate the model with the same dataset, or provide access to the model. In general. releasing code and data is often one good way to accomplish this, but reproducibility can also be provided via detailed instructions for how to replicate the results, access to a hosted model (e.g., in the case of a large language model), releasing of a model checkpoint, or other means that are appropriate to the research performed.
        \item While NeurIPS does not require releasing code, the conference does require all submissions to provide some reasonable avenue for reproducibility, which may depend on the nature of the contribution. For example
        \begin{enumerate}
            \item If the contribution is primarily a new algorithm, the paper should make it clear how to reproduce that algorithm.
            \item If the contribution is primarily a new model architecture, the paper should describe the architecture clearly and fully.
            \item If the contribution is a new model (e.g., a large language model), then there should either be a way to access this model for reproducing the results or a way to reproduce the model (e.g., with an open-source dataset or instructions for how to construct the dataset).
            \item We recognize that reproducibility may be tricky in some cases, in which case authors are welcome to describe the particular way they provide for reproducibility. In the case of closed-source models, it may be that access to the model is limited in some way (e.g., to registered users), but it should be possible for other researchers to have some path to reproducing or verifying the results.
        \end{enumerate}
    \end{itemize}

\item {\bf Open access to data and code}
    \item[] Question: Does the paper provide open access to the data and code, with sufficient instructions to faithfully reproduce the main experimental results, as described in supplemental material?
    \item[] Answer: \answerYes{} 
    \item[] Justification: We provide open access to all code used to generate the experimental results that appear in the paper. 
    \item[] Guidelines:
    \begin{itemize}
        \item The answer NA means that paper does not include experiments requiring code.
        \item Please see the NeurIPS code and data submission guidelines (\url{https://nips.cc/public/guides/CodeSubmissionPolicy}) for more details.
        \item While we encourage the release of code and data, we understand that this might not be possible, so “No” is an acceptable answer. Papers cannot be rejected simply for not including code, unless this is central to the contribution (e.g., for a new open-source benchmark).
        \item The instructions should contain the exact command and environment needed to run to reproduce the results. See the NeurIPS code and data submission guidelines (\url{https://nips.cc/public/guides/CodeSubmissionPolicy}) for more details.
        \item The authors should provide instructions on data access and preparation, including how to access the raw data, preprocessed data, intermediate data, and generated data, etc.
        \item The authors should provide scripts to reproduce all experimental results for the new proposed method and baselines. If only a subset of experiments are reproducible, they should state which ones are omitted from the script and why.
        \item At submission time, to preserve anonymity, the authors should release anonymized versions (if applicable).
        \item Providing as much information as possible in supplemental material (appended to the paper) is recommended, but including URLs to data and code is permitted.
    \end{itemize}

\item {\bf Experimental setting/details}
    \item[] Question: Does the paper specify all the training and test details (e.g., data splits, hyperparameters, how they were chosen, type of optimizer, etc.) necessary to understand the results?
    \item[] Answer: \answerYes{} 
    \item[] Justification: In section \ref{sec: numerical} we provide a detailed explanation of the experiments conducted in this paper. 
    \item[] Guidelines: 
    \begin{itemize}
        \item The answer NA means that the paper does not include experiments.
        \item The experimental setting should be presented in the core of the paper to a level of detail that is necessary to appreciate the results and make sense of them.
        \item The full details can be provided either with the code, in appendix, or as supplemental material.
    \end{itemize}

\item {\bf Experiment statistical significance}
    \item[] Question: Does the paper report error bars suitably and correctly defined or other appropriate information about the statistical significance of the experiments?
    \item[] Answer: \answerYes{} 
    \item[] Justification: Error bars are illustrated as part of our figures. 
    A detailed explanation of the the results is given.
    \item[] Guidelines:
    \begin{itemize}
        \item The answer NA means that the paper does not include experiments.
        \item The authors should answer "Yes" if the results are accompanied by error bars, confidence intervals, or statistical significance tests, at least for the experiments that support the main claims of the paper.
        \item The factors of variability that the error bars are capturing should be clearly stated (for example, train/test split, initialization, random drawing of some parameter, or overall run with given experimental conditions).
        \item The method for calculating the error bars should be explained (closed form formula, call to a library function, bootstrap, etc.)
        \item The assumptions made should be given (e.g., Normally distributed errors).
        \item It should be clear whether the error bar is the standard deviation or the standard error of the mean.
        \item It is OK to report 1-sigma error bars, but one should state it. The authors should preferably report a 2-sigma error bar than state that they have a 96\% CI, if the hypothesis of Normality of errors is not verified.
        \item For asymmetric distributions, the authors should be careful not to show in tables or figures symmetric error bars that would yield results that are out of range (e.g. negative error rates).
        \item If error bars are reported in tables or plots, The authors should explain in the text how they were calculated and reference the corresponding figures or tables in the text.
    \end{itemize}

\item {\bf Experiments compute resources}
    \item[] Question: For each experiment, does the paper provide sufficient information on the computer resources (type of compute workers, memory, time of execution) needed to reproduce the experiments?
    \item[] Answer: \answerYes{} 
    \item[] Justification: 
    {We describe the hardware and memory requirements needed in Section \ref{sec: numerical}}

    \item[] Guidelines:
    \begin{itemize}
        \item The answer NA means that the paper does not include experiments.
        \item The paper should indicate the type of compute workers CPU or GPU, internal cluster, or cloud provider, including relevant memory and storage.
        \item The paper should provide the amount of compute required for each of the individual experimental runs as well as estimate the total compute. 
        \item The paper should disclose whether the full research project required more compute than the experiments reported in the paper (e.g., preliminary or failed experiments that didn't make it into the paper). 
    \end{itemize}
    
\item {\bf Code of ethics}
    \item[] Question: Does the research conducted in the paper conform, in every respect, with the NeurIPS Code of Ethics \url{https://neurips.cc/public/EthicsGuidelines}?
    \item[] Answer: \answerYes{} 
    \item[] Justification: Our research adhered to the Code of Ethics
    \item[] Guidelines:
    \begin{itemize}
        \item The answer NA means that the authors have not reviewed the NeurIPS Code of Ethics.
        \item If the authors answer No, they should explain the special circumstances that require a deviation from the Code of Ethics.
        \item The authors should make sure to preserve anonymity (e.g., if there is a special consideration due to laws or regulations in their jurisdiction).
    \end{itemize}

\item {\bf Broader impacts}
    \item[] Question: Does the paper discuss both potential positive societal impacts and negative societal impacts of the work performed?
    \item[] Answer: \answerNA{} 
    \item[] Justification: There is no societal impact of the work performed.
    \item[] Guidelines:
    \begin{itemize}
        \item The answer NA means that there is no societal impact of the work performed.
        \item If the authors answer NA or No, they should explain why their work has no societal impact or why the paper does not address societal impact.
        \item Examples of negative societal impacts include potential malicious or unintended uses (e.g., disinformation, generating fake profiles, surveillance), fairness considerations (e.g., deployment of technologies that could make decisions that unfairly impact specific groups), privacy considerations, and security considerations.
        \item The conference expects that many papers will be foundational research and not tied to particular applications, let alone deployments. However, if there is a direct path to any negative applications, the authors should point it out. For example, it is legitimate to point out that an improvement in the quality of generative models could be used to generate deepfakes for disinformation. On the other hand, it is not needed to point out that a generic algorithm for optimizing neural networks could enable people to train models that generate Deepfakes faster.
        \item The authors should consider possible harms that could arise when the technology is being used as intended and functioning correctly, harms that could arise when the technology is being used as intended but gives incorrect results, and harms following from (intentional or unintentional) misuse of the technology.
        \item If there are negative societal impacts, the authors could also discuss possible mitigation strategies (e.g., gated release of models, providing defenses in addition to attacks, mechanisms for monitoring misuse, mechanisms to monitor how a system learns from feedback over time, improving the efficiency and accessibility of ML).
    \end{itemize}
    
\item {\bf Safeguards}
    \item[] Question: Does the paper describe safeguards that have been put in place for responsible release of data or models that have a high risk for misuse (e.g., pretrained language models, image generators, or scraped datasets)?
    \item[] Answer: \answerNA{} 
    \item[] Justification: The paper poses no such risks.
    \item[] Guidelines:
    \begin{itemize}
        \item The answer NA means that the paper poses no such risks.
        \item Released models that have a high risk for misuse or dual-use should be released with necessary safeguards to allow for controlled use of the model, for example by requiring that users adhere to usage guidelines or restrictions to access the model or implementing safety filters. 
        \item Datasets that have been scraped from the Internet could pose safety risks. The authors should describe how they avoided releasing unsafe images.
        \item We recognize that providing effective safeguards is challenging, and many papers do not require this, but we encourage authors to take this into account and make a best faith effort.
    \end{itemize}

\item {\bf Licenses for existing assets}
    \item[] Question: Are the creators or original owners of assets (e.g., code, data, models), used in the paper, properly credited and are the license and terms of use explicitly mentioned and properly respected?
    \item[] Answer: \answerYes{} 
    \item[] Justification: The original owners of code used in the paper are all properly credited and cited.
    \item[] Guidelines:
    \begin{itemize}
        \item The answer NA means that the paper does not use existing assets.
        \item The authors should cite the original paper that produced the code package or dataset.
        \item The authors should state which version of the asset is used and, if possible, include a URL.
        \item The name of the license (e.g., CC-BY 4.0) should be included for each asset.
        \item For scraped data from a particular source (e.g., website), the copyright and terms of service of that source should be provided.
        \item If assets are released, the license, copyright information, and terms of use in the package should be provided. For popular datasets, \url{paperswithcode.com/datasets} has curated licenses for some datasets. Their licensing guide can help determine the license of a dataset.
        \item For existing datasets that are re-packaged, both the original license and the license of the derived asset (if it has changed) should be provided.
        \item If this information is not available online, the authors are encouraged to reach out to the asset's creators.
    \end{itemize}

\item {\bf New assets}
    \item[] Question: Are new assets introduced in the paper well documented and is the documentation provided alongside the assets?
    \item[] Answer: \answerYes{} 
    \item[] Justification: All code is well documented.
    \item[] Guidelines:
    \begin{itemize}
        \item The answer NA means that the paper does not release new assets.
        \item Researchers should communicate the details of the dataset/code/model as part of their submissions via structured templates. This includes details about training, license, limitations, etc. 
        \item The paper should discuss whether and how consent was obtained from people whose asset is used.
        \item At submission time, remember to anonymize your assets (if applicable). You can either create an anonymized URL or include an anonymized zip file.
    \end{itemize}

\item {\bf Crowdsourcing and research with human subjects}
    \item[] Question: For crowdsourcing experiments and research with human subjects, does the paper include the full text of instructions given to participants and screenshots, if applicable, as well as details about compensation (if any)? 
    \item[] Answer: \answerNA{} 
    \item[] Justification: The paper does not involve crowdsourcing nor research with human subjects.
    \item[] Guidelines:
    \begin{itemize}
        \item The answer NA means that the paper does not involve crowdsourcing nor research with human subjects.
        \item Including this information in the supplemental material is fine, but if the main contribution of the paper involves human subjects, then as much detail as possible should be included in the main paper. 
        \item According to the NeurIPS Code of Ethics, workers involved in data collection, curation, or other labor should be paid at least the minimum wage in the country of the data collector. 
    \end{itemize}

\item {\bf Institutional review board (IRB) approvals or equivalent for research with human subjects}
    \item[] Question: Does the paper describe potential risks incurred by study participants, whether such risks were disclosed to the subjects, and whether Institutional Review Board (IRB) approvals (or an equivalent approval/review based on the requirements of your country or institution) were obtained?
    \item[] Answer: \answerNA{} 
    \item[] Justification: The paper does not involve crowdsourcing nor research with human subjects.
    \item[] Guidelines:
    \begin{itemize}
        \item The answer NA means that the paper does not involve crowdsourcing nor research with human subjects.
        \item Depending on the country in which research is conducted, IRB approval (or equivalent) may be required for any human subjects research. If you obtained IRB approval, you should clearly state this in the paper. 
        \item We recognize that the procedures for this may vary significantly between institutions and locations, and we expect authors to adhere to the NeurIPS Code of Ethics and the guidelines for their institution. 
        \item For initial submissions, do not include any information that would break anonymity (if applicable), such as the institution conducting the review.
    \end{itemize}

\item {\bf Declaration of LLM usage}
    \item[] Question: Does the paper describe the usage of LLMs if it is an important, original, or non-standard component of the core methods in this research? Note that if the LLM is used only for writing, editing, or formatting purposes and does not impact the core methodology, scientific rigorousness, or originality of the research, declaration is not required.
    \item[] Answer: \answerNA{} 
    \item[] Justification: The core method development in this research does not involve LLMs as any important, original, or non-standard components.
    \item[] Guidelines:
    \begin{itemize}
        \item The answer NA means that the core method development in this research does not involve LLMs as any important, original, or non-standard components.
        \item Please refer to our LLM policy (\url{https://neurips.cc/Conferences/2025/LLM}) for what should or should not be described.
    \end{itemize}

\end{enumerate}

\end{document}